\providecommand{\tabularnewline}{\\}
\theoremstyle{plain}
\newtheorem{fact}{\protect\factname}
\theoremstyle{remark}
\newtheorem{rem}{\protect\remarkname}
\theoremstyle{plain}
\newtheorem{assumption}{\protect\assumptionname}
\theoremstyle{plain}
\newtheorem{thm}{\protect\theoremname}
\theoremstyle{plain}
\newtheorem{cor}{\protect\corollaryname}
\theoremstyle{plain}
\newtheorem{lem}{\protect\lemmaname}
\theoremstyle{plain}
\newtheorem{prop}{\protect\propositionname}
\theoremstyle{definition}
 \newtheorem{example}{\protect\examplename}
\renewcommand{\Pr}{{\mathbb{P}}}
\newcommand{\E}{{\mathbb{E}}}
\newcommand{\R}{{\mathbb{R}}}
\newcommand{\Ncal}{{\mathcal{N}}}
\let\hat\widehat
\let\bar\overline
\let\tilde\widetilde
\definecolor{todo}{RGB}{0,200,200}
\definecolor{new}{RGB}{0,200,200}
\newcommand{\new}[1]{\textcolor{todo}{[NEW:]}}
\definecolor{emerald}{rgb}{0.31, 0.78, 0.47}
\theoremstyle{plain}
\newcommand{\bA}{\bm{A}}
\newcommand{\bB}{\bm{B}}
\newcommand{\bC}{\bm{C}}
\newcommand{\bD}{\bm{D}}
\newcommand{\bE}{\bm{E}}
\newcommand{\bG}{\bm{G}}
\newcommand{\bH}{\bm{H}}
\newcommand{\bI}{\bm{I}}
\newcommand{\bL}{\bm{L}}
\newcommand{\bM}{\bm{M}}
\newcommand{\bN}{\bm{N}}
\newcommand{\bP}{\bm{P}}
\newcommand{\bQ}{\bm{Q}}
\newcommand{\bR}{\bm{R}}
\newcommand{\bS}{\bm{S}}
\newcommand{\bU}{\bm{U}}
\newcommand{\bV}{\bm{V}}
\newcommand{\bW}{\bm{W}}
\newcommand{\bX}{\bm{X}}
\newcommand{\bY}{\bm{Y}}
\newcommand{\bZ}{\bm{Z}}
\newcommand{\ba}{\bm{a}}
\newcommand{\bb}{\bm{b}}
\newcommand{\be}{\bm{e}}
\newcommand{\bg}{\bm{g}}
\newcommand{\bh}{\bm{h}}
\newcommand{\bu}{\bm{u}}
\newcommand{\bv}{\bm{v}}
\newcommand{\bw}{\bm{w}}
\newcommand{\bx}{\bm{x}}
\newcommand{\by}{\bm{y}}
\newcommand{\bz}{\bm{z}}
\providecommand{\assumptionname}{Assumption}
\providecommand{\corollaryname}{Corollary}
\providecommand{\examplename}{Example}
\providecommand{\factname}{Fact}
\providecommand{\lemmaname}{Lemma}
\providecommand{\propositionname}{Proposition}
\providecommand{\remarkname}{Remark}
\providecommand{\theoremname}{Theorem}
\begin{document}
\title{Learning Mixtures of Linear Dynamical Systems\footnotetext{Corresponding author: Yanxi Chen (email: \texttt{yanxic@princeton.edu}).}}
\author{Yanxi Chen\thanks{Department of Electrical and Computer Engineering, Princeton University, Princeton,
NJ 08544, USA; email: \texttt{\{yanxic,poor\}@princeton.edu}.} \and H.~Vincent Poor\footnotemark[1]}
\date{\today}

\maketitle
\global\long\def\poly{\mathsf{poly}}%
\global\long\def\plog{\mathsf{polylog}}%
\global\long\def\Frm{{\rm F}}%
\global\long\def\Tr{\mathsf{Tr}}%

\global\long\def\rank{\mathsf{rank}}%
\global\long\def\spn{\mathsf{span}}%
\global\long\def\col{\mathsf{col}}%
\global\long\def\row{\mathsf{row}}%
\global\long\def\vc{\mathsf{vec}}%
\global\long\def\mat{\mathsf{mat}}%
\global\long\def\SNR{\mathsf{SNR}}%

\global\long\def\Pr{\mathbb{P}}%
\global\long\def\E{\mathbb{E}}%
\global\long\def\R{\mathbb{R}}%
\global\long\def\Ncal{\mathcal{N}}%
\global\long\def\Dcal{\mathcal{D}}%
\global\long\def\ind{\mathbbm{1}}%
\global\long\def\Id{{\bI_{d}}}%

\global\long\def\var{\mathsf{var}}%
\global\long\def\cov{\mathsf{cov}}%
\global\long\def\lambmax{\lambda_{\mathsf{max}}}%
\global\long\def\lambmin{\lambda_{\mathsf{min}}}%

\global\long\def\iid{\overset{\mathsf{i.i.d.}}{\sim}}%
\global\long\def\median{\mathsf{median}}%
\global\long\def\Fcal{\mathcal{F}}%

\global\long\def\xt{{\bx_{t}}}%
\global\long\def\zt{{\bz_{t}}}%
\global\long\def\xtp{{\bx_{t+1}}}%
\global\long\def\ztp{{\bz_{t+1}}}%
\global\long\def\wt{{\bw_{t}}}%
\global\long\def\vt{{\bv_{t}}}%

\global\long\def\xs{{\bx_{s}}}%
\global\long\def\zs{{\bz_{s}}}%
\global\long\def\bGamma{\boldsymbol{\Gamma}}%

\global\long\def\Ak{{\bA^{(k)}}}%
\global\long\def\Wk{{\bW^{(k)}}}%
\global\long\def\Gammak{{\boldsymbol{\Gamma}^{(k)}}}%
\global\long\def\Yk{{\bY^{(k)}}}%

\global\long\def\Akhat{{\hat{\bA}^{(k)}}}%
\global\long\def\Wkhat{{\hat{\bW}^{(k)}}}%
\global\long\def\Gammakhat{{\hat{\boldsymbol{\Gamma}}^{(k)}}}%
\global\long\def\Ykhat{{\hat{\bY}^{(k)}}}%

\global\long\def\Ahat{{\hat{\bA}}}%
\global\long\def\What{{\hat{\bW}}}%
\global\long\def\Gammahat{{\hat{\boldsymbol{\Gamma}}}}%
\global\long\def\Yhat{{\hat{\bY}}}%
\global\long\def\Zhat{\hat{{\bZ}}}%

\global\long\def\Alhat{{\hat{\bA}^{(\ell)}}}%
\global\long\def\Wlhat{{\hat{\bW}^{(\ell)}}}%
\global\long\def\Gammalhat{{\hat{\boldsymbol{\Gamma}}^{(\ell)}}}%
\global\long\def\Ylhat{{\hat{\bY}^{(\ell)}}}%

\global\long\def\Al{{\bA^{(\ell)}}}%
\global\long\def\Wl{{\bW^{(\ell)}}}%
\global\long\def\Gammal{{\boldsymbol{\Gamma}^{(\ell)}}}%
\global\long\def\Yl{{\bY^{(\ell)}}}%

\global\long\def\Aki{{(\bA^{(k)})_{i}}}%
\global\long\def\Gammaki{{(\boldsymbol{\Gamma}^{(k)})_{i}}}%
\global\long\def\Yki{{(\bY^{(k)})_{i}}}%

\global\long\def\Gammakihat{{(\hat{\boldsymbol{\Gamma}}^{(k)})_{i}}}%
\global\long\def\Ykihat{{(\hat{\bY}^{(k)})_{i}}}%

\global\long\def\Ali{{(\bA^{(\ell)})_{i}}}%
\global\long\def\Gammali{{(\boldsymbol{\Gamma}^{(\ell)})_{i}}}%
\global\long\def\Yli{{(\bY^{(\ell)})_{i}}}%

\global\long\def\Ui{{\bU_{i}}}%
\global\long\def\Vi{{\bV_{i}}}%

\global\long\def\Hk{{\bH^{(k)}}}%
\global\long\def\Lk{{\bL^{(k)}}}%
\global\long\def\tmix{t_{\mathsf{mix}}}%

\global\long\def\dlyap{\mathsf{dlyap}}%

\global\long\def\Omegaone{\Omega_{1}}%
\global\long\def\Omegatwo{\Omega_{2}}%
\global\long\def\pk{p^{(k)}}%

\global\long\def\sumk{\sum_{k=1}^{K}}%
\global\long\def\sumtraj{\sum_{\text{trajectory}}}%
\global\long\def\sumi{\sum_{i=1}^{d}}%
\global\long\def\sumt{\sum_{t=1}^{T}}%

\global\long\def\Tsone{{T_{1}}}%
\global\long\def\Tstwo{{T_{2}}}%
\global\long\def\Tsthree{{T_{3}}}%
\global\long\def\Ts{{T}}%

\global\long\def\Ttone{{T_{\mathsf{total,1}}}}%
\global\long\def\Tttwo{{T_{\mathsf{total,2}}}}%
\global\long\def\Ttthree{{T_{\mathsf{total,3}}}}%
\global\long\def\Tt{{T_{\mathsf{total}}}}%

\global\long\def\Ttktwo{{T_{\mathsf{total,2}}^{(k)}}}%
\global\long\def\Ttkthree{{T_{\mathsf{total,3}}^{(k)}}}%
\global\long\def\Ttk{T_{\mathsf{total}}^{(k)}}%

\global\long\def\pmin{{p_{\mathsf{min}}}}%

\global\long\def\Gmax{\Gamma_{\mathsf{max}}}%
\global\long\def\Ebounded{\mathcal{E}_{\mathsf{bounded}}}%

\global\long\def\Dvec{D_{\mathsf{vec}}}%
\global\long\def\Dentry{D_{\mathsf{entry}}}%

\global\long\def\Mcal{\mathcal{M}}%
\global\long\def\Msubspace{\Mcal_{\mathsf{subspace}}}%
\global\long\def\Mclustering{\Mcal_{\mathsf{clustering}}}%
\global\long\def\Mclassification{\Mcal_{\mathsf{classification}}}%

\global\long\def\Tssubspace{T_{\mathsf{subspace}}}%
\global\long\def\Tsclustering{T_{\mathsf{clustering}}}%
\global\long\def\Tsclassification{T_{\mathsf{classification}}}%

\global\long\def\Ttsubspace{T_{\mathsf{total},\mathsf{subspace}}}%
\global\long\def\Ttclustering{T_{\mathsf{total},\mathsf{clustering}}}%
\global\long\def\Ttclassification{T_{\mathsf{total},\mathsf{classification}}}%

\global\long\def\Akm{\bA^{(k_{m})}}%
\global\long\def\Wkm{\bW^{(k_{m})}}%

\global\long\def\delAW{\delta_{A,W}}%
\global\long\def\delGY{\delta_{\Gamma,Y}}%
\global\long\def\Xm{\bX_{m}}%

\global\long\def\Gi{{\bG_{i}}}%
\global\long\def\Gihat{{\hat{\bG}_{i}}}%
\global\long\def\Hi{{\bH_{i}}}%
\global\long\def\Hihat{{\hat{\bH}_{i}}}%

\global\long\def\xmt{{\bx_{m,t}}}%
\global\long\def\xmtp{{\bx_{m,t+1}}}%
\global\long\def\wmt{{\bw_{m,t}}}%
\global\long\def\wmtp{{\bw_{m,t+1}}}%

\global\long\def\zmt{{\bz_{m,t}}}%
\global\long\def\vmt{{\bv_{m,t}}}%

\global\long\def\Mstar{{\bM_{\star}}}%
\global\long\def\yk{{\by^{(k)}}}%

\global\long\def\trunc{\mathsf{Trunc}}%
\global\long\def\Dinit{D_{\mathsf{init}}}%

\global\long\def\at{\ba_{t}}%
\global\long\def\xmsjt{\bx_{m,s_{j}(t)}}%
\global\long\def\xtilmsjt{\tilde{\bx}_{m,s_{j}(t)}}%

\global\long\def\Yktmix{{\bY_{\tmix-1}^{(k)}}}%
\global\long\def\Gammaktmix{{\boldsymbol{\Gamma}_{\tmix-1}^{(k)}}}%
\global\long\def\Yktmixi{{(\bY_{\tmix-1}^{(k)})_{i}}}%

\global\long\def\Fijtau{\boldsymbol{F}_{i,j,\tau}}%
\global\long\def\Ftilijtau{\tilde{\boldsymbol{F}}_{i,j,\tau}}%
\global\long\def\Dijtau{\boldsymbol{\Delta}_{i,j,\tau}}%
\global\long\def\Vistar{\bV_{i}^{\star}}%
\global\long\def\Uistar{\bU_{i}^{\star}}%

\global\long\def\stat{\mathsf{stat}}%
\global\long\def\Ck{{\mathcal{C}_{k}}}%
\global\long\def\Sigk{{\boldsymbol{\Sigma}^{(k)}}}%
\global\long\def\Sigl{{\boldsymbol{\Sigma}^{(\ell)}}}%
\global\long\def\cond{\kappa}%

\global\long\def\DelY{\Delta_{Y}}%
\global\long\def\Dsep{\Delta_{\Gamma,Y}}%
\global\long\def\DGY{\Delta_{\Gamma,Y}}%

\global\long\def\kwself{\kappa_{w}}%
\global\long\def\kwcross{\kappa_{w,\mathsf{cross}}}%
\global\long\def\lup{W_{\mathsf{max}}}%
\global\long\def\llb{W_{\mathsf{min}}}%
\global\long\def\Wmax{\lup}%
\global\long\def\Wmin{\llb}%

\global\long\def\kgself{\kappa_{\Gamma}}%
\global\long\def\kgcross{\kappa_{\Gamma,\mathsf{cross}}}%
\global\long\def\kg{\kappa_{\Gamma,\mathsf{cross}}}%
\global\long\def\ka{\kappa_{A}}%

\global\long\def\ut{{\bu_{t}}}%
\global\long\def\yt{{\by_{t}}}%
\global\long\def\Ftm{\mathcal{F}_{t-1}}%
\global\long\def\Ft{\mathcal{F}_{t}}%

\global\long\def\mukl{\boldsymbol{\mu}_{k,l}}%
\global\long\def\Sigkl{\boldsymbol{\Sigma}_{k,l}}%

\global\long\def\epsA{{\epsilon_{A}}}%
\global\long\def\epsW{{\epsilon_{W}}}%
\global\long\def\epsAcri{{\epsilon_{A,\mathsf{critical}}}}%
\global\long\def\epsWcri{{\epsilon_{W,\mathsf{critical}}}}%

\global\long\def\epsAcoa{{\epsilon_{A,\mathsf{coarse}}}}%
\global\long\def\epsWcoa{{\epsilon_{W,\mathsf{coarse}}}}%

\global\long\def\Tk{T^{(k)}}%
\global\long\def\Nk{N^{(k)}}%
\global\long\def\tmixk{\tmix^{(k)}}%

\global\long\def\Ck{\mathcal{C}_{k}}%
\global\long\def\Cpik{\mathcal{C}_{\pi(k)}}%
\global\long\def\Ckhat{\mathcal{C}_{\hat{k}}}%

\global\long\def\Sonetau{\mathcal{S}^{1,\tau_{1}}}%
\global\long\def\Stwotau{\mathcal{S}^{2,\tau_{2}}}%
\global\long\def\dxt{\boldsymbol{\delta}_{x,t}}%
\global\long\def\dzt{\boldsymbol{\delta}_{z,t}}%
\global\long\def\xtilt{\tilde{\bx}_{t}}%
\global\long\def\ztilt{\tilde{\bz}_{t}}%

\global\long\def\Dtau{\Delta^{\tau_{1},\tau_{2}}}%
\global\long\def\mukltil{\tilde{\boldsymbol{\mu}}_{k,\ell}}%
\global\long\def\Sigkltil{\tilde{\boldsymbol{\Sigma}}_{k,\ell}}%

\global\long\def\wtau{w^{\tau_{1},\tau_{2}}}%
\global\long\def\Dg{\Delta_{g}}%

\global\long\def\Yltmix{{\bY_{\tmix-1}^{(\ell)}}}%
\global\long\def\Gammaltmix{{\boldsymbol{\Gamma}_{\tmix-1}^{(\ell)}}}%

\global\long\def\Atilk{\tilde{\bA}^{(k)}}%
\global\long\def\Gtilk{\tilde{\boldsymbol{\Gamma}}^{(k)}}%
\global\long\def\Ytilk{\tilde{\bY}^{(k)}}%
\global\long\def\Wtilk{\tilde{\bW}^{(k)}}%

\global\long\def\Atill{\tilde{\bA}^{(l)}}%
\global\long\def\Gtill{\tilde{\boldsymbol{\Gamma}}^{(l)}}%
\global\long\def\Ytill{\tilde{\bY}^{(l)}}%
\global\long\def\Wtill{\tilde{\bW}^{(l)}}%

\global\long\def\stattau{\stat^{\tau_{1},\tau_{2}}}%
\global\long\def\stattautil{\tilde{\stat}^{\tau_{1},\tau_{2}}}%
\global\long\def\stattilg{\tilde{\stat}_{g}}%
\global\long\def\stattil{\tilde{\stat}}%

\global\long\def\statGammag{\stat_{\Gamma,g}}%
\global\long\def\statYg{\stat_{Y,g}}%

\global\long\def\xtilmt{\tilde{\bx}_{m,t}}%
\global\long\def\dmt{\boldsymbol{\delta}_{m,t}}%
\global\long\def\Ntil{\tilde{N}}%
\global\long\def\epsmix{\epsilon_{\mathsf{mix}}}%

\global\long\def\VT{{\bV_{T}}}%
\global\long\def\VTbar{{\bar{\bV}_{T}}}%
\global\long\def\Vlb{{\bV_{\mathsf{lb}}}}%
\global\long\def\Vup{{\bV_{\mathsf{up}}}}%
\global\long\def\Scal{\mathcal{S}}%
\global\long\def\sumtm{\sum_{t=0}^{T-1}}%

\global\long\def\wthat{{\hat{\bw}_{t}}}%
\global\long\def\DelA{\boldsymbol{\Delta}_{A}}%
\global\long\def\bDel{\boldsymbol{\Delta}}%
\global\long\def\wmthat{{\hat{\bw}_{m,t}}}%

\global\long\def\Dkl{D_{k,\ell}}%
\global\long\def\Fkl{F_{k,\ell}}%
\global\long\def\Dx{D_{x}}%
\global\long\def\Dklhat{\hat{D}_{k,\ell}}%
\global\long\def\DAW{\Delta_{A,W}}%

\global\long\def\XTX{\bX^{\top}\bX}%
\global\long\def\XTN{\bX^{\top}\bN}%
\global\long\def\NTX{\bN^{\top}\bX}%
\global\long\def\Nhat{\hat{\bN}}%

\global\long\def\DQ{D_{Q}}%
\global\long\def\ymt{\by_{m,t}}%
\global\long\def\umt{\bu_{m,t}}%
\global\long\def\Ajhat{\hat{\bA}^{(j)}}%
\global\long\def\Wjhat{\hat{\bW}^{(j)}}%
\global\long\def\Aj{\bA^{(j)}}%
\global\long\def\Wj{\bW^{(j)}}%

\global\long\def\Aone{\bA^{(1)}}%
\global\long\def\Atwo{\bA^{(2)}}%
\global\long\def\Wone{\bW^{(1)}}%
\global\long\def\Wtwo{\bW^{(2)}}%

\global\long\def\Stau{\mathcal{S}_{\tau}}%
\global\long\def\bSig{\boldsymbol{\Sigma}}%
\global\long\def\Akm{\bA^{(k_{m})}}%

\global\long\def\subspace{\mathsf{subspace}}%
\global\long\def\clustering{\mathsf{clustering}}%
\global\long\def\classification{\mathsf{classification}}%
\global\long\def\osf{\mathsf{o}}%

\global\long\def\pko{p_{\osf}^{(k)}}%
\global\long\def\pksubspace{p_{\subspace}^{(k)}}%
\global\long\def\pkclustering{p_{\clustering}^{(k)}}%
\global\long\def\pkclassification{p_{\classification}^{(k)}}%

\global\long\def\Tso{T_{\mathsf{o}}}%
\global\long\def\Tto{T_{\mathsf{total,o}}}%
\global\long\def\Mo{\Mcal_{\osf}}%

\global\long\def\Apik{\bA^{(\pi(k))}}%
\global\long\def\Wpik{\bW^{(\pi(k))}}%

\global\long\def\Omegatilde{\tilde{\Omega}}%

\global\long\def\ximt{{\bf \xi}_{m,t}}%
\global\long\def\DelW{\bDel_{W}}%
\global\long\def\DelG{\bDel_{\Gamma}}%


\global\long\def\Omegaj{\Omega_j}%
\global\long\def\Omegagj{\Omega_{g,j}}%
\global\long\def\Omegagone{\Omega_{g,1}}%
\global\long\def\Omegagtwo{\Omega_{g,2}}%


\global\long\def\hmij{\bh_{m,i,j}}%
\global\long\def\hmione{\bh_{m,i,1}}%
\global\long\def\hmitwo{\bh_{m,i,2}}%

\global\long\def\gmij{\bg_{m,i,j}}%
\global\long\def\gmione{\bg_{m,i,1}}%
\global\long\def\gmitwo{\bg_{m,i,2}}%

\global\long\def\hmigj{\bh_{m,i,g,j}}%
\global\long\def\hmigone{\bh_{m,i,g,1}}%
\global\long\def\hmigtwo{\bh_{m,i,g,2}}%

\global\long\def\gmigj{\bg_{m,i,g,j}}%
\global\long\def\gmigone{\bg_{m,i,g,1}}%
\global\long\def\gmigtwo{\bg_{m,i,g,2}}%

\global\long\def\hnigj{\bh_{n,i,g,j}}%
\global\long\def\hnigone{\bh_{n,i,g,1}}%
\global\long\def\hnigtwo{\bh_{n,i,g,2}}%

\global\long\def\gnigj{\bg_{n,i,g,j}}%
\global\long\def\gnigone{\bg_{n,i,g,1}}%
\global\long\def\gnigtwo{\bg_{n,i,g,2}}%

\global\long\def\gnij{\bg_{n,i,j}}%
\global\long\def\gnione{\bg_{n,i,1}}%
\global\long\def\gnitwo{\bg_{n,i,2}}%


\global\long\def\defhmij{\frac{1}{|\Omegaj|}\sum_{t\in \Omegaj}(\xmt)_i \ \xmt}%
\global\long\def\defhmione{\frac{1}{|\Omegaone|}\sum_{t\in \Omegaone}(\xmt)_i \ \xmt}%
\global\long\def\defhmitwo{\frac{1}{|\Omegatwo|}\sum_{t\in \Omegatwo}(\xmt)_i \ \xmt}%

\global\long\def\defgmij{\frac{1}{|\Omegaj|}\sum_{t\in \Omegaj}(\xmtp)_i \ \xmt}%
\global\long\def\defgmione{\frac{1}{|\Omegaone|}\sum_{t\in \Omegaone}(\xmtp)_i \ \xmt}%
\global\long\def\defgmitwo{\frac{1}{|\Omegatwo|}\sum_{t\in \Omegatwo}(\xmtp)_i \ \xmt}%

\global\long\def\defhmigj{\frac{1}{|\Omegagj|}\sum_{t\in \Omegagj}(\xmt)_i \ \xmt}%
\global\long\def\defhmigone{\frac{1}{|\Omegagone|}\sum_{t\in \Omegagone}(\xmt)_i \ \xmt}%
\global\long\def\defhmigtwo{\frac{1}{|\Omegagtwo|}\sum_{t\in \Omegagtwo}(\xmt)_i \ \xmt}%

\global\long\def\defgmigj{\frac{1}{|\Omegagj|}\sum_{t\in \Omegagj}(\xmtp)_i \ \xmt}%
\global\long\def\defgmigone{\frac{1}{|\Omegagone|}\sum_{t\in \Omegagone}(\xmtp)_i \ \xmt}%
\global\long\def\defgmigtwo{\frac{1}{|\Omegagtwo|}\sum_{t\in \Omegagtwo}(\xmtp)_i \ \xmt}%

\global\long\def\xnt{\bx_{n,t}}%
\global\long\def\xntp{\bx_{n,t+1}}%

\global\long\def\defhnigj{\frac{1}{|\Omegagj|}\sum_{t\in \Omegagj}(\xnt)_i \ \xnt}%
\global\long\def\defhnigone{\frac{1}{|\Omegagone|}\sum_{t\in \Omegagone}(\xnt)_i \ \xnt}%
\global\long\def\defhnigtwo{\frac{1}{|\Omegagtwo|}\sum_{t\in \Omegagtwo}(\xnt)_i \ \xnt}%

\global\long\def\defgnigj{\frac{1}{|\Omegagj|}\sum_{t\in \Omegagj}(\xntp)_i \ \xnt}%
\global\long\def\defgnigone{\frac{1}{|\Omegagone|}\sum_{t\in \Omegagone}(\xntp)_i \ \xnt}%
\global\long\def\defgnigtwo{\frac{1}{|\Omegagtwo|}\sum_{t\in \Omegagtwo}(\xntp)_i \ \xnt}%

\global\long\def\km{k_m}%
\global\long\def\kn{k_n}%

\global\long\def\Ykm{\bY^{(\km)}}%
\global\long\def\Ykn{\bY^{(\kn)}}%
\global\long\def\Gammakm{\bGamma^{(\km)}}%
\global\long\def\Gammakn{\bGamma^{(\kn)}}%

\begin{abstract}
We study the problem of learning a mixture of multiple linear dynamical
systems (LDSs) from unlabeled short sample trajectories, each generated
by one of the LDS models. Despite the wide applicability of mixture
models for time-series data, learning algorithms that come with end-to-end
performance guarantees are largely absent from existing literature.
There are multiple sources of technical challenges, including but
not limited to (1)~the presence of latent variables (i.e.~the unknown
labels of trajectories); (2)~the possibility that the sample trajectories
might have lengths much smaller than the dimension $d$ of the LDS
models; and~(3) the complicated temporal dependence inherent to time-series
data. To tackle these challenges, we develop a two-stage meta-algorithm,
which is guaranteed to efficiently recover each ground-truth LDS model
up to error $\tilde{O}(\sqrt{d/T})$, where $T$ is the total sample
size. We validate our theoretical studies with numerical experiments,
confirming the efficacy of the proposed algorithm. 
\end{abstract}
\noindent \textbf{Keywords:} linear dynamical systems, time series,
mixture models, heterogeneous data, mixed linear regression, meta-learning

\tableofcontents{}

\section{Introduction \label{sec:formulation}}

Imagine that we are asked to learn multiple linear dynamical systems
(LDSs) from a mixture of unlabeled sample trajectories --- namely,
each sample trajectory is generated by one of the LDSs of interest,
but we have no idea which system it is. To set the stage and facilitate
discussion, recall that in a classical LDS, one might observe a sample
trajectory $\{\xt\}_{0\le t\le T}$ generated by an LDS obeying $\xtp=\bA\xt+\wt,$
where $\bA\in\R^{d\times d}$ determines the system dynamics in the
noiseless case, and $\{\wt\}_{t\geq0}$ denote independent zero-mean
noise vectors with covariance $\cov(\wt)=\bW\succ\mathbf{0}$.  The
mixed LDSs setting considered herein extends classical LDSs by allowing
for mixed measurements as described below; see Figure \ref{fig:setting}
for a visualization of the scenario. 
\begin{itemize}
\item \emph{Multiple linear systems}. Suppose that there are $K$ different
LDSs as represented by $\{(\Ak,\Wk)\}_{1\le k\le K}$, where $\Ak\in\mathbb{R}^{d\times d}$
and $\Wk\in\R^{d\times d}$ represent the state transition matrix
and noise covariance matrix of the $k$-th LDS, respectively. Here
and throughout, we shall refer to $(\Ak,\Wk)$ as the system matrix
for the $k$-th LDS. We only assume that $(\Ak,\Wk)\neq(\Al,\Wl)$ for
any $k\neq\ell$, whereas $\Ak=\Al$ or $\Wk=\Wl$ is allowed.
\item \emph{Mixed sample trajectories}. We collect a total number of $M$
unlabeled trajectories from these LDSs. More specifically,
the $m$-th sample trajectory is drawn from one of the LDSs in the
following manner: set $(\bm{A},\bm{W})$ to be $(\Ak,\Wk)$ for some
$1\leq k\leq K$, and generate a trajectory of length $T_{m}$ obeying
\begin{equation}
\xtp=\bA\xt+\wt,\quad\text{where the }\bm{w}_{t}\text{'s are i.i.d., }\E[\wt]=\boldsymbol{0},\ \cov(\wt)=\bW\succ\boldsymbol{0}.\label{eq:setting-1}
\end{equation}
Note, however, that the label $k$ associated with each sample trajectory
is a latent variable not revealed to us, resulting in a mixture of
unlabeled trajectories. The current paper focuses on the case where
the length of each trajectory is somewhat short, making it infeasible
to estimate the system matrix from a single trajectory. 
\item \emph{Goal}. The aim is to jointly learn the system matrices $\{(\Ak,\Wk)\}_{1\le k\le K}$
from the mixture of sample trajectories. In particular, we seek to
accomplish this task in a sample-efficient manner, where the total
sample size is defined to be the aggregate trajectory length $\sum_{m=1}^{M}T_{m}$.
\end{itemize}
The mixed LDSs setting described above is motivated by many real-world
scenarios where a single time-series model is insufficient to capture
the complex and heterogeneous patterns in temporal data. In what follows,
we single out a few potential applications, with a longer list provided
in Section~\ref{sec:related_work}. 
\begin{itemize}
\item In psychology \cite{bulteel2016clustering}, researchers collect multiple
time-series trajectories (e.g.~depression-related symptoms over a
period of time) from different patients. Fitting this data with multi-modal
LDSs (instead of a single model) not only achieves better fitting
performance, but also helps identify subgroups of the persons, which
further inspires interpretations of the results and tailored treatments
for patients from different subgroups. 
\item Another example concerns an automobile sensor dataset, which consists
of a single continuous (but possibly time-varying) trajectory of measurements
from the sensors \cite{hallac2017toeplitz}. Through segmentation
of the trajectory, clustering of the short pieces, and learning within
each cluster, one can discover, and obtain meaningful interpretations
for, a small number of key driving modes (such as ``driving straight'',
``slowing down'', ``turning''). 
\item In robotics, a robot might collect data samples while navigating a
complex environment with varying terrains (e.g.~grass, sand, carpets,
rocks) \cite{brunskill2009provably}. With mixture modeling in place,
one can hope to jointly learn different dynamical systems underlying
various terrains, from a continuous yet time-varying trajectory acquired
by the robot. 
\end{itemize}
\begin{figure}
\begin{centering}
\includegraphics[width=0.7\textwidth]{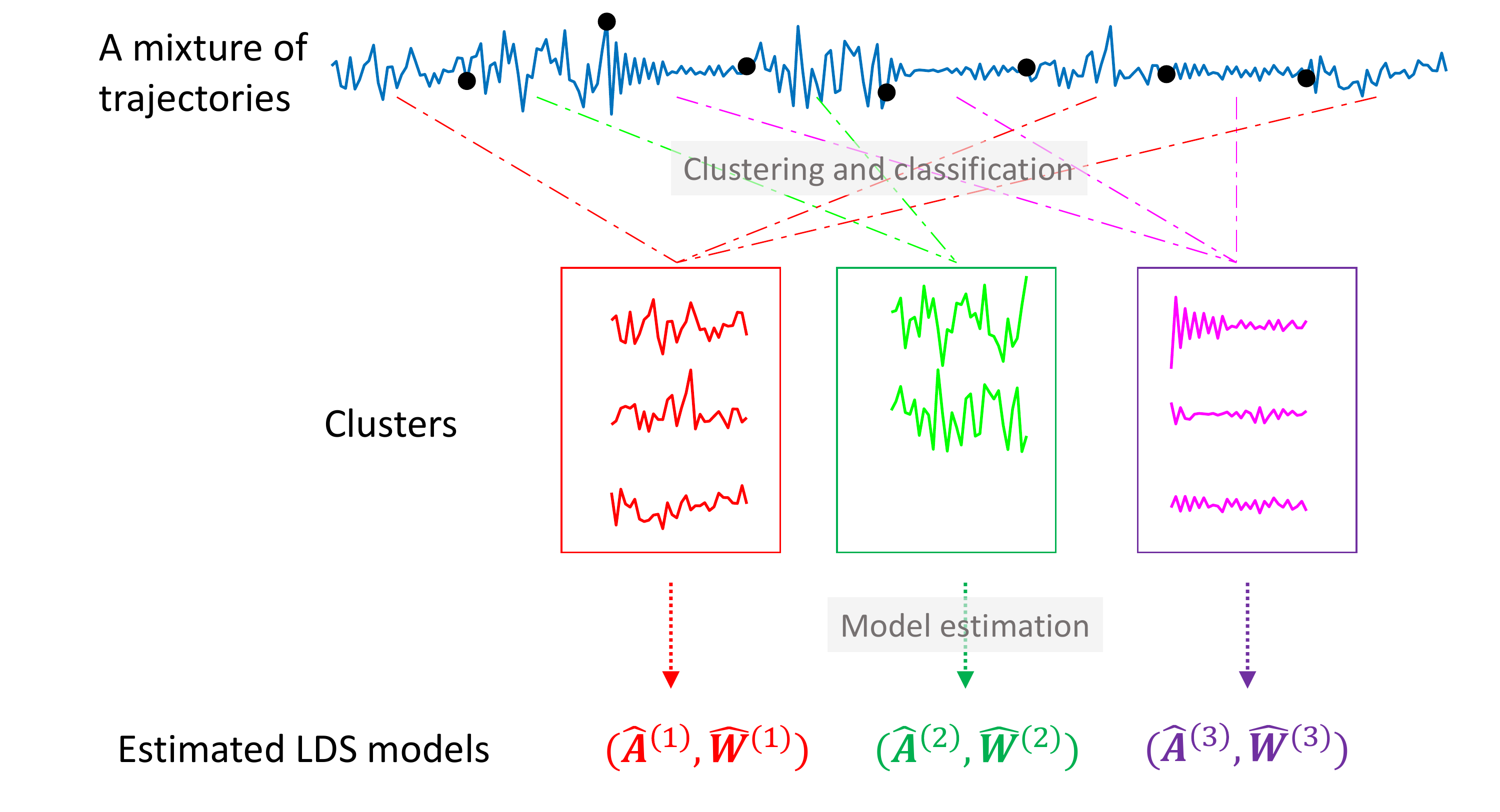}
\par\end{centering}
\caption{\label{fig:setting}A high-level visualization of the mixed LDSs formulation,
and the algorithmic idea of combining clustering, classification,
and model estimation. Here, we consider the special case where the
multiple short trajectories come from the segments of a single continuous
trajectory. The black dots within the continuous trajectory represent
the time steps when the latent variable (i.e.~the unknown label)
changes.}
\end{figure}

While there is no shortage of potential applications, a mixture of
linear dynamical systems is far more challenging to learn compared
to the classical setting with a single LDS. In particular, the presence
of the \emph{latent variables} --- namely, the unknown labels of
the sample trajectories --- significantly complicates matters. One
straightforward idea is to learn a coarse model for each trajectory,
followed by proper clustering of these coarse models (to be utilized
for refined model estimation); however, this idea becomes infeasible
in the high-dimensional setting unless all sample trajectories are
sufficiently long. Another popular approach is to alternate between
model estimation and clustering of trajectories, based on, say, the
expectation-maximization (EM) algorithm; unfortunately, there is no
theoretical support for such EM-type algorithms, and we cannot preclude
the possibilities that the algorithms get stuck at undesired local
optima. The lack of theoretical guarantees in prior literature motivates
us to come up with algorithms that enjoy provable performance guarantees.
Finally, the present paper is also inspired by the recent progress
in \emph{meta-learning for mixed linear regression} \cite{kong2020meta,kong2020robust},
where the goal is to learn multiple linear models from a mixture of
independent samples; note, however, that the temporal dependence underlying
time-series data in our case poses substantial challenges and calls
for the development of new algorithmic ideas. 

\subsection{Main contributions \label{subsec:contributions}}

In this work, we take an important step towards guaranteed learning
of mixed linear dynamical systems, focusing on algorithm design that
comes with end-to-end theoretical guarantees. In particular, we propose
a two-stage meta-algorithm to tackle the challenge of mixed LDSs:
\begin{enumerate}
\item \emph{Coarse estimation}: perform a coarse-level clustering of the
unlabeled sample trajectories (assisted by dimension reduction), and
compute initial model estimation for each cluster; 
\item \emph{Refinement}: classify additional trajectories (and add each
of them to the corresponding cluster) based on the above coarse model
estimates, followed by refined model estimation with the updated clusters. 
\end{enumerate}
This two-stage meta approach, as well as the specific methods for
individual steps, will be elucidated in Section~\ref{sec:algorithms}. 

Encouragingly, assuming that the noise vectors $\{\wmt\}$ are independent
Gaussian random vectors, the proposed two-stage algorithm is not only
computationally efficient, but also guaranteed to succeed in the presence
of a polynomial sample size. Informally, our algorithm achieves exact
clustering/classification of the sample trajectories as well as faithful
model estimation, under the following conditions (with a focus on
the dependency on the dimension $d$): (1) each short trajectory length
$T_{m}$ is allowed to be much smaller than $d$; (2) the total trajectory
length of each stage is linear in $d$ (up to logarithmic factors);
(3) to achieve a final model estimation error $\epsilon\rightarrow0$
(in the spectral norm), it suffices to have a total trajectory length
of order $d/\epsilon^{2}$ for each LDS model. See Section \ref{sec:main_results}
(in particular, Corollary \ref{cor:simplify}) for the precise statements
of our main results, which will also be validated numerically.

It is worth noting that, although we focus on mixed LDSs for concreteness,
we will make clear that the proposed modular algorithm is fairly flexible
and can be adapted to learning mixtures of other time-series models,
as long as certain technical conditions are satisfied; see Remark~\ref{rem:beyond_LDS}
at the end of Section \ref{sec:algorithms} for a detailed discussion.

\subsection{Notation}

Throughout this paper, vectors and matrices are represented by boldface
letters. Let $\Id$ be the $d\times d$ identity matrix. For a vector
$\bx\in\R^{d}$, we define $(\bx)_{i}\in\R$ as the $i$-th entry
of $\bx$, and $\|\bx\|_{2}$ as its $\ell_{2}$ norm; for a matrix
$\bX\in\R^{m\times n}$, we define $(\bX)_{i}\in\R^{n}$ as the transpose
of the $i$-th row of $\bX$, and $\|\bX\|$ (resp.~$\|\bX\|_{\Frm}$)
as its spectral (resp.~Frobenius) norm. For a square matrix $\bX\in\R^{d\times d}$,
we denote its eigenvalues as $\{\lambda_{i}(\bX)\}_{1\le i\le d}$,
and if $\bX$ is symmetric, we sort its eigenvalues (which are all
real) in descending order, with the maximum and minimum being $\lambmax(\bX)\coloneqq\lambda_{1}(\bX)$
and $\lambmin(\bX)\coloneqq\lambda_{d}(\bX)$, respectively; if in
addition $\bX$ is positive definite, we denote its condition number
as $\cond(\bX)\coloneqq\lambmax(\bX)/\lambmin(\bX)\ge1$. Let $\vc(\cdot)$
denote the vectorization of a matrix. For two matrices $\bA,\bB$,
let $\bA\otimes\bB$ denote their Kronecker product. If $\bA=[A_{ij}]_{1\leq i\leq m,1\leq j\leq n}$
and $\bB=[B_{ij}]_{1\leq i\leq m,1\leq j\leq n}$ are of the same
dimension, we denote by $\langle\bA,\bB\rangle\coloneqq\sum_{i=1}^{m}\sum_{j=1}^{n}A_{ij}B_{ij}$
their inner product. Given vectors $\bx_{1},\dots,\bx_{n}\in\R^{d}$
where $n<d$, let $\spn\{\bx_{i},1\le i\le N\}\in\R^{d\times n}$
represent the subspace spanned by these vectors.

Throughout this work, we always use the superscript ``$(k)$'' to
indicate ``the $k$-th model'', as in $\Ak$ and $\Wk$; this is
to be distinguished from the superscript ``$k$'' without the parentheses,
which simply means the power of $k$. For a discrete set $\Omega$,
we denote by $|\Omega|$ its cardinality. Define $\ind(\mathcal{E})$
to be the indicator function, which takes value $1$ if the event
$\mathcal{E}$ happens, and $0$ otherwise. Given a sequence of real
numbers $\{x_{i}\}_{1\le i\le N}$, we denote its median as $\median\{x_{i},1\le i\le N\}$.
Let $a_{n}\lesssim b_{n}$ and $a_{n}=O(b_{n})$ indicate that $a_{n}\le C_{0}b_{n}$
for all $n=1,2,\dots$, where $C_{0}>0$ is some universal constant;
moreover, $a_{n}\gtrsim b_{n}$ is equivalent to $b_{n}\lesssim a_{n}$,
and $a_{n}\asymp b_{n}$ indicates that $a_{n}\lesssim b_{n}$ and
$a_{n}\gtrsim b_{n}$ hold simultaneously. We shall also let $\poly(n)$
denote some polynomial in $n$ of a constant degree.
For a positive integer $n$, we denote $[n]\coloneqq \{1, 2, \dots, n\}$.

\section{Algorithms \label{sec:algorithms}}

We propose a two-stage paradigm for solving the mixed LDSs problem,
as summarized in Algorithm~\ref{alg:overall}. It consists of several
subroutines as described in Algorithms~\ref{alg:subspace}--\ref{alg:classification},
which will be explained in detail in the remainder of this section.
Note that Algorithm~\ref{alg:overall} is stated in a modular manner,
and one might replace certain subroutines by alternative schemes in
order to handle different settings and model assumptions. 

Before delving into the algorithmic details, we introduce some
additional notation and assumptions that will be useful for our presentation,
without much loss of generality. To begin with, we augment the notation
for each sample trajectory with its trajectory index; that is, for
each $1\le m\le M$, the $m$-th trajectory --- denoted by $\Xm\coloneqq\{\xmt\}_{0\le t\le T_{m}}$
--- starts with some initial state $\bx_{m,0}\in\R^{d}$, and evolves
according to the $k_{m}$-th LDS for some \emph{unknown} label $1\le k_{m}\le K$
such that
\begin{equation}
\xmtp=\bA^{(k_{m})}\xmt+\wmt,\quad\text{where the }\wmt\text{'s are i.i.d., }\E[\wmt]=\boldsymbol{0},\ \cov(\wmt)=\bW^{(k_{m})}\succ\boldsymbol{0}\label{eq:setting}
\end{equation}
for all $0\le t\le T_{m}-1$. Next, we divide the $M$ sample trajectories
$\{\Xm\}_{1\le m\le M}$ in hand into three disjoint subsets $\Msubspace,\Mclustering,\Mclassification$
satisfying 
\[
\Msubspace\cup\Mclustering\cup\Mclassification=\{1,2,\dots,M\},
\]
where each subset of samples will be employed to perform one subroutine.
We assume that all trajectories within each subset have the same length,
namely,
\begin{equation}
T_{m}=\begin{cases}
\Tssubspace & \text{if}\quad m\in\Msubspace,\\
\Tsclustering & \text{if}\quad m\in\Mclustering,\\
\Tsclassification & \text{if}\quad m\in\Mclassification.
\end{cases}\label{eq:defn-Tm}
\end{equation}
Finally, we assume that $K\le d$, so that performing subspace estimation
in Algorithm~\ref{alg:overall} will be helpful (otherwise one might
simply eliminate this step). The interested readers are referred to
Appendix \ref{sec:extensions} for discussions of some potential extensions
of our algorithms (e.g.~adapting to the case where the trajectories
within a subset have different lengths, or the case where certain
parameters are \emph{a priori} unknown). 

\begin{algorithm}[tbp]
\DontPrintSemicolon
\caption{A two-stage algorithm for mixed LDSs} \label{alg:overall}
{\bf Input:} {$M$ short trajectories $\{\Xm\}_{1\le m\le M}$ (where $\Xm=\{\xmt\}_{0\le t\le T_{m}}$); parameters $\tau, G$}. \\
{\color{blue}\tcp{Stage 1: coarse estimation.}} 
Run Algorithm~\ref{alg:subspace} with $\{\Xm\}_{m \in \Msubspace}$ to obtain subspaces $\{\Vi,\Ui\}_{1\le i\le d}$ . \label{line:subspace} \\
Run Algorithm~\ref{alg:clustering} with $\{\Xm\}_{m \in \Mclustering}$, $\{\Vi, \Ui\}_{1 \le i \le d}$, $\tau$, $G$, to obtain clusters $\{\Ck\}_{1\le k\le K}$. \label{line:clustering} \\
Run Algorithm~\ref{alg:LS} with $\{\Ck\}_{1\le k\le K}$ to obtain coarse models $\{\Akhat,\Wkhat\}_{1\le k \le K}$. \label{line:LS1} \\
{\color{blue}\tcp{Stage 2: refinement.}} 
Run Algorithm~\ref{alg:classification} with $\{\Xm\}_{m \in \Mclassification}, \{\Akhat,\Wkhat\}_{1\le k \le K}, \{\Ck\}_{1 \le k \le K}$, to update clusters $\{\Ck\}$. \label{line:classification} \\
Run Algorithm~\ref{alg:LS} with $\{\Ck\}_{1\le k\le K}$ to obtain refined models $\{\Akhat,\Wkhat\}_{1\le k \le K}$. \label{line:LS2} \\
{\bf Output:} {final model estimation $\{\Akhat,\Wkhat\}_{1 \le k \le K}$ and clusters $\{\Ck\}_{1 \le k \le K}$}.
\end{algorithm}

\subsection{Preliminary facts}

We first introduce some preliminary background on the autocovariance
structures and mixing property of linear dynamical systems, which
form the basis of our algorithms for subspace estimation and clustering
of trajectories.

\paragraph{Stationary covariance matrices.}

Consider first the basic LDS with $\xtp=\bA\xt+\wt$. If $\E[\xt]=\mathbf{0},\cov(\xt)=\boldsymbol{\Gamma}$
and $\E[\wt]=\boldsymbol{0},\cov(\wt)=\bW$, then it follows that
\[
\E[\xtp]=\mathbf{0}\qquad\text{and}\qquad\cov(\xtp)=\bA\cdot\cov(\xt)\cdot\bA^{\top}+\cov(\wt)=\bA\bGamma\bA^{\top}+\bW.
\]
Under certain assumption on stability, this leads to the \emph{order-0
stationary autocovariance matrix} $\bGamma(\bA,\bW)$ of the LDS model
$(\bA,\bW)$, defined as follows \cite{kailath2000linear}:
\begin{align}
\boldsymbol{\Gamma}(\bA,\bW) & \coloneqq\E\big[\xt\xt^{\top}|\bA,\bW\big]=\bA\cdot\boldsymbol{\Gamma}(\bA,\bW)\cdot\bA^{\top}+\bW=\sum_{t=0}^{\infty}\bA^{t}\bW(\bA^{t})^{\top}.\label{eq:def_Gamma}
\end{align}
Furthermore, define the \emph{order-1 stationary autocovariance matrix}
as
\begin{equation}
\bY(\bA,\bW)\coloneqq\E\big[\xtp\xt^{\top}|\bA,\bW\big]=\E\big[(\bA\xt+\wt)\xt^{\top}|\bA,\bW\big]=\bA\cdot\boldsymbol{\Gamma}(\bA,\bW).\label{eq:def_Y}
\end{equation}
For the mixed LDSs case (\ref{eq:setting}) with $K$ models, we abbreviate
\begin{equation}
\Gammak\coloneqq\boldsymbol{\Gamma}(\Ak,\Wk),\quad\Yk\coloneqq\bY(\Ak,\Wk),\quad1\le k\le K.\label{eq:def_Gammak_Yk}
\end{equation}
In turn, the definitions of $\{\Gammak,\Yk\}$ suggest that we can
recover $\Ak,\Wk$ from $\Gammak,\Yk$ as follows:
\begin{align}
\Ak & =\Yk\Gammak^{-1},\quad\Wk=\Gammak-\Ak\Gammak\Ak^{\top}.\label{eq:GY_to_AW}
\end{align}

\paragraph{Mixing.}

Expanding the LDS recursion $\xtp=\bA\xt+\wt$ with $\cov(\wt)=\bW$,
we have\texttt{
\begin{equation}
\bx_{t+s}=\bA^{s}\xt+\sum_{i=1}^{s}\bA^{i-1}\bw_{t+s-i}\label{eq:x-ts-expansion}
\end{equation}
}for all $s\ge1$. If $\bA$ satisfies certain assumptions regarding
stability and if $s$ is larger than
certain mixing time of the LDS, then the first term on the right-hand
side of \eqref{eq:x-ts-expansion} approaches zero, while the second
term is independent of the history up to time $t$, with covariance
close to the stationary autocovariance matrix $\bGamma(\bA,\bW)$.
This suggests that, for two samples within the same trajectory that
are sufficiently far apart, we can treat them as being (almost) independent
of each other; this simple fact shall inspire our algorithmic design
and streamline statistical analysis later on. It is noteworthy that
the proposed algorithms do not require prior knowledge about the mixing
times of the LDS models.

\subsection{Subspace estimation \label{subsec:subspace}}

\paragraph{Procedure.}

Recall that the notation $\Gammaki\in\R^{d}$ (resp. $\Yki$) represents
the transpose of the $i$-th row of $\Gammak$ (resp. $\Yk$) defined
in \eqref{eq:def_Gammak_Yk}. With this set of notation in place,
let us define the following subspaces:
\begin{equation}
\Vistar\coloneqq\spn\Big\{\Gammaki,1\le k\le K\Big\},\quad\Uistar\coloneqq\spn\Big\{\Yki,1\le k\le K\Big\},\quad1\le i\le d.\label{eq:defn-Vi-Ui-star}
\end{equation}
It is easily seen from the construction that each of these subspaces
has rank at most $K$. 

As it turns out, the collection of $2d$ subspaces defined in \eqref{eq:defn-Vi-Ui-star}
provides crucial low-dimensional information about the linear dynamical
systems of interest. This motivates us to develop a data-driven method
to estimate these subspaces, which will in turn allow for proper dimension
reduction in subsequent steps. Towards this end, we propose to employ
a spectral method for subspace estimation using sample trajectories
in $\Msubspace$: 

\begin{itemize}
\item[(i)] divide $\{0,1,\dots,\Tssubspace\}$ into
four segments of the same size, and denote the 2nd (resp.~4th) segment
as $\Omegaone$ (resp.~$\Omegatwo$); 
\item[(ii)] for each $m\in\Msubspace, 1 \le i \le d, j \in \{1,2\}$, compute 
\begin{equation}
\hmij \coloneqq \defhmij, \quad \gmij \coloneqq \defgmij, \label{eq:hg};
\end{equation}
\item[(iii)] for each $1\le i\le d$,
compute the following matrices
\begin{equation}
\Hihat \coloneqq\frac{1}{|\Msubspace|}\sum_{m\in\Msubspace} \hmione \hmitwo^{\top}, \quad
\Gihat \coloneqq\frac{1}{|\Msubspace|}\sum_{m\in\Msubspace} \gmione \gmitwo^{\top}, \label{eq:Hihat_Gihat}
\end{equation}
and let $\Vi\in\R^{d\times K}$ (resp.~$\Ui$) be the top-$K$ eigenspace
of $\Hihat+\Hihat^{\top}$ (resp.~$\Gihat+\Gihat^{\top}$). \end{itemize}
The output $\{\Vi,\Ui\}_{1\le i\le d}$ will then serve as our estimate
of $\{\Vistar,\Uistar\}_{1\le i\le d}$. This spectral approach is
summarized in Algorithm~\ref{alg:subspace}. 

\paragraph{Rationale.}

According to the mixing property of LDS, if $\Tssubspace$ is larger
than some appropriately defined mixing time, then each sample trajectory
in $\Msubspace$ will mix sufficiently and nearly reach stationarity
(when constrained to $t\in\Omegaone\cup\Omegatwo$). In this case,
it is easy to check that
\[
\E\big[\hmij]\approx(\bGamma^{(k_{m})})_{i},\quad\E\big[\gmij]\approx(\bY^{(k_{m})})_{i},\quad1\le i\le d,\quad j \in \{1,2\}.
\]
Moreover, the samples in $\Omegaone$ are nearly independent of those
in $\Omegatwo$ as long as the spacing between them exceeds the mixing
time, and therefore, 
\begin{subequations}
\label{eq:Hi_Gi}
\begin{align}
\E\Big[\Hihat\Big] & \approx\frac{1}{|\Msubspace|}\sum_{m\in\Msubspace}(\bGamma^{(k_{m})})_{i}(\bGamma^{(k_{m})})_{i}^{\top}=\sumk\pksubspace\Gammaki\Gammaki^{\top}\eqqcolon\Hi,\label{eq:Hi}\\
\E\Big[\Gihat\Big] & \approx\frac{1}{|\Msubspace|}\sum_{m\in\Msubspace}(\bY^{(k_{m})})_{i}(\bY^{(k_{m})})_{i}^{\top}=\sumk\pksubspace\Yki\Yki^{^{\top}}\eqqcolon\Gi,\label{eq:Gi}
\end{align}
\end{subequations}
where $\pksubspace$ denotes the fraction of sample trajectories generated
by the $k$-th model, namely, 
\begin{equation}
\pksubspace\coloneqq\frac{1}{|\Msubspace|}\sum_{m\in\Msubspace}\ind(k_{m}=k),\quad1\le k\le K.\label{eq:defn-p-subspace}
\end{equation}
As a consequence, if $\Tssubspace$ and $|\Msubspace|$ are both sufficiently
large, then one might expect $\Hihat$ (resp.~$\Gihat$) to be a
reasonably good approximation of $\Hi$ (resp.~$\Gi$), the latter
of which has rank at most $K$ and has $\Vistar$ (resp.~$\Uistar$)
as its eigenspace. All this motivates us to compute the rank-$K$
eigenspaces of $\Hihat+\Hihat^{\top}$ and $\Gihat+\Gihat^{\top}$
in Algorithm~\ref{alg:subspace}.

\begin{algorithm}[tbp]
\DontPrintSemicolon
\caption{Subspace estimation} \label{alg:subspace}
{\bf Input:} {short trajectories $\{\Xm\}_{m \in \Msubspace}$, where $\Xm=\{\xmt\}_{0\le t\le\Tssubspace}$.} \\
Let $N\gets\lfloor\Tssubspace/4\rfloor$, and $\Omegaone\gets\{N+j,1\le j\le N\},\Omegatwo\gets\{3N+j,1\le j\le N\}$. \\
\For{$(m,i,j)\in\Msubspace \times [d] \times [2]$}{
Compute
\begin{equation*}
\hmij \gets \defhmij, \quad \gmij \gets \defgmij.
\end{equation*} \\
}
\For{$i = 1, \dots, d$}{
Compute
\begin{equation*}
\Hihat \gets\frac{1}{|\Msubspace|}\sum_{m\in\Msubspace} \hmione \hmitwo^{\top}, \quad
\Gihat \gets\frac{1}{|\Msubspace|}\sum_{m\in\Msubspace} \gmione \gmitwo^{\top}, 
\end{equation*} \\
Let $\Vi \in \R^{d\times K}$ (resp.~$\Ui$) be the top-$K$ eigenspace of $\Hihat+\Hihat^{\top}$ (resp.~$\Gihat+\Gihat^{\top}$). \\
}
{\bf Output:} subspaces {$\{\Vi, \Ui\}_{1 \le i \le d}$}.
\end{algorithm}

\subsection{Clustering \label{subsec:clustering}}

This step seeks to divide the sample trajectories in $\Mclustering$
into $K$ clusters (albeit not perfectly), such that the trajectories
in each cluster are primarily generated by the same LDS model. We
intend to achieve this by performing pairwise comparisons of the autocovariance
matrices associated with the sample trajectories. 

\paragraph{Key observation.}

Even though $(\Ak,\Wk)\neq(\Al,\Wl)$, it is indeed possible that
$\Gammak=\Gammal$ or $\Yk=\Yl$. Therefore, in order to differentiate
sample trajectories generated by different systems based on $\boldsymbol{\Gamma}(\bA,\bW)$
and $\boldsymbol{Y}(\bA,\bW)$, it is important to ensure separation
of $(\Gammak,\Yk)$ and $(\Gammal,\Yl)$ when $k\neq l$, which can
be guaranteed by the following fact.
\begin{fact}
\label{fact:separation}If $(\Ak,\Wk)\neq(\Al,\Wl)$, then we have
either $\Gammak\neq\Gammal$ or $\Yk\neq\Yl$ (or both).
\end{fact}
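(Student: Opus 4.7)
The plan is to prove the contrapositive: assuming $\Gammak = \Gammal$ and $\Yk = \Yl$ simultaneously, I will show that $(\Ak,\Wk) = (\Al,\Wl)$, which contradicts the hypothesis. The crucial ingredient is that the order-0 stationary autocovariance $\Gammak$ is invertible, so that it can be ``cancelled'' from the identity $\Yk = \Ak \Gammak$ to recover $\Ak$ uniquely, after which $\Wk$ is determined by the discrete Lyapunov relation.

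First, I would establish positive definiteness of $\Gammak$. This follows immediately from the series representation
\begin{equation*}
\Gammak = \sum_{t=0}^{\infty} (\Ak)^{t}\, \Wk\, ((\Ak)^{t})^{\top}
\end{equation*}
given in \eqref{eq:def_Gamma}: the $t=0$ summand is $\Wk \succ \boldsymbol{0}$ by assumption, and every other summand is positive semidefinite, so $\Gammak \succeq \Wk \succ \boldsymbol{0}$, and in particular $\Gammak$ is invertible. The same argument yields $\Gammal \succ \boldsymbol{0}$.

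Next, I would use the two-way formulas \eqref{eq:GY_to_AW} that the paper has already recorded. Assuming $\Gammak = \Gammal$ and $\Yk = \Yl$, invertibility gives
\begin{equation*}
\Ak = \Yk (\Gammak)^{-1} = \Yl (\Gammal)^{-1} = \Al,
\end{equation*}
and then substituting into the Lyapunov identity yields
\begin{equation*}
\Wk = \Gammak - \Ak \Gammak (\Ak)^{\top} = \Gammal - \Al \Gammal (\Al)^{\top} = \Wl.
\end{equation*}
Hence $(\Ak,\Wk) = (\Al,\Wl)$, which is the desired contradiction.

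There is essentially no obstacle here; the statement is a direct consequence of relation \eqref{eq:GY_to_AW} together with the positive definiteness of $\Gammak$. The only thing that truly needs verification is that $\Gammak$ is invertible, and this is guaranteed by the standing assumption $\Wk \succ \boldsymbol{0}$ via the series expansion of $\Gammak$. The whole argument can be written in a few lines.
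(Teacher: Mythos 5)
Your proposal is correct and follows essentially the same route as the paper's proof: both argue by contradiction from $\Gammak=\Gammal$ and $\Yk=\Yl$ using the inversion formulas \eqref{eq:GY_to_AW} to conclude $\Ak=\Al$ and then $\Wk=\Wl$. Your explicit verification that $\Gammak\succeq\Wk\succ\boldsymbol{0}$ (hence invertible) is a small but welcome addition that the paper leaves implicit.
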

\begin{proof}
We prove this by contradiction. Suppose instead that $\Gammak=\Gammal$
and $\Yk=\Yl$, then (\ref{eq:GY_to_AW}) yields
\begin{align*}
\Ak & =\Yk\Gammak^{-1}=\Yl\Gammal^{-1}=\Al,\quad\text{and}\\
\Wk & =\Gammak-\Ak\Gammak\Ak^{\top}=\Gammal-\Al\Gammal\Al^{\top}=\Wl,
\end{align*}
which is contradictory to the assumption that $(\Ak,\Wk)\neq(\Al,\Wl)$.
\end{proof}
\begin{algorithm}[tbp]
\DontPrintSemicolon
\caption{Clustering} \label{alg:clustering}
{\bf Input:} short trajectories $\{\Xm\}_{m \in \Mclustering}$, where $\Xm=\{\xmt\}_{0\le t\le\Tsclustering}$; subspaces $\{\Vi, \Ui\}_{1 \le i \le d}$; testing threshold $\tau$; number of copies $G$. \\
Let $N\gets\lfloor\Tsclustering/4G\rfloor$, and  
$$
\Omega_{g,1}\gets\Big\{(4g-3)N+j,1\le j\le N\Big\}, \quad \Omega_{g,2}\gets\Big\{(4g-1)N+j,1\le j\le N\Big\}, \quad 1 \le g \le G.
$$ \\
\For{$(m,i,g,j) \in \Mclustering \times [d] \times [G] \times [2]$}{
Compute
\begin{equation*}
\hmigj \gets \defhmigj, \quad \gmigj \gets \defgmigj.
\end{equation*} \\
}
{\color{blue}\tcp{Compute the similarity matrix  $\bS$:}} 
\For{$(m, n) \in \Mclustering \times \Mclustering$}{ 
\For{$g = 1, \dots, G$}{
Compute
\begin{subequations}
\label{eq:stat_g}
\begin{align}
\statGammag&\gets\sum_{i=1}^{d}\Big\langle\Vi^{\top}\big(\hmigone-\hnigone\big), \Vi^{\top}\big(\hmigtwo-\hnigtwo\big)\Big\rangle, \label{eq:statGammag} \\
\statYg&\gets\sum_{i=1}^{d}\Big\langle\Ui^{\top}\big(\gmigone-\gnigone\big), \Ui^{\top}\big(\gmigtwo-\gnigtwo\big)\Big\rangle,  \label{eq:statYg}
\end{align}
\end{subequations} \\
}
Set
$
S_{m,n} \gets \ind\Big(\mathsf{median}\big\{\statGammag, 1 \le g \le G\big\} + \mathsf{median}\big\{\statYg, 1 \le g \le G\big\} \le \tau \Big)
$. \\
}
Divide $\Mclustering$ into $K$ clusters $\{\Ck\}_{1\le k\le K}$ according to $\{S_{m,n}\}_{m,n\in \Mclustering}$. \\
{\bf Output:} clusters $\{\Ck\}_{1 \le k \le K}$.
\end{algorithm}

\paragraph{Idea.}

Let us compare a pair of sample trajectories $\{\xt\}_{0\le t\le\Tsclustering}$
and $\{\zt\}_{0\le t\le\Tsclustering}$, where $\{\xt\}$ is generated
by the system $(\Ak,\Wk)$ and $\{\zt\}$ by the system $(\Al,\Wl)$.
In order to determine whether $k=\ell$, we propose to estimate the
quantity $\|\Gammak-\Gammal\|_{\Frm}^{2}+\|\Yk-\Yl\|_{\Frm}^{2}$
using the data samples $\{\xt\}$ and $\{\zt\}$, which is expected
to be small (resp.~large) if $k=\ell$ (resp.~$k\neq\ell$). To
do so, let us divide $\{0,1,\dots,\Tsclustering\}$ evenly into four
segments, and denote by $\Omegaone$ (resp.~$\Omegatwo$) the 2nd
(resp.~4th) segment, akin to what we have done in the previous step.
Observe that
\[
\Ui^{\top}\E\Big[\big((\xtp)_{i}\xt-(\ztp)_{i}\zt\big)\Big]\approx\Ui^{\top}\big(\Yki-\Yli\big)
\]
for all $1\le i\le d$ and $t\in\Omegaone\cup\Omegatwo$. Assuming
sufficient mixing and utilizing (near) statistical independence due
to sample splitting, we might resort to the following statistic:
\begin{equation}
\stat_{Y}\coloneqq\sum_{i=1}^{d}\Big\langle\Ui^{\top}\frac{1}{|\Omegaone|}\sum_{t\in\Omegaone}\big((\xtp)_{i}\xt-(\ztp)_{i}\zt\big),\Ui^{\top}\frac{1}{|\Omegatwo|}\sum_{t\in\Omegatwo}\big((\xtp)_{i}\xt-(\ztp)_{i}\zt\big)\Big\rangle,\label{eq:def_statY}
\end{equation}
whose expectation is given by
\begin{align*}
\E[\stat_{Y}] & \approx\sum_{i=1}^{d}\Big\langle\Ui^{\top}\big(\Yki-\Yli\big),\Ui^{\top}\big(\Yki-\Yli\big)\Big\rangle\\
 & =\sum_{i=1}^{d}\Big\|\Ui^{\top}\big(\Yki-\Yli\big)\Big\|_{2}^{2}\approx\sum_{i=1}^{d}\big\|\Yki-\Yli\big\|_{2}^{2}=\big\|\Yk-\Yl\big\|_{\Frm}^{2};
\end{align*}
here, the first approximation is due to the near independence between
samples from $\Omegaone$ and those from $\Omegatwo$, whereas the
second approximation holds if each subspace $\Ui$ is sufficiently
close to $\Uistar=\spn\{(\bY^{(j)})_{i},1\le j\le K\}$. The purpose
of utilizing $\{\Ui\}$ is to reduce the variance of $\stat_{Y}$.
Similarly, we can compute another statistic based on $\{\bm{V}_{i}\}$
as follows:\texttt{
\begin{equation}
\stat_{\Gamma}\coloneqq\sum_{i=1}^{d}\Big\langle\Vi^{\top}\frac{1}{|\Omegaone|}\sum_{t\in\Omegaone}\big((\xt)_{i}\xt-(\zt)_{i}\zt\big),\Vi^{\top}\frac{1}{|\Omegatwo|}\sum_{t\in\Omegatwo}\big((\xt)_{i}\xt-(\zt)_{i}\zt\big)\Big\rangle,\label{eq:def_statGamma}
\end{equation}
}which has expectation
\[
\E[\stat_{\Gamma}]\approx\sumi\|\Vi^{\top}(\Gammaki-\Gammali)\|_{2}^{2}\approx\|\Gammak-\Gammal\|_{\Frm}^{2}.
\]
Consequently, we shall declare $k\neq\ell$ if $\stat_{\Gamma}+\stat_{Y}$
exceeds some appropriate threshold $\tau$.  

\paragraph{Procedure.}

We are now positioned to describe the proposed clustering procedure.
We first compute the statistics $\stat_{\Gamma}$ and $\stat_{Y}$
for each pair of sample trajectories in $\Mclustering$ by means of
the method described above, and then construct a similarity matrix
$\bS$, in a way that $S_{m,n}$ is set to $0$ if $\stat_{\Gamma}+\stat_{Y}$
(computed for the $m$-th and $n$-th trajectories) is larger
than a threshold $\tau$, or set to $1$ otherwise. In order to enhance
the robustness of these statistics, we divide $\{0,1,\dots,\Tsclustering\}$
into $4G$ (instead of $4$) segments, compute $G$ copies of $\stat_{\Gamma}$
and $\stat_{Y}$, and take the medians of these values. Next, we apply
a mainstream clustering algorithm (e.g.~spectral clustering \cite{chen2021spectral})
to the similarity matrix $\bS$, and divide $\Mclustering$ into $K$
disjoint clusters $\{\Ck\}_{1\le k\le K}$. The complete clustering
procedure is provided in Algorithm~\ref{alg:clustering}. The threshold
$\tau$ shall be chosen to be on the order of $\min_{1\le k<\ell\le K}\{\|\Gammak-\Gammal\|_{\Frm}^{2}+\|\Yk-\Yl\|_{\Frm}^{2}\}$
(which is strictly positive due to Fact~\ref{fact:separation}),
and it suffices to set the number of copies $G$ to be on some logarithmic
order. Our choice of these parameters will be specified in Theorem~\ref{thm:case0}
momentarily. 

\subsection{Model estimation }

Suppose that we have obtained reasonably good clustering accuracy
in the previous step, namely for each $1\le k\le K$, the cluster
$\Ck$ output by Algorithm~\ref{alg:clustering} contains mostly trajectories
generated by the same model $(\bA^{(\pi(k))},\bW^{(\pi(k))})$, with
$\pi$ representing some permutation function of $\{1,\dots,K\}$.
We propose to obtain a coarse model estimation and covariance estimation,
as summarized in Algorithm~\ref{alg:LS}. More specifically, for each
$k$, we use the samples $\{\{\xmt\}_{0\le t\le T_{m}}\}_{m\in\Ck}$
to obtain an estimate of $\bA^{(\pi(k))}$ by solving the following
least-squares problem \cite{simchowitz2018learning,sarkar2019near}
\begin{align}
\Akhat & \coloneqq\arg\min_{\bA}\sum_{m\in\Ck}\sum_{0\le t\le T_{m}-1}\|\xmtp-\bA\xmt\|_{2}^{2},
\end{align}
whose closed-form solution is given in (\ref{eq:refined_A}). Next,
we can use $\Akhat$ to estimate the noise vector
\[
\wmthat\coloneqq\xmtp-\Akhat\xmt\approx\wmt,
\]
and finally estimate the noise covariance $\bW^{(\pi(k))}$ with the
empirical covariance of $\{\wmthat\}$, as shown in (\ref{eq:refined_W}). 

\begin{algorithm}[tbp]
\DontPrintSemicolon
\caption{Least squares and covariance estimation} \label{alg:LS}
{\bf Input:} {clusters $\{\Ck\}_{1 \le k \le K}$}. \\
\For{$k = 1, \dots, K$}{ 
Compute
\begin{subequations}
\label{eq:refined_A_W}
\begin{align}
\Akhat &\gets\Big(\sum_{m\in\Ck}\sum_{0\le t\le T_{m}-1}\xmtp\xmt^{\top}\Big)\Big(\sum_{m\in\Ck}\sum_{0\le t\le T_{m}-1}\xmt\xmt^{\top}\Big)^{-1}, \quad \text{and} \label{eq:refined_A} \\
\Wkhat &\gets\frac{1}{\sum_{m\in\Ck}T_m}\sum_{m\in\Ck}\sum_{0\le t\le T_{m}-1}\wmthat\wmthat^{\top}, \quad \text{where} \quad \wmthat=\xmtp-\Akhat\xmt \label{eq:refined_W}.
\end{align}
\end{subequations}
}
{\bf Output:} {estimated models $\{\Akhat,\Wkhat\}_{1 \le k \le K}$}.
\end{algorithm}

\subsection{Classification \label{subsec:classification}}

\paragraph{Procedure.}

In the previous steps, we have obtained initial clusters $\{\Ck\}_{1\le k\le K}$
and coarse model estimates $\{\Akhat,\Wkhat\}_{1\le k\le K}$. With
the assistance of additional sample trajectories $\{\Xm\}_{m\in\Mclassification}$,
we can infer their latent labels and assign them to the corresponding
clusters; the procedure is stated in Algorithm~\ref{alg:classification}
and will be explained shortly. Once this is done, we can run Algorithm~\ref{alg:LS}
again with the updated clusters $\{\Ck\}$ to refine our model estimates,
which is exactly the last step of Algorithm~\ref{alg:overall}.

\paragraph{Rationale.}

The strategy of inferring labels in Algorithm~\ref{alg:classification}
can be derived from the maximum likelihood estimator, under the assumption
that the noise vectors $\{\wmt\}$ follow Gaussian distributions.
Note, however, that even in the absence of Gaussian assumptions, Algorithm~\ref{alg:classification}
remains effective in principle. To see this, consider a short trajectory
$\{\xt\}_{0\le t\le T}$ generated by model $(\Ak,\Wk)$, i.e.~$\xtp=\Ak\xt+\wt$
where $\E[\wt]=\mathbf{0},\cov(\wt)=\Wk$. Let us define the following
loss function 
\begin{equation}
L(\bA,\bW)\coloneqq T\cdot\log\det(\bW)+\sum_{t=0}^{T-1}(\xtp-\bA\xt)^{\top}\bW^{-1}(\xtp-\bA\xt).\label{eq:def_loss}
\end{equation}
With some elementary calculation, we can easily check that for any
incorrect label $\ell\neq k$, it holds that $\E[L(\Al,\Wl)-L(\Ak,\Wk)]>0$,
with the proviso that $(\Ak,\Wk)\neq(\Al,\Wl)$ and $\{\xt\}$ are
non-degenerate in some sense; in other words, the correct model $(\Ak,\Wk)$
achieves the minimal expected loss (which, due to the quadratic form
of the loss function, depends solely on the first and second moments
of the distributions of $\{\wt\}$, as well as the initial state $\bx_{0}$).
This justifies the proposed procedure for inferring unknown labels
in Algorithm~\ref{alg:classification}, provided that $T_{m}$ is
large enough and that the estimated LDS models are sufficiently reliable. 

\begin{algorithm}[tbp]
\DontPrintSemicolon
\caption{Classification} \label{alg:classification}
{\bf Input:} {short trajectories $\{\Xm\}_{m \in \Mclassification}$, where $\Xm=\{\xmt\}_{0\le t\le T_m}$; coarse models $\{\Akhat,\Wkhat\}_{1\le k \le K}$; clusters $\{\Ck\}_{1 \le k \le K}$}. \\
\For{$m \in \Mclassification$}{
Infer the label of the $m$-th trajectory by 
\begin{equation*}
\hat{k}_{m}\gets\arg\min_{\ell}\,\,\Big\{ T_m\cdot\log\det(\Wlhat)+\sum_{t=0}^{T_m-1}(\xmtp-\Alhat\xmt)^{\top}(\Wlhat)^{-1}(\xmtp-\Alhat\xmt)\Big\},
\end{equation*}
then add $m$ to cluster $\mathcal{C}_{\hat{k}_m}$. 
}
{\bf Output:} {updated clusters $\{\Ck\}_{1 \le k \le K}$}.
\end{algorithm}
\begin{rem}
\label{rem:beyond_LDS}While this work focuses on mixtures of LDSs,
we emphasize that the general principles of Algorithm~\ref{alg:overall}
are applicable under much weaker assumptions. For Algorithms~\ref{alg:subspace}
and~\ref{alg:clustering} to work, we essentially only require that
each sample trajectory satisfies a certain \emph{mixing} property,
and that the autocovariances of different models are \emph{sufficiently
separated} (and hence distinguishable). As for Algorithms~\ref{alg:LS}
and~\ref{alg:classification}, we essentially require a \emph{well-specified
parametric form} of the time-series models. This observation might
inspire future extensions (in theory or applications) of Algorithm~\ref{alg:overall}
to much broader settings.
\end{rem}

\section{Main results \label{sec:main_results}}

\subsection{Model assumptions \label{subsec:models_assumptions}}

To streamline the theoretical analysis, we focus on the case where
the trajectories are driven by \emph{Gaussian noise}; that is, for
each $1\le m\le M,0\le t\le T_{m}$, the noise vector $\wmt$ in (\ref{eq:setting})
is independently generated from the Gaussian distribution $\Ncal(\mathbf{0},\Wkm)$.
Next, we assume for simplicity that the labels $\{k_{m}\}_{1\le m\le M}$
of the trajectories are pre-determined and fixed, although one might
equivalently regard $\{k_{m}\}$ as being random and independent of
the noise vectors $\{\wmt\}$. Moreover, while our algorithms and
analysis are largely insensitive to the initial states $\{\bx_{m,0}\}_{1\le m\le M}$,
we focus on two canonical cases for concreteness: (i) the trajectories
start at \emph{zero} state, or (ii) they are segments of \emph{one}
continuous long trajectory. This is formalized as follows: 
\begin{subequations}
\label{eq:case01}
\begin{align}
\text{Case 0:}\quad & \bx_{m,0}=\mathbf{0},\quad1\le m\le M;\label{eq:case0}\\
\text{Case 1:}\quad & \bx_{1,0}=\mathbf{0},\quad\text{and}\quad\bx_{m+1,0}=\bx_{m,T_{m}},\quad1\le m\le M-1.\label{eq:case1}
\end{align}
\end{subequations}
We further introduce some notation for the total trajectory length:
define
\begin{align*}
\Tt & \coloneqq\sum_{1\le m\le M}T_{m}=\Ttsubspace+\Ttclustering+\Ttclassification,\\
\text{where}\quad & T_{\mathsf{total,o}}\coloneqq\Tso\cdot|\Mcal_{\mathsf{o}}|,\quad\mathsf{o}\in\{\mathsf{subspace},\mathsf{clustering},\mathsf{classification}\}.
\end{align*}
Additionally, we assume that each model occupies a non-degenerate
fraction of the data; in other words, there exists some $0<\pmin\le1/K$
such that
\[
\pmin\le\pko\coloneqq\frac{1}{|\Mo|}\sum_{m\in\Mo}\ind(k_{m}=k),\quad1\le k\le K,\quad\mathsf{o}\in\{\mathsf{subspace},\mathsf{clustering},\mathsf{classification}\}.
\]

Finally, we make the following assumptions about the ground-truth
LDS models, where we recall that the autocovariance matrices $\{\Gammak,\Yk\}$
have been defined in (\ref{eq:def_Gammak_Yk}). 
\begin{assumption}
\label{assu:models}The LDS models $\{\Ak,\Wk\}_{1\le k\le K}$ satisfy
the following conditions:
\begin{enumerate}
\item There exist $\ka\ge1$ and $0\le\rho<1$ such that for any $1\le k\le K$,
\begin{equation}
\|(\Ak)^{t}\|\le\ka\cdot\rho^{t},\quad t=1,2,\dots;\label{eq:stability}
\end{equation}
\item There exist $\Gmax\ge\lup\ge\llb>0$ and $\kwcross\ge\kwself\ge1$
such that for any $1\le k\le K$,
\begin{align*}
\lambmax(\Gammak)\le\Gmax, & \quad\llb\le\lambmin(\Wk)\le\lambmax(\Wk)\le\lup,\\
\frac{\lup}{\llb}\eqqcolon\kwcross, & \quad\kappa(\Wk)=\frac{\lambmax(\Wk)}{\lambmin(\Wk)}\le\kwself;
\end{align*}
\item There exist $\Dsep,\DAW>0$ such that for any $1\le k<\ell\le K$,
\begin{subequations}
\label{eq:def_separation}
\begin{align}
\|\Gammak-\Gammal\|_{\Frm}^{2}+\|\Yk-\Yl\|_{\Frm}^{2} & \ge\Dsep^{2},\label{eq:DGY}\\
\|\Ak-\Al\|_{\Frm}^{2}+\frac{\|\Wk-\Wl\|_{\Frm}^{2}}{\lup^{2}} & \ge\DAW^{2}.\label{eq:DAW}
\end{align}
\end{subequations}
\end{enumerate}
\end{assumption}
The first assumption states that each state transition matrix $\Ak$
is \emph{exponentially stable}, which is a quantified version of stability
and has appeared in various forms in the literature of LDS \cite{kailath2000linear,cohen2018online};
here, $\ka$ can be regarded as a condition number, while $\rho$
is a contraction rate. The second assumption ensures that the noise
covariance matrices $\{\Wk\}$ are well conditioned, and the autocovariance
matrices $\{\Gammak\}$ are bounded. The third assumption quantifies
the separation between different LDS models; in particular, the notion
of $\DGY$ (resp. $\DAW$) will play a crucial role in our analysis
of the clustering (resp.~classification) step in Algorithm~\ref{alg:overall}.
It is important that we consider the separation of $(\Gammak,\Yk)$
versus $(\Gammal,\Yl)$ jointly, which guarantees that $\DGY$ is
always strictly positive (thanks to Fact \ref{fact:separation}),
despite the possibility of $\Gammak=\Gammal$ or $\Yk=\Yl$; the reasoning
for our definition of $\DAW$ is similar. For the readers' convenience,
we include Table \ref{tab:list_params} for a quick reference to the
key notation and parameters used in our analysis.
\begin{rem}
\label{rem:sqrt_d}Since the separation parameters $\DAW,\DGY$ in
(\ref{eq:def_separation}) are defined with respect to the Frobenius
norm, we may naturally regard them as 
\begin{equation}
\DAW=\sqrt{d}\,\delAW,\quad\DGY=\sqrt{d}\,\delGY,\label{eq:separation_natural}
\end{equation}
where $\delAW,\delGY$ are the \emph{canonical separation parameter}
(in terms of the spectral norm). For example, consider the simple
setting where $K=2$, $\Wone=\Wtwo$, $\Aone=0.5\Id$ and $\Atwo=(0.5+\delta)\Id$
for some canonical separation parameter $\delta$; in
this case, we have $\DAW=\|\Aone-\Atwo\|_{\Frm}=\|\delta\Id\|_{\Frm}=\sqrt{d}\delta$.
This observation will be crucial for obtaining the correct dependence
on the dimension $d$ in our subsequent analysis.
\end{rem}
\begin{rem}
\label{rem:parameters} Most of the parameters in Assumption \ref{assu:models}
come directly from the ground-truth LDS models $\{\Ak,\Wk\}$, except
$\Gmax$ and $\DGY$. In fact, they can be bounded by $\Gmax\lesssim\Wmax$
and $\DGY\gtrsim\DAW$; see Fact \ref{fact:parameters} in the appendix
for the formal statements. However, the pre-factors in these bounds
can be pessimistic in general cases, therefore we choose to preserve
the parameters $\Gmax$ and $\DGY$ in our analysis.
\end{rem}
\begin{table}
\caption{A list of notation and parameters. In the subscripts of $\protect\Mo,\protect\Tso,\protect\Tto$,
the symbol $\protect\osf$ takes value in $\{\protect\subspace,\protect\clustering,\protect\classification\}$.
\label{tab:list_params}}

\begin{centering}
\begin{tabular}{cc}
\toprule 
Notation & Explanation\tabularnewline
\midrule
\midrule 
$d$ & State dimension\tabularnewline
\midrule 
$K$ & Number of LDS models\tabularnewline
\midrule 
$k_{m}$ & The unknown label (latent variable) of the $m$-th trajectory\tabularnewline
\midrule 
$\Mo$ & Subsets of $M$ trajectories, $\{1,\dots,M\}=\Msubspace\cup\Mclustering\cup\Mclassification$\tabularnewline
\midrule 
$\pmin$ & A lower bound for the fraction of trajectories generated by each model\tabularnewline
\midrule 
$\Tso$ & Short trajectory length for $\Mo$\tabularnewline
\midrule 
$\Tto$ & Total trajectory length, $\Tto=\Tso\cdot|\Mo|$\tabularnewline
\midrule 
$\Ak,\Wk$ & State transition matrix and noise covariance matrix of the $k$-th
LDS model\tabularnewline
\midrule 
$\Gammak,\Yk$ & Order-0 and order-1 stationary autocovariance matrices of the $k$-th
LDS model\tabularnewline
\midrule 
$\ka,\rho$ & $\|(\Ak)^{t}\|\le\ka\cdot\rho^{t},\quad t=1,2,\dots$\tabularnewline
\midrule 
$\DGY,\DAW$ & Model separation (\ref{eq:def_separation})\tabularnewline
\midrule 
$\llb,\lup$ & $\llb\le\lambmin(\Wk)\le\lambmax(\Wk)\le\Wmax,\quad1\le k\le K$\tabularnewline
\midrule 
$\kwcross,\kwself$ & $\kwcross=\lup/\llb;\quad\cond(\Wk)\le\kwself,\quad1\le k\le K$\tabularnewline
\midrule 
$\Gmax$ & $\|\Gammak\|\le\Gmax,\quad1\le k\le K$\tabularnewline
\bottomrule
\end{tabular}
\par\end{centering}
\end{table}

\subsection{Theoretical guarantees\label{subsec:main_theorems}}

We are now ready to present our end-to-end performance guarantees
for Algorithm~\ref{alg:overall}. Recall the definition of Cases 0
and 1 in (\ref{eq:case01}). Our main results for these two cases
are summarized in Theorems \ref{thm:case0} and \ref{thm:case1} below,
with proofs deferred to Section~\ref{sec:analysis}. This section
ends with some interpretations and discussions of the results.
\begin{thm}
[Main result for Case 0] \label{thm:case0} There exist positive
constants $\{C_{i}\}_{1\le i\le8}$ such that the following holds for
any fixed $0<\delta<1/2$. Consider the model~(\ref{eq:setting})
under the assumptions in Sections~\ref{sec:algorithms} and~\ref{subsec:models_assumptions},
with a focus on Case 0 (\ref{eq:case0}). Suppose that we run Algorithm~\ref{alg:overall}
with parameters $\tau,G$ that satisfy $1/8<\tau/\DGY^{2}<3/8$, $G\ge C_{1}\iota_{1}$,
and data $\{\Xm\}_{1\le m\le M}$ (where $\Xm=\{\xmt\}_{0\le t\le T_{m}}$)
that satisfies
\begin{subequations}
\label{eq:Ts_Tt_conditions}
\begin{align}
 & \Tssubspace\ge C_{2}\frac{\iota_{1}}{1-\rho},\quad\Ttsubspace\ge C_{3}\frac{d}{1-\rho}\Bigg(\bigg(\frac{\Gmax\sqrt{d}}{\DGY}\bigg)^{4}\frac{K^{2}}{\pmin^{2}}+1\Bigg)\cdot\iota_{1}^{4},\label{eq:Tssubspace}\\
 & \Tsclustering\ge C_{4}\frac{G}{1-\rho}\left(\frac{\Gmax^{2}\ka^{2}\sqrt{dK}}{\Dsep^{2}}+1\right)\iota_{2},\quad\Ttclustering\ge C_{5}\frac{d\kwcross^{2}}{\pmin}\bigg(\frac{d}{\DAW^{2}}\frac{\Gmax}{\Wmin}+1\bigg)\iota_{3}^{2},\label{eq:Tsclustering}\\
 & \Tsclassification\ge C_{6}\Big(\kwself^{2}+\frac{\kwcross^{6}}{\DAW^{2}}\Big)\iota_{1}^{2},\label{eq:Tsclassification}
\end{align}
\end{subequations}
where we define the logarithmic terms $\iota_{1}\coloneqq\log(\frac{d\ka\Tt}{\delta}),\iota_{2}\coloneqq\log\big((\frac{\Gmax}{\Dsep}+2)\frac{d\ka\Tt}{\delta}\big),\iota_{3}\coloneqq\log(\frac{\Gmax}{\Wmin}\frac{d\ka\Tt}{\delta})$.
Then, with probability at least $1-\delta$, Algorithm~\ref{alg:overall}
achieves exact clustering in Line~\ref{line:clustering} and exact
classification in Line~\ref{line:classification}; moreover, there
exists some permutation $\pi:\{1,\dots,K\}\rightarrow\{1,\dots,K\}$
such that the final model estimation $\{\Akhat,\Wkhat\}_{1\le k\le K}$
in Line~\ref{line:LS2} obeys 
\begin{align}
\|\Akhat-\Apik\|\le C_{7}\sqrt{\frac{d\kwself\iota_{3}}{\pmin T}}, & \quad\frac{\|\Wkhat-\Wpik\|}{\|\Wpik\|}\le C_{8}\sqrt{\frac{d\iota_{3}}{\pmin T}},\quad1\le k\le K,\label{eq:final_errors}
\end{align}
where $T\coloneqq\Ttclustering+\Ttclassification$.
\end{thm}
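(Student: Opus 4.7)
The plan is to follow the four algorithmic phases in sequence, establishing for each phase the accuracy guarantee needed as input to the next phase. Throughout, I would exploit two workhorse tools: (i)~the geometric mixing of each trajectory, which follows from Assumption~\ref{assu:models}(1) combined with the expansion in~(\ref{eq:x-ts-expansion}), so that $\cov(\bx_{m,t})$ approaches $\Gammakm$ at rate $\rho^{t}$; and (ii)~Gaussian concentration (Hanson--Wright and matrix Bernstein-type inequalities) for quadratic forms built from the noise, together with the subGaussianity of $\bx_{m,t}$ under Case~0, where the zero initial condition removes transient bias entirely.

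First, for the subspace estimation step I would analyze $\Hihat,\Gihat$ in~(\ref{eq:Hihat_Gihat}). Since the blocks $\Omegaone,\Omegatwo$ are separated by a segment whose length exceeds the mixing time once $\Tssubspace$ satisfies~(\ref{eq:Tssubspace}), the cross-term splitting makes $\hmione,\hmitwo$ (resp.~$\gmione,\gmitwo$) conditionally independent up to negligible error. Concentration then gives $\|\Hihat-\Hi\|$ and $\|\Gihat-\Gi\|$ small relative to $\lambda_{K}(\Hi),\lambda_{K}(\Gi)$, which are of order $\pmin\DGY^{2}/d$ by~(\ref{eq:Hi})--(\ref{eq:Gi}) and the separation~(\ref{eq:DGY}). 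A Davis--Kahan argument then yields subspace errors $\|(\bI-\Vi\Vi^{\top})\Vistar\|,\|(\bI-\Ui\Ui^{\top})\Uistar\|=o(1)$, which is what the clustering stage needs.

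The hardest part, I expect, is the clustering analysis, because the statistics $\statGammag,\statYg$ in~(\ref{eq:stat_g}) involve fourth moments of $\bx_{m,t}$ and depend on the previously estimated subspaces $\{\Vi,\Ui\}$. For a same-model pair, the expectation of $\stattau \coloneqq \stat_\Gamma+\stat_Y$ is close to $0$ and the variance is controlled by projecting onto the $dK$-dimensional subspaces, giving a factor $\sqrt{dK}/N$ rather than $d/N$; for a different-model pair, the expectation is close to $\|\Gammak-\Gammal\|_{\Frm}^{2}+\|\Yk-\Yl\|_{\Frm}^{2}\ge\DGY^{2}$, provided $\Vi,\Ui$ nearly contain the relevant rows. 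Running $G\gtrsim\log(|\Mclustering|^{2}/\delta)$ independent copies and taking the median boosts each pairwise decision to probability $1-\delta/|\Mclustering|^{2}$ via a standard Chernoff/median-of-means bound, so a union bound over all $\binom{|\Mclustering|}{2}$ pairs yields $S_{m,n}=\ind(k_{m}=k_{n})$ \emph{exactly}; any sensible clustering algorithm then recovers $\{\Ck\}$ perfectly (up to permutation~$\pi$). The conditions on $\Tsclustering,\Ttclustering$ in~(\ref{eq:Tsclustering}) come from requiring the per-block sample averages to be sufficiently concentrated and the total sample size sufficient for the subsequent least-squares step in Line~\ref{line:LS1} to produce $\{\Akhat,\Wkhat\}$ with error small compared to $\DAW$.

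With exact clustering in hand, Algorithm~\ref{alg:LS} reduces to standard least squares for LDS identification \`a la \cite{simchowitz2018learning,sarkar2019near}, which, applied to $\pmin\Ttclustering$ samples per model, delivers coarse estimates with error polynomial in $d,\kwcross$ divided by $\sqrt{\pmin\Ttclustering}$. I would then analyze classification by studying the loss~(\ref{eq:def_loss}): for the true label, $\E[L(\Akm,\Wkm)]$ matches the Gaussian entropy, and for any wrong label $\ell\ne k_{m}$, $\E[L(\Al,\Wl)-L(\Akm,\Wkm)]\gtrsim T_{m}\DAW^{2}/\kwcross^{\osf}$ by expanding the quadratic and log-det terms; plugging in the coarse estimates perturbs this gap by an amount quadratic in the estimation error, which is controlled by~(\ref{eq:Tssubspace})--(\ref{eq:Tsclustering}). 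Concentration of the empirical loss around its expectation (Hanson--Wright on $\bW^{-1/2}(\xmtp-\bA\xmt)$) with the sample-size condition~(\ref{eq:Tsclassification}) shows the empirical minimizer equals $k_{m}$ for every $m\in\Mclassification$ simultaneously, yielding exact classification. Finally, running Algorithm~\ref{alg:LS} a second time on $T=\Ttclustering+\Ttclassification$ correctly labeled samples and invoking the LDS least-squares bounds one more time gives~(\ref{eq:final_errors}); the $\Wkhat$ bound follows by noting $\wmthat-\wmt=(\Akm-\Akhat)\xmt$ and controlling the resulting bias through the already-established bound on $\Akhat$.
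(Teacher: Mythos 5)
Your overall architecture — chain subspace estimation, clustering, least squares, classification, and a second least-squares pass, each step feeding the accuracy requirement of the next, then union bound — is exactly the paper's proof, which is deliberately modular in this way. The clustering, least-squares, and classification sketches are all consistent with the paper's arguments (mixing-based decoupling into near-i.i.d.\ blocks, median-of-means over $G$ copies, self-normalized/Hanson--Wright concentration, and a perturbation analysis of the log-likelihood gap of order $T_m\DAW^2/\kwcross$).

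There is, however, a genuine gap in your subspace-estimation step. You propose Davis--Kahan with the claim that $\lambda_K(\Hi),\lambda_K(\Gi)\gtrsim\pmin\DGY^2/d$ follows from \eqref{eq:Hi}--\eqref{eq:Gi} and the separation \eqref{eq:DGY}. This is false: the separation controls pairwise distances $\|\Gammak-\Gammal\|_{\Frm}^2+\|\Yk-\Yl\|_{\Frm}^2$, not the linear independence of the vectors $\{\Gammaki\}_k$ whose weighted Gram matrix determines $\lambda_K(\Hi)$. The paper explicitly allows $\Gammak=\Gammal$ (only the \emph{pair} $(\Gammak,\Yk)$ must be separated, cf.\ Fact~\ref{fact:separation}), in which case $\Hi$ has rank $<K$ and $\lambda_K(\Hi)=0$; more generally, $K$ pairwise well-separated vectors can be linearly dependent (e.g.\ one is the average of two others), so the $K$-th eigenvalue — which is precisely the Davis--Kahan gap for the top-$K$ eigenspace of a rank-$\le K$ matrix — can vanish while $\DGY$ stays bounded away from zero. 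A gap-dependent argument therefore cannot deliver the conclusion under the stated hypotheses. The fix, and what the paper actually does (Lemma~\ref{lem:step1_perturbation}), is to observe that the clustering step only needs the \emph{per-row residual} bound $\|\Gammaki-\Vi\Vi^{\top}\Gammaki\|_2\le c_3\DGY/\sqrt{d}$, not a subspace-angle bound; a trace/Weyl argument gives the gap-free estimate $\pk\|\yk-\bU\bU^{\top}\yk\|_2^2\le 2K\epsilon$ with $\epsilon=\|\bM-\Mstar\|$, whence $\|\yk-\bU\bU^{\top}\yk\|_2\le(2K\epsilon/\pk)^{1/2}$. This is also the source of the $1/\DGY^{4}$ (rather than $1/\DGY^{2}$) dependence in the $\Ttsubspace$ condition \eqref{eq:Tssubspace}, which your route would not reproduce. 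The remainder of your proposal stands once this step is replaced.
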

In fact, for a sharper result, one can replace the $\pmin T$ terms
in the final error bounds (\ref{eq:final_errors}) with the sample
size for the $\pi(k)$-th model, namely $T^{(\pi(k))}\coloneqq p_{\clustering}^{(\pi(k))}\Ttclustering+p_{\classification}^{(\pi(k))}\Ttclassification\ge\pmin T$.
\begin{thm}
[Main result for Case 1] \label{thm:case1} Consider the same setting
of Theorem \ref{thm:case0}, except that we focus on Case 1 (cf.~(\ref{eq:case1}))
instead. If we replace the conditions on $\Ttclustering$ and $\Tsclassification$
in (\ref{eq:Ts_Tt_conditions}) with
\begin{subequations}
\label{eq:Ts_Tt_case1}
\begin{align}
\Ttclustering & \ge C_{5}\frac{d\kwcross^{2}}{\pmin}\bigg(\frac{d}{\DAW^{2}}\frac{\Gmax}{\Wmin}\ka^{2}+1\bigg)\iota_{3}^{2},\label{eq:Ttclustering_case1}\\
\Tsclassification & \ge C_{6}\Big(\kwself^{2}+\frac{\kwcross^{6}}{\DAW^{2}}\Big)\iota_{1}^{2}+\frac{1}{1-\rho}\log(2\ka),\label{eq:Tsclassification_case1}
\end{align}
\end{subequations}
then the same high-probability performance guarantees in Theorem \ref{thm:case0}
continue to hold.
\end{thm}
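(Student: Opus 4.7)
The only substantive difference between Case~0 and Case~1 is that in Case~1 each trajectory's initial state $\bx_{m,0}$ is inherited from the tail of the previous segment rather than being $\bm{0}$. My plan is to reuse the proof of Theorem~\ref{thm:case0} as a scaffolding and identify, sub-routine by sub-routine, the precise places where non-zero initial states force adjustments. A preparatory observation is that the initial-state second moments are controlled uniformly in $m$: iterating the recursion $\cov(\bx_{m+1,0}) = (\Akm)^{T_{m}}\cov(\bx_{m,0})((\Akm)^{T_{m}})^{\top} + \sum_{i=0}^{T_{m}-1}(\Akm)^{i}\Wkm((\Akm)^{i})^{\top}$ from $\bx_{1,0}=\bm{0}$, Assumption~\ref{assu:models}(1)--(2) yields $\|\cov(\bx_{m,0})\|\lesssim\ka^{2}\Gmax$ uniformly. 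I will carry this bound throughout the proof.

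The subspace estimation (Algorithm~\ref{alg:subspace}) and the clustering statistics $\statGammag, \statYg$ (Algorithm~\ref{alg:clustering}) only involve samples from windows whose indices exceed $N\gtrsim(1-\rho)^{-1}\iota_{1}$ under the assumed lower bounds on $\Tssubspace$ and $\Tsclustering$. Using the decomposition $\xmt = (\Akm)^{t}\bx_{m,0}+\sum_{i=0}^{t-1}(\Akm)^{i}\bw_{m,t-1-i}$ together with $\|(\Akm)^{t}\|\le\ka\rho^{t}$, the additional bias contributed by a non-zero initial state decays as $\ka^{2}\rho^{2N}\|\bx_{m,0}\|_{2}^{2}$, which is already negligible under the unchanged conditions on $\Tssubspace$ and $\Tsclustering$. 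Hence the Case~0 analyses of these two sub-routines transfer essentially verbatim, which is why the conditions on $\Tssubspace, \Ttsubspace$, and $\Tsclustering$ are left unchanged in \eqref{eq:Ts_Tt_case1}.

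The two steps that do require modification are the least-squares estimation in Line~\ref{line:LS1} of Algorithm~\ref{alg:overall} and the classification in Algorithm~\ref{alg:classification}, both of which use the \emph{entire} trajectory including the initial transient. For the LS step I would plug $\xmt=(\Ak)^{t}\bx_{m,0}+\sum_{i=0}^{t-1}(\Ak)^{i}\bw_{m,t-1-i}$ into the normal equation \eqref{eq:refined_A}. The noise-driven component is handled by the same self-normalized martingale concentration as in the Case~0 proof and in \cite{simchowitz2018learning,sarkar2019near}. The initial-state component $(\Ak)^{t}\bx_{m,0}$ contributes additional cross terms to both $\sum_{m\in\Ck}\sum_{t}\xmt\xmt^{\top}$ and $\sum_{m\in\Ck}\sum_{t}\xmtp\xmt^{\top}$; summing $\|(\Ak)^{t}\|^{2}\le\ka^{2}\rho^{2t}$ as a geometric series, these contributions inflate the effective noise variance by a multiplicative $\ka^{2}$, which is exactly the $\ka^{2}$ appearing inside the parentheses in \eqref{eq:Ttclustering_case1}. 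For the classification step, the expected loss gap $\E[L(\Al,\Wl)-L(\Ak,\Wk)]\gtrsim T_{m}\cdot\DAW^{2}$ driving the Case~0 argument picks up an additive correction of order $\ka^{2}$ from the initial transient in \eqref{eq:def_loss}; requiring $T_{m}\gtrsim\frac{1}{1-\rho}\log(2\ka)$ absorbs this correction into a constant fraction of the stationary signal, yielding precisely the extra additive term in \eqref{eq:Tsclassification_case1}.

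The main obstacle will be the LS argument, because in Case~1 the initial states $\{\bx_{m,0}\}_{m}$ induce cross-trajectory dependence: each $\bx_{m,0}$ is a deterministic function of the noise history of all preceding segments. My proposed remedy is to condition on the filtration generated by $\{\bx_{m,0}\}_{m\in\Ck}$ and then apply the self-normalized martingale bounds to the zero-mean noise terms within each trajectory, which are conditionally independent across trajectories given this filtration. The preparatory uniform bound $\|\cov(\bx_{m,0})\|\lesssim\ka^{2}\Gmax$ then controls the deterministic $(\Ak)^{t}\bx_{m,0}$ pieces inside a single Frobenius-norm estimate. Verifying that the resulting rates still match the final error bounds of Theorem~\ref{thm:case0}, and that the announced $\ka^{2}$ inflation is tight, will be the principal bookkeeping task.
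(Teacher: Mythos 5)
Your modular plan (reuse the Case~0 proof and locate where non-zero initial states matter) is the paper's strategy, and you correctly identify that the subspace and clustering steps transfer unchanged while the changes concentrate in the least-squares and classification steps. However, your attribution of the $\ka^{2}$ factor in \eqref{eq:Ttclustering_case1} is wrong, and the step you sketch to produce it would not go through. In the paper, the least-squares guarantee (Theorem~\ref{thm:LS}) is \emph{identical} in Cases~0 and~1: the transient terms $(\Ak)^{t}\bx_{m,0}$ only add positive semidefinite mass to $\sum\xmt\xmt^{\top}$ (which can only help the lower bound on $\XTX$) and enter the self-normalized bound only through $\Dvec$ inside a logarithm, so there is no multiplicative $\ka^{2}$ inflation of the LS error. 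The $\ka^{2}$ actually comes from the \emph{classification} side: in Case~1 the states during classification are bounded by $\Dx\lesssim\ka\sqrt{\Gmax(d+\iota)}$ rather than $\sqrt{\Gmax(d+\iota)}$, and $\Dx$ enters the loss comparison through the term $T\,\Dx^{2}\|\Ak-\Akhat\|^{2}/\llb$, so the required coarse accuracy tightens from \eqref{eq:err_A_W_case0} to \eqref{eq:err_A_W_case1}, i.e.\ $\epsA\lesssim\DAW\sqrt{\llb/(\Gmax\kwcross\ka^{2}(d+\iota))}$. Feeding this tighter target into the (unchanged) LS rate $\|\Akhat-\Ak\|\lesssim\sqrt{d\kwself\iota/\Ttk}$ is what forces $\Ttclustering$ up by $\ka^{2}$. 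If you instead try to prove a $\ka^{2}$ degradation of the LS bound while keeping the Case~0 classification condition, you will be proving a false intermediate claim and missing the true one. Relatedly, the additive $\frac{1}{1-\rho}\log(2\ka)$ in \eqref{eq:Tsclassification_case1} is not there to "absorb an additive $\ka^{2}$ correction to the loss gap"; it is needed so that Lemma~\ref{lem:bounded_state} applies at all in Case~1, i.e.\ so that each segment is long enough for $\|(\Akm)^{T_{m}}\|\le 1/2$ and the chained initial states do not blow up exponentially under switching (cf.\ Example~\ref{exa:switching}).

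Two smaller points. First, your uniform bound $\|\cov(\bx_{m,0})\|\lesssim\ka^{2}\Gmax$ requires exactly the condition $T_{m}\gtrsim\tmix$ you are trying to justify (without it the covariance recursion need not stay bounded, again by Example~\ref{exa:switching}), so it cannot be taken as a free "preparatory observation." Second, your remedy for cross-trajectory dependence in the LS step—conditioning on the filtration generated by all $\{\bx_{m,0}\}_{m\in\Ck}$—is both unnecessary and dangerous: conditioning on $\bx_{m,0}$ for \emph{all} $m$ conditions on noise that is in the future of earlier trajectories and would destroy the conditional-Gaussianity needed for Lemma~\ref{lem:self_norm_matrix}. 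The clean fix (used implicitly in the paper) is to order all pairs $(m,t)$ chronologically and apply the self-normalized martingale bound with respect to the natural filtration of the concatenated process, under which each regressor row is previsible and each noise row is conditionally $\Ncal(\mathbf{0},\bW)$ regardless of how the initial states are chained.
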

While Theorems \ref{thm:case0} and \ref{thm:case1} guarantee that
Algorithm~\ref{alg:overall} successfully learns a mixture of LDS
models with a polynomial number of samples, these results involve
many parameters and may be somewhat difficult to interpret. In the
following corollary, we make some simplifications and focus on the
most important parameters.
\begin{cor}
\label{cor:simplify} Consider the same setting of Theorems \ref{thm:case0}
and \ref{thm:case1}. For simplicity, suppose that the condition numbers
$\ka,\kwself,\kwcross\asymp1$, and the fractions of data generated
by different LDS models are balanced, namely $\pmin\asymp1/K$. Moreover,
define the canonical separation parameters $\delAW\coloneqq\DAW/\sqrt{d}$
and $\delGY\coloneqq\DGY/\sqrt{d}$, as suggested by Remark~\ref{rem:sqrt_d}.
Finally, define the mixing time $\tmix\coloneqq1/(1-\rho)$. Then
we can rewrite the sample complexities in Theorems~\ref{thm:case0}
and~\ref{thm:case1} as follows (where we hide the logarithmic terms
$\{\iota_{i}\}_{1\le i\le3}$): if 
\begin{align*}
\Tssubspace\gtrsim\tmix, & \qquad\Ttsubspace\gtrsim\tmix d\Bigg(\bigg(\frac{\Gmax K}{\delGY}\bigg)^{4}+1\Bigg),\\
\Tsclustering\gtrsim\tmix\left(\Big(\frac{\Gmax}{\delGY}\Big)^{2}\sqrt{\frac{K}{d}}+1\right), & \qquad\Ttclustering\gtrsim Kd\bigg(\frac{1}{\delAW^{2}}\frac{\Gmax}{\Wmin}+1\bigg),\\
\Tsclassification\gtrsim\begin{cases}
\frac{1}{d\delAW^{2}}+1 & \text{for Case 0},\\
\frac{1}{d\delAW^{2}}+\tmix & \text{for Case 1},
\end{cases} & \qquad\Ttclustering+\Ttclassification\gtrsim\frac{Kd}{\epsilon^{2}},
\end{align*}
then with high probability, Algorithm~\ref{alg:overall} achieves
exact clustering in Line~\ref{line:clustering}, exact classification
in Line~\ref{line:classification}, and final model estimation errors
\begin{align*}
\|\Akhat-\Apik\| & \le\epsilon,\quad\frac{\|\Wkhat-\Wpik\|}{\|\Wpik\|}\le\epsilon,\quad1\le k\le K
\end{align*}
for some permutation $\pi:\{1,\dots,K\}\rightarrow\{1,\dots,K\}$.
\end{cor}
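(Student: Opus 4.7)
The plan is a direct substitution and simplification exercise: Corollary~\ref{cor:simplify} is obtained by specializing the end-to-end bounds of Theorems~\ref{thm:case0} and~\ref{thm:case1} under the stated regime ($\ka,\kwself,\kwcross\asymp 1$, $\pmin\asymp 1/K$), re-parametrizing via $\DAW=\sqrt{d}\,\delAW$, $\DGY=\sqrt{d}\,\delGY$, $\tmix=1/(1-\rho)$, and absorbing the logarithmic factors $\iota_1,\iota_2,\iota_3,G$ into the $\gtrsim$ notation.

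First, for the \textbf{coarse-estimation stage}, I would substitute into the subspace conditions~\eqref{eq:Tssubspace}. The bound $\Tssubspace\gtrsim\iota_1/(1-\rho)$ immediately collapses to $\Tssubspace\gtrsim\tmix$. For $\Ttsubspace$, the factor $(\Gmax\sqrt{d}/\DGY)^4$ becomes $(\Gmax/\delGY)^4$ after substituting $\DGY=\sqrt{d}\,\delGY$, and the factor $K^2/\pmin^2$ becomes $K^4$ under $\pmin\asymp 1/K$, yielding the product $(\Gmax K/\delGY)^4$ as stated. For the clustering conditions~\eqref{eq:Tsclustering}, substituting $\Dsep=\sqrt{d}\,\delGY$ into $\Gmax^2\sqrt{dK}/\Dsep^2$ gives $(\Gmax/\delGY)^2\sqrt{K/d}$, while $d/\pmin\asymp Kd$ and $d/\DAW^2=1/\delAW^2$ convert $\Ttclustering$ to the stated form. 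In Case~1, the same substitutions applied to~\eqref{eq:Ttclustering_case1} match the Case~0 expression up to the $\ka^2\asymp 1$ factor, which is hidden by the regime assumption.

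For the \textbf{refinement stage}, the classification condition~\eqref{eq:Tsclassification} simplifies via $\kwself,\kwcross\asymp 1$ and $1/\DAW^2=1/(d\delAW^2)$, producing the Case~0 bound $\Tsclassification\gtrsim 1/(d\delAW^2)+1$; for Case~1, the additional $\frac{1}{1-\rho}\log(2\ka)$ term in~\eqref{eq:Tsclassification_case1} contributes the $\tmix$ summand. Finally, to deduce the final error requirement, I would start from the estimation bounds~\eqref{eq:final_errors}: under $\kwself\asymp 1$ and $\pmin\asymp 1/K$, the condition $\sqrt{d\kwself\iota_3/(\pmin T)}\lesssim\epsilon$ rearranges to $T\gtrsim Kd/\epsilon^2$ (logarithmic terms absorbed). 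Since $T=\Ttclustering+\Ttclassification$ by definition, this yields the stated joint sample-complexity condition on the second stage. The same chain applies verbatim to the noise-covariance error bound, so no separate argument is needed.

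There is no real technical obstacle: every step is bookkeeping, and exact clustering/classification and the error bounds are inherited verbatim from Theorems~\ref{thm:case0} and~\ref{thm:case1} once the hypotheses of those theorems are verified. The only point requiring a small amount of care is to check that combining the final-error condition $T\gtrsim Kd/\epsilon^2$ with the separate lower bound on $\Ttclustering$ (which depends on $\delAW,\Gmax/\Wmin$ but not on $\epsilon$) does not create any inconsistency; this is immediate since both conditions can be imposed simultaneously by taking the maximum. Thus the corollary follows.
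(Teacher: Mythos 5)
Your proposal is correct and matches the paper's (implicit) derivation exactly: Corollary~\ref{cor:simplify} is obtained by direct substitution of $\ka,\kwself,\kwcross\asymp1$, $\pmin\asymp1/K$, $\DAW=\sqrt{d}\,\delAW$, $\DGY=\sqrt{d}\,\delGY$, and $\tmix=1/(1-\rho)$ into the conditions of Theorems~\ref{thm:case0} and~\ref{thm:case1}, with the final-error requirement $T\gtrsim Kd/\epsilon^{2}$ read off from~(\ref{eq:final_errors}). All of your individual simplifications check out, so nothing further is needed.
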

In what follows, we briefly remark on the key implications of Corollary
\ref{cor:simplify}. 
\begin{itemize}
\item \emph{Dimension $d$ and targeted error $\epsilon$}: (1) Our algorithms
allow the $\Tso$'s to be much smaller than (and even decrease with)
$d$. (2) Each $\Tto$ shall grow linearly with $d$, and it takes
an order of $Kd/\epsilon^{2}$ samples in total to learn $K$ models
in up to $\epsilon\rightarrow0$ errors (in the spectral norm), which
is just $K$ times the usual parametric rate ($d/\epsilon^{2}$) of
estimating a single model. 
\item \emph{Mixing time $\tmix$}: (1) $\Tssubspace$ and $\Tsclustering$
are linear in $\tmix$, which ensures sufficient mixing and thus facilitates
our algorithms for Stage 1. In contrast, $\Tsclassification$ depends
on $\tmix$ only for Case 1, with the sole and different purpose of
ensuring that the states $\{\xmt\}$ are bounded throughout (see Example~\ref{exa:switching}
in the appendix for an explanation of why this is necessary). (2)
While $\Ttsubspace$ needs to grow linearly with $\tmix$, this is
not required for $\Ttclustering$ and $\Ttclassification$, because
our methods for model estimation (i.e.~least squares and covariance
estimation) do not rely on mixing \cite{simchowitz2018learning,sarkar2019near}.
\item \emph{Canonical separation parameters $\delAW,\delGY$}: 
(1)~$\Tsclustering\gtrsim 1/\delGY^{2}$ guarantees exact clustering of the trajectories, while $\Tsclassification\gtrsim 1/\delAW^{2}$ guarantees exact classification. 
(2)~$\Ttsubspace\gtrsim 1/\delGY^{4}$ leads to sufficiently accurate subspaces, while $\Ttclustering\gtrsim 1/\delAW^{2}$ leads to accurate initial model estimation.\footnote{It is possible to improve
the $1/\delGY^{4}$ factor in $\Ttsubspace$ to $1/\delGY^{2}$, if
one is willing to pay for some extra factors of eigen-gaps; see Section~\ref{sec:analysis} for a detailed discussion.}
\end{itemize}
\begin{rem}
The step of subspace estimation is non-essential and optional; it
allows for a smaller $\Tsclustering$, but comes at the price of complicating
the overall algorithm. For practitioners who prefer a simpler algorithm,
they might simply remove this step (i.e.~Line~\ref{line:subspace}
of Algorithm~\ref{alg:overall}), and replace the rank-$K$ subspaces
$\{\Vi,\Ui\}$ with $\Id$ (i.e.~no dimensionality reduction for
the clustering step). The theoretical guarantees continue to hold
with minor modification: in Corollary \ref{cor:simplify}, one simply
needs to remove the conditions on $\Tssubspace,\Ttsubspace$, and
in the condition for $\Tsclustering$, replace the factor $\sqrt{K/d}$
(where $K$ is due to dimensionality reduction) with $\sqrt{d/d}=1$.
This is one example regarding how our modular algorithms and theoretical
analysis can be easily modified and adapted to accommodate different
situations. 
\end{rem}

\subsection{Numerical experiments }

We now validate our theoretical findings with a series of numerical
experiments, confirming that Algorithm~\ref{alg:overall} successfully
solves the mixed LDSs problem. In these experiments, we fix $d=80,K=4$;
moreover, let $\Tssubspace=20,$ $\Tsclustering=20$ and $\Tsclassification=5$,
all of which are much smaller than~$d$. We take $|\Msubspace|=30\,d,|\Mclustering|=10\,d$,
and vary $|\Mclassification|$ between $[0,5000\,d]$. Our experiments
focus on Case~1 as defined in (\ref{eq:case1}), and we generate
the labels of the sample trajectories uniformly at random. The ground-truth
LDS models are generated in the following manner: $\Ak=\rho\bR^{(k)}$,
where $\rho=0.5$ and $\bR^{(k)}\in\R^{d\times d}$ is a random orthogonal
matrix; $\Wk$ has eigendecomposition $\bU^{(k)}\mathbf{\Lambda}^{(k)}(\bU^{(k)})^{\top}$,
where $\bU^{(k)}$ is a random orthogonal matrix and the diagonal entries
of $\mathbf{\Lambda}^{(k)}$ are independently drawn from the uniform
distribution on $[1,2]$. 

Our experimental results are illustrated in Figure~\ref{fig:exp_result}.
Here, the horizontal axis represents the sample size $T=\Ttclustering+\Ttclassification$
for model estimation, and the vertical axis represents the estimation
errors, measured by $\max_{1\le k\le K}\|\Akhat-\Apik\|$ (plotted
in blue) and $\max_{1\le k\le K}\|\Wkhat-\Wpik\|/\|\Wpik\|$ (plotted
in orange). The results confirm our main theoretical prediction: Algorithm~\ref{alg:overall}
recovers the LDS models based on a mixture of short trajectories with
length $\Tso\ll d$, and achieves an error rate of $1/\sqrt{T}$.
In addition, we observe in our experiments that the final outputs
of Algorithm~\ref{alg:overall} are robust to a small number of mis-clustered
or mis-classified trajectories during the intermediate steps; this
is clearly an appealing property to have, especially when the $\Tso$'s
and $|\Mo|$'s in reality are slightly smaller than what our theory
requires. 

\begin{figure}
\begin{centering}
\includegraphics[width=0.5\textwidth]{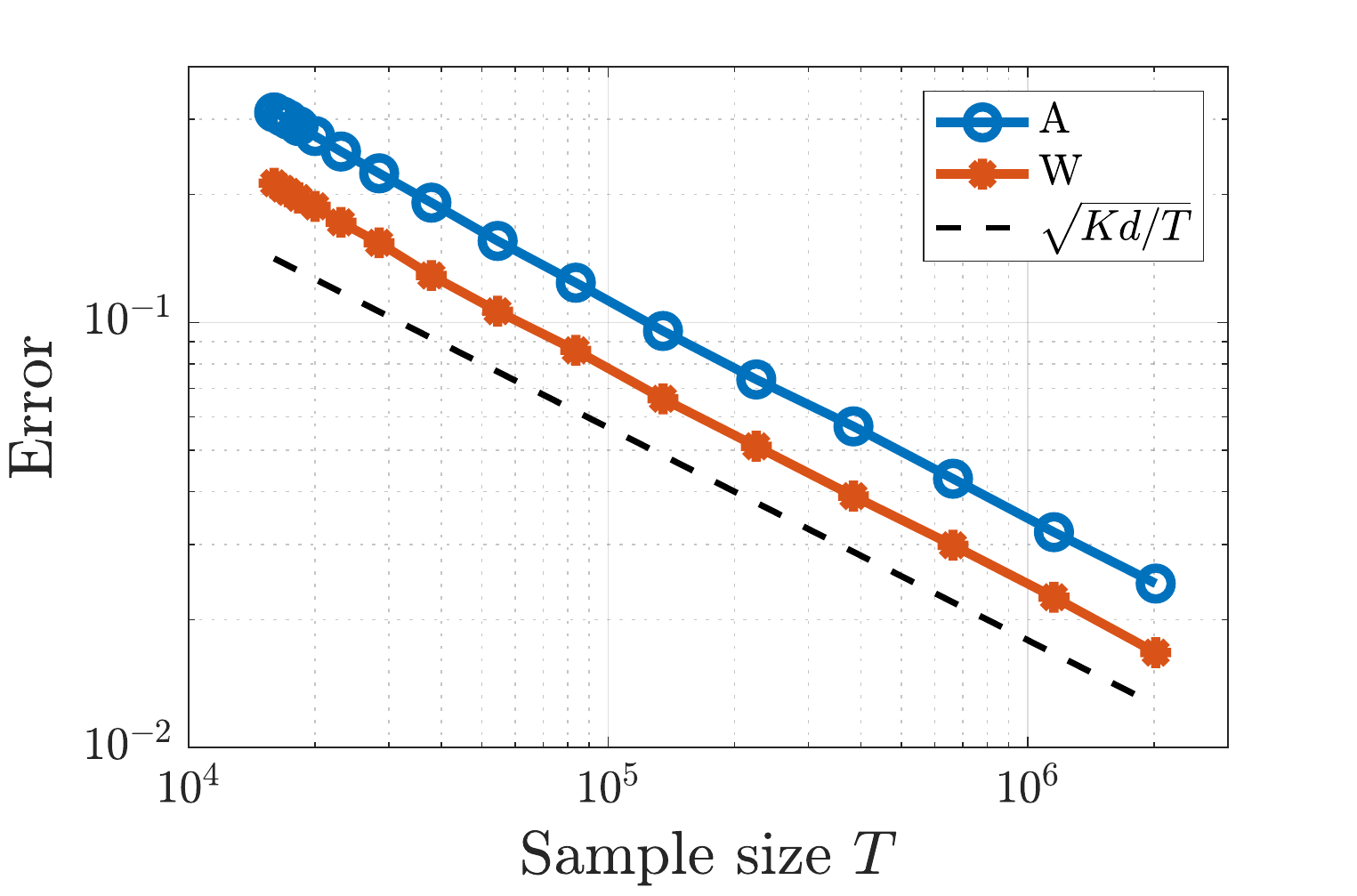}
\par\end{centering}
\caption{\label{fig:exp_result}The model estimation errors of Algorithm~\ref{alg:overall}
versus the total sample size (excluding $\protect\Msubspace$). Each
curve is an average over 12 independent trials.}

\end{figure}

\section{Related works\label{sec:related_work}}

\paragraph{Mixed linear regression and meta-learning.}

Our work is closely related to the recent papers \cite{kong2020meta,kong2020robust}
that bridge mixed linear regression and meta-learning. In mixed linear
regression \cite{quandt1978estimating}, it is assumed that there
exist a small number of linear models of interest; each independent
data point is generated by one of these models, although the label
(i.e., which model generates a data sample) is unknown. The past decade
has witnessed numerous theoretical advances in this problem, e.g.~\cite{yi2014alternating,chen2014convex,li2018learning,chen2020learning,kwon2020minimax,diakonikolas2020small,yin2018learning,mazumdar2020recovery,chen2021learning,diamandis2021wasserstein}.
In addition, meta-learning \cite{finn2017model,harrison2020continuous}
considers the problem of jointly solving multiple supervised learning
tasks, each of which has only a few samples. The key assumption is
that there is a common underlying structure (or inductive bias) in
these different tasks; the goal is to retrieve this structure, which
can be utilized later on to learn a new task with much fewer samples.
Some closely related problem formulations include few-shot learning
\cite{snell2017prototypical,du2020few}, transfer learning \cite{pan2009survey,TripuraneniJJ20},
multi-task learning \cite{baxter2000model,maurer2016benefit}, and
so on. Finally, in the setting of \emph{meta-learning for mixed
linear regression} \cite{kong2020meta,kong2020robust}, each task
contains a few independent and identically distributed samples generated
by a linear regression model, and the inductive bias is that there
is only a small discrete set of ground-truth linear models. Our work
extends this setting to time-series data (where each short trajectory
can be naturally regarded as a task); some parts of our algorithmic
designs have been inspired by the work of \cite{kong2020meta,kong2020robust},
and we also improve upon some of their analyses.\footnote{There is a concurrent preprint \cite{modi2021joint} that extends
multi-task learning \cite{du2020few,TripuraneniJJ20} to time-series
data, and the setting therein includes mixed LDSs as a special case.
However, the authors of \cite{modi2021joint} assume oracle access
to the global optimum of a non-convex optimization problem, without
providing a practical algorithm that can provably find it; moreover,
with the short trajectory length fixed, the estimation error bounds
in that work will remain bounded away from zero, even if the number
of trajectories grows to infinity. In comparison, we consider a simpler
problem setting, and propose computationally efficient algorithms
with better error bounds.}

\paragraph{Mixtures of time-series models and trajectories.}

Mixture models for time series have achieved empirical success in
the study of psychology \cite{bulteel2016clustering,takano2020clustering},
neuroscience \cite{albert1991two,mezer2009cluster}, biology \cite{wong2000mixture},
air pollution \cite{d2015time}, economics \cite{mcculloch1994statistical,maharaj2000cluster,kalliovirta2016gaussian},
automobile sensors \cite{hallac2017toeplitz}, and many other domains.
 Some specific aspects of mixture models include hypothesis testing
for a pair of trajectories \cite{maharaj2000cluster}, or clustering
of multiple trajectories \cite{liao2005clustering,aghabozorgi2015time,pathak2021cluster,huang2021coresets}.
In addition to mixture models, other related yet different models
in the literature include time-varying systems \cite{qu2021stable,minasyan2021online},
systems with random parameters \cite{du2020q}, switching systems
\cite{sun2006switched,sarkar2019nonparametric,ansari2021deep,mcculloch1994statistical},
switching state-space models \cite{ghahramani2000variational,linderman2017bayesian},
Markovian jump systems \cite{shi2015survey,zhao2019brief}, and event-triggered
systems \cite{sedghi2020machine,schluter2020event}, to name just
a few. There are even more related models in reinforcement learning
(RL), such as latent bandit \cite{maillard2014latent,hong2020revisited},
multi-task learning \cite{wilson2007multi,lazaric2010bayesian,brunskill2013multitask,liu2016pac,sodhani21a}/meta-learning
\cite{finn2017model}/transfer-learning \cite{taylor2009transfer,huang2021adarl,tirinzoni2020sequential}
for RL, typed parametric models \cite{brunskill2009provably}, latent
Markov decision processes \cite{kwon2021rl,hallak2015contextual,steimle2021multi,buchholz2019computation},
and so on. What distinguishes our work from this extensive literature is
that, we design algorithms and prove rigorous non-asymptotic sample
complexities for model estimation, in the specific setting of mixture
modeling that features (1)~a finite set of underlying time-series
models, and (2)~unknown labels of the trajectories, with no probabilistic
assumptions imposed on these latent variables.

\paragraph{Linear dynamical systems.}

Linear dynamical systems (also referred to as vector autoregressive
models in the statistics literature) is one of the most fundamental
models in system identification and optimal control \cite{ljung1998system,khalil1996robust}.
In the past few years, there has been a surge of studies, in both
control and machine learning communities, about non-asymptotic theoretical
analyses of various learning procedures for LDSs. This includes the
basic LDS model \cite{faradonbeh2018finite,simchowitz2018learning,sarkar2019near},
LDSs with control input and quadratic cost (i.e.~linear-quadratic
regulators) \cite{dean2020sample,cohen2018online,jedra2019sample,faradonbeh2020adaptive,mania2019certainty,simchowitz2020naive,fazel2018global,malik2019derivative},
and LDSs with partial observation $\yt\approx\bC\xt$ (e.g.~Kalman
filtering or linear-quadratic-Gaussian control) \cite{oymak2019non,simchowitz2019learning,Sarkar2021LTI,sun20regularization,tsiamis2020sample,Lale2020log,zheng2021sample}.
In particular, it was only until recently that the authors of \cite{simchowitz2018learning,sarkar2019near}
proved sharp error bounds of ordinary least squares for estimating
the state transition matrix of the basic LDS model, using a single
trajectory; our analysis of Algorithm~\ref{alg:LS} is largely inspired
by their techniques.

\section{Analysis\label{sec:analysis}}

This section provides detailed, modular theoretical results for the
performance of Algorithms~\ref{alg:subspace}--\ref{alg:classification},
and concludes with a proof for the main theorems in Section \ref{subsec:main_theorems}
(i.e.~the performance guarantees of Algorithm~\ref{alg:overall}).

\paragraph{Subspace estimation.}

The following theorem provides upper bounds on the errors of subspaces
$\{\Vi,\Ui\}$ output by Algorithm~\ref{alg:subspace}, assuming sufficient
mixing of the short trajectories. 
\begin{thm}
\label{thm:subspace} Consider the model~(\ref{eq:setting}) under
the assumptions in Sections~\ref{sec:algorithms} and \ref{subsec:models_assumptions}.
There exist some universal constants $C_{1},C_{2},C_{3}>0$ such that
the following holds. Suppose that we run Algorithm~\ref{alg:subspace}
with data $\{\Xm\}_{m\in\Msubspace}$ obeying
\[
\Tssubspace\ge C_{1}\cdot\tmix,\quad\Ttsubspace\ge C_{2}\cdot\tmix d\cdot\log\frac{\Tt d}{\delta},\quad\text{where}\quad\tmix\coloneqq\frac{1}{1-\rho}\cdot\log\bigg(\frac{d\ka\Tt}{\delta}\bigg).
\]
Then with probability at least $1-\delta$, Algorithm~\ref{alg:subspace}
outputs $\{\Vi,\Ui\}_{1\le i\le d}$ satisfying the following: for
all $1\le k\le K$ and $1\le i\le d$,
\begin{align}
 & \max\left\{ \|\Gammaki-\Vi\Vi^{\top}\Gammaki\|_{2},\|\Yki-\Ui\Ui^{\top}\Yki\|_{2}\right\} \nonumber \\
 & \qquad\le C_{3}\cdot\Gmax\bigg(\frac{K}{\pmin}\bigg)^{1/2}\bigg(\frac{\tmix d}{\Ttsubspace}\log^{3}\frac{\Tt d}{\delta}\bigg)^{1/4}.\label{eq:bound_subspace}
\end{align}
\end{thm}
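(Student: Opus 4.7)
My plan is to reduce the subspace guarantee to an operator-norm perturbation bound on the symmetric matrices $\bM_{i}^{\star}\coloneqq2\Hi$ and $\hat{\bM}_{i}\coloneqq\Hihat+\Hihat^{\top}$, and then establish that bound by a bias/variance decomposition that exploits the mixing of the LDS. The key analytical step is eigen-gap-free. Writing $\eta\coloneqq\|\hat{\bM}_{i}-\bM_{i}^{\star}\|$ and $\bP\coloneqq\bI-\Vi\Vi^{\top}$, the fact that $\bM_{i}^{\star}=2\sum_{k}\pksubspace\Gammaki\Gammaki^{\top}$ is PSD with rank at most $K$ lets me invoke Weyl's inequality to conclude $\lambda_{K+1}(\hat{\bM}_{i})\le\eta$, so $\bP\hat{\bM}_{i}\bP\preceq\eta\bI$; a one-line perturbation then yields $\bP\bM_{i}^{\star}\bP\preceq2\eta\bI$. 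Combined with the single-summand lower bound $\bP\bM_{i}^{\star}\bP\succeq2\pksubspace(\bP\Gammaki)(\bP\Gammaki)^{\top}$, comparing top eigenvalues of these PSD matrices produces the key inequality
\[
\|\Gammaki-\Vi\Vi^{\top}\Gammaki\|_{2}^{2}\;=\;\|\bP\Gammaki\|_{2}^{2}\;\le\;\eta/\pksubspace\;\le\;\eta/\pmin,
\]
and an identical argument applied to $\Gihat$ handles $(\Yki,\Ui)$. It therefore suffices to prove a high-probability bound of the form $\eta\lesssim\Gmax^{2}K\sqrt{\tmix d/\Ttsubspace\,\log(\Tt d/\delta)}$, which upon substitution gives the advertised $(\tmix d/\Ttsubspace)^{1/4}$ rate with the $\sqrt{K/\pmin}$ prefactor.

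\paragraph{Bias and variance.} To bound $\eta$ I would write $\hmij=\Gammaki^{(k_{m})}+\boldsymbol{\xi}_{m,i,j}$ and expand the product in $\Hihat$. The deterministic (label-average) term is exactly $\Hi$; what remains are two cross terms linear in $\boldsymbol{\xi}$ and one purely quadratic term. These have two sources of systematic bias---non-stationarity of $\xmt$ at the start of $\Omegaone\cup\Omegatwo$, and residual correlation between $\hmione$ and $\hmitwo$---both of which I would control via the recursion $\bx_{m,t+s}=\bA^{s}\bx_{m,t}+\sum_{u=1}^{s}\bA^{u-1}\bw_{m,t+s-u}$ and the stability bound $\|\bA^{s}\|\le\ka\rho^{s}$. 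Since the starting index of $\Omegaone$, and the gap between $\Omegaone$ and $\Omegatwo$, both exceed $\tmix$, the two biases are $\lesssim\Gmax^{2}\rho^{\Tssubspace/4}$ and are absorbed by choosing the constant $C_{1}$ large enough. For the variance, each stochastic term is a sum of $|\Msubspace|$ independent centered matrices (one per trajectory), and matrix Bernstein applies with per-matrix spectral bound $\lesssim\Gmax^{2}\sqrt{d\tmix/\Tssubspace}$ (from a Gaussian tail estimate on $\|\boldsymbol{\xi}_{m,i,j}\|_{2}$) and matrix-variance proxy $\|\sum_{m}\E[\cdot\,\cdot]\|\lesssim|\Msubspace|\Gmax^{4}d\tmix/\Tssubspace$. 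This delivers $\|\Hihat-\Hi\|\lesssim\Gmax^{2}\sqrt{\tmix d/\Ttsubspace\,\log(\Tt d/\delta)}$, and a union bound over $i\in[d]$ together with the parallel argument for $\Gihat$ then finishes the proof.

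\paragraph{Main obstacle.} The hardest step is making the intuition ``$\hmione$ and $\hmitwo$ are essentially independent because $\Omegaone$ and $\Omegatwo$ are separated by more than $\tmix$'' precise and dimension-efficient. If one uses the crude bound $\E[\boldsymbol{\xi}\boldsymbol{\xi}^{\top}]\preceq\E[\|\boldsymbol{\xi}\|_{2}^{2}]\bI$ in the matrix-variance proxy, one loses an extra factor of $d$ and misses the target rate. What is needed is sharper, entrywise control of the sample-mean covariance that simultaneously exploits the Gaussianity of the noise (through Hanson--Wright-type estimates for the quadratic forms appearing in $(\xmt)_{i}\xmt$) and the exponential decay of the stationary autocovariance of the LDS; carefully interleaving these ingredients with the coupling that decorrelates $\hmione$ from $\hmitwo$ is the technically heaviest part of the argument.
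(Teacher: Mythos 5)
Your overall architecture matches the paper's: a concentration bound $\|\Hihat-\Hi\|,\|\Gihat-\Gi\|\lesssim\Gmax^{2}\sqrt{(\tmix d/\Ttsubspace)\,\plog}$ combined with a gap-free perturbation step, and your perturbation step is correct but genuinely different from the paper's. The paper (Lemma~\ref{lem:step1_perturbation}) subtracts the two eigenvalue equations, takes traces, and invokes Weyl's inequality to get the \emph{weighted-sum} bound $\sum_{k}\pk\|\yk-\bU\bU^{\top}\yk\|_{2}^{2}\le2K\epsilon$, from which the per-$k$ bound $\sqrt{2K\epsilon/\pk}$ follows. Your argument instead works with the projector $\Id-\Vi\Vi^{\top}$ directly: Weyl gives $\lambda_{K+1}(\Hihat+\Hihat^{\top})\le\eta$, hence the compression of the empirical matrix to the orthocomplement is $\preccurlyeq\eta\Id$, hence the compression of the population matrix is $\preccurlyeq2\eta\Id$, and sandwiching the single rank-one summand $2\pksubspace(\Id-\Vi\Vi^{\top})\Gammaki\Gammaki^{\top}(\Id-\Vi\Vi^{\top})$ underneath it gives $\|\Gammaki-\Vi\Vi^{\top}\Gammaki\|_{2}^{2}\le\eta/\pmin$. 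This is clean, equally gap-free, and in fact saves a factor of $\sqrt{K}$ relative to the paper's route (which is harmless since the theorem's bound carries a $\sqrt{K}$ anyway).

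The concentration half, however, is a plan rather than a proof, and the crux is exactly the step you defer. Structurally you differ from the paper: you treat each trajectory as one independent matrix summand $\hmione\hmitwo^{\top}$ and invoke matrix Bernstein over $m\in\Msubspace$, whereas the paper partitions $\Omegaone\times\Omegatwo$ into $N\tmix$ groups of well-separated index pairs, writes $\Gihat$ as a weighted average of matrices $\Fijtau$, each equal to an i.i.d.\ rank-one sum (with effective sample size $\Ttsubspace/\tmix$) plus an exponentially small mixing remainder, and applies a truncation-plus-covering Bernstein bound to each group. Your route can work, but two things must be supplied. First, the variance proxy $\big\|\sum_{m}\E[\cdot]\big\|\lesssim|\Msubspace|\Gmax^{4}d\tmix/\Tssubspace$ requires precisely the directional bound $\|\cov(\hmij)\|\lesssim\Gmax^{2}\tmix/\Tssubspace$ (no extra $d$), which you correctly identify as the hard point but do not prove; without it the rate degrades by $\sqrt{d\tmix/\Tssubspace}$. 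Second, your stated per-summand spectral bound $\lesssim\Gmax^{2}\sqrt{d\tmix/\Tssubspace}$ is wrong for the quadratic term $\boldsymbol{\xi}_{m,i,1}\boldsymbol{\xi}_{m,i,2}^{\top}$, whose norm is $\lesssim\Gmax^{2}d\tmix/\Tssubspace$, which dominates in the regime $\Tssubspace\ll d\tmix$ that the theorem is designed for; the final rate survives only because the hypothesis $\Ttsubspace\gtrsim\tmix d\log(\Tt d/\delta)$ makes the resulting $L$-term in Bernstein subdominant to the variance term, so you should say this explicitly. Relatedly, the summands are unbounded Gaussian polynomials, so a truncation step (as in the paper's proof of~(\ref{eq:step1_iid})) is needed before Bernstein applies, and it is what produces the extra logarithmic factors in~(\ref{eq:bound_subspace}).
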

Our proof (deferred to Appendix~\ref{subsec:proof_subspace}) includes
a novel perturbation analysis; the resulted error bound~(\ref{eq:bound_subspace})
has a $1/\Ttsubspace^{1/4}$ dependence and is gap-free (i.e.~independent
of the eigenvalue gaps of the ground-truth low-rank matrices, which
can be arbitrarily close to zero in the worst case). It is possible
to adapt the existing perturbation results in \cite{kong2020meta,kong2020robust}
to our setting (which we include in Lemma \ref{lem:step1_perturbation}
in the appendix for completeness); however, one of them is dependent
on the eigenvalue gaps, while the other one incurs a worse $1/\Ttsubspace^{1/6}$
dependence. It would be interesting future work to investigate whether
a gap-free bound with a $1/\Ttsubspace^{1/2}$ dependence is possible.

\paragraph{Clustering.}

Our next theorem shows that Algorithm~\ref{alg:clustering} achieves
exact clustering of $\Mclustering$, if $\Tsclustering$ is sufficiently
large and subspaces $\{\Vi,\Ui\}$ are accurate. The proof is deferred
to Appendix \ref{subsec:proof_clustering}. 
\begin{thm}
\label{thm:clustering} Consider the model~(\ref{eq:setting}) under
the assumptions in Sections~\ref{sec:algorithms} and \ref{subsec:models_assumptions}.
There exist universal constants $C_{1},C_{2},c_{3}>0$ such that the
following holds. Suppose that we run Algorithm~\ref{alg:clustering}
with data $\{\Xm\}_{m\in\Mclustering}$, independent subspaces $\{\Vi,\Ui\}_{1\le i\le d}$
and parameters $\tau$, $G$ that satisfy the following: 
\begin{itemize}
\item The threshold $\tau$ obeys $1/8<\tau/\DGY^{2}<3/8$;
\item The short trajectory length $\Tsclustering=NG$, where
\[
G\ge C_{1}\cdot\log\frac{|\Mclustering|}{\delta},\quad N\ge C_{2}\left(\frac{\Gmax^{2}\ka^{2}\sqrt{dK}}{\Dsep^{2}}+1\right)\frac{1}{1-\rho}\log\Bigg(\bigg(\frac{\Gmax}{\Dsep}+2\bigg)\frac{d\ka\Tt}{\delta}\Bigg);
\]
\item The subspaces $\{\Vi,\Ui\}_{1\le i\le d}$ satisfy that, for all $1\le i\le d$
and $1\le k\le K$, 
\begin{equation}
\max\bigg\{\|\Gammaki-\Vi\Vi^{\top}\Gammaki\|_{2},\|\Yki-\Ui\Ui^{\top}\Yki\|_{2}\bigg\}\le c_{3}\frac{\DGY}{\sqrt{d}}.\label{eq:2a_subspace_error}
\end{equation}
\end{itemize}
Then with probability at least $1-\delta$, Algorithm~\ref{alg:clustering}
achieves exact clustering: for all $m_{1},m_{2}\in\Mclustering$,
$S_{m_{1},m_{2}}=1$ if and only if the $m_{1}$-th and $m_{2}$-th
trajectories are generated by the same model, i.e.~they have the
same label $k_{m_{1}}=k_{m_{2}}$.
\end{thm}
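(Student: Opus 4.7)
The plan is to establish exact clustering by showing that, for every pair $(m,n)\in\Mclustering\times\Mclustering$, the decision statistic $\median\{\statGammag,1\le g\le G\}+\median\{\statYg,1\le g\le G\}$ concentrates sharply around $\|\Gammakm-\Gammakn\|_{\Frm}^{2}+\|\Ykm-\Ykn\|_{\Frm}^{2}$, which equals $0$ when $k_{m}=k_{n}$ and is at least $\DGY^{2}$ otherwise (by Fact~\ref{fact:separation} and the definition of $\DGY$). Since $\tau\in(\DGY^{2}/8,3\DGY^{2}/8)$, any concentration with two-sided error strictly smaller than $\DGY^{2}/8$ correctly determines $S_{m,n}$, after which a union bound over pairs finishes the argument.

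I would first fix a single pair $(m,n)$ and a single index $g\in[G]$, and aim at the per-copy tail bound $\Pr\big(|\statGammag+\statYg-(\|\Gammakm-\Gammakn\|_{\Frm}^{2}+\|\Ykm-\Ykn\|_{\Frm}^{2})|\ge \DGY^{2}/16\big)\le 1/4.$ The natural decomposition is $\hmigj=(\Gammakm)_{i}+\text{(mixing bias)}+\text{(stationary fluctuation)}$: stability ($\|(\bA^{(k_{m})})^{t}\|\le\ka\rho^{t}$) implies that after a burn-in of $N\gtrsim\tmix$ the initial-state contribution and the non-stationarity of $\cov(\xmt)$ both decay like $\rho^{N}$, which is negligible under the assumption on $N$. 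The stationary fluctuation is a quadratic form in the Gaussian innovations $\{\wmt\}$, so Hanson--Wright controls $\|\Vi^{\top}(\hmigj-(\Gammakm)_{i})\|_{2}$ for each $i$. Sample splitting between $\Omegagone$ and $\Omegagtwo$ (separated by $N$ time steps), together with another $\rho^{N}$ mixing bound across the gap, makes the two segments approximately independent, so the inner product $\langle\Vi^{\top}(\cdot),\Vi^{\top}(\cdot)\rangle$ is effectively a bilinear form in two independent sub-Gaussian vectors of effective dimension $K$. Summing over $i\in[d]$ yields total variance of order $dK\Gmax^{4}\ka^{4}/N^{2}$, which is bounded by a small multiple of $\DGY^{4}$ precisely when $N\gtrsim\Gmax^{2}\ka^{2}\sqrt{dK}/\DGY^{2}$, matching the stated condition on $N$.

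Given the per-copy bound, the robustness of the median applied to $G$ approximately i.i.d.\ copies (built from disjoint blocks of the trajectory) upgrades a constant per-copy success rate of $3/4$ to an overall success probability of $1-\exp(-cG)$, via a Chernoff bound on the number of ``bad'' copies. Setting $G\gtrsim\log(|\Mclustering|/\delta)$ and union-bounding over the at most $|\Mclustering|^{2}$ pairs yields the overall claim with probability $1-\delta$. The subspace approximation error~(\ref{eq:2a_subspace_error}) perturbs the per-copy expectation of each of $\statGammag$ and $\statYg$ by at most $\sum_{i=1}^{d}c_{3}^{2}\DGY^{2}/d=c_{3}^{2}\DGY^{2}$; choosing the universal constant $c_{3}$ small enough absorbs this perturbation into the $\DGY^{2}/16$ margin above, so the subspaces $\{\Vi\},\{\Ui\}$ need only be accurate in the sense of~(\ref{eq:2a_subspace_error}), not in an operator-norm sense.

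The main obstacle is the concentration of the within-segment averages $\hmigj=\frac{1}{|\Omegagj|}\sum_{t\in\Omegagj}(\xmt)_{i}\xmt$, because the summands are strongly correlated through the LDS dynamics. I would avoid invoking generic Markov-chain concentration by exploiting the Gaussian structure explicitly: unrolling $\xmt=\sum_{s<t}(\bA^{(k_{m})})^{t-1-s}\wms$ (modulo an initial-state contribution of norm $\le\ka\rho^{t}\|\bx_{m,0}\|$ that is killed by mixing), the statistic becomes a Gaussian chaos of order two, to which Hanson--Wright applies with sharp constants. The most delicate bookkeeping is to track how the rank-$K$ projections $\{\Vi\},\{\Ui\}$ reduce the effective dimension from $d$ to $K$, yielding the $\sqrt{dK}$ factor (rather than $d$) in the condition on $N$; this reduction is precisely what makes the prior subspace-estimation step worthwhile, and it is also what forces the analysis to be carried out in a projected geometry throughout, rather than directly on the ambient $d\times d$ covariance matrices.
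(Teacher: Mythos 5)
Your overall architecture coincides with the paper's proof of Theorem~\ref{thm:clustering}: fix a pair, compute the mean and variance of one copy of the statistic after decoupling the two segments and truncating the noise history at a mixing time, boost a constant-probability per-copy bound to $1-e^{-cG}$ via the median (Proposition~\ref{prop:median}), absorb the subspace error into the margin, and union bound over pairs. Your treatment of the subspace perturbation is in fact slightly cleaner than the paper's: by Pythagoras the loss in expectation equals $\sum_{i}\|(\Id-\Ui\Ui^{\top})((\Yk)_{i}-(\Yl)_{i})\|_{2}^{2}\le4d\epsilon^{2}\le4c_{3}^{2}\DGY^{2}$, whereas the paper bounds it by $4\epsilon\sqrt{d}\,\|\Yk-\Yl\|_{\Frm}$; both suffice once $c_{3}$ is small.

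Two points in your variance accounting need repair. First, the variance of the inner product of two independent segment averages with common mean $\mukl$ and covariance $\frac{1}{N}\Sigkl$ is $\frac{1}{N^{2}}\Tr(\Sigkl^{2})+\frac{2}{N}\mukl^{\top}\Sigkl\mukl$ (Lemma~\ref{lem:stat_E_var}); you keep only the first term. The omitted cross term contributes a standard deviation of order $\sqrt{\Gmax^{2}\ka^{2}/N}\cdot\|\Yk-\Yl\|_{\Frm}$, which for a well-separated pair with $\|\Yk-\Yl\|_{\Frm}\gg\DGY$ can exceed $\DGY^{2}/16$ under the stated condition on $N$; hence your target per-copy event --- two-sided concentration around the population separation within $\DGY^{2}/16$ --- is false for such pairs. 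The repair is what the paper does: for $k\neq\ell$ prove only the one-sided bound $\stat\ge\frac{3}{8}\big(\|\Gammak-\Gammal\|_{\Frm}^{2}+\|\Yk-\Yl\|_{\Frm}^{2}\big)\ge\frac{3}{8}\DGY^{2}$, noting that the cross-term deviation is small relative to the actual separation rather than to $\DGY^{2}$. Second, the within-segment summands are correlated over a window of length $\tmix$, so the effective sample size is $\Ntil\asymp N/\tmix$, not $N$: the variance should read $dK\Gmax^{4}\ka^{4}\tmix^{2}/N^{2}$, which is exactly why the theorem's condition on $N$ carries the extra $\frac{1}{1-\rho}\log(\cdot)$ factor that your phrase ``precisely when $N\gtrsim\Gmax^{2}\ka^{2}\sqrt{dK}/\DGY^{2}$'' drops. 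The paper realizes this loss by striding each segment into $\tmix$ subsequences of genuinely independent truncated states $\xtilmt$ and averaging, rather than by a chaos inequality; your Hanson--Wright route can be made to work, but only if the Frobenius norm of the unrolled chaos matrix is tracked carefully enough to surface the same $\tmix$ factor.
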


\paragraph{Least squares and covariance estimation.}

The next result controls the model estimation errors of Algorithm~\ref{alg:LS},
under the assumption that every cluster is pure.
\begin{thm}
\label{thm:LS} Consider the model~(\ref{eq:setting}) under the
assumptions in Section~\ref{subsec:models_assumptions}. There exist
universal constants $C_{1},C_{2},C_{3}>0$ such that the following
holds. Let $\{\Ck\}_{1\le k\le K}$ be subsets of $\Mclustering\cup\Mclassification$
such that for all $1\le k\le K$, $\Ck$ contains only short trajectories
generated by model $(\Ak,\Wk)$, namely $k_{m}=k$ for all $m\in\Ck$.
Suppose that for all $m\in\Mclustering\cup\Mclassification$, $T_{m}\ge4$,
and for all $1\le k\le K$, 
\[
\Ttk\coloneqq\sum_{m\in\Ck}T_{m}\ge C_{1}d\kwself^{2}\iota,\quad\text{where}\quad\iota\coloneqq\log\Big(\frac{\Gmax}{\Wmin}\frac{d\ka\Tt}{\delta}\Big).
\]
Let $\Akhat,\Wkhat$ be computed by (\ref{eq:refined_A_W}) in Algorithm~\ref{alg:LS}.
Then with probability at least $1-\delta$, one has
\begin{align*}
\|\Akhat-\Ak\| & \le C_{2}\sqrt{\frac{d\kwself\iota}{\Ttk}},\quad\frac{\|\Wkhat-\Wk\|}{\|\Wk\|}\le C_{3}\sqrt{\frac{d\iota}{\Ttk}},\quad1\le k\le K.
\end{align*}
\end{thm}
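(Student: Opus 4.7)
The plan is to analyze the least-squares estimator $\Akhat$ and the plug-in covariance estimator $\Wkhat$ separately, inspired by the single-trajectory arguments of \cite{simchowitz2018learning,sarkar2019near} but adapted to a collection of short trajectories. Fix $1\le k\le K$ and stack the samples from $\Ck$ into matrices $\bX,\bX^{+}\in\R^{\Ttk\times d}$ whose rows are $\bx_{m,t}^{\top}$ and $\bx_{m,t+1}^{\top}$; purity of $\Ck$ ensures that $\bN\coloneqq\bX^{+}-\bX(\Ak)^{\top}$ has rows that are conditionally independent $\Ncal(\mathbf{0},\Wk)$ vectors. The formula~(\ref{eq:refined_A}) then gives $\Akhat-\Ak=\bN^{\top}\bX(\bX^{\top}\bX)^{-1}$, so it suffices to establish (a)~a high-probability lower bound on the Gram matrix $\bX^{\top}\bX$, and (b)~a self-normalized upper bound on $\bN^{\top}\bX$.

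For task~(a), I would exploit the decomposition $\bx_{m,t}=(\Ak)^{t}\bx_{m,0}+\sum_{j=1}^{t}(\Ak)^{j-1}\bw_{m,t-j}$ for $t\ge1$: the conditional covariance of $\bx_{m,t}$ given everything strictly before $\bw_{m,t-1}$ always contains the $j=1$ ``fresh-noise'' term $\Wk\succeq\Wmin\Id$. This yields a block-martingale small-ball (BMSB) property on the filtration generated by $\{\bw_{m,t}\}$, which -- combined with independence across trajectories in $\Ck$ and the assumption $T_{m}\ge4$ -- gives $\bX^{\top}\bX\succeq c\,\Ttk\Wmin\Id$ on a high-probability event, provided $\Ttk\gtrsim d\kwself^{2}\iota$. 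In parallel, a union bound plus Gaussian tail concentration for $\bx_{m,t}$ (whose conditional covariance is bounded by $\ka^{2}\Gmax\Id$) yields $\|\bX\|^{2}\lesssim\Ttk\Gmax\iota$; this upper bound controls the covering radius needed for task~(b) and justifies the factor $\ka$ inside the logarithm $\iota$. For task~(b), I would invoke the self-normalized vector martingale inequality of Abbasi-Yadkori with conditional Gaussian noise of covariance $\preceq\Wmax\Id$, together with an $\epsilon$-net argument over the unit sphere in $\R^{d}$, to obtain $\|\bN^{\top}\bX(\bX^{\top}\bX)^{-1/2}\|\lesssim\sqrt{\Wmax d\iota}$ with high probability; combined with~(a), this gives $\|\Akhat-\Ak\|\lesssim\sqrt{(\Wmax/\Wmin)\,d\iota/\Ttk}=\sqrt{d\kwself\iota/\Ttk}$, matching the claim.

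For the covariance estimator I would substitute $\wmthat=\bw_{m,t}-(\Akhat-\Ak)\bx_{m,t}$ into~(\ref{eq:refined_W}) to obtain
\[
\Wkhat-\Wk=\Big(\tfrac{1}{\Ttk}\sum_{m\in\Ck,t}\bw_{m,t}\bw_{m,t}^{\top}-\Wk\Big)-\mathrm{(cross)}+(\Akhat-\Ak)\tfrac{\bX^{\top}\bX}{\Ttk}(\Akhat-\Ak)^{\top},
\]
and control the three pieces: the first by standard Gaussian covariance concentration (relative spectral error $\sqrt{d\iota/\Ttk}$), the third by the bound on $\Akhat-\Ak$ times the upper bound $\|\bX^{\top}\bX\|/\Ttk\lesssim\Gmax$ established in~(a), and the cross term by Cauchy--Schwarz from the other two. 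The main obstacle I anticipate is the Gram-matrix lower bound in task~(a): because each $T_{m}$ may be much smaller than any mixing time and, in Case~1, initial states depend on histories of earlier trajectories, one cannot appeal to stationarity or to a single-trajectory BMSB argument. The delicate point is to expose $\bw_{m,t-1}$ as the fresh-noise source of the $\Wmin\Id$ small-ball component while correctly handling its coupling to later states via $\Ak$, and to aggregate across trajectories in a way that yields a dependence on $\kwself$ (the noise condition number) rather than the potentially much larger condition number of $\Gammak$.
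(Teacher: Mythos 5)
Your proposal is correct and follows the same overall architecture as the paper's proof: the identity $\Akhat-\Ak=\NTX(\XTX)^{-1}$, a high-probability lower bound $\XTX\succcurlyeq c\,\Ttk\Wmin\Id$, a self-normalized bound $\|(\XTX)^{-1/2}\XTN\|\lesssim\sqrt{\Wmax d\iota}$ obtained from an Abbasi-Yadkori-type inequality plus a covering argument, and a plug-in analysis for $\Wkhat$. The one place where you take a genuinely different route is the Gram-matrix lower bound. You propose a block-martingale small-ball argument in the spirit of Simchowitz et al., exposing $\bw_{m,t-1}$ as a fresh-noise source with conditional covariance $\succcurlyeq\Wmin\Id$; this is viable (concatenating the trajectories of $\Ck$ into a single filtration handles the Case-1 dependence you flag, and the resulting sample-size requirement is if anything milder than $\Ttk\gtrsim d\kwself^{2}\iota$). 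The paper instead avoids small-ball machinery entirely: it writes $\XTX\succcurlyeq\sum_{m,t}\wmt\wmt^{\top}+\bQ$, where the first sum is handled by standard Gaussian covariance concentration (this is where the $\kwself^{2}$ in the hypothesis comes from) and the indefinite remainder $\bQ$ is lower bounded over an $\epsilon$-net via the scalar self-normalized inequality, with the PSD part of $\bQ$ absorbing the cross terms; the hypothesis $T_{m}\ge4$ enters because the shifted sum only retains $T_{m}-1$ of the $T_{m}$ terms per trajectory. For the covariance step, the paper also uses a small simplification you could adopt: since $\hat{\bN}^{\top}=\bN^{\top}(\bI_{T}-\bX(\XTX)^{-1}\bX^{\top})$ is $\bN^{\top}$ times an orthogonal projection, one gets exactly $\Wkhat-\Wk=(\frac{1}{T}\bN^{\top}\bN-\Wk)-\frac{1}{T}\NTX(\XTX)^{-1}\XTN$ with no separate cross term, and the second piece is precisely the square of the self-normalized quantity already controlled, yielding the relative error $\sqrt{d\iota/\Ttk}$ without any appearance of $\Gmax$; your three-term expansion collapses to the same thing, but you should bound the cross term as the self-normalized quadratic form itself rather than via a crude $\|\bX^{\top}\bN\|$ estimate, which would otherwise leak a $\Gmax$ factor.
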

Our proof (postponed to Appendix \ref{subsec:proof_LS}) is based
on the techniques of \cite{simchowitz2018learning,sarkar2019near},
but with two major differences. First, the authors of \cite{simchowitz2018learning,sarkar2019near}
consider the setting where ordinary least squares is applied to a
single continuous trajectory generated by a single LDS model; this
is not the case for our setting, and thus our proof and results are
different from theirs. Second, the noise covariance matrix $\bW$
is assumed to be $\sigma^{2}\Id$ in \cite{simchowitz2018learning,sarkar2019near},
while in our case, $\{\Wk\}_{1\le k\le K}$ are unknown and need to
be estimated.

\paragraph{Classification.}

Our last theorem shows that Algorithm~\ref{alg:classification} correctly
classifies all trajectories in $\Mclassification$, as long as the
coarse models are sufficiently accurate and the short trajectory lengths
are large enough; these conditions are slightly different for Cases
0 and 1 defined in (\ref{eq:case01}). See Appendix \ref{subsec:proof_classification}
for the proof.  
\begin{thm}
\label{thm:classification} Consider the model~(\ref{eq:setting})
under the assumptions in Section~\ref{subsec:models_assumptions}.
There exist universal constants $c_{1},c_{2},C_{3}>0$ such that
the following holds. Suppose that we run Algorithm~\ref{alg:classification}
with data $\{\Xm\}_{m\in\Mclassification}$ and independent coarse
models $\{\Akhat,\Wkhat\}_{1\le k\le K}$ satisfying $\|\Akhat-\Ak\|\le\epsA,\|\Wkhat-\Wk\|\le\epsW$
for all $k$. Then with probability at least $1-\delta$, Algorithm~\ref{alg:classification}
correctly classifies all trajectories in $\Mclassification$, provided
that 
\begin{subequations}
\label{eq:classification_conditions}
\begin{align}
\text{For Case 0:}\quad & \epsA\le c_{1}\DAW\sqrt{\frac{\llb}{\Gmax\kwcross(d+\iota)}},\quad\frac{\epsW}{\llb}\le c_{2}\cdot\min\Big\{1,\frac{\DAW}{\sqrt{\kwcross d}}\Big\},\label{eq:err_A_W_case0}\\
 & T_{m}\ge C_{3}\Big(\kwself^{2}+\frac{\kwcross^{6}}{\DAW^{2}}\Big)\iota^{2},\quad m\in\Mclassification;\label{eq:Tm_case0}\\
\text{For Case 1:}\quad & \epsA\le c_{1}\DAW\sqrt{\frac{\llb}{\Gmax\kwcross\ka^{2}(d+\iota)}},\quad\frac{\epsW}{\llb}\le c_{2}\cdot\min\Big\{1,\frac{\DAW}{\sqrt{\kwcross d}}\Big\},\label{eq:err_A_W_case1}\\
 & T_{m}\ge C_{3}\Big(\kwself^{2}+\frac{\kwcross^{6}}{\DAW^{2}}\Big)\iota^{2}+\frac{1}{1-\rho}\log(2\ka),\quad m\in\Mclassification,\label{eq:Tm_case1}
\end{align}
\end{subequations}
where $\iota\coloneqq\log\frac{\Tt}{\delta}$ is a logarithmic term.
\end{thm}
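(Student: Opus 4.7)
}

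The plan is to show that for every trajectory $m\in\Mclassification$ with true label $k=k_m$ and every alternative label $\ell\neq k$, the empirical loss satisfies $L(\Akhat,\Wkhat)<L(\Alhat,\Wlhat)$ with probability at least $1-\delta/(K|\Mclassification|)$; a union bound over $m$ and $\ell$ then yields exact classification. For each fixed $(m,\ell)$ I would decompose
\[
L(\Alhat,\Wlhat)-L(\Akhat,\Wkhat) \;=\; \underbrace{\E\big[L(\Al,\Wl)-L(\Ak,\Wk)\big]}_{\text{population signal}} \;+\; \underbrace{\big(\text{centered noise}\big)}_{\text{concentration}} \;+\; \underbrace{\big(\text{plug-in error}\big)}_{\text{perturbation}},
\]
and argue that the signal dominates both remainders.

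For the \emph{population signal}, plug in the truth $\xmtp-\Ak\xmt=\wmt$ for the trajectory. Straightforward computation gives
\[
\E\big[L(\Al,\Wl)-L(\Ak,\Wk)\big] \;=\; T_m\cdot\Phi(\Wk,\Wl) \;+\; \sum_{t=0}^{T_m-1}\Tr\!\big(\Wl^{-1}(\Ak-\Al)\E[\xmt\xmt^{\top}](\Ak-\Al)^{\top}\big),
\]
where $\Phi(\Wk,\Wl)=\log\det(\Wl)-\log\det(\Wk)+\Tr(\Wl^{-1}\Wk)-d$ is (twice) the Gaussian KL and is $\gtrsim \|\Wk-\Wl\|_{\Frm}^{2}/(\Wmax\,\Wmin)$ by standard log-det inequalities. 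For the second term, I would use $\E[\xmt\xmt^{\top}]\succeq\Wk\succeq\Wmin\Id$ (Case~0, for $t\ge 1$) or, in Case~1, invoke stationarity after the transient to get the same lower bound for $t\gtrsim\tmix$. Combining these two contributions produces a lower bound of order $T_m\,\DAW^{2}/\kwcross$, which is where the joint separation quantity $\DAW^{2}=\|\Ak-\Al\|_{\Frm}^{2}+\|\Wk-\Wl\|_{\Frm}^{2}/\lup^{2}$ naturally appears.

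For the \emph{concentration}, the centered term is a sum of quadratic forms in $\{\wmt\}$ (plus cross-terms of the form $\wmt^{\top}\Wl^{-1}(\Ak-\Al)\xmt$, which are martingale increments against the filtration generated by $\{\bw_{m,s}\}_{s<t}$). Hanson--Wright for the diagonal pieces and a self-normalized Bernstein bound for the martingale pieces give deviations of order $\sqrt{T_m\iota}\cdot(\kwself+\|\Ak-\Al\|\sqrt{\Gmax/\Wmin})+\iota$, which is strictly smaller than the population signal provided $T_m\gtrsim(\kwself^{2}+\kwcross^{6}/\DAW^{2})\iota^{2}$---precisely the condition~(\ref{eq:Tm_case0}). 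For the \emph{plug-in perturbation}, expanding $\Wlhat^{-1}=\Wl^{-1}-\Wl^{-1}(\Wlhat-\Wl)\Wl^{-1}+O((\epsW/\llb)^{2}\Wl^{-1})$ and $\Alhat=\Al+(\Alhat-\Al)$, one sees that the log-det term moves by at most $O(T_m\epsW/\llb)$ and the quadratic-form term moves by at most $O\!\big(T_m(\epsA^{2}\Gmax/\llb+\epsW\Gmax\|\Ak-\Al\|^{2}/\llb^{2})\big)$ after using $\|\xmt\|^{2}\lesssim d\,\Gmax\,\iota$ with high probability. The bounds on $\epsA,\epsW$ in~(\ref{eq:err_A_W_case0}) are tuned so that this perturbation is dominated by $T_m\DAW^{2}/\kwcross$.

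The main obstacle I anticipate is twofold. First, in Case~1 the initial state $\bx_{m,0}$ is inherited from the preceding trajectory, so it need not be zero or even stationary; I would handle this by showing $\|\bx_{m,0}\|\lesssim\ka\sqrt{d\,\Gmax\,\iota}$ with high probability via exponential stability together with a sub-Gaussian tail bound, and then decomposing $\xmt=(\Ak)^{t}\bx_{m,0}+\sum_{s<t}(\Ak)^{s}\bw_{m,t-1-s}$. The contribution of the first summand pollutes both the expected-loss lower bound and the concentration bounds for the first $O(\tmix)$ time steps, which is exactly why~(\ref{eq:Tm_case1}) carries an extra $(1-\rho)^{-1}\log(2\ka)$ term---it ensures a clean $T_m-O(\tmix)$ residual after discarding the transient. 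Second, the cross quadratic form $\sum_{t}\xmt^{\top}(\Ak-\Al)^{\top}\Wl^{-1}(\Ak-\Al)\xmt$ involves a dependent sequence; I would control its lower deviation by writing it as $\Tr\big((\Ak-\Al)^{\top}\Wl^{-1}(\Ak-\Al)\sum_t\xmt\xmt^{\top}\big)$ and invoking a one-sided matrix concentration inequality for $\sum_t\xmt\xmt^{\top}$ in the spirit of the arguments behind Theorem~\ref{thm:LS}. Once these two ingredients are in place, the union bound over $(m,\ell)$ and the threshold matching of~(\ref{eq:classification_conditions}) yield the claimed exact classification.
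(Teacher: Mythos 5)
Your high-level plan coincides with the paper's: compare the losses pairwise, isolate an order-$T_m$ signal of size $\asymp T_m\DAW^{2}/\kwcross$, show the stochastic fluctuations and the plug-in errors from $(\Akhat,\Wkhat),(\Alhat,\Wlhat)$ are dominated, and union bound. The organization differs slightly (you center at the true models and treat the hat-vs-true difference as a separate perturbation, whereas the paper works with the hat models throughout via its $(\mathrm{A})+(\mathrm{B})-(\mathrm{C})$ split and the quantity $\Dklhat$), and your Case~1 handling of the inherited initial state matches the paper's Lemma~\ref{lem:bounded_state}.

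There is, however, one concrete step that would fail as described: lower-bounding $\sum_{t}\xmt^{\top}(\Ak-\Al)^{\top}(\Wl)^{-1}(\Ak-\Al)\xmt$ by writing it as $\Tr\big(\bQ\sum_{t}\xmt\xmt^{\top}\big)$ and invoking a matrix lower bound on $\sum_{t}\xmt\xmt^{\top}$ ``in the spirit of Theorem~\ref{thm:LS}.'' That argument (the $\epsilon$-net bound behind (\ref{eq:VT_lower_bound})) requires $T\gtrsim\kwself^{2}d\log(\cdot)$, i.e.\ a sample size at least linear in $d$, because it must control all $d$ directions simultaneously; but the whole point of Theorem~\ref{thm:classification} is that $T_{m}=\Tsclassification$ carries \emph{no} $d$ dependence (Corollary~\ref{cor:simplify} even allows it to decrease with $d$, and for $T_m\ll d$ the matrix $\sum_t\xmt\xmt^{\top}$ is rank-deficient, so no such matrix lower bound can hold). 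Since $\bQ=(\Ak-\Al)^{\top}(\Wl)^{-1}(\Ak-\Al)$ is a \emph{fixed} PSD matrix, you only need one-sided scalar concentration of the quadratic form, and the paper gets this via the shift trick: $\sum_{t=0}^{T-1}\xt^{\top}\bDel\xt\ge\sum_{t=0}^{T-2}(\Ak\xt+\wt)^{\top}\bDel(\Ak\xt+\wt)$, then Hanson--Wright on $\sum_t\wt^{\top}\bDel\wt$ yields $T\cdot\Tr\big((\Wk)^{1/2}\bDel(\Wk)^{1/2}\big)\ge T\,\Wmin\|\Ak-\Al\|_{\Frm}^{2}$ up to $O(\sqrt{T}\log(1/\delta))$ fluctuations, while the remaining cross terms are handled by the scalar self-normalized bound (Lemma~\ref{lem:self_norm_scalar}) with no $d$ dependence. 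You should replace your matrix-concentration step with this (or an equivalent scalar) argument. A secondary, quantitative point: your claimed lower bound $\Phi(\Wk,\Wl)\gtrsim\|\Wk-\Wl\|_{\Frm}^{2}/(\Wmax\Wmin)$ is too optimistic by a factor of $\kwcross^{2}$; the sharp form (Lemma~\ref{lem:Tr_logdet} with $\|\bX\|\le 2\kwcross$) gives $\|\Wk-\Wl\|_{\Frm}^{2}/(6\kwcross\lup^{2})$, and this weaker constant is what produces the $\kwcross^{6}/\DAW^{2}$ factor in (\ref{eq:Tm_case0}).
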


\paragraph{Proof of Theorems \ref{thm:case0} and \ref{thm:case1}.}

Our main theorems are direct implications of the above guarantees
for the individual steps. 
\begin{itemize}
\item \emph{Stage 1}: To begin with, according to Theorem \ref{thm:subspace},
if the condition (\ref{eq:Tssubspace}) on $\Tssubspace$ and $\Ttsubspace$
hold, then the subspaces $\{\Vi,\Ui\}$ output by Line~\ref{line:subspace}
of Algorithm~\ref{alg:overall} are guaranteed to satisfy the error
bounds (\ref{eq:2a_subspace_error}) required by Theorem \ref{thm:clustering}.
This together with the condition (\ref{eq:Tsclustering}) on $\Tsclustering$
ensures exact clustering in Line~\ref{line:clustering}. 
\item \emph{Stage 2}: Based on this, if we further know that $\Ttclustering$
obeys condition (\ref{eq:Tsclustering}) for Case 0 or (\ref{eq:Ttclustering_case1})
for Case 1, then Theorem \ref{thm:LS} tells us that the coarse model
estimation in Line~\ref{line:LS1} satisfies the error bounds (\ref{eq:err_A_W_case0})
or (\ref{eq:err_A_W_case1}) required by Theorem \ref{thm:classification}.
Together with the assumption (\ref{eq:Tsclassification}) or (\ref{eq:Tsclassification_case1})
on $\Tsclassification$, this guarantees exact classification in Line~\ref{line:classification}
of Algorithm\ref{alg:overall}, for either Case 0 or 1. At the end,
the final model estimation errors (\ref{eq:final_errors}) follow
immediately from Theorem \ref{thm:LS}. 
\end{itemize}
Note that all the statements above are high-probability guarantees;
it suffices to take the union bound over these steps, so that the
performance guarantees of Theorems \ref{thm:case0} and \ref{thm:case1}
hold with probability at least $1-\delta$. This finishes the proof
of our main theorems.

\section{Discussion \label{sec:Discussions}}

This paper has developed a theoretical and algorithmic framework for
learning multiple LDS models from a mixture of short, unlabeled sample
trajectories. Our key contributions include a modular two-stage meta-algorithm,
as well as theoretical analysis demonstrating its computational and
statistical efficiency in solving the mixed LDSs problem. We would
like to invite the readers to contribute to this important topic by,
say, further strengthening the\emph{ }theoretical analysis and algorithmic
design.  For example, in certain cases $\Tsclustering$ can be a
bottleneck compared with $\Tssubspace$ and $\Tsclassification$,
and thus one might hope to achieve a better dependence on $\Ttclustering$
(e.g.~allowing $\Ttclustering\ll d$), by replacing Line~\ref{line:LS1}
in Algorithm~\ref{alg:overall} with a different method (e.g.~adapting
the method of coarse model estimation in \cite[Algorithm 3]{kong2020meta}
to our setting). As another example, from the practical perspective,
it is possible that the data is a single continuous trajectory and
the time steps when the underlying model changes are \emph{unknown}
\cite{hallac2017toeplitz,harrison2020continuous}; in order to accommodate
such a case, one might need to incorporate change-point detection
into the learning process. 

Moving beyond the current setting of mixed LDSs, we remark that there
are plenty of opportunities for future studies. For instance, while
our methods in Stage 1 rely on the mixing property of the LDS models,
it is worth exploring whether it is feasible to handle the \emph{non-mixing}
case \cite{simchowitz2018learning,sarkar2019near}. Another potential
direction is to consider the robustness against outliers and adversarial
noise \cite{chen2021kalman,kong2020robust}. One might even go further
and extend the ideas (e.g.~the two-stage meta-algorithm) to learning
mixtures of \emph{other time-series models }(potentially with model
selection \cite{wong2000mixture}), such as LDS with partial observations
or nonlinear observations \cite{mhammedi2020learning}, autoregressive--moving-average
(ARMA) models, nonlinear dynamical systems \cite{mania2020active,kakade2020information,foster2020learning},
to name a few. Ultimately, it would be of great importance to consider
the case \emph{with controlled inputs}, such as learning mixtures
of linear-quadratic regulators, or latent Markov decision processes
\cite{kwon2021rl} that arises in reinforcement learning.

\section*{Acknowledgements}

Y.~Chen is supported in part by the ARO grant W911NF-20-1-0097, the NSF grants CCF-1907661 and IIS-1900140, and the AFOSR grant FA9550-19-1-0030.
H.~V.~Poor is supported in part by the NSF under Grant CCF-1908308.
We would like to thank Yuxin Chen and Gen Li for numerous helpful discussions.

\appendix

\section{Proofs for Section \ref{sec:analysis}} 

This section starts with some preliminaries about linear dynamical
systems that will be helpful later. Then, it provides the main proofs
for the theorems in Section \ref{sec:analysis}. 

\subsection{Preliminaries}

\paragraph{Truncation of autocovariance.}

Recall the notation $\Gammak=\sum_{i=0}^{\infty}\Ak^{i}\Wk(\Ak^{i})^{\top}$
from (\ref{eq:def_Gamma}) and (\ref{eq:def_Gammak_Yk}). We add a
subscript $t$ to represent its $t$-step truncation:
\begin{equation}
\boldsymbol{\Gamma}_{t}^{(k)}\coloneqq\sum_{i=0}^{t-1}\Ak^{i}\Wk(\Ak^{i})^{\top}.\label{eq:def_Gammakt}
\end{equation}
Also recall the assumption of exponential stability in (\ref{eq:stability}),
namely $\|(\Ak)^{t}\|\le\ka\rho^{t}$. As a result, $\bGamma_{t}^{(k)}$
is close to $\Gammak$:
\begin{align}
\boldsymbol{0}\preccurlyeq\Gammak-\bGamma_{t}^{(k)} & =\sum_{i=t}^{\infty}\Ak^{i}\Wk(\Ak^{i})^{\top}=\Ak^{t}\Gammak(\Ak^{t})^{\top},\nonumber \\
\|\Gammak-\bGamma_{t}^{(k)}\| & \le\|\Gammak\|\|\Ak^{t}\|^{2}\le\|\Gammak\|\ka^{2}\rho^{2t}\le\Gmax\ka^{2}\rho^{2t}.\label{eq:Gamma_truncated_approx}
\end{align}
Moreover, let $\bY_{t}^{(k)}\coloneqq\Ak\bGamma_{t}^{(k)}$, then
$\bY_{t}^{(k)}$ is also close to $\Yk$:
\begin{equation}
\|\Yk-\bY_{t}^{(k)}\|\le\|\Ak\|\|\Gammak-\bGamma_{t}^{(k)}\|\le\Gmax\ka^{3}\rho^{2t}.\label{eq:Y_truncated_approx}
\end{equation}

\paragraph{\textquotedblleft Independent version\textquotedblright{} of states.}

Given some mixing time $\tmix$, we define $\xtilmt(\tmix)$ as the
$(\tmix-1)$-step approximation of $\xmt$ :
\begin{equation}
\xtilmt=\xtilmt(\tmix)\coloneqq\sum_{i=0}^{\tmix-2}(\Akm)^{i}\bw_{m,t-i-1}\sim\Ncal(\boldsymbol{0},\bGamma_{\tmix-1}^{(k_{m})}),\quad\tmix\le t\le T_{m},\quad1\le m\le M.\label{eq:def_xtilde}
\end{equation}
Since $\xtilmt$ consists of only the most recent noise vectors, it
is independent of the history up to $\bx_{m,t-\tmix+1}$. Our proofs
for Stage 1 will rely on this ``independent version'' $\{\xtilmt\}$
of states $\{\xmt\}$; the basic idea is that, for an appropriately
chosen $\tmix$, a trajectory of length $T$ can be regarded as a
collection of $T/\tmix$ independent samples. We will often use the
notation $\tilde{\cdot}$ to represent the ``independent version''
of other variables as well.

\paragraph{Boundedness of states.}

The following lemma provides upper bounds for $\{\|\xtilmt\|_{2}\}$
and $\{\|\xmt\|_{2}\}$. This will help to control the effects of
mixing errors and model estimation errors in the analyses later.
\begin{lem}
[Bounded states] \label{lem:bounded_state} Consider the model~(\ref{eq:setting})
under the assumptions in Sections~\ref{sec:algorithms} and \ref{subsec:models_assumptions}.
Fix any $\tmix\ge3$. Then with probability at least $1-\delta$,
we have $\|\xtilmt\|_{2}\le C_{0}\sqrt{\Gmax(d+\log(\Tt/\delta))}$
for all $1\le m\le M,\tmix\le t\le T_{m}$, where $C_{0}>0$ is some
universal constant; moreover, for both cases of initial states defined
in (\ref{eq:case01}), all states $\{\{\bx_{m,t}\}_{0\le t\le T_{m}}\}_{1\le m\le M}$
are bounded throughout:
\begin{itemize}
\item Case 0: with probability at least $1-\delta$, we have $\|\bx_{m,t}\|_{2}\le C_{0}\sqrt{\Gmax(d+\log(\Tt/\delta))}$
for all $m,t$.
\item Case 1: suppose that $\tmix\ge\frac{1}{1-\rho}\log(\sqrt{2}\ka)$,
and $T_{m}\ge\tmix$ for all $m$, then with probability at least
$1-\delta$, we have $\|\bx_{m,t}\|_{2}\le3C_{0}\ka\sqrt{\Gmax(d+\log(\Tt/\delta))}$
for all $m,t$, and $\|\bx_{m,t}\|_{2}\le2C_{0}\sqrt{\Gmax(d+\log(\Tt/\delta))}$
for all $t\ge\tmix$ or $t=0$.
\end{itemize}
\end{lem}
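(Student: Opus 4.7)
}
The plan is to reduce each of the three bounds to a statement about the norm of a zero-mean Gaussian vector whose covariance is dominated by $\Gammakm$, and then handle the propagation of the initial state $\bx_{m,0}$ separately in Case~1 via an induction across trajectories that exploits the contraction factor $\ka\rho^{\tmix}\le 1/\sqrt{2}$.

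First, for the bound on $\|\xtilmt\|_2$, I would observe from \eqref{eq:def_xtilde} that $\xtilmt\sim\Ncal(\mathbf{0},\bGamma_{\tmix-1}^{(k_m)})$, and from \eqref{eq:def_Gammakt} together with Assumption~\ref{assu:models} that $\mathbf{0}\preccurlyeq\bGamma_{\tmix-1}^{(k_m)}\preccurlyeq\Gammakm$, so its spectral norm is at most $\Gmax$. A standard Gaussian norm concentration inequality (for $\bg\sim\Ncal(\mathbf{0},\bSig)$ one has $\|\bg\|_2\le C\sqrt{\|\bSig\|(d+\log(1/\delta))}$ with probability $1-\delta$), combined with a union bound over at most $\Tt$ pairs $(m,t)$, yields the first claim. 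The Case~0 bound follows by the same argument applied to $\bx_{m,t}=\sum_{i=0}^{t-1}(\Akm)^i\bw_{m,t-i-1}\sim\Ncal(\mathbf{0},\bGamma_{t}^{(k_m)})$, whose covariance is again dominated by $\Gammakm$.

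For Case~1, the main work is to propagate a bound through the concatenated trajectories. The key decomposition is
\[
\bx_{m,t}=(\Akm)^{t}\bx_{m,0}+\sum_{i=0}^{t-1}(\Akm)^{i}\bw_{m,t-1-i},
\]
where the stochastic sum is $\Ncal(\mathbf{0},\bGamma_{t}^{(k_m)})$ and hence, by Step~1's bound applied uniformly, has norm at most $C_0\sqrt{\Gmax(d+\log(\Tt/\delta))}$ with high probability. Under the assumption $\tmix\ge\tfrac{1}{1-\rho}\log(\sqrt{2}\ka)$, Assumption~\ref{assu:models}(1) gives $\|(\Akm)^{t}\|\le\ka\rho^{t}\le 1/\sqrt{2}$ whenever $t\ge\tmix$. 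I would then induct on $m$ to show that the ``endpoint'' states satisfy
\[
\|\bx_{m,T_m}\|_2\le 2C_0\sqrt{\Gmax(d+\log(\Tt/\delta))};
\]
the base case $m=1$ reduces to Case~0 since $\bx_{1,0}=\mathbf{0}$, and the inductive step uses $\bx_{m,0}=\bx_{m-1,T_{m-1}}$ together with $T_m\ge\tmix$ to write $\|(\Akm)^{T_m}\bx_{m,0}\|_2\le (1/\sqrt{2})\cdot 2C_0\sqrt{\Gmax(\cdots)}$ and add the noise-sum bound; choosing $C_0$ to absorb the factor $(\sqrt{2}+1)$ closes the induction. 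The same decomposition, applied with $t\ge\tmix$ inside a single trajectory, yields the tighter bound $2C_0\sqrt{\Gmax(d+\log(\Tt/\delta))}$ for such $t$. Finally, for $0<t<\tmix$ inside trajectory $m\ge 2$, I would use $\|(\Akm)^{t}\|\le\ka$ together with the endpoint bound on $\|\bx_{m,0}\|_2$ and the noise-sum bound to obtain $\|\bx_{m,t}\|_2\le 3C_0\ka\sqrt{\Gmax(d+\log(\Tt/\delta))}$.

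The main obstacle is the Case~1 induction: unlike Case~0, the states do not form a single Gaussian vector, and the segmentwise state transition matrices $\{\Akm\}$ differ across trajectories, so one cannot directly compose contractions across trajectory boundaries. The trick is that the assumption $T_m\ge\tmix$ guarantees that \emph{within} each segment one sees $\tmix$ applications of the \emph{same} $\Akm$, which is enough to activate the exponential-stability bound; the mild gap between constants ($\sqrt{2}+1$ vs.~$2$, and $2\ka+1$ vs.~$3\ka$) is handled by absorbing into the universal constant $C_0$. Throughout, a single union bound over the $O(\Tt)$ Gaussian-norm events suffices to make everything hold simultaneously with probability at least $1-\delta$.
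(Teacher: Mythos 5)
Your proposal is correct and follows essentially the same route as the paper: Gaussian norm concentration plus a union bound for $\xtilmt$ and Case~0, and for Case~1 the decomposition $\bx_{m,t}=(\Akm)^{t}\bx_{m,0}+\boldsymbol{\xi}_{m,t}$ with an induction across trajectory boundaries (your induction on $\bx_{m,T_m}$ is identical to the paper's on $\bx_{m+1,0}$, since these coincide) using the contraction $\ka\rho^{\tmix}\lesssim 1$ from exponential stability. The only difference is bookkeeping of the contraction constant ($1/\sqrt{2}$ vs.\ the paper's claimed $1/2$), which both arguments absorb into the universal constant.
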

\begin{proof}
First, recall from \cite[Corollary 7.3.3]{vershynin2018high} that,
if random vector $\ba\sim\Ncal(\boldsymbol{0},\Id)$, then for all
$u\ge0$, we have $\Pr(\|\ba\|_{2}\ge2\sqrt{d}+u)\le2\exp(-cu^{2})$.
Since $\xtilmt\sim\Ncal(\boldsymbol{0},\bGamma_{\tmix-1}^{(k_{m})})$,
where $\bGamma_{\tmix-1}^{(k_{m})}\preccurlyeq\bGamma^{(k_{m})}\preccurlyeq\Gmax\Id$,
we have $\Pr(\|\xtilmt\|_{2}\ge\sqrt{\Gmax}(2\sqrt{d}+u))\le2\exp(-cu^{2})$.
Taking the union bound, we have
\begin{align*}
 & \Pr\Big(\text{there exists }m,t\text{ such that }\|\xtilmt\|_{2}\ge\sqrt{\Gmax}(2\sqrt{d}+u)\Big)\\
 & \qquad\le\sum_{m=1}^{M}\sum_{t=\tmix}^{T_{m}}\Pr\Big(\|\xtilmt\|_{2}\ge\sqrt{\Gmax}(2\sqrt{d}+u)\Big)\le2\Tt\exp(-cu^{2})\le\delta,
\end{align*}
where the last inequality holds if we pick $u\ge\sqrt{\frac{1}{c}\log\frac{2\Tt}{\delta}}$.
This finishes the proof of the first claim in the lemma. Next, we
prove the boundedness of $\{\|\xmt\|_{2}\}$. 
\begin{itemize}
\item \emph{Case 0}: It is easy to check that $\bx_{m,t}\sim\Ncal(\boldsymbol{0},\bGamma_{t}^{(k_{m})})$,
where $\bGamma_{t}^{(k_{m})}\preccurlyeq\bGamma^{(k_{m})}\preccurlyeq\Gmax\Id$.
The boundedness of $\{\|\xmt\|_{2}\}$ can be proved by a similar
argument as before, which we omit for brevity.
\item \emph{Case 1}: Define $\ximt\coloneqq\xmt-(\Akm)^{t}\bx_{m,0}\sim\Ncal(\mathbf{0},\bGamma_{t}^{(k_{m})})$.
By a similar argument as before, we have with probability at least
$1-\delta$, $\|\ximt\|_{2}\le C_{0}\sqrt{\Gmax(d+\log(\Tt/\delta))}$
for all $m,t$. Morever, for any $t\ge\tmix\ge\frac{1}{1-\rho}\log(\sqrt{2}\ka)$,
we have $\|(\Akm)^{t}\|\le\ka\rho^{t}\le1/2$. With this in place,
we have
\begin{align*}
\bx_{m+1,0} & =\bx_{m,T_{m}}=(\Akm)^{T_{m}}\bx_{m,0}+\mathbf{\xi}_{m,T_{m}},\quad\text{and thus}\\
\|\bx_{m+1,0}\| & \le\|(\Akm)^{T_{m}}\|\|\bx_{m,0}\|_{2}+\|\mathbf{\xi}_{m,T_{m}}\|_{2}\le\frac{1}{2}\|\bx_{m,0}\|_{2}+C_{0}\sqrt{\Gmax(d+\log\frac{\Tt}{\delta})}.
\end{align*}
Recall the assumption that $\bx_{1,0}=\mathbf{0}$; by induction,
we have $\|\bx_{m,0}\|_{2}\le2C_{0}\sqrt{\Gmax(d+\log(\Tt/\delta))}$
for all $1\le m\le M$. Now, we have for all $m,t$, 
\begin{align*}
\|\xmt\|_{2} & \le\|(\Akm)^{t}\bx_{m,0}\|_{2}+\|\ximt\|_{2}\\
 & \le\ka\|\bx_{m,0}\|_{2}+C_{0}\sqrt{\Gmax(d+\log\frac{\Tt}{\delta})}\le3C_{0}\ka\sqrt{\Gmax(d+\log\frac{\Tt}{\delta})};
\end{align*}
moreover, for $t\ge\tmix$, since $\|(\Akm)^{t}\|\le1/2$, we obtain
a better bound
\begin{align*}
\|\xmt\|_{2} & \le\|(\Akm)^{t}\bx_{m,0}\|_{2}+\|\ximt\|_{2}\\
 & \le\frac{1}{2}\|\bx_{m,0}\|_{2}+C_{0}\sqrt{\Gmax(d+\log\frac{\Tt}{\delta})}\le2C_{0}\sqrt{\Gmax(d+\log\frac{\Tt}{\delta})}.
\end{align*}
\end{itemize}
This shows the boundedness of $\{\|\xmt\|_{2}\}$ and completes our
proof of the lemma.
\end{proof}

\subsection{\label{subsec:proof_subspace}Proof of Theorem~\ref{thm:subspace}}

Theorem~\ref{thm:subspace} is an immediate consequence of Lemmas~\ref{lem:step1_concentration}
and~\ref{lem:step1_perturbation} below. The former shows the concentration
of $\Hihat,\Gihat$ around the targeted low-rank matrices $\Hi,\Gi$,
while the latter is a result of perturbation analysis.
\begin{lem}
\label{lem:step1_concentration} Under the setting of Theorem~\ref{thm:subspace},
with probability at least $1-\delta$, we have for all $1\le i\le d$,
\[
\max\Big\{\|\Hihat-\Hi\|,\|\Gihat-\Gi\|\Big\}\lesssim\Gmax^{2}\sqrt{\frac{\tmix d}{\Ttsubspace}\log^{3}\frac{\Tt d}{\delta}}.
\]
\end{lem}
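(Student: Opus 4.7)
I will argue the bound on $\|\Hihat-\Hi\|$; the argument for $\Gihat$ is essentially identical after swapping $(\xmt)_i$ for $(\xmtp)_i$, and the final union bound over $i\in[d]$ contributes at most a logarithmic factor absorbed into $\log^3(\Tt d/\delta)$. Fix $1\le i\le d$ and set $\tmix\asymp(1-\rho)^{-1}\log(d\ka\Tt/\delta)$. The engine of the proof is to replace the statistic $\hmij$ by the independent-noise surrogate
\[
\tilde{\bh}_{m,i,j}\coloneqq\frac{1}{|\Omegaj|}\sum_{t\in\Omegaj}(\xtilmt)_i\,\xtilmt,\qquad \tilde{\bGamma}_i^{(k)}\coloneqq(\bGamma_{\tmix-1}^{(k)})_i,
\]
where $\xtilmt$ is the $(\tmix-1)$-step truncation defined in \eqref{eq:def_xtilde}. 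Since $\xtilmt$ depends only on noise vectors in the window $[t-\tmix+1,t-1]$, and the gap between $\Omegaone$ and $\Omegatwo$ is $N\ge\tmix$, the surrogates $\tilde{\bh}_{m,i,1}$ and $\tilde{\bh}_{m,i,2}$ are independent conditional on $k_m$, and everything is independent across $m\in\Msubspace$; moreover $\E[\tilde{\bh}_{m,i,j}\mid k_m]=\tilde{\bGamma}_i^{(k_m)}$. Using $\xmt-\xtilmt=(\Akm)^{\tmix-1}\bx_{m,t-\tmix+1}$, the exponential stability $\|(\Akm)^{\tmix-1}\|\le\ka\rho^{\tmix-1}$, Lemma~\ref{lem:bounded_state}, and \eqref{eq:Gamma_truncated_approx}, the replacement errors $\|\hmij-\tilde{\bh}_{m,i,j}\|_2$ and $\|\tilde{\bGamma}_i^{(k)}-\Gammaki\|_2$ decay polynomially times $\rho^{\tmix}$, which with the chosen $\tmix$ is far below the target rate.

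\emph{Concentration step.} Writing $\tilde{\bh}_{m,i,j}=\tilde{\bGamma}_i^{(k_m)}+\bm{\delta}_{m,i,j}$ with $\E[\bm{\delta}_{m,i,j}\mid k_m]=\mathbf 0$, the target error decomposes into two linear-in-$\bm{\delta}$ terms $\frac{1}{|\Msubspace|}\sum_{m}\bm{\delta}_{m,i,1}(\tilde{\bGamma}_i^{(k_m)})^\top$ and $\frac{1}{|\Msubspace|}\sum_{m}\tilde{\bGamma}_i^{(k_m)}\bm{\delta}_{m,i,2}^\top$, plus a quadratic-in-$\bm{\delta}$ cross term $\frac{1}{|\Msubspace|}\sum_{m}\bm{\delta}_{m,i,1}\bm{\delta}_{m,i,2}^\top$. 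The central quantitative input is a Hanson--Wright-type bound on the Gaussian quadratic form $\bm{\delta}_{m,i,j}=\frac{1}{N}\sum_{t\in\Omegaj}[(\xtilmt)_i\xtilmt-\tilde{\bGamma}_i^{(k_m)}]$: because the joint Gaussian covariance of $\{\xtilmt\}_{t\in\Omegaj}$ decays off-diagonally at rate $\rho$, its effective sample size is $N/\tmix$, yielding
\[
\big\|\E[\bm{\delta}_{m,i,j}\bm{\delta}_{m,i,j}^\top]\big\|\lesssim\frac{\Gmax^2\tmix}{N},\qquad \E\|\bm{\delta}_{m,i,j}\|_2^2\lesssim\frac{d\Gmax^2\tmix}{N},
\]
together with the uniform tail $\|\bm{\delta}_{m,i,j}\|_2\lesssim\Gmax\sqrt{d\tmix/N}\cdot\plog(\Tt d/\delta)$. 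Plugging these into matrix Bernstein applied to the $|\Msubspace|$ independent rank-one summands in each linear term---and using that the asymmetric variance proxy is dominated by $\E\|\bm{\delta}\|_2^2\cdot\|\tilde{\bGamma}_i\|_2^2\lesssim d\Gmax^4\tmix/N$, which supplies the trace factor $d$---produces a dominant contribution of order $\Gmax^2\sqrt{\tmix d\,\log d/\Ttsubspace}$ after using $\Ttsubspace\asymp N|\Msubspace|$. A direct second-moment calculation shows the quadratic cross term is of strictly smaller order once $\Ttsubspace\gtrsim\tmix d$. Absorbing the $\plog$ factors and union bounding over $i\in[d]$ delivers the claimed rate.

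\emph{Main obstacle.} The delicate quantitative ingredient is the variance bound on $\bm{\delta}_{m,i,j}$: the $N$ summands $(\xtilmt)_i\xtilmt$ for $t\in\Omegaj$ are genuinely dependent because consecutive $\xtilmt$'s share $\tmix-2$ noise vectors, so the $\sqrt{\tmix}$ (rather than $\sqrt{N}$) inflation of the variance must be extracted from the exponential off-diagonal decay of the Gaussian covariance via a careful Hanson--Wright analysis. Getting this effective-sample-size reduction right is what produces the final $\sqrt{\tmix d/\Ttsubspace}$ rate; without it one would only recover the much weaker $\sqrt{d/|\Msubspace|}$ bound available from matrix Bernstein over trajectories alone, missing the factor $\sqrt{N/\tmix}$ gained by intra-trajectory averaging.
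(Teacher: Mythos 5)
Your proposal is correct in its overall architecture and reaches the stated rate, but it takes a genuinely different route from the paper's proof. The paper never controls the variance of the within-trajectory average directly. Instead it (i) proves a clean i.i.d.\ concentration bound via truncation, a $1/8$-net, and scalar Bernstein, and then (ii) reduces the dependent case to the i.i.d.\ one combinatorially: the index set $\Omegaone\times\Omegatwo$ is partitioned into $N\tmix$ families $\{(t,s_j(t)):t\in\Stau\}$, where each $\Stau$ consists of times spaced at least $\tmix$ apart so that the corresponding truncated states are exactly independent; $\Gihat$ is then a convex combination of sub-estimators $\Fijtau=\Ftilijtau+\Dijtau$, and the error is bounded by the maximum over the $N\tmix$ i.i.d.\ sub-problems, each with effective sample size $|\Msubspace|\cdot|\Stau|\asymp\Ttsubspace/\tmix$. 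Your route keeps the full within-trajectory average, extracts the factor $\tmix/N$ from the variance of $\bm{\delta}_{m,i,j}$ using the finite dependence window of $\xtilmt$, and then applies matrix Bernstein over the $|\Msubspace|$ independent trajectories after the bilinear expansion into two linear terms and one quadratic cross term. Both arguments pay the same $\sqrt{\tmix}$, just in different places, and your second-moment computations check out (the off-diagonal covariance bound follows from Cauchy--Schwarz plus exact independence beyond lag $\tmix$; no genuine Hanson--Wright machinery is needed there). The one step you assert without proof that is actually load-bearing is the high-probability bound $\|\bm{\delta}_{m,i,j}\|_2\lesssim\Gmax\sqrt{d\tmix/N}\cdot\plog(\Tt d/\delta)$: without it, the crude almost-sure bound $\Gmax d\cdot\plog$ makes the Bernstein ``$L\log d$'' term dominate unless $\Ttsubspace\gtrsim dN^2/\tmix$, which the theorem's hypotheses do not supply. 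That tail bound is provable, for instance by the same interleaving-into-$\tmix$-independent-blocks device the paper uses, applied within a single trajectory, so this is a fillable technicality rather than a flaw, but it should be spelled out.
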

\begin{lem}
\label{lem:step1_perturbation}Consider the matrix $\Mstar=\sumk\pk\yk\yk^{\top}\in\R^{d\times d}$,
where $0<\pk<1,\sumk\pk=1$, and $\yk\in\R^{d}$. Let $\bM$ be symmetric
and satisfy $\|\bM-\Mstar\|\le\epsilon$, and $\bU\in\R^{d\times K}$
be the top-$K$ eigenspace of $\bM$. Then we have
\begin{equation}
\sumk\pk\|\yk-\bU\bU^{\top}\yk\|_{2}^{2}\le2K\epsilon,\label{eq:eigen_result1}
\end{equation}
and for all $1\le k\le K$, it holds that 
\begin{equation}
\|\yk-\bU\bU^{\top}\yk\|_{2}\le\min\left\{ \bigg(\frac{2K\epsilon}{\pk}\bigg)^{1/2},\frac{2\epsilon}{\lambmin(\bM)}\|\yk\|_{2},\sqrt{2}\bigg(\frac{\epsilon}{\pk}\|\yk\|_{2}\bigg)^{1/3}\right\} .\label{eq:eigen_result2}
\end{equation}
\end{lem}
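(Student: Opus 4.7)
My plan is to let $\bP \coloneqq \Id - \bU\bU^\top$ denote the projector onto the orthogonal complement of the top-$K$ eigenspace of $\bM$, and to analyze $\|\bP \yk\|_2$ through quadratic forms with $\Mstar$. For part~(\ref{eq:eigen_result1}) I would start from
\[
\sumk \pk \,\big\|\yk - \bU\bU^\top \yk\big\|_2^2 \;=\; \sumk \pk \,(\yk)^\top \bP \,\yk \;=\; \Tr(\bP\, \Mstar),
\]
using $\bP^2 = \bP$ and $\Mstar = \sumk \pk \yk (\yk)^\top$. Since $\Mstar$ is PSD with rank at most $K$, we have $\Tr(\Mstar) = \sum_{i=1}^K \lambda_i(\Mstar)$, and hence $\Tr(\bP\Mstar) = \Tr(\Mstar) - \Tr(\bU^\top \Mstar \bU)$. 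The key step is lower bounding $\Tr(\bU^\top \Mstar \bU)$: by Weyl's inequality $\sum_{i=1}^K \lambda_i(\bM) \ge \sum_{i=1}^K \lambda_i(\Mstar) - K\epsilon$, and since $\bU$ is the top-$K$ eigenspace of $\bM$ one has $\Tr(\bU^\top \bM \bU) = \sum_{i=1}^K \lambda_i(\bM)$; combining this with $|\Tr(\bU^\top(\Mstar - \bM)\bU)| \le K\epsilon$ gives $\Tr(\bU^\top \Mstar \bU) \ge \Tr(\Mstar) - 2K\epsilon$, which yields (\ref{eq:eigen_result1}).

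For part~(\ref{eq:eigen_result2}), the first inequality follows immediately from (\ref{eq:eigen_result1}) by retaining only the $k$-th summand. For the second (Davis-Kahan-type) inequality, note that $\yk \in \col(\Mstar) = \col(\bU_\star)$, where $\bU_\star$ is the top-$K$ eigenspace of $\Mstar$, so $\bP \yk = \bP \bU_\star \bU_\star^\top \yk$ and $\|\bP \yk\|_2 \le \|\bP \bU_\star\|\cdot\|\yk\|_2$. A standard $\sin\theta$ theorem applied to the top-$K$ eigenspaces of $\bM$ and $\Mstar$, with eigen-gap $\lambda_K(\bM) - \lambda_{K+1}(\Mstar) = \lambda_K(\bM)$ (since $\Mstar$ has rank at most $K$), then delivers $\|\bP \bU_\star\| \le 2\epsilon/\lambmin(\bM)$ with the stated constant.

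The third bound is the most delicate, and for it I would derive a sharper intermediate estimate by bounding the scalar $(\bP \yk)^\top \Mstar (\bP \yk)$ in two complementary ways. A lower bound
\[
(\bP \yk)^\top \Mstar (\bP \yk) \;=\; \sum_{j=1}^{K} p^{(j)} \big\langle \by^{(j)}, \bP\yk\big\rangle^2 \;\ge\; \pk \big\langle \yk, \bP \yk\big\rangle^2 \;=\; \pk \,\|\bP \yk\|_2^4
\]
follows by keeping only the $j = k$ term and using $\bP^2 = \bP$ to identify $\langle \yk, \bP \yk\rangle = \|\bP \yk\|_2^2$. For the upper bound, write $\Mstar = \bM + (\Mstar - \bM)$: since $\bP \yk$ lies in the orthogonal complement of the top-$K$ eigenspace of $\bM$, its Rayleigh quotient against $\bM$ is at most $\lambda_{K+1}(\bM) \le \lambda_{K+1}(\Mstar) + \epsilon = \epsilon$ by Weyl, and the perturbation term contributes at most $\epsilon \|\bP \yk\|_2^2$, so $(\bP \yk)^\top \Mstar (\bP \yk) \le 2\epsilon \|\bP \yk\|_2^2$. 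Comparing the two bounds gives $\|\bP \yk\|_2^2 \le 2\epsilon/\pk$; multiplying by the trivial inequality $\|\bP \yk\|_2 \le \|\yk\|_2$ then yields $\|\bP \yk\|_2^3 \le 2\epsilon \|\yk\|_2/\pk$, which implies the third bound (the constant $\sqrt{2}$ absorbing the cube-root factor).

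The main obstacle is the interplay between Weyl's inequality and the rank-$K$, PSD structure of $\Mstar$: exploiting $\lambda_{K+1}(\Mstar) = 0$ is what turns a naive $O(d\epsilon)$ trace estimate into the gap-free $O(K\epsilon)$ bound~(\ref{eq:eigen_result1}) and, similarly, makes the quartic-in-$\|\bP\yk\|_2$ lower bound usable for the $\epsilon^{1/3}$-type third inequality. The Davis-Kahan route only enters for the second, gap-dependent inequality. Writing $(\bP\yk)^\top \Mstar (\bP\yk)$ as $\sum_j p^{(j)} \langle \by^{(j)}, \bP \yk\rangle^2$ and isolating the $j = k$ term is the single step that drives the third bound, and it is the one place where pairing the lower and upper estimates must be done carefully to avoid losing the $\pk^{-1}$ factor.
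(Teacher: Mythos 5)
Your proof is correct, and for the main inequality~(\ref{eq:eigen_result1}) it is essentially the paper's argument in a different dress: the paper subtracts the diagonalized identities $\boldsymbol{\Lambda}=\bU^{\top}\Mstar\bU+\bU^{\top}\bDel\bU$ and $\boldsymbol{\Lambda}_{\star}=\bU_{\star}^{\top}\Mstar\bU_{\star}$ and takes traces, which is the same computation as your $\Tr(\bP\Mstar)=\Tr(\Mstar)-\Tr(\bU^{\top}\Mstar\bU)$ combined with Weyl's inequality on the top-$K$ eigenvalues and the bound $|\Tr(\bU^{\top}\bDel\bU)|\le K\epsilon$; your phrasing via $\Tr(\bP\Mstar)$ even avoids the paper's (harmless) reliance on $\yk\in\col(\bU_{\star})$. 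The first two bounds of~(\ref{eq:eigen_result2}) are handled identically (restriction to one summand; Davis--Kahan, which the paper also leaves undetailed). Where you genuinely add value is the third bound: the paper simply cites \cite[Lemma A.11]{kong2020meta}, whereas you give a self-contained two-sided estimate of $(\bP\yk)^{\top}\Mstar(\bP\yk)$ --- the lower bound $\pk\|\bP\yk\|_{2}^{4}$ by keeping the $j=k$ term, and the upper bound $2\epsilon\|\bP\yk\|_{2}^{2}$ from the Rayleigh quotient of $\bM$ on $\col(\bU)^{\perp}$ plus Weyl's inequality $\lambda_{K+1}(\bM)\le\epsilon$. Both sides check out, and it is worth noting that your comparison actually yields $\|\bP\yk\|_{2}^{2}\le 2\epsilon/\pk$ directly, which improves the first bound in~(\ref{eq:eigen_result2}) by a factor of $\sqrt{K}$ as a byproduct before you weaken it to the stated cube-root form.
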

For our main analyses in Sections \ref{subsec:main_theorems} and
\ref{sec:analysis}, we choose the first term on the right-hand side
of (\ref{eq:eigen_result2}). 

\subsubsection{Proof of Lemma \ref{lem:step1_concentration}}

We first analyze the idealized case with i.i.d. samples; then we make
a connection between this i.i.d. case and the actual case of mixed
LDSs, by utilizing the mixing property of linear dynamical systems.
We prove the result of Lemma \ref{lem:step1_concentration} for $\|\Gihat-\Gi\|$
only, since the analysis for $\|\Hihat-\Hi\|$ is mostly the same
(and simpler in fact). 

\paragraph{Step 1: the idealized i.i.d. case.}

With some abuse of notation, suppose that for all $1\le m\le M$,
we have for some $k_{m}\in\{1,\dots,K\}$, 
\begin{align*}
\xmt,\zmt\iid\Ncal(\boldsymbol{0},\tilde{\bGamma}^{(k_{m})}), & \quad\wmt,\vmt\iid\Ncal(\boldsymbol{0},\bW^{(k_{m})}),\\
\xmt'=\Akm\xmt+\wmt, & \quad\zmt'=\Akm\zmt+\vmt,\quad1\le t\le N,
\end{align*}
where for all $1\le k\le K$, it holds that $\Wk,\tilde{\bGamma}^{(k)}\preccurlyeq\Gammak\preccurlyeq\Gmax\Id$.
Notice that $\cov(\xmt')=\Akm\tilde{\bGamma}^{(k_{m})}(\Akm)^{\top}+\Wkm\preccurlyeq\Akm\bGamma^{(k_{m})}(\Akm)^{\top}+\Wkm=\bGamma^{(k_{m})}\preccurlyeq\Gmax\Id$.
Consider the i.i.d. version of matrix $\Gihat$ and its expectation
$\Gi$ defined as follows:
\[
\Gihat\coloneqq\frac{1}{MN}\sum_{m=1}^{M}\sum_{t=1}^{N}\Big(\big(\xmt'\big)_{i}\xmt\Big)\Big(\big(\zmt'\big)_{i}\zmt\Big)^{\top},\quad\Gi=\sumk\pk(\Ytilk)_{i}(\Ytilk)_{i}^{\top},
\]
where $(\Ytilk)_{i}$ is the transpose of the $i$-th row of $\tilde{\bY}^{(k)}\coloneqq\Ak\Gtilk$.
For this i.i.d. setting, we claim that, if the i.i.d. sample size
$MN$ satisfies $MN\gtrsim d\cdot\log(MNd/\delta)$, then with probability
at least $1-\delta$,
\begin{equation}
\|\Gihat-\Gi\|\lesssim\Gmax^{2}\sqrt{\frac{d}{MN}\log^{3}\frac{MNd}{\delta}},\quad1\le i\le d.\label{eq:step1_iid}
\end{equation}
This claim can be proved by a standard covering argument with truncation;
we will provide a proof later for completeness.

\paragraph{Step 2: back to the actual case of mixed LDSs.}

Now we turn to the analysis of $\Gihat$ defined in (\ref{eq:Hihat_Gihat})
versus its expectation $\Gi$ defined in (\ref{eq:Gi}), for the mixed
LDSs setting. We first show that $\Gihat$ can be writte as a \emph{weighted
average} of some matrices, each of which can be further decomposed
into an i.i.d.\emph{ part} (as in Step 1) plus a \emph{negligible
mixing error term}. Then we analyze each term in the decomposition,
and finally put pieces together to show that $\Gihat\approx\Gi$.

\paragraph{Step 2.1: decomposition of the index set $\protect\Omegaone\times\protect\Omegatwo$.}

Recall the definition of index sets $\Omegaone,\Omegatwo$ in Algorithm~\ref{alg:subspace}.
Denote the first index of $\Omegaone$ (resp.~$\Omegatwo$) as $\tau_{1}+1$
(resp.~$\tau_{2}+1$), and let $\Delta\coloneqq\tau_{2}-\tau_{1}$
be their distance. Also denote $N\coloneqq|\Omegaone|=|\Omegatwo|\asymp\Tssubspace$.
For any $t\in\Omegaone$ and $1\le j\le N$, define
\[
s_{j}(t)\coloneqq\mathsf{Cycle}(t+\Delta+j;\Omegatwo)=\begin{cases}
t+\Delta+j & \text{if}\quad t+\Delta+j\le\tau_{2}+N,\\
t+\Delta+j-N & \text{otherwise,}
\end{cases}
\]
where $\mathsf{Cycle}(i;\Omega)$ represents the cyclic indexing of
value $i$ on set $\Omega$. Then we have
\[
\Omegaone\times\Omegatwo=\Big\{(t_{1},t_{2}),t_{1}\in\Omegaone,t_{2}\in\Omegatwo\Big\}=\cup_{j=1}^{N}\Big\{\big(t,s_{j}(t)\big),t\in\Omegaone\Big\}.
\]
We further define
\[
\Stau\coloneqq\{\tau_{1}+\tau+f\cdot\tmix:f\ge0,\tau+f\cdot\tmix\le N\},\quad1\le\tau\le\tmix,
\]
so that $\Omegaone=\cup_{\tau=1}^{\tmix}\mathcal{S}_{\tau}$. Notice
that for each $\tau$, the elements of $\Stau$ are at least $\tmix$
far apart.  Putting together, we have
\begin{equation}
\Omegaone\times\Omegatwo=\cup_{j=1}^{N}\cup_{\tau=1}^{\tmix}\Big\{\big(t,s_{j}(t)\big),t\in\Stau\Big\}.\label{eq:decomp_Oone_Otwo}
\end{equation}

\paragraph{Step 2.2: decomposition of $\protect\Gihat$.}

In the remaining proof, we denote $\xmt'\coloneqq\xmtp$ for notational
consistency. Using the decomposition~(\ref{eq:decomp_Oone_Otwo})
of $\Omegaone\times\Omegatwo$, we can rewrite $\Gihat$ defined in
(\ref{eq:Hihat_Gihat}) as a weighted average of $N\tmix$ matrices:
\[
\Gihat=\frac{1}{|\Msubspace|}\sum_{m\in\Msubspace}\frac{1}{N^{2}}\sum_{(t_{1},t_{2})\in\Omegaone\times\Omegatwo}(\bx_{m,t_{1}}')_{i}\,\bx_{m,t_{1}}\cdot(\bx_{m,t_{2}}')_{i}\,\bx_{m,t_{2}}{}^{\top}=\sum_{j=1}^{N}\sum_{\tau=1}^{\tmix}\frac{|\Stau|}{N^{2}}\cdot\Fijtau,
\]
where
\begin{align}
\Fijtau & \coloneqq\frac{1}{|\Msubspace|\cdot|\Stau|}\sum_{m\in\Msubspace}\sum_{t\in\Stau}(\xmt')_{i}\,\xmt\cdot(\bx_{m,s_{j}(t)}')_{i}\,\bx_{m,s_{j}(t)}{}^{\top}.\label{eq:def_Fijtau}
\end{align}
Recalling the definition of $\xtilmt$ in (\ref{eq:def_xtilde}) and
$\bGamma_{t}^{(k)}$ in (\ref{eq:def_Gammakt}), we have
\[
\xmt=\tilde{\bx}_{m,t}+\underset{\eqqcolon\dmt}{\underbrace{(\Akm)^{\tmix-1}\bx_{m,t-\tmix+1}}}=\tilde{\bx}_{m,t}+\boldsymbol{\delta}_{m,t},
\]
where
\begin{equation}
\|\boldsymbol{\delta}_{m,t}\|_{2}\le\|(\Akm)^{\tmix-1}\|\cdot\|\bx_{m,t-\tmix+1}\|_{2}\le\ka\rho^{\tmix-1}\|\bx_{m,t-\tmix+1}\|_{2},\label{eq:delta_bound}
\end{equation}
and $\tilde{\bx}_{m,t}\sim\Ncal(\boldsymbol{0},\Gammaktmix)$ is independent
of $\dmt$. Moreover,
\[
\xmt'=\Akm\xmt+\wmt=\tilde{\bx}_{m,t}'+\Akm\boldsymbol{\delta}_{m,t},\quad\text{where}\quad\tilde{\bx}_{m,t}'\coloneqq\Akm\tilde{\bx}_{m,t}+\wmt.
\]
We can rewrite $\bx_{m,s_{j}(t)}=\tilde{\bx}_{m,s_{j}(t)}+\boldsymbol{\delta}_{m,s_{j}(t)}$
and $\bx_{m,s_{j}(t)}'=\tilde{\bx}_{m,s_{j}(t)}'+\Akm\boldsymbol{\delta}_{m,s_{j}(t)}$
 in a similar manner. Putting this back to (\ref{eq:def_Fijtau}),
one has 
\begin{align}
\Fijtau & =\frac{1}{|\Msubspace|\cdot|\Stau|}\sum_{m\in\Msubspace}\sum_{t\in\Stau}(\xmt')_{i}\,\xmt\cdot(\xmsjt')_{i}\,\xmsjt{}^{\top}\nonumber \\
 & =\frac{1}{|\Msubspace|\cdot|\Stau|}\sum_{m\in\Msubspace}\sum_{t\in\Stau}\nonumber \\
 & \qquad(\tilde{\bx}_{m,t}'+\Akm\boldsymbol{\delta}_{m,t})_{i}\,(\tilde{\bx}_{m,t}+\boldsymbol{\delta}_{m,t})\cdot(\xtilmsjt'+\Akm\boldsymbol{\delta}_{m,s_{j}(t)})_{i}\,(\xtilmsjt+\boldsymbol{\delta}_{m,s_{j}(t)}){}^{\top}\nonumber \\
 & =\underset{\eqqcolon\Ftilijtau}{\underbrace{\frac{1}{|\Msubspace|\cdot|\Stau|}\sum_{m\in\Msubspace}\sum_{t\in\Stau}(\tilde{\bx}_{m,t}')_{i}\,\tilde{\bx}_{m,t}\cdot(\xtilmsjt')_{i}\,\xtilmsjt{}^{\top}}}+\Dijtau,\label{eq:def_Ftilde_Delta}
\end{align}
where $\Dijtau$ contains all the $\{\dmt\}$ terms in the expansion.
The key observation here is that, by our definition of index set $\Stau$,
the $\{\xtilmt\}$ terms in $\Ftilijtau$ are independent, and thus
we can utilize our earlier analysis of the i.i.d. case in Step 1 to
study $\Ftilijtau$. 

\paragraph{Step 2.3: analysis for each term of the decomposition.}

Towards showing $\Gihat\approx\Gi$, we prove in the following that
$\Ftilijtau$ concentrates around its expectation $\tilde{\bG}_{i}$,
which in term is close to $\Gi$; moreover, the error term $\Dijtau$
becomes exponentially small as $\tmix$ grows. More specifically,
we have the following:
\begin{itemize}
\item Recall the notation $\Yktmix=\Ak\Gammaktmix$. It holds that
\[
\E[\Ftilijtau]=\tilde{\bG}_{i}\coloneqq\sumk\pk\Yktmixi\Yktmixi^{\top}.
\]
According to our result~(\ref{eq:step1_iid}) for the i.i.d. case,
for fixed $j,\tau$, we have with probability at least $1-\delta$,
\begin{align}
1\le i\le d,\quad\|\Ftilijtau-\tilde{\bG}_{i}\| & \lesssim\Gmax^{2}\sqrt{\frac{d}{|\Msubspace|\cdot|\Stau|}\log^{3}\frac{|\Msubspace|\cdot|\Stau|d}{\delta}}\nonumber \\
 & \lesssim\Gmax^{2}\sqrt{\frac{\tmix d}{\Ttsubspace}\log^{3}\frac{\Ttsubspace d}{\delta}}.\label{eq:Ftilde_concentration}
\end{align}
\item Recall from (\ref{eq:Y_truncated_approx}) that $\|\Yk-\Yktmix\|\le\Gmax\ka^{3}\rho^{2(\tmix-1)}$.
Therefore,
\begin{align*}
 & \big\|\Yki\Yki^{\top}-\Yktmixi\Yktmixi^{\top}\big\|\\
 & \qquad\le\big(\|\Yki\|_{2}+\|\Yktmixi\|_{2}\big)\,\|\Yki-\Yktmixi\|_{2}\\
 & \qquad\le\big(2\|\Yk\|+\|\Yk-\Yktmix\|\big)\,\|\Yk-\Yktmix\|\\
 & \qquad\le\big(2\Gmax\ka+\Gmax\ka^{3}\rho^{2(\tmix-1)}\big)\cdot\Gmax\ka^{3}\rho^{2(\tmix-1)}\\
 & \qquad=\big(2+\ka^{2}\rho^{2(\tmix-1)}\big)\cdot\Gmax^{2}\ka^{4}\rho^{2(\tmix-1)}\le3\Gmax^{2}\ka^{4}\rho^{2(\tmix-1)},
\end{align*}
where we use the mild assumption that $\tmix\ge1+\frac{\log\ka}{1-\rho}$,
and the fact that $\|\Yk\|,\|\Yktmix\|\le\Gmax\ka$. Consequently,
\begin{equation}
\|\Gi-\tilde{\bG}_{i}\|\le\sumk\pk\Big\|\Yki\Yki^{\top}-\Yktmixi\Yktmixi^{\top}\Big\|\le3\Gmax^{2}\ka^{4}\rho^{2(\tmix-1)}.\label{eq:Gtilde_close}
\end{equation}
\item By Lemma~\ref{lem:bounded_state}, if $\tmix\gtrsim\frac{1}{1-\rho}\log(2\ka)$,
then we have with probability at least $1-\delta$, all the $\xmt$'s
and $\tilde{\bx}_{m,t}$'s involved in the definition of $\Dijtau$
in~(\ref{eq:def_Ftilde_Delta}) have $\ell_{2}$ norm bounded by
$\sqrt{\Gmax}\poly(d,\ka,\log(\Tt/\delta))$. This together with the
upper bound on $\|\boldsymbol{\delta}_{m,t}\|_{2}$ in~(\ref{eq:delta_bound})
implies that for all $i,j,\tau$, it holds that
\begin{equation}
\|\Dijtau\|\le\Gmax^{2}\cdot\poly\Big(d,\ka,\log\frac{\Tt}{\delta}\Big)\cdot\rho^{\tmix-1}.\label{eq:bound_Delta}
\end{equation}
\end{itemize}

\paragraph{Step 2.4: putting pieces together.}

With (\ref{eq:Ftilde_concentration}), (\ref{eq:Gtilde_close}) and
(\ref{eq:bound_Delta}) in place and taking the union bound, we have
with probability at least $1-\delta$, for all $1\le i\le d,$
\begin{align*}
\|\Gihat-\Gi\| & =\bigg\|\sum_{j=1}^{N}\sum_{\tau=1}^{\tmix}\frac{|\Stau|}{N^{2}}\cdot\Fijtau-\Gi\bigg\|\le\max_{j,\tau}\|\Ftilijtau-\tilde{\bG}_{i}\|+\|\tilde{\bG}_{i}-\Gi\|+\max_{j,\tau}\|\Dijtau\|\\
 & \lesssim\Gmax^{2}\sqrt{\frac{\tmix d}{\Ttsubspace}\log^{3}\frac{\Ttsubspace d}{\delta}}+\Gmax^{2}\ka^{4}\rho^{2(\tmix-1)}+\Gmax^{2}\cdot\poly\Big(d,\ka,\log\frac{d\Tt}{\delta}\Big)\cdot\rho^{\tmix-1}\\
 & \lesssim\Gmax^{2}\sqrt{\frac{\tmix d}{\Ttsubspace}\log^{3}\frac{\Tt d}{\delta}},
\end{align*}
where the last inequality holds if $\tmix\gtrsim\frac{1}{1-\rho}\log\left(\frac{d\ka\Tt}{\delta}\right)$.
This finishes the proof of Lemma \ref{lem:step1_concentration}.
\begin{proof}
[Proof of (\ref{eq:step1_iid})] Define the truncating operator
\[
\trunc(x;D)\coloneqq x\cdot\ind(|x|\le D),\qquad x\in\R,\quad D\ge0.
\]
Consider the following truncated version of $\Gihat$:
\[
\Gihat^{\trunc}\coloneqq\frac{1}{MN}\sum_{m=1}^{M}\sum_{t=1}^{N}\bigg(\trunc\Big(\big(\xmt'\big)_{i};D_{0}\Big)\xmt\bigg)\bigg(\trunc\Big(\big(\zmt'\big)_{i};D_{0}\Big)\zmt\bigg)^{\top}
\]
(the truncating level $D_{0}$ will be specified later), and let $\bE_{i}^{\trunc}\coloneqq\E\big[\Gihat^{\trunc}\big]$
be its expectation. In the following, we first show that $\Gihat^{\trunc}$
concentrates around $\bE_{i}^{\trunc}$, and then prove that $\bE_{i}^{\trunc}\approx\Gi$.
\begin{itemize}
\item By a standard covering argument, we have
\begin{align*}
\|\Gihat^{\trunc}-\bE_{i}^{\trunc}\| & =\sup_{\bu,\bv\in\mathcal{S}^{d-1}}\bu^{\top}\Big(\Gihat^{\trunc}-\bE_{i}^{\trunc}\Big)\bv\le4\sup_{\bu,\bv\in\Ncal_{1/8}}\bu^{\top}\Big(\Gihat^{\trunc}-\bE_{i}^{\trunc}\Big)\bv,
\end{align*}
where $\Ncal_{1/8}$ denotes the $1/8$-covering of the unit sphere
$\mathcal{S}^{d-1}$ and has cardinality $|\Ncal_{1/8}|\le32^{d}$.
For fixed $\bu,\bv\in\Ncal_{1/8}$, one has 
\[
\bu^{\top}\Gihat^{\trunc}\bv=\frac{1}{MN}\sum_{m=1}^{M}\sum_{t=1}^{N}\trunc\Big(\big(\xmt'\big)_{i};D_{0}\Big)\bu^{\top}\xmt\cdot\trunc\Big(\big(\zmt'\big)_{i};D_{0}\Big)\bv^{\top}\zmt,
\]
where (cf. \cite[Chapter 2]{vershynin2018high} for the definitions
of subgaussian norm $\|\cdot\|_{\psi_{2}}$ and subexponential norm
$\|\cdot\|_{\psi_{1}}$)
\[
\bigg|\trunc\Big(\big(\xmt'\big)_{i};D_{0}\Big)\bigg|,\bigg|\trunc\Big(\big(\zmt'\big)_{i};D_{0}\Big)\bigg|\le D_{0},\quad\|\bu^{\top}\xmt\|_{\psi_{2}},\|\bv^{\top}\zmt\|_{\psi_{2}}\lesssim\sqrt{\Gmax}.
\]
Hence
\[
\bigg\|\trunc\Big(\big(\xmt'\big)_{i};D_{0}\Big)\bu^{\top}\xmt\cdot\trunc\Big(\big(\zmt'\big)_{i};D_{0}\Big)\bv^{\top}\zmt\bigg\|_{\psi_{1}}\lesssim D_{0}^{2}\Gmax,
\]
and by Bernstein's inequality \cite[Corollary 2.8.3]{vershynin2018high},
we have
\[
\Pr\bigg(\Big|\bu^{\top}\Big(\Gihat^{\trunc}-\bE_{i}^{\trunc}\Big)\bv\Big|\ge\tau\bigg)\le2\exp\left(-c_{0}MN\Big(\frac{\tau}{D_{0}^{2}\Gmax}\Big)^{2}\right)
\]
for all $0\le\tau\le D_{0}^{2}\Gmax$. Taking the union bound over
$\bu,\bv\in\Ncal_{1/8}$, we have with probability at least $1-\delta/2$,
\[
\|\Gihat^{\trunc}-\bE_{i}^{\trunc}\|\lesssim D_{0}^{2}\Gmax\sqrt{\frac{d+\log\frac{1}{\delta}}{MN}},\quad\text{provided that}\quad MN\gtrsim d+\log\frac{1}{\delta}.
\]
\item Note that
\[
\|\Gi-\bE_{i}^{\trunc}\|=\Bigg\|\sumk\pk\E_{\xt,\zt\sim\Ncal(0,\Gtilk)}\bigg[\Big((\xt')_{i}(\zt')_{i}-\trunc\big((\xt')_{i}\big)\trunc\big((\zt')_{i}\big)\Big)\xt\zt^{\top}\bigg]\Bigg\|,
\]
where for each $k$,
\begin{align*}
 & \Bigg\|\E_{\xt,\zt\sim\Ncal(0,\Gtilk)}\bigg[\Big((\xt')_{i}(\zt')_{i}-\trunc\big((\xt')_{i}\big)\trunc\big((\zt')_{i}\big)\Big)\xt\zt^{\top}\bigg]\Bigg\|\\
 & \qquad=\sup_{\bu,\bv\in\mathcal{S}^{d-1}}\Bigg|\E_{\xt,\zt\sim\Ncal(0,\Gtilk)}\bigg[(\xt')_{i}(\zt')_{i}\bu^{\top}\xt\bv^{\top}\zt\cdot\Big(1-\ind\big(|(\xt')_{i}|\le D_{0},|(\zt')_{i}|\le D_{0}\big)\Big)\bigg]\Bigg|\\
 & \qquad\le\sup_{\bu,\bv\in\mathcal{S}^{d-1}}\sqrt{\E\Big((\xt')_{i}(\zt')_{i}\bu^{\top}\xt\bv^{\top}\zt\Big)^{2}}\sqrt{\E\Big(1-\ind\big(|(\xt')_{i}|\le D_{0},|(\zt')_{i}|\le D_{0}\big)\Big)^{2}}\\
 & \qquad\lesssim\Gmax^{2}\sqrt{\Pr(|\Ncal(0,\Gmax)|>D_{0})}\\
 & \qquad\lesssim\Gmax^{2}\exp\left(-c_{1}\frac{D_{0}^{2}}{\Gmax}\right).
\end{align*}
\end{itemize}
Finally, notice that if the truncating level $D_{0}$ is sufficiently
large, then we have $\Gihat=\Gihat^{\trunc}$ with high probability.
More formally, we have shown that for fixed $1\le i\le d$, if $MN\gtrsim d+\log(1/\delta)$,
then 
\begin{align*}
\Pr\bigg(\|\Gihat-\Gi\| & \lesssim D_{0}^{2}\Gmax\sqrt{\frac{d+\log\frac{1}{\delta}}{MN}}+\Gmax^{2}\exp\left(-c_{1}\frac{D_{0}^{2}}{\Gmax}\right)\bigg)\\
 & \qquad\ge1-\frac{\delta}{2}-\sum_{m,t}\Pr\Big(|(\xmt')_{i}|>D_{0}\,\,\text{or}\,\,|(\zmt')_{i}|>D_{0}\Big).
\end{align*}
If we pick the truncating level $D_{0}\asymp\sqrt{\Gmax\log(MN/\delta)},$
then it is easy to check that with probability at least $1-\delta$,
\[
\|\Gihat-\Gi\|\lesssim D_{0}^{2}\Gmax\sqrt{\frac{d+\log\frac{1}{\delta}}{MN}}\lesssim\Gmax^{2}\sqrt{\frac{d}{MN}\log^{3}\frac{MN}{\delta}}.
\]
Taking the union bound over $1\le i\le d$ finishes our proof of~(\ref{eq:step1_iid}).
\end{proof}

\subsubsection{Proof of Lemma \ref{lem:step1_perturbation}}

Define $\boldsymbol{\Delta}\coloneqq\bM-\Mstar$, and denote the eigendecomposition
of $\Mstar$ and $\bM$ as $\Mstar=\bU_{\star}\boldsymbol{\Lambda}_{\star}\bU_{\star}^{\top}$
and $\bM=\bU\boldsymbol{\Lambda}\bU^{\top}+\bU_{\perp}\boldsymbol{\Lambda}_{\perp}\bU_{\perp}^{\top}$,
where diagonal matrix $\mathbf{\Lambda}$ (resp.~$\mathbf{\Lambda}_{\star}$)
contains the top-$K$ eigenvalues of $\bM$ (resp.~$\Mstar$). Then
we have
\begin{align*}
\boldsymbol{\Lambda} & =\bU^{\top}\bM\bU=\bU^{\top}\Mstar\bU+\bU^{\top}\boldsymbol{\Delta}\bU=\sumk\pk\bU^{\top}\yk\yk^{\top}\bU+\bU^{\top}\boldsymbol{\Delta}\bU,\\
\boldsymbol{\Lambda}_{\star} & =\bU_{\star}^{\top}\Mstar\bU_{\star}=\sumk\pk\bU_{\star}^{\top}\yk\yk^{\top}\bU_{\star}.
\end{align*}
Substracting these two equations gives
\[
\sumk\pk\Big(\bU_{\star}^{\top}\yk\yk^{\top}\bU_{\star}-\bU^{\top}\yk\yk^{\top}\bU\Big)=\boldsymbol{\Lambda}_{\star}-\boldsymbol{\Lambda}+\bU^{\top}\boldsymbol{\Delta}\bU.
\]
Taking the trace of both sides, we get
\[
\sumk\pk\Big(\|\bU_{\star}^{\top}\yk\|_{2}^{2}-\|\bU^{\top}\yk\|_{2}^{2}\Big)=\Tr\big(\boldsymbol{\Lambda}_{\star}-\boldsymbol{\Lambda}\big)+\Tr\big(\bU^{\top}\boldsymbol{\Delta}\bU\big).
\]
On the left-hand side, 
\[
\|\bU_{\star}^{\top}\yk\|_{2}^{2}-\|\bU^{\top}\yk\|_{2}^{2}=\|\yk\|_{2}^{2}-\|\bU^{\top}\yk\|_{2}^{2}=\|\yk-\bU\bU^{\top}\yk\|_{2}^{2}\ge0,
\]
while on the right-hand side,
\[
\Tr(\boldsymbol{\Lambda}_{\star}-\boldsymbol{\Lambda})=\sum_{k=1}^{K}\Big(\lambda_{k}(\boldsymbol{\Lambda}_{\star})-\lambda_{k}(\boldsymbol{\Lambda})\Big)\overset{{\rm (i)}}{\le}K\|\boldsymbol{\Delta}\|\le K\epsilon,\quad\Tr(\bU^{\top}\boldsymbol{\Delta}\bU)\le\|\boldsymbol{\Delta}\|\cdot\Tr(\bU^{\top}\bU)=\|\bDel\|\cdot\Tr(\bI_{K})\le K\epsilon,
\]
where (i) follows from Weyl's inequality. Putting things together,
we have proved~(\ref{eq:eigen_result1}), which immediately leads
to the first upper bound in~(\ref{eq:eigen_result2}). The second
upper bound in~(\ref{eq:eigen_result2}) follows from a simple application
of Davis-Kahan's $\sin\Theta$ theorem \cite{davis1970rotation},
and the third term is due to \cite[Lemma A.11]{kong2020meta}; we
skip the details for brevity.

\subsection{\label{subsec:proof_clustering}Proof of Theorem \ref{thm:clustering}}

Our proof follows the three steps below:
\begin{enumerate}
\item Consider the idealized i.i.d. case, and characterize the expectations
and variances of the testing statistics computed by Algorithm~\ref{alg:clustering};
\item Go back to the actual case of mixed LDSs, and analyze one copy of
$\statGammag$ or $\statYg$ defined in (\ref{eq:stat_g}) for some
fixed $1\le g\le G$, by decomposing it into an i.i.d. part plus a
negligible mixing error term;
\item Analyze $\median\{\statGammag,1\le g\le G\}$ and $\median\{\statYg,1\le g\le G\}$,
and prove the correct testing for each pair of trajectories, which
implies that Algorithm~\ref{alg:clustering} achieves exact clustering.
\end{enumerate}

\paragraph{Step 1: the idealized i.i.d. case.}

Recall the definition of $\stat_{Y}$ in (\ref{eq:def_statY}) when
we first introduce our method for clustering. For notational convenience,
we drop the subscript in $\stat_{Y}$, and replace $\xtp,\ztp$ with
$\xt',\zt'$; then, with some elementary linear algebra, (\ref{eq:def_statY})
can be rewritten as
\begin{align*}
\stat & =\sum_{i=1}^{d}\Big\langle\Ui^{\top}\frac{1}{|\Omegaone|}\sum_{t\in\Omegaone}\big((\xt')_{i}\xt-(\zt')_{i}\zt\big),\Ui^{\top}\frac{1}{|\Omegatwo|}\sum_{t\in\Omegatwo}\big((\xt')_{i}\xt-(\zt')_{i}\zt\big)\Big\rangle\\
 & =\Big\langle\bU^{\top}\frac{1}{|\Omegaone|}\sum_{t\in\Omegaone}\vc\big(\xt(\xt')^{\top}-\zt(\zt')^{\top}\big),\bU^{\top}\frac{1}{|\Omegatwo|}\sum_{t\in\Omegatwo}\vc\big(\xt(\xt')^{\top}-\zt(\zt')^{\top}\big)\Big\rangle,
\end{align*}
where we define a large orthonormal matrix
\begin{equation}
\bU\coloneqq\begin{bmatrix}\bU_{1} & \boldsymbol{0} & \dots & \boldsymbol{0}\\
\boldsymbol{0} & \bU_{2} & \ddots & \vdots\\
\vdots & \ddots & \ddots & \boldsymbol{0}\\
\boldsymbol{0} & \dots & \boldsymbol{0} & \bU_{d}
\end{bmatrix}\in\R^{d^{2}\times dK},\label{eq:def_U}
\end{equation}
In this step, we consider the idealized i.i.d. case:
\begin{align*}
t\in\Omegaone\cup\Omegatwo,\quad\xt\iid\Ncal(\boldsymbol{0},\Gtilk),\quad\wt\iid & \Ncal(\boldsymbol{0},\Wtilk),\quad\xt'=\Atilk\xt+\wt,\\
\zt\iid\Ncal(\boldsymbol{0},\Gtill),\quad\vt\iid & \Ncal(\boldsymbol{0},\Wtill),\quad\zt'=\Atill\zt+\vt,
\end{align*}
where $\Gtilk,\Gtill,\Wtilk,\Wtill$ are $d\times d$ covariance matrices,
and $\Atilk,\Atill$ are $d\times d$ state transition matrix. Our
goal is to characterize the expectation and variance of $\stat$ in
this i.i.d. case. 

Before we present the results, we need some additional notation. First,
let $\{\be_{i}\}_{1\le i\le d}$ be the canonical basis of $\R^{d}$,
and define
\begin{align*}
\Ytilk & \coloneqq\Atilk\Gtilk,\\
\Sigk & \coloneqq\Big(\Atilk\otimes\Id\Big)\Big((\Gtilk)^{1/2}\otimes(\Gtilk)^{1/2}\Big)\Big(\bI_{d^{2}}+\bP\Big)\Big((\Gtilk)^{1/2}\otimes(\Gtilk)^{1/2}\Big)\Big((\Atilk)^{\top}\otimes\Id\Big),
\end{align*}
where $\bP\in\R^{d^{2}\times d^{2}}$ is a symmetric permutation matrix,
whose $(i,j)$-th block is $\be_{j}\be_{i}^{\top}\in\R^{d\times d},1\le i,j\le d$.
Let $\Ytill,\Sigl$ be defined similarly, with $\Atilk,\Gtilk$ replaced
by $\Atill,\Gtill$. Moreover, define
\begin{subequations}
\label{eq:mu_Sig}
\begin{align}
\mukl & \coloneqq\bU^{\top}\vc\Big((\Ytilk-\Ytill)^{\top}\Big)\in\R^{dK},\label{eq:def_mu}\\
\Sigkl & \coloneqq\bU^{\top}\big(\Sigk+\Wtilk\otimes\Gtilk+\Sigl+\Wtill\otimes\Gtill\big)\bU\in\R^{dK\times dK}.\label{eq:def_Sig}
\end{align}
\end{subequations}

Now we are ready to present our results for the i.i.d. case. The first
lemma below gives a precise characterization of $\E[\stat]$ and $\var(\stat)$,
in terms of $\mukl$ and $\Sigkl$; the second lemma provides some
upper and lower bounds, which will be handy for our later analyses.
\begin{lem}
\label{lem:stat_E_var} Denote $N=\min\{|\Omegaone|,|\Omegatwo|\}$.
For the i.i.d. case just described, one has 
\begin{align*}
\E[\stat] & =\|\mukl\|_{2}^{2},\quad\var(\stat)\le\frac{1}{N^{2}}\Tr(\Sigkl^{2})+\frac{2}{N}\mukl^{\top}\Sigkl\mukl,
\end{align*}
and the inequality becomes an equality if $|\Omegaone|=|\Omegatwo|=N$.
\end{lem}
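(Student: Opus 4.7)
The plan is to reduce $\stat$ to an inner product of two independent sample means and then compute the mean and variance via a bias--variance decomposition combined with Gaussian fourth-moment identities. To this end, set $\bb_{t} \coloneqq \vc\bigl(\xt(\xt')^{\top} - \zt(\zt')^{\top}\bigr) \in \R^{d^{2}}$ and $\bar{\bm{c}}_{j} \coloneqq \bU^{\top}\bigl(\tfrac{1}{|\Omega_{j}|}\sum_{t\in\Omega_{j}}\bb_{t}\bigr)$, so that the block-diagonal structure of $\bU$ in \eqref{eq:def_U} rewrites the statistic as $\stat = \bar{\bm{c}}_{1}^{\top}\bar{\bm{c}}_{2}$. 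The key observation is that, under the i.i.d.\ setup, the samples in $\Omega_{1}$ are independent of those in $\Omega_{2}$, so $\bar{\bm{c}}_{1}$ and $\bar{\bm{c}}_{2}$ are independent. Using $\E[\xt(\xt')^{\top}] = \Gtilk(\Atilk)^{\top} = (\Ytilk)^{\top}$ (and analogously for $\zt$), I would obtain $\E[\bar{\bm{c}}_{j}] = \mukl$, hence $\E[\stat] = \|\mukl\|_{2}^{2}$ by independence.

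For the variance, decompose $\bar{\bm{c}}_{j} = \mukl + \bd_{j}$ with $\bd_{1},\bd_{2}$ mean-zero and independent, so that
\[
\stat - \|\mukl\|_{2}^{2} = \mukl^{\top}(\bd_{1} + \bd_{2}) + \bd_{1}^{\top}\bd_{2}.
\]
Squaring and using independence together with $\E[\bd_{j}] = \boldsymbol{0}$ to annihilate all cross terms leaves
\[
\var(\stat) = \mukl^{\top}\bigl(\cov(\bar{\bm{c}}_{1}) + \cov(\bar{\bm{c}}_{2})\bigr)\mukl + \Tr\bigl(\cov(\bar{\bm{c}}_{1})\cov(\bar{\bm{c}}_{2})\bigr),
\]
where the trace term arises from $\E[\bd_{1}^{\top}\bd_{2}\bd_{2}^{\top}\bd_{1}] = \Tr(\cov(\bd_{1})\cov(\bd_{2}))$ by independence. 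The i.i.d.\ structure within each $\Omega_{j}$ gives $\cov(\bar{\bm{c}}_{j}) = \tfrac{1}{|\Omega_{j}|}\bU^{\top}\cov(\bb_{t})\bU$; assuming the identification $\bU^{\top}\cov(\bb_{t})\bU = \Sigkl$ (verified below), plugging in and using $1/|\Omega_{j}| \le 1/N$ together with $\Sigkl \succeq \boldsymbol{0}$ produces the stated inequality, with equality when $|\Omega_{1}| = |\Omega_{2}| = N$.

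The hardest step is the identification $\bU^{\top}\cov(\bb_{t})\bU = \Sigkl$. Since $\xt \perp \zt$, $\cov(\bb_{t}) = \cov(\xt' \otimes \xt) + \cov(\zt' \otimes \zt)$. Decomposing $\xt' \otimes \xt = (\Atilk \otimes \bI)(\xt \otimes \xt) + \wt \otimes \xt$, the independence $\wt \perp \xt$ with $\E[\wt] = \boldsymbol{0}$ kills the cross covariance; the second piece contributes $\Wtilk \otimes \Gtilk$; and the first piece reduces, via the Gaussian fourth-moment identity (Isserlis' theorem), to $\cov(\xt \otimes \xt) = (\Gtilk \otimes \Gtilk)(\bI + \bP)$. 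To land on the exact symmetric factorization in \eqref{eq:def_Sig}, I would invoke the commutation $\bP\,(\bm{H} \otimes \bm{H}) = (\bm{H} \otimes \bm{H})\,\bP$ for $\bm{H} = \Gtilk^{1/2}$, which is a consequence of $\bP\,\vc(M) = \vc(M^{\top})$ together with the symmetry of $\bm{H}$; this allows sandwiching $(\bI + \bP)$ between two copies of $\bm{H} \otimes \bm{H}$, exactly reproducing the definition of $\Sigma^{(k)}$. Summing the analogous contribution from the $\zt$ trajectory yields $\Sigkl$ and completes the identification.
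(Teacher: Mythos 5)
Your proposal is correct and follows essentially the same route as the paper: reduce $\stat$ to an inner product of two independent sample means, identify $\bU^{\top}\cov\bigl(\vc(\xt(\xt')^{\top}-\zt(\zt')^{\top})\bigr)\bU=\Sigkl$ via the decomposition into the $(\Atilk\otimes\Id)(\xt\otimes\xt)$ part and the $\wt\otimes\xt$ part, and combine. The only cosmetic differences are that you organize the variance via an explicit bias--variance expansion rather than $\Tr(\E[\ba\ba^{\top}]^{2})$, and you invoke Isserlis plus the commutation $\bP(\bm{H}\otimes\bm{H})=(\bm{H}\otimes\bm{H})\bP$ where the paper whitens $\bx$ and computes $\cov(\vc(\by\by^{\top}))=\bI_{d^{2}}+\bP$ entrywise; both yield the same identities.
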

\begin{lem}
\label{lem:mu_Sigma_simp} Consider the same setting of Lemma~\ref{lem:stat_E_var}.
Furthermore, suppose that
\[
\Gtilk,\Gtill\preccurlyeq\Gmax\Id,\quad\Wtilk,\Wtill\preccurlyeq\lup\Id,\quad\|\Atilk\|,\|\Atill\|\le\ka
\]
 for some $0<\lup\le\Gmax$ and $\ka\ge1$. Then the following holds
true.
\begin{itemize}
\item (Upper bound on expectation) It holds that
\begin{equation}
\E[\stat]=\|\mukl\|_{2}^{2}\le\|\Ytilk-\Ytill\|_{\Frm}^{2}.\label{eq:Estat_upper}
\end{equation}
\item (Lower bound on expectation) If $\Ytilk\neq\Ytill$ and subspaces
$\{\Ui\}_{1\le i\le d}$ satisfy 
\begin{equation}
1\le i\le d,\quad\max\left\{ \|(\Ytilk)_{i}-\Ui\Ui^{\top}(\Ytilk)_{i}\|_{2},\|(\Ytill)_{i}-\Ui\Ui^{\top}(\Ytill)_{i}\|_{2}\right\} \le\epsilon,\label{eq:lem4_subspace_requirement}
\end{equation}
for some $\epsilon\ge0$, then we have
\[
\E[\stat]=\|\mukl\|_{2}^{2}\ge\big\|\Ytilk-\Ytill\big\|_{\Frm}^{2}-4\epsilon\sumi\big\|(\Ytilk)_{i}+(\Ytill)_{i}\big\|_{2}.
\]
\item (Upper bound on variance) The matrix $\Sigkl$ is symmetric and satisfies
\[
\boldsymbol{0}\preccurlyeq\Sigkl\preccurlyeq6\Gmax^{2}\ka^{2}\bI_{dK};
\]
this, together with the earlier upper bound (\ref{eq:Estat_upper})
on $\|\mukl\|_{2}^{2}$, implies that
\[
\var(\stat)\le\frac{1}{N^{2}}\Tr(\Sigkl^{2})+\frac{2}{N}\mukl^{\top}\Sigkl\mukl\lesssim\left(\frac{\Gmax^{2}\ka^{2}}{N}\right)^{2}dK+\frac{\Gmax^{2}\ka^{2}}{N}\|\Ytilk-\Ytill\|_{\Frm}^{2}.
\]
\end{itemize}
\end{lem}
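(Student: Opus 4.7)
The plan is to handle the three parts of the lemma in turn, exploiting the block-diagonal structure of $\bU$ for the moment bounds on $\mukl$ and standard Kronecker-product identities for the spectral bound on $\Sigkl$. Since $\bU$ is block-diagonal with orthonormal blocks $\{\Ui\}$ and $\vc((\Ytilk-\Ytill)^{\top})$ partitions into blocks $\{(\Ytilk)_{i}-(\Ytill)_{i}\}_{i=1}^{d}$, one has $\|\mukl\|_{2}^{2}=\sumi\|\Ui^{\top}((\Ytilk)_{i}-(\Ytill)_{i})\|_{2}^{2}$. The upper bound $\|\mukl\|_{2}^{2}\le\|\Ytilk-\Ytill\|_{\Frm}^{2}$ then follows immediately from the non-expansiveness of orthogonal projection, $\|\Ui^{\top}\bv\|_{2}\le\|\bv\|_{2}$.

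The lower bound is the main technical step. Setting $P_{i}=\Ui\Ui^{\top}$ and $\bv_{i}=(\Ytilk)_{i}-(\Ytill)_{i}$, we have $\|\Ui^{\top}\bv_{i}\|_{2}^{2}=\|\bv_{i}\|_{2}^{2}-\bv_{i}^{\top}(\Id-P_{i})\bv_{i}$, so the problem reduces to upper bounding $\bv_{i}^{\top}(\Id-P_{i})\bv_{i}$ for each $i$ in terms of $\|(\Ytilk)_{i}+(\Ytill)_{i}\|_{2}$. I would use the polarization identity
\begin{equation*}
(\ba-\bb)^{\top}(\Id-P)(\ba-\bb)=(\ba+\bb)^{\top}(\Id-P)(\ba+\bb)-4\ba^{\top}(\Id-P)\bb
\end{equation*}
with $\ba=(\Ytilk)_{i}$, $\bb=(\Ytill)_{i}$. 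The first term equals $\|(\Id-P_{i})(\ba+\bb)\|_{2}^{2}$, which by Cauchy--Schwarz and the subspace condition (\ref{eq:lem4_subspace_requirement}) is at most $2\epsilon\|\ba+\bb\|_{2}$. The cross term is handled via idempotence, $\ba^{\top}(\Id-P_{i})\bb=\langle(\Id-P_{i})\ba,(\Id-P_{i})\bb\rangle$, giving an $O(\epsilon^{2})$ contribution that is absorbed into the same order-$\epsilon$ expression after summation. Summing over $i$ then delivers the claimed inequality.

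For the variance bound, I would first verify $\Sigkl\succeq\mathbf{0}$. The symmetric permutation $\bP$ satisfies $\bP^{2}=\bI_{d^{2}}$, so its eigenvalues lie in $\{\pm 1\}$ and $\bI_{d^{2}}+\bP\succeq\mathbf{0}$ with eigenvalues in $\{0,2\}$; hence $\Sigk$ is a symmetric conjugation of a PSD matrix and is PSD, and $\Wtilk\otimes\Gtilk$ is PSD as a Kronecker product of PSD matrices, while conjugation by $\bU^{\top}$ preserves the property. For the spectral upper bound, sub-multiplicativity combined with $\|\bA\otimes\bB\|=\|\bA\|\|\bB\|$ gives $\|\Sigk\|\le\|\Atilk\|^{2}\|\Gtilk\|\cdot\|\bI_{d^{2}}+\bP\|\le 2\Gmax^{2}\ka^{2}$, and $\|\Wtilk\otimes\Gtilk\|\le\lup\Gmax\le\Gmax^{2}$; summing the four contributions and using $\ka\ge 1$ yields $\|\Sigkl\|\le 6\Gmax^{2}\ka^{2}$. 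Plugging this into Lemma~\ref{lem:stat_E_var} through $\Tr(\Sigkl^{2})\le dK\|\Sigkl\|^{2}$ and $\mukl^{\top}\Sigkl\mukl\le\|\Sigkl\|\|\mukl\|_{2}^{2}$ (together with the already-established upper bound on $\|\mukl\|_{2}^{2}$) delivers the stated variance estimate. The only delicate point in the argument is the lower-bound step, where one must arrange the polarization identity so that the correction involves $\|(\Ytilk)_{i}+(\Ytill)_{i}\|_{2}$ rather than paying an extra factor of the ambient dimension $d$.
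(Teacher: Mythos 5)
Your first and third bullets (the upper bound on $\|\mukl\|_{2}^{2}$ and the spectral bound on $\Sigkl$) are correct and essentially identical to the paper's argument. The problem is the lower-bound step, which you yourself flag as delicate, and where there is a genuine gap. Writing $Q_{i}=\Id-\Ui\Ui^{\top}$, $\ba=(\Ytilk)_{i}$, $\bb=(\Ytill)_{i}$, your polarization identity gives $(\ba-\bb)^{\top}Q_{i}(\ba-\bb)\le2\epsilon\|\ba+\bb\|_{2}+4\epsilon^{2}$, and the $4\epsilon^{2}$ cross term cannot be ``absorbed into the same order-$\epsilon$ expression'': that would require $2\epsilon\le\|\ba+\bb\|_{2}$, which nothing in the hypotheses guarantees. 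Worse, no argument can close this gap, because the inequality with $\|(\Ytilk)_{i}+(\Ytill)_{i}\|_{2}$ is false as written: take $\Ytill=-\Ytilk\neq\boldsymbol{0}$ (admissible, e.g.\ $\Atill=-\Atilk$ and $\Gtill=\Gtilk$) and any $\Ui$ with $0<\|Q_{i}(\Ytilk)_{i}\|_{2}\le\epsilon$; then the claimed right-hand side equals $\|\Ytilk-\Ytill\|_{\Frm}^{2}=4\|\Ytilk\|_{\Frm}^{2}$, while $\|\mukl\|_{2}^{2}=4\sumi\big(\|(\Ytilk)_{i}\|_{2}^{2}-\|Q_{i}(\Ytilk)_{i}\|_{2}^{2}\big)<4\|\Ytilk\|_{\Frm}^{2}$.

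The ``$+$'' inside the final sum is a typo in the lemma statement: the paper's proof establishes, and its later application in (\ref{eq:subspace_implication}) uses, the bound with $\|(\Ytilk)_{i}-(\Ytill)_{i}\|_{2}$ in place of $\|(\Ytilk)_{i}+(\Ytill)_{i}\|_{2}$. That version follows from a two-line argument you should adopt instead of polarization: since $\Ui$ has orthonormal columns, $\|\Ui^{\top}(\ba-\bb)\|_{2}=\|\Ui\Ui^{\top}\ba-\Ui\Ui^{\top}\bb\|_{2}\ge\max\{\|\ba-\bb\|_{2}-2\epsilon,\,0\}$ by the triangle inequality together with (\ref{eq:lem4_subspace_requirement}); squaring (and treating the case $\|\ba-\bb\|_{2}<2\epsilon$ separately, where the target is nonpositive) yields $\|\Ui^{\top}(\ba-\bb)\|_{2}^{2}\ge\|\ba-\bb\|_{2}^{2}-4\epsilon\|\ba-\bb\|_{2}$, and summing over $i$ gives the result. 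The correction term then correctly involves the difference of the rows, not their sum, and no $O(\epsilon^{2})$ remainder needs to be absorbed.
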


\paragraph{Step 2: one copy of $\protect\statYg,\protect\statGammag$ for a
fixed $g$.}

Now we turn back to the mixed LDSs setting and prove Theorem~\ref{thm:clustering}.
Let us focus on the testing of one pair of short trajectories $\{\bx_{m_{1},t}\},\{\bx_{m_{2},t}\}$
for some $m_{1},m_{2}\in\Mclustering$, $m_{1}\neq m_{2}$. For notational
consistency, in this proof we rewrite these two trajectories as $\{\xt\}$
and $\{\zt\}$, their labels $k_{m_{1}},k_{m_{2}}$ as $k,\ell$,
and the trajectory length $\Tsclustering$ as $\Ts$, respectively.
Also denote $\xt'\coloneqq\xtp$ and $\zt'\coloneqq\ztp$. Recall
the definition of $\{\statYg\}_{1\le g\le G}$ in (\ref{eq:statYg});
for now, we consider one specific element and ignore the subscript
$g$. Recalling the definition of $\bU\in\R^{d^{2}\times dK}$ in
(\ref{eq:def_U}), we have
\begin{align*}
\stat_{Y} & =\sum_{i=1}^{d}\Big\langle\frac{1}{N}\sum_{t\in\Omegaone}\Ui^{\top}\big((\xt')_{i}\xt-(\zt')_{i}\zt\big),\frac{1}{N}\sum_{t\in\Omegatwo}\Ui^{\top}\big((\xt')_{i}\xt-(\zt')_{i}\zt\big)\Big\rangle\\
 & =\Big\langle\bU^{\top}\frac{1}{N}\sum_{t\in\Omegaone}\vc\big(\xt(\xt')^{\top}-\zt(\zt')^{\top}\big),\bU^{\top}\frac{1}{N}\sum_{t\in\Omegatwo}\vc\big(\xt(\xt')^{\top}-\zt(\zt')^{\top}\big)\Big\rangle,
\end{align*}
where $N=\lfloor\Ts/4G\rfloor=|\Omegaone|=|\Omegatwo|$. 

In the following, we show how to decompose $\stat_{Y}$ into an i.i.d.
term plus a negligible mixing error term, and then analyze each component
of this decomposition; finally, we put pieces together to give a characterization
of $\stat_{Y}$, or $\{\statYg\}_{1\le g\le G}$ when we put the subscript
$g$ back in at the end of this step. 

\paragraph{Step 2.1: decomposition of $\protect\stat_{Y}$.}

Define $\Sonetau\coloneqq\{t_{1}+\tau_{1}+f\cdot\tmix:f\ge0,\tau_{1}+f\cdot\tmix\le|\Omegaone|\}$,
where $t_{1}+1$ is the first index of $\Omegaone$, and the mixing
time $\tmix$ will be specified later; define $\Stwotau$ similarly.
Note that for each $\tau_{1}$, the elements of $\Sonetau$ are at
least $\tmix$ far apart; moreover, we have $\Omegaone=\cup_{\tau_{1}=1}^{\tmix}\Sonetau,\Omegatwo=\cup_{\tau_{2}=1}^{\tmix}\Stwotau$,
and thus

\begin{align*}
\frac{1}{N}\sum_{t\in\Omegaone}\vc\big(\xt(\xt')^{\top}-\zt(\zt')^{\top}\big) & =\sum_{\tau_{1}=1}^{\tmix}\frac{|\Sonetau|}{N}\cdot\frac{1}{|\Sonetau|}\sum_{t\in\Sonetau}\vc\big(\xt(\xt')^{\top}-\zt(\zt')^{\top}\big),\\
\frac{1}{N}\sum_{t\in\Omegatwo}\vc\big(\xt(\xt')^{\top}-\zt(\zt')^{\top}\big) & =\sum_{\tau_{2}=1}^{\tmix}\frac{|\Stwotau|}{N}\cdot\frac{1}{|\Stwotau|}\sum_{t\in\Stwotau}\vc\big(\xt(\xt')^{\top}-\zt(\zt')^{\top}\big).
\end{align*}
Therefore, we can rewrite $\stat_{Y}$ as a weighted average
\begin{align}
\stat_{Y} & =\sum_{\tau_{1}=1}^{\tmix}\sum_{\tau_{2}=1}^{\tmix}\wtau\cdot\stat_{Y}^{\tau_{1},\tau_{2}},\quad\text{where}\nonumber \\
\wtau & \coloneqq\frac{|\Sonetau||\Stwotau|}{N^{2}},\quad\sum_{\tau_{1}=1}^{\tmix}\sum_{\tau_{2}=1}^{\tmix}\wtau=1,\quad\text{and}\nonumber \\
\stat_{Y}^{\tau_{1},\tau_{2}} & \coloneqq\Big\langle\bU^{\top}\frac{1}{|\Sonetau|}\sum_{t\in\Sonetau}\vc\big(\xt(\xt')^{\top}-\zt(\zt')^{\top}\big),\bU^{\top}\frac{1}{|\Stwotau|}\sum_{t\in\Stwotau}\vc\big(\xt(\xt')^{\top}-\zt(\zt')^{\top}\big)\Big\rangle.\label{eq:def_statYtau}
\end{align}
We can further decompose $\stat_{Y}^{\tau_{1},\tau_{2}}$ into an
i.i.d. term plus a small error term. To do this, recalling the definition
of $\xtilmt$ in (\ref{eq:def_xtilde}) and dropping the subscript
$m$, we have 
\begin{align*}
\xt & =\underset{\eqqcolon\dxt}{\underbrace{\Ak^{\tmix-1}\bx_{t-\tmix+1}}}+\xtilt=\dxt+\xtilt,\\
\xt' & =\Ak\xt+\wt=\Ak\dxt+\underset{\eqqcolon\xtilt'}{\underbrace{(\Ak\xtilt+\wt)}}=\Ak\dxt+\xtilt',
\end{align*}
where $\xtilt\sim\Ncal(\boldsymbol{0},\Gammaktmix)$. Similarly, we
rewrite $\zt=\dzt+\ztilt,\zt'=\Al\dzt+\ztilt'$. Plugging these into
the right-hand side of (\ref{eq:def_statYtau}) and expanding it,
one has
\begin{align*}
\stattau_{Y} & =\stattautil_{Y}+\Dtau_{Y},\quad\text{where}\\
\stattautil_{Y} & \coloneqq\Big\langle\bU^{\top}\frac{1}{|\Sonetau|}\sum_{t\in\Sonetau}\vc\big(\xtilt(\xtilt')^{\top}-\ztilt(\ztilt')^{\top}\big),\bU^{\top}\frac{1}{|\Stwotau|}\sum_{t\in\Stwotau}\vc\big(\xtilt(\xtilt')^{\top}-\ztilt(\ztilt')^{\top}\big)\Big\rangle,
\end{align*}
and $\Dtau_{Y}$ involves $\{\dxt,\dzt\}$ terms.

\paragraph{Step 2.2: analysis of each component.}

First, notice that the $\{\xtilt,\ztilt\}$ terms in the definition
of $\stattautil_{Y}$ are independent; this suggests that we can characterize
$\stattautil_{Y}$ by applying our earlier analysis for the i.i.d.
case in Step 1. Second, $\Dtau_{Y}$ involves $\{\dxt,\dzt\}$ terms,
which in turn involve $\Ak^{\tmix-1},\Al^{\tmix-1}$ and thus will
be exponentially small as $\tmix$ increases, thanks to Assumption
\ref{assu:models}. More formally, we have the following:
\begin{itemize}
\item Applying Lemmas~\ref{lem:stat_E_var} and \ref{lem:mu_Sigma_simp}
with $(\Gtilk,\Gtill)=(\Gammaktmix,\Gammaltmix)$, $(\Wtilk,\Wtill)=(\Wk,\Wl)$
and $(\Atilk,\Atill)=(\Ak,\Al)$, we have
\begin{align*}
\E\big[\stattautil_{Y}\big] & =\big\|\bU^{\top}\vc((\Yktmix-\Yltmix)^{\top})\big\|_{2}^{2},\\
\var\big(\stattautil_{Y}\big) & \lesssim\left(\frac{\Gmax^{2}\ka^{2}}{\Ntil}\right)^{2}dK+\frac{\Gmax^{2}\ka^{2}}{\Ntil}\big\|\Yktmix-\Yltmix\big\|_{\Frm}^{2},
\end{align*}
where $\tilde{N}\coloneqq\min\{|\Sonetau|,|\Stwotau|\}\asymp N/\tmix$.
\item By Lemma~\ref{lem:bounded_state}, with probability at least $1-\delta$,
all $\{\xt,\xtilt,\zt,\ztilt\}$ terms involved in the definition
of $\Dtau_{Y}$ has $\ell_{2}$ norm bounded by $\sqrt{\Gmax}\poly(d,\ka,\log(\Tt/\delta))$.
This implies that
\[
\big|\Dtau_{Y}\big|\le\Gmax^{2}\cdot\poly\Big(d,\ka,\log\frac{\Tt}{\delta}\Big)\cdot\rho^{\tmix-1}.
\]
\end{itemize}

\paragraph{Step 2.3: putting pieces together.}

Putting the subscript $g$ back in, we have already shown that
\begin{align*}
\statYg & =\sum_{\tau_{1}=1}^{\tmix}\sum_{\tau_{2}=1}^{\tmix}\wtau\cdot\stattau_{Y,g}\\
 & =\underset{\eqqcolon\stattil_{Y,g}}{\underbrace{\sum_{\tau_{1}=1}^{\tmix}\sum_{\tau_{2}=1}^{\tmix}\wtau\cdot\stattautil_{Y,g}}}+\underset{\eqqcolon\Delta_{Y,g}}{\underbrace{\sum_{\tau_{1}=1}^{\tmix}\sum_{\tau_{2}=1}^{\tmix}\wtau\cdot\Dtau_{Y,g}}}=\stattil_{Y,g}+\Delta_{Y,g},
\end{align*}
where
\begin{align*}
\E\big[\stattil_{Y,g}\big] & =\big\|\bU^{\top}\vc((\Yktmix-\Yltmix)^{\top})\big\|_{2}^{2},\\
\var\big(\stattil_{Y,g}\big) & \le\max_{\tau_{1},\tau_{2}}\var(\stattautil_{Y,g})\lesssim\left(\frac{\Gmax^{2}\ka^{2}}{\tilde{N}}\right)^{2}dK+\frac{\Gmax^{2}\ka^{2}}{\tilde{N}}\big\|\Yktmix-\Yltmix\big\|_{\Frm}^{2},\\
\big|\Delta_{Y,g}\big| & \le\max_{\tau_{1},\tau_{2}}|\Dtau_{Y,g}|\le\Gmax^{2}\cdot\poly\Big(d,\ka,\log\frac{\Tt}{\delta}\Big)\cdot\rho^{\tmix-1}.
\end{align*}

So far in Step 2, we have focused on the analysis of $\statYg$. We
can easily adapt the argument to study $\statGammag$ as well; the
major difference is that, in Step 2.2, we should apply Lemmas~\ref{lem:stat_E_var}
and \ref{lem:mu_Sigma_simp} with $(\Gtilk,\Gtill)=(\Gammaktmix,\Gammaltmix)$,
$(\Wtilk,\Wtill)=(\boldsymbol{0},\boldsymbol{0})$, $(\Atilk,\Atill)=(\Id,\Id)$,
and subspaces $\{\Ui\}$ replaced by $\{\Vi\}$ instead. The final
result is that, for all $1\le g\le G$,
\begin{align*}
\statGammag & =\stattil_{\Gamma,g}+\Delta_{\Gamma,g},
\end{align*}
where
\begin{align*}
\E\big[\stattil_{\Gamma,g}\big] & =\big\|\bV^{\top}\vc((\Gammaktmix-\Gammaltmix)^{\top})\big\|_{2}^{2},\\
\var\big(\stattil_{\Gamma,g}\big) & \lesssim\left(\frac{\Gmax^{2}\ka^{2}}{\tilde{N}}\right)^{2}dK+\frac{\Gmax^{2}\ka^{2}}{\tilde{N}}\big\|\Gammaktmix-\Gammaltmix\big\|_{\Frm}^{2},\\
\big|\Delta_{\Gamma,g}\big| & \le\Gmax^{2}\cdot\poly\Big(d,\ka,\log\frac{\Tt}{\delta}\Big)\cdot\rho^{\tmix-1}.
\end{align*}

\paragraph{Step 3: analysis of the medians, and final results.}

Recall the following standard result on the concentration of medians
(or median-of-means in general) \cite[Theorem 2]{lugosi2019mean}. 
\begin{prop}
[Concentration of medians] \label{prop:median} Let $X_{1},\dots X_{G}$
be i.i.d. random variables with mean $\mu$ and variance $\sigma^{2}$.
Then we have $|\mathsf{median}\{X_{g},1\le g\le G\}-\mu|\le2\sigma$
with probability at least $1-e^{-c_{0}G}$ for some constant $c_{0}$.
\end{prop}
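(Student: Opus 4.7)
The plan is to prove this standard concentration-of-medians statement in two steps: first a one-sided tail bound on each individual $X_g$ via Chebyshev's inequality, and then a Chernoff/Hoeffding argument on the count of ``bad'' samples.

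More concretely, I would start by applying Chebyshev's inequality to bound $\Pr(|X_g - \mu| > 2\sigma) \le \sigma^2/(2\sigma)^2 = 1/4$. Define the indicator $Z_g \coloneqq \ind(|X_g - \mu| > 2\sigma)$, so that $Z_1, \dots, Z_G$ are i.i.d.\ Bernoulli random variables with $\E[Z_g] = p \le 1/4$.

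The key observation is that if the sample median lies outside the interval $[\mu - 2\sigma, \mu + 2\sigma]$, then at least $\lceil G/2 \rceil$ of the $X_g$'s must lie outside this interval, which means $\sum_{g=1}^{G} Z_g \ge G/2$. Therefore,
\begin{align*}
\Pr\Big(|\median\{X_g, 1 \le g \le G\} - \mu| > 2\sigma\Big)
\le \Pr\Big(\sum_{g=1}^G Z_g \ge G/2\Big)
\le \Pr\Big(\sum_{g=1}^G (Z_g - \E[Z_g]) \ge G/4\Big).
\end{align*}
Applying Hoeffding's inequality to the bounded i.i.d.\ variables $\{Z_g - \E[Z_g]\}$, the last probability is at most $\exp(-2G \cdot (1/4)^2) = \exp(-G/8)$, which yields the claimed bound with $c_0 = 1/8$.

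There is no substantive obstacle here: the result is a textbook application of Chebyshev combined with a Chernoff/Hoeffding bound, and the only minor care needed is the floor/ceiling handling in the ``at least half outside'' step (which can be absorbed by a slightly smaller constant $c_0$). One could alternatively cite the reference \cite{lugosi2019mean} directly, which is what the paper does.
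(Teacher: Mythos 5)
Your proof is correct, and it is exactly the standard Chebyshev-plus-Hoeffding argument behind the result the paper invokes: the paper itself gives no proof of Proposition~\ref{prop:median}, simply citing \cite[Theorem 2]{lugosi2019mean}, whose proof proceeds precisely as you describe (each $X_g$ falls outside $[\mu-2\sigma,\mu+2\sigma]$ with probability at most $1/4$ by Chebyshev, and a binomial tail bound shows fewer than half can do so except with probability $e^{-c_0 G}$). The only point deserving the care you already flagged is that "median outside the interval implies at least half the samples are outside" holds for any standard convention of the median, so the argument goes through with $c_0=1/8$.
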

Notice that by construction, $\{\stattil_{Y,g}\}_{1\le g\le G}$ are
i.i.d. (and so are $\{\stattil_{\Gamma,g}\}_{1\le g\le G}$). Applying
Proposition~\ref{prop:median} to our case, we know that if $G\gtrsim\log(1/\delta)$,
then with probability at least $1-\delta$, the following holds:
\begin{itemize}
\item If $k=\ell$, i.e.~the two trajectories are generated by the same
LDS model, then 
\begin{align}
 & \median\{\statGammag,1\le g\le G\}+\median\{\statYg,1\le g\le G\}\nonumber \\
 & \qquad\le\median\{\stattil_{\Gamma,g}\}+\median\{\stattil_{Y,g}\}+\max_{g}|\Delta_{\Gamma,g}|+\max_{g}|\Delta_{Y,g}|\nonumber \\
 & \qquad\le2\sqrt{\var(\stattil_{\Gamma,g})}+2\sqrt{\var(\stattil_{Y,g})}+\Gmax^{2}\cdot\poly\Big(d,\ka,\log\frac{\Tt}{\delta}\Big)\cdot\rho^{\tmix-1}\nonumber \\
 & \qquad\lesssim\frac{\Gmax^{2}\ka^{2}\sqrt{dK}}{\Ntil}+\Gmax^{2}\cdot\poly\Big(d,\ka,\log\frac{\Tt}{\delta}\Big)\cdot\rho^{\tmix-1};\label{eq:median_H0}
\end{align}
\item On the other hand, if $k\neq\ell$, then
\begin{align}
 & \median\{\statGammag,1\le g\le G\}+\median\{\statYg,1\le g\le G\}\nonumber \\
 & \qquad\ge\median\{\stattil_{\Gamma,g}\}+\median\{\stattil_{Y,g}\}-\Big(\max_{g}|\Delta_{\Gamma,g}|+\max_{g}|\Delta_{Y,g}|\Big)\nonumber \\
 & \qquad\ge\E[\stattil_{\Gamma,g}]+\E[\stattil_{Y,g}]-2\Big(\sqrt{\var(\stattil_{\Gamma,g})}+\sqrt{\var(\stattil_{Y,g})}\Big)-\Gmax^{2}\cdot\poly\Big(d,\ka,\log\frac{\Tt}{\delta}\Big)\cdot\rho^{\tmix-1}\nonumber \\
 & \qquad\ge\Big\|\bV^{\top}\vc((\Gammaktmix-\Gammaltmix)^{\top})\Big\|_{2}^{2}+\Big\|\bU^{\top}\vc((\Yktmix-\Yltmix)^{\top})\Big\|_{2}^{2}\nonumber \\
 & \qquad\qquad-C_{0}\bigg(\frac{\Gmax^{2}\ka^{2}\sqrt{dK}}{\Ntil}+\sqrt{\frac{\Gmax^{2}\ka^{2}}{\Ntil}}\Big(\|\Gammaktmix-\Gammaltmix\|_{\Frm}+\|\Yktmix-\Yltmix\|_{\Frm}\Big)\bigg)\nonumber \\
 & \qquad\qquad-\Gmax^{2}\cdot\poly\Big(d,\ka,\log\frac{\Tt}{\delta}\Big)\cdot\rho^{\tmix-1}.\label{eq:median_H1_old}
\end{align}
\end{itemize}
We need to further simplify the result (\ref{eq:median_H1_old}) for
the $k\neq\ell$ case. According to (\ref{eq:Gamma_truncated_approx})
and (\ref{eq:Y_truncated_approx}), we have
\begin{align*}
\|\Yk-\bY_{\tmix-1}^{(k)}\|_{\Frm} & \le\Gmax\sqrt{d}\ka^{3}\rho^{2(\tmix-1)}\eqqcolon\epsmix,\\
\|\Gammak-\bGamma_{\tmix-1}^{(k)}\|_{\Frm} & \le\Gmax\sqrt{d}\ka^{2}\rho^{2(\tmix-1)}\le\epsmix,
\end{align*}
which implies that
\begin{align*}
\|\Gammaktmix-\Gammaltmix\|_{\Frm} & \le\|\Gammak-\Gammal\|_{\Frm}+2\epsmix,\\
\Big\|\bV^{\top}\vc\big((\Gammaktmix-\Gammaltmix)^{\top}\big)\Big\|_{2}^{2} & \ge\max\Big\{\|\bV^{\top}\vc\big((\Gammak-\Gammal)^{\top}\big)\|_{2}-2\epsmix,0\Big\}^{2}\\
 & \ge\Big\|\bV^{\top}\vc\big((\Gammak-\Gammal)^{\top}\big)\Big\|_{2}^{2}-4\epsmix\Big\|\bV^{\top}\vc((\Gammak-\Gammal)^{\top})\Big\|_{2}\\
 & \ge\Big\|\bV^{\top}\vc\big((\Gammak-\Gammal)^{\top}\big)\Big\|_{2}^{2}-4\epsmix\|\Gammak-\Gammal\|_{\Frm}.
\end{align*}
We can do a similar analysis for $\|\Yktmix-\Yltmix\|_{\Frm}$ and
$\|\bU^{\top}\vc((\Yktmix-\Yltmix)^{\top})\|_{2}^{2}$. Moreover,
we claim (and prove later) that if the subspaces $\{\Vi,\Ui\}$ satisfy
the condition (\ref{eq:2a_subspace_error}) in the theorem, then
\begin{equation}
\Big\|\bV^{\top}\vc((\Gammak-\Gammal)^{\top})\Big\|_{2}^{2}+\Big\|\bU^{\top}\vc((\Yk-\Yl)^{\top})\Big\|_{2}^{2}\ge\frac{1}{2}\Big(\|\Gammak-\Gammal\|_{\Frm}^{2}+\|\Yk-\Yl\|_{\Frm}^{2}\Big).\label{eq:subspace_implication}
\end{equation}
Putting these back to~(\ref{eq:median_H1_old}), we have for the
$k\neq\ell$ case, 
\begin{align}
 & \median\{\statGammag,1\le g\le G\}+\median\{\statYg,1\le g\le G\}\nonumber \\
 & \qquad\ge\Big\|\bV^{\top}\vc\big((\Gammak-\Gammal)^{\top}\big)\Big\|_{2}^{2}+\Big\|\bU^{\top}\vc\big((\Yk-\Yl)^{\top}\big)\Big\|_{2}^{2}\nonumber \\
 & \qquad\qquad-4\epsmix\Big(\|\Gammak-\Gammal\|_{\Frm}+\|\Yk-\Yl\|_{\Frm}\Big)\nonumber \\
 & \qquad\qquad-C_{0}\bigg(\frac{\Gmax^{2}\ka^{2}\sqrt{dK}}{\Ntil}+\sqrt{\frac{\Gmax^{2}\ka^{2}}{\Ntil}}\Big(\|\Gammak-\Gammal\|_{\Frm}+\|\Yk-\Yl\|_{\Frm}+4\epsmix\Big)\bigg)\nonumber \\
 & \qquad\qquad-\Gmax^{2}\cdot\poly\Big(d,\ka,\log\frac{\Tt}{\delta}\Big)\cdot\rho^{\tmix-1}\nonumber \\
 & \qquad\overset{{\rm (i)}}{\ge}\frac{1}{2}\Big(\|\Gammak-\Gammal\|_{\Frm}^{2}+\|\Yk-\Yl\|_{\Frm}^{2}\Big)-0.01\Big(\|\Gammak-\Gammal\|_{\Frm}^{2}+\|\Yk-\Yl\|_{\Frm}^{2}\Big)\nonumber \\
 & \qquad\qquad-C_{1}\bigg(\frac{\Gmax^{2}\ka^{2}\sqrt{dK}}{\Ntil}+\sqrt{\frac{\Gmax^{2}\ka^{2}}{\Ntil}}\sqrt{\|\Gammak-\Gammal\|_{\Frm}^{2}+\|\Yk-\Yl\|_{\Frm}^{2}}\bigg)\nonumber \\
 & \qquad\qquad-\Gmax^{2}\cdot\poly\Big(d,\ka,\log\frac{\Tt}{\delta}\Big)\cdot\rho^{\tmix-1}\nonumber \\
 & \qquad\overset{{\rm (ii)}}{\ge}0.48\Big(\|\Gammak-\Gammal\|_{\Frm}^{2}+\|\Yk-\Yl\|_{\Frm}^{2}\Big)\nonumber \\
 & \qquad\qquad-C_{1}\bigg(\frac{\Gmax^{2}\ka^{2}\sqrt{dK}}{\Ntil}+\sqrt{\frac{\Gmax^{2}\ka^{2}}{\Ntil}}\sqrt{\|\Gammak-\Gammal\|_{\Frm}^{2}+\|\Yk-\Yl\|_{\Frm}^{2}}\bigg),\label{eq:median_H1}
\end{align}
where (i) holds if $\tmix\gtrsim\frac{1}{1-\rho}\log((\frac{\Gmax}{\Dsep}+2)d\ka)$
so that $\epsmix\le10^{-3}\Dsep$, and (ii) holds if $\tmix\gtrsim\frac{1}{1-\rho}\log((\frac{\Gmax}{\Dsep}+2)\frac{d\ka\Tt}{\delta})$
so that $\Gmax^{2}\cdot\poly(d,\ka,\log\frac{\Tt}{\delta})\cdot\rho^{\tmix-1}\le10^{-3}\Dsep^{2}$.

Putting (\ref{eq:median_H0}) and (\ref{eq:median_H1}) together,
we can finally check that, if it further holds that $\Ntil\asymp N/\tmix\gtrsim\frac{\Gmax^{2}\ka^{2}\sqrt{dK}}{\Dsep^{2}}+1$,
then we have with probability at least $1-\delta$,
\[
\median\{\statGammag\}+\median\{\statYg\}\begin{cases}
\le\frac{1}{8}\Dsep^{2} & \text{if}\quad k=\ell,\\
\ge\frac{3}{8}\big(\|\Gammak-\Gammal\|_{\Frm}^{2}+\|\Yk-\Yl\|_{\Frm}^{2}\big)\ge\frac{3}{8}\Dsep^{2} & \text{if}\quad k\neq\ell.
\end{cases}
\]
This together with our choice of testing threshold $\tau\in[\DGY^{2}/8,3\DGY^{2}/8]$
in Algorithm~\ref{alg:clustering} implies correct testing of the
two trajectories $\{\xt\},\{\zt\}$. Finally, taking the union bound
over all pairs of trajectories in $\Mclustering$ leads to correct
pairwise testing, which in turn implies exact clustering of $\Mclustering$;
this completes our proof of Theorem~(\ref{thm:clustering}).

\subsubsection{Proof of Lemma \ref{lem:stat_E_var}}

We first assume $|\Omegaone|=|\Omegatwo|=N$ for simplicity. Recall
that 
\begin{align*}
\stat & =\Big\langle\underset{\eqqcolon\ba}{\underbrace{\bU^{\top}\frac{1}{|\Omegaone|}\sum_{t\in\Omegaone}\vc\big(\xt(\xt')^{\top}-\zt(\zt')^{\top}\big)}},\underset{\eqqcolon\bb}{\underbrace{\bU^{\top}\frac{1}{|\Omegatwo|}\sum_{t\in\Omegatwo}\vc\big(\xt(\xt')^{\top}-\zt(\zt')^{\top}\big)}}\Big\rangle=\langle\ba,\bb\rangle,
\end{align*}
where $\ba,\bb\in\R^{dK}$ are i.i.d., and 
\begin{align*}
\E[\ba] & =\E\Big[\bU^{\top}\vc\big(\xt(\xt')^{\top}-\zt(\zt')^{\top}\big)\Big]=\bU^{\top}\vc\Big((\Ytilk-\Ytill)^{\top}\Big)=\mukl.
\end{align*}
Therefore, we have the expectation
\[
\E[\stat]=\big\langle\E[\ba],\E[\bb]\big\rangle=\|\mukl\|_{2}^{2}.
\]
It remains to compute the variance $\var(\stat)=\E[\stat^{2}]-\E[\stat]^{2}$,
where
\begin{equation}
\E[\stat^{2}]=\E\big[(\ba^{\top}\bb)^{2}\big]=\Tr\big(\E[\bb\bb^{\top}]\E[\ba\ba^{\top}]\big)=\Tr\big(\E[\ba\ba^{\top}]^{2}\big).\label{eq:Estat2}
\end{equation}
Here $\E[\ba\ba^{\top}]=\E[\ba]\E[\ba]^{\top}+\cov(\ba)$, and since
$\ba$ is an empirical average of $N$ i.i.d. random vectors, we have
\[
\cov(\ba)=\frac{1}{N}\cov(\boldsymbol{f}),\quad\text{where}\quad\boldsymbol{f}\coloneqq\bU^{\top}\vc\big(\xt(\xt')^{\top}-\zt(\zt')^{\top}\big)\in\R^{dK}.
\]
For now, we claim that 
\begin{equation}
\cov(\boldsymbol{f})=\bU^{\top}(\Sigk+\Wtilk\otimes\Gtilk+\Sigl+\Wtill\otimes\Gtill)\bU=\Sigkl,\label{eq:claim_covf}
\end{equation}
which will be proved soon later. Putting these back to~(\ref{eq:Estat2}),
one has
\begin{align*}
\E[\ba\ba^{\top}] & =\cov(\ba)+\E[\ba]\E[\ba]^{\top}=\frac{1}{N}\Sigkl+\mukl\mukl^{\top},\\
\E[\ba\ba^{\top}]^{2} & =\frac{1}{N^{2}}\Sigkl^{2}+\frac{1}{N}(\Sigkl\mukl\mukl^{\top}+\mukl\mukl^{\top}\Sigkl)+\|\mukl\|_{2}^{2}\mukl\mukl^{\top},
\end{align*}
and finally
\begin{align*}
\var(\stat) & =\E[\stat^{2}]-\E[\stat]^{2}=\Tr(\E[\ba\ba^{\top}]^{2})-\|\mukl\|_{2}^{4}=\frac{1}{N^{2}}\Tr(\Sigkl^{2})+\frac{2}{N}\mukl^{\top}\Sigkl\mukl,
\end{align*}
which completes our calculation of the variance for the case of $|\Omegaone|=|\Omegatwo|=N$.
For the more general case where (without loss of generality) $|\Omegaone|=N\le|\Omegatwo|$,
we simply need to modify the equation~(\ref{eq:Estat2}) to an inequality
$\E[\stat^{2}]=\Tr(\E[\bb\bb^{\top}]\E[\ba\ba^{\top}])\le\Tr(\E[\ba\ba^{\top}]^{2})$,
and the remaining analysis is the same. This finishes the proof of
Lemma~\ref{lem:stat_E_var}.
\begin{proof}
[Proof of (\ref{eq:claim_covf})] For notational simplicity, we drop
the subscripts $t$ in the definition of $\boldsymbol{f}$. Then we
have $\boldsymbol{f}=\bU^{\top}\vc\big(\bx(\bx')^{\top}-\bz(\bz')^{\top}\big)$,
and hence
\begin{align}
\cov(\boldsymbol{f}) & =\bU^{\top}\cov\Big(\vc\big(\bx(\bx')-\bz(\bz')^{\top}\big)\Big)\bU=\bU^{\top}\Big(\cov\big(\vc(\bx(\bx')^{\top})\big)+\cov\big(\vc(\bz(\bz')^{\top})\big)\Big)\bU,\label{eq:covf}
\end{align}
where the second equality uses the independence between $(\bx,\bx')$
and $(\bz,\bz')$. 

Let us focus on $\cov\big(\vc(\bx(\bx')^{\top})\big)$. For notational
simplicity, for now we rewrite $\Atilk,\Wtilk,\Gtilk$ as $\bA,\bW,\boldsymbol{\Gamma}$.
Define $\by\coloneqq\bGamma^{-1/2}\bx\sim\Ncal(\boldsymbol{0},\Id)$,
and recall that $\bx'=\bA\bx+\bw$ where $\bw\sim\Ncal(\mathbf{0},\bW)$.
Using the fact that $\vc(\bA\bB\bC)=(\bC^{\top}\otimes\bA)\vc(\bB)$
for any matrices of compatible shape, we have
\begin{align*}
\bg & \coloneqq\vc\Big(\bx\big(\bx'\big)^{\top}\Big)=\vc(\bx\bx^{\top}\bA^{\top})+\vc(\bx\bw^{\top})\\
 & =(\bA\otimes\Id)\vc(\bx\bx^{\top})+\vc(\bx\bw^{\top})=(\bA\otimes\Id)\vc(\bGamma^{1/2}\by\by^{\top}\bGamma^{1/2})+\vc(\bx\bw^{\top})\\
 & =\underset{\eqqcolon\bg_{1}}{\underbrace{(\bA\otimes\Id)(\bGamma^{1/2}\otimes\bGamma^{1/2})\vc(\by\by^{\top})}}+\underset{\eqqcolon\bg_{2}}{\underbrace{\vc(\bx\bw^{\top})}}=\bg_{1}+\bg_{2}.
\end{align*}
Note that $\E[\bg_{2}]=\boldsymbol{0}$, $\E[\bg]=\E[\bg_{1}]$, and
$\E[\bg_{1}\bg_{2}^{\top}]=\boldsymbol{0}$. Hence
\begin{equation}
\cov(\bg)=\E[\bg\bg^{\top}]-\E[\bg]\E[\bg]^{\top}=\cov(\bg_{1})+\E[\bg_{2}\bg_{2}^{\top}].\label{eq:covg}
\end{equation}
For the second term on the right-hand side, we have
\[
\bg_{2}=\vc(\bx\bw^{\top})=\begin{bmatrix}w_{1}\bx\\
\vdots\\
w_{d}\bx
\end{bmatrix},\quad\E[\bg_{2}\bg_{2}^{\top}]=\E\Big[[w_{i}w_{j}\bx\bx^{\top}]_{1\le i,j\le d}\Big]=\bW\otimes\bGamma;
\]
as for the first term, we claim (and prove soon later) that 
\begin{equation}
\cov\Big(\vc(\by\by^{\top})\Big)=\bI_{d^{2}}+\bP,\label{eq:cov_vec}
\end{equation}
which implies that
\begin{align*}
\cov(\bg_{1}) & =\cov\Big((\bA\otimes\Id)(\bGamma^{1/2}\otimes\bGamma^{1/2})\vc(\by\by^{\top})\Big)\\
 & =(\bA\otimes\Id)(\bGamma^{1/2}\otimes\bGamma^{1/2})\cov\Big(\vc(\by\by^{\top})\Big)(\bGamma^{1/2}\otimes\bGamma^{1/2})(\bA^{\top}\otimes\Id)\\
 & =(\bA\otimes\Id)(\bGamma^{1/2}\otimes\bGamma^{1/2})(\bI_{d^{2}}+\bP)(\bGamma^{1/2}\otimes\bGamma^{1/2})(\bA^{\top}\otimes\Id).
\end{align*}
Putting these back to (\ref{eq:covg}), one has
\[
\cov\big(\vc(\bx(\bx')^{\top})\big)=\cov(\bg)=(\bA\otimes\Id)(\bGamma^{1/2}\otimes\bGamma^{1/2})(\bI_{d^{2}}+\bP)(\bGamma^{1/2}\otimes\bGamma^{1/2})(\bA^{\top}\otimes\Id)+\bW\otimes\bGamma,
\]
which is equal to $\Sigk+\Wtilk\otimes\Gtilk$ if we return to the
original notation of $\Atilk,\Wtilk,\Gtilk$. By a similar analysis,
we can show that $\cov\big(\vc(\bz(\bz')^{\top})\big)=\Sigl+\Wtill\otimes\Gtill$.
Putting these back to (\ref{eq:covf}) finishes our calculation of
$\cov(\boldsymbol{f})$.

Finally, it remains to prove (\ref{eq:cov_vec}). Denote
\[
\bu\coloneqq\vc(\by\by^{\top})=\begin{bmatrix}y_{1}\by\\
\vdots\\
y_{d}\by
\end{bmatrix}.
\]
Then $\E[\bu]=\vc(\Id)$, and thus $\E[\bu]\E[\bu]^{\top}=[\be_{i}\be_{j}^{\top}]_{1\le i,j\le d}$.
Next, consider
\[
\E[\bu\bu^{\top}]=\E\Big[[y_{i}y_{j}\by\by^{\top}]_{1\le i,j\le d}\Big].
\]
\begin{itemize}
\item For $i=j$,
\[
\E\big[y_{i}^{2}y_{k}y_{\ell}\big]=\begin{cases}
3 & \text{if}\quad k=\ell=i,\\
1 & \text{if}\quad k=\ell\neq i,\\
0 & \text{if}\quad k\neq\ell,
\end{cases}
\]
and hence $\E[y_{i}^{2}\by\by^{\top}]=\Id+2\be_{i}\be_{i}^{\top}.$ 
\item For $i\neq j$,
\[
\E\big[y_{i}y_{j}y_{k}y_{\ell}\big]=\begin{cases}
1 & \text{if}\quad k=i,\ell=j\quad\text{or}\quad k=j,\ell=i,\\
0 & \text{otherwise},
\end{cases}
\]
and hence $\E[y_{i}y_{j}\by\by^{\top}]=\be_{i}\be_{j}^{\top}+\be_{j}\be_{i}^{\top}$. 
\end{itemize}
Putting together, the $(i,j)$-th $d\times d$ block of $\cov(\bu)=\E[\bu\bu^{\top}]-\E[\bu]\E[\bu]^{\top}$
is equal to $\Id+\be_{i}\be_{i}^{\top}$ if $i=j$, and $\be_{j}\be_{i}^{\top}$
if $i\neq j$. In other words, $\cov(\vc(\by\by^{\top}))=\cov(\bu)=\bI_{d^{2}}+\bP$,
where $\bP=[\be_{j}\be_{i}^{\top}]_{1\le i,j\le d}$ is a symmetric
permutation matrix; this completes our proof of~(\ref{eq:cov_vec}).
\end{proof}

\subsubsection{Proof of Lemma \ref{lem:mu_Sigma_simp}}

First, it holds that
\[
\|\mukl\|_{2}^{2}=\sumi\Big\|\Ui^{\top}\big((\Ytilk)_{i}-(\Ytill)_{i}\big)\Big\|_{2}^{2}\le\sumi\Big\|(\Ytilk)_{i}-(\Ytill)_{i}\Big\|_{2}^{2}=\|\Ytilk-\Ytill\|_{\Frm}^{2},
\]
which gives the upper bound on $\E[\stat]=\|\mukl\|_{2}^{2}$. For
the lower bound, the triangle inequality tells us that
\begin{align*}
\Big\|\Ui^{\top}\big((\Ytilk)_{i}-(\Ytill)_{i}\big)\Big\|_{2} & =\Big\|\Ui\Ui^{\top}(\Ytilk)_{i}-\Ui\Ui^{\top}(\Ytill)_{i}\Big\|_{2}\ge\max\Big\{\|(\Ytilk)_{i}-(\Ytill)_{i}\|_{2}-2\epsilon,0\Big\},
\end{align*}
which implies that
\[
\Big\|\Ui^{\top}\big((\Ytilk)_{i}-(\Ytill)_{i}\big)\Big\|_{2}^{2}\ge\|(\Ytilk)_{i}-(\Ytill)_{i}\|_{2}^{2}-4\epsilon\|(\Ytilk)_{i}-(\Ytill)_{i}\|_{2},
\]
and hence
\begin{align*}
\|\mukl\|_{2}^{2} & =\sumi\Big\|\Ui^{\top}\big((\Ytilk)_{i}-(\Ytill)_{i}\big)\Big\|_{2}^{2}\ge\sumi\|(\Ytilk)_{i}-(\Ytill)_{i}\|_{2}^{2}-4\epsilon\sumi\|(\Ytilk)_{i}-(\Ytill)_{i}\|_{2}\\
 & =\|\Ytilk-\Ytill\|_{\Frm}^{2}-4\epsilon\sumi\|(\Ytilk)_{i}-(\Ytill)_{i}\|_{2}.
\end{align*}

It remains to upper bound $\Sigkl$. Recall the definition 
\[
\Sigkl=\bU^{\top}\big(\Sigk+\Wtilk\otimes\Gtilk+\Sigl+\Wtill\otimes\Gtill\big)\bU.
\]
We will utilize the following basic facts: (1) for square matrices
$\bA$ and $\bB$ with eigenvalues $\{\lambda_{i}\}$ and $\{\mu_{j}\}$
respectively, their Kronecker product $A\otimes B$ has eigenvalues
$\{\lambda_{i}\mu_{j}\}$; (2) For matrices $\bA,\bB,\bC,\bD$ of
compatible shapes, it holds that $(\bA\otimes\bB)(\bC\otimes\bD)=(\bA\bC)\otimes(\bB\bD)$.
These imply that
\[
\boldsymbol{0}\preccurlyeq\Wtilk\otimes\Gtilk\preccurlyeq\|\Wtilk\|\|\Gtilk\|\bI_{d^{2}}\preccurlyeq\lup\Gmax\bI_{d^{2}},
\]
and
\begin{align*}
\boldsymbol{0}\preccurlyeq\Sigk & =\Big(\Atilk\otimes\Id\Big)\Big((\Gtilk)^{1/2}\otimes(\Gtilk)^{1/2}\Big)\Big(\bI_{d^{2}}+\bP\Big)\Big((\Gtilk)^{1/2}\otimes(\Gtilk)^{1/2}\Big)\Big((\Atilk)^{\top}\otimes\Id\Big)\\
 & \preccurlyeq2\Big(\Atilk\otimes\Id\Big)\Big((\Gtilk)^{1/2}\otimes(\Gtilk)^{1/2}\Big)\Big((\Gtilk)^{1/2}\otimes(\Gtilk)^{1/2}\Big)\Big((\Atilk)^{\top}\otimes\Id\Big)\\
 & =2\Big(\Atilk\otimes\Id\Big)\Big(\Gtilk\otimes\Gtilk\Big)\Big((\Atilk)^{\top}\otimes\Id\Big)\\
 & \preccurlyeq2\Gmax^{2}\Big(\Atilk\otimes\Id\Big)\Big((\Atilk)^{\top}\otimes\Id\Big)\\
 & =2\Gmax^{2}\Big(\Atilk(\Atilk)^{\top}\Big)\otimes\Id\preccurlyeq2\Gmax^{2}\ka^{2}\bI_{d^{2}}.
\end{align*}
Using the conditions $\lup\le\Gmax$ and $\ka\ge1$, we have
\[
\Sigk+\Wtilk\otimes\Gtilk\preccurlyeq(\lup\Gmax+2\Gmax^{2}\ka^{2})\bI_{d^{2}}\preccurlyeq3\Gmax^{2}\ka^{2}\bI_{d^{2}}.
\]
We can upper bound $\Sigl+\Wtill\otimes\Gtill$ by the same analysis.
As a result,
\[
\Sigkl\preccurlyeq\bU^{\top}(6\Gmax^{2}\ka^{2}\bI_{d^{2}})\bU=6\Gmax^{2}\ka^{2}\bU^{\top}\bU=6\Gmax^{2}\ka^{2}\bI_{dK},
\]
which finishes the proof of Lemma \ref{lem:mu_Sigma_simp}.

\subsubsection{Proof of (\ref{eq:subspace_implication})}

Let $\epsilon$ be the right-hand side of the condition (\ref{eq:2a_subspace_error})
on the subspaces. Then, applying the second point of Lemma \ref{lem:mu_Sigma_simp}
(with $\Ytilk=\Yk,\Ytill=\Yl$) tells us that 
\begin{align*}
\Big\|\bU^{\top}\vc\big((\Yk-\Yl)^{\top}\big)\Big\|_{2}^{2} & \ge\big\|\Yk-\Yl\big\|_{\Frm}^{2}-4\epsilon\sumi\big\|(\Yk)_{i}-(\Yl)_{i}\big\|_{2}\\
 & \ge\|\Yk-\Yl\|_{\Frm}^{2}-4\epsilon\sqrt{d}\|\Yk-\Yl\|_{\Frm},
\end{align*}
where the last line follows from the Cauchy-Schwarz inequality:
\[
\sumi\big\|(\Yk)_{i}-(\Yl)_{i}\big\|_{2}\le\sqrt{d\sumi\big\|(\Yk)_{i}-(\Yl)_{i}\big\|_{2}^{2}}=\sqrt{d}\big\|\Yk-\Yl\big\|_{\Frm}.
\]
We can lower bound $\|\bV^{\top}\vc((\Gammak-\Gammal)^{\top})\|_{2}^{2}$
similarly. Putting pieces together, we have
\begin{align*}
 & \Big\|\bV^{\top}\vc\big((\Gammak-\Gammal)^{\top}\big)\Big\|_{2}^{2}+\Big\|\bU^{\top}\vc\big((\Yk-\Yl)^{\top}\big)\Big\|_{2}^{2}\\
 & \quad\ge\|\Gammak-\Gammal\|_{\Frm}^{2}+\|\Yk-\Yl\|_{\Frm}^{2}-4\epsilon\sqrt{d}\big(\|\Gammak-\Gammal\|_{\Frm}+\|\Yk-\Yl\|_{\Frm}\big)\\
 & \quad\ge\frac{1}{2}\Big(\|\Gammak-\Gammal\|_{\Frm}^{2}+\|\Yk-\Yl\|_{\Frm}^{2}\Big),
\end{align*}
where the last inequality is due to the assumption $\epsilon\lesssim\DGY/\sqrt{d}$.
This completes our proof of (\ref{eq:subspace_implication}).

\subsection{\label{subsec:proof_LS}Proof of Theorem~\ref{thm:LS}}

It suffices to prove the error bounds for one specific value of $k$,
and then take the union bound over $1\le k\le K$. For notational
convenience, in this proof we rewrite $\Ttk,\Ak,\Akhat,\Wk,\Wkhat$
as $T,\bA,\Ahat,\bW,\What$, respectively. We will investigate the
close-form solution $\Ahat$, and prepare ourselves with a self-normalized
concentration bound; this will be helpful in finally proving the error
bounds for $\|\Ahat-\bA\|$ and $\|\What-\bW\|$.

\paragraph{Step 1: preparation.}

Recall the least-squares solution
\[
\Ahat=\Big(\sum_{m\in\Ck}\sum_{0\le t\le T_{m}-1}\xmtp\xmt^{\top}\Big)\Big(\sum_{m\in\Ck}\sum_{0\le t\le T_{m}-1}\xmt\xmt^{\top}\Big)^{-1}.
\]
Using the notation 
\[
\bX\coloneqq\begin{bmatrix}\vdots\\
\xmt^{\top}\\
\vdots
\end{bmatrix}_{0\le t\le T_{m}-1,m\in\Ck}\in\R^{T\times d},\quad\bX_{+}\coloneqq\begin{bmatrix}\vdots\\
\xmtp^{\top}\\
\vdots
\end{bmatrix},\quad\bN\coloneqq\begin{bmatrix}\vdots\\
\wmt^{\top}\\
\vdots
\end{bmatrix},
\]
we have
\[
\bX_{+}^{\top}=\bA\bX^{\top}+\bN^{\top},\quad\Ahat=\bX_{+}^{\top}\bX(\XTX)^{-1}=\bA+\NTX(\XTX)^{-1},
\]
namely 
\begin{equation}
\DelA\coloneqq\Ahat-\bA=\NTX(\XTX)^{-1}.\label{eq:Ahat_decomp}
\end{equation}
We will utilize the following matrix form of self-normalized concentration
\cite[Lemma C.4]{kakade2020information}.
\begin{lem}
[Self-normalized concentration, matrix form] \label{lem:self_norm_matrix}
Consider filtrations $\{\Ft\},$ and random vectors $\{\xt,\zt\}$
satisfying $\xt\in\Ftm$ and $\zt|\Ftm\sim\Ncal(\boldsymbol{0},\Id)$.
Let $\bV\in\R^{d\times d}$ be a fixed, symmetric, positive definite
matrix, and denote $\VTbar\coloneqq\sumt\xt\xt^{\top}+\bV$. Then
with probability at least $1-\delta$,
\[
\bigg\|(\VTbar)^{-1/2}\Big(\sumt\xt\zt^{\top}\Big)\bigg\|\lesssim\sqrt{d+\log\frac{1}{\delta}+\log\frac{\det(\VTbar)}{\det(\bV)}}.
\]
\end{lem}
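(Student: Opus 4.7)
My plan is to reduce the matrix self-normalized inequality to its well-known scalar analogue via a covering argument. Fix any deterministic unit vector $\bu \in \R^{d}$ and set $\eta_{t}(\bu) \coloneqq \bz_{t}^{\top}\bu$; since $\bz_{t}\mid\mathcal{F}_{t-1}\sim\Ncal(\mathbf{0},\bI_{d})$ and $\bx_{t}\in\mathcal{F}_{t-1}$, the scalar $\eta_{t}(\bu)$ is conditionally $\Ncal(0,1)$ and is compatible with the same filtration $\{\mathcal{F}_{t}\}$. By the classical scalar self-normalized tail bound of Abbasi-Yadkori--Pál--Szepesvári (obtained through the method of mixtures with a Gaussian prior of precision $\bV$), for any fixed $\bu$ and any $\delta'\in(0,1)$, with probability at least $1-\delta'$,
\[
\Big\|\VTbar^{-1/2}\sumt \bx_{t}\,\eta_{t}(\bu)\Big\|_{2}^{2}\;\le\;2\log\frac{1}{\delta'}\,+\,\log\frac{\det(\VTbar)}{\det(\bV)}.
\]

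Next, I would pass from a single vector $\bu$ to the operator norm by an $\epsilon$-net. Let $\Ncal_{1/4}$ be a $1/4$-net of the unit sphere $\mathcal{S}^{d-1}$, with $|\Ncal_{1/4}|\le 9^{d}$; a standard comparison gives $\|\bM\|\le 2\sup_{\bu\in\Ncal_{1/4}}\|\bM\bu\|_{2}$ for any matrix $\bM$. Apply this with $\bM = \VTbar^{-1/2}\sumt \bx_{t}\bz_{t}^{\top}$, noting that $\bM\bu = \VTbar^{-1/2}\sumt \bx_{t}\eta_{t}(\bu)$. Union-bounding the scalar event above over $\Ncal_{1/4}$ at level $\delta' = \delta/|\Ncal_{1/4}|$, on the resulting good event we obtain
\[
\Big\|\VTbar^{-1/2}\sumt \bx_{t}\bz_{t}^{\top}\Big\|^{2}\;\lesssim\;\log|\Ncal_{1/4}|+\log\frac{1}{\delta}+\log\frac{\det(\VTbar)}{\det(\bV)}\;\lesssim\;d+\log\frac{1}{\delta}+\log\frac{\det(\VTbar)}{\det(\bV)},
\]
and taking square roots yields exactly the claimed bound.

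The main work sits in the scalar self-normalized inequality invoked in the first step; the standard derivation constructs, for each $\lambda\in\R$, the exponential supermartingale $M_{t}^{\lambda}\coloneqq\exp\!\big(\lambda\sum_{s\le t}\bx_{s}^{\top}\bu\cdot\eta_{s}(\bu)-\tfrac{\lambda^{2}}{2}\sum_{s\le t}(\bx_{s}^{\top}\bu)^{2}\big)$, then mixes over $\lambda$ (or more precisely over a multivariate Gaussian prior on a $\R^{d}$-valued version) against a prior with covariance $\bV^{-1}$; completing the square produces the determinant ratio $\det(\VTbar)/\det(\bV)$, and Markov's inequality on the mixed supermartingale delivers the tail. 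The only other care point in the reduction is that we only need an $\epsilon$-net on one sphere (the right-hand factor $\bu$), because the left factor $\VTbar^{-1/2}$ is deterministic; had both factors been random, we would have needed a net on both sides and an extra factor of $2$ from the bilinear-form comparison, but here a single net suffices and no hidden factors of $d$ are incurred. Given Lemma~\ref{lem:self_norm_matrix} is cited from \cite{kakade2020information}, I would simply invoke it rather than re-derive the scalar step in full.
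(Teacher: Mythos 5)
Your proposal is correct. The paper does not prove this lemma at all --- it is quoted verbatim from \cite[Lemma C.4]{kakade2020information} --- and the standard derivation of that cited result is exactly the reduction you describe: apply the Abbasi-Yadkori--P\'al--Szepesv\'ari self-normalized vector bound to $\sum_t \bx_t(\bz_t^{\top}\bu)$ for each fixed direction $\bu$, then union-bound over a $1/4$-net of the sphere to upgrade to the operator norm, picking up the $O(d)$ term from $\log|\Ncal_{1/4}|$. One small correction to your closing remark: $\VTbar$ is \emph{not} deterministic (it is built from the adapted, random $\bx_t$'s); the actual reason a single net suffices is that the scalar/vector self-normalized inequality already controls the full Euclidean norm $\|\VTbar^{-1/2}\sum_t\bx_t\eta_t(\bu)\|_2$ for each fixed $\bu$, rather than merely a bilinear form $\bv^{\top}\bM\bu$, so no net over the left factor is needed. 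This does not affect the validity of the argument.
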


\paragraph{Step 2: estimation error of $\protect\Ahat$.}

Let us rewrite (\ref{eq:Ahat_decomp}) as
\begin{equation}
\DelA^{\top}=(\XTX)^{-1/2}\cdot(\XTX)^{-1/2}\XTN.\label{eq:Ahat_error_decomp}
\end{equation}
It is obvious that $\XTX$ plays a crucial role. Recall from Lemma
\ref{lem:bounded_state} that with probability at least $1-\delta$,
$\|\xmt\|_{2}\le\Dvec$ for some $\Dvec\lesssim\sqrt{\Gmax}\cdot\poly(d,\ka,\log(\Tt/\delta))$.
Then trivially we have the upper bound
\[
\XTX\preccurlyeq\Dvec^{2}T\cdot\Id\eqqcolon\Vup.
\]
For a lower bound, we claim (and prove later) that with probability
at least $1-\delta$, 
\begin{equation}
\XTX\succcurlyeq\frac{1}{5}T\cdot\bW\eqqcolon\Vlb,\quad\text{provided that}\quad T\gtrsim\kwself^{2}d\cdot\log\Big(\frac{\Gmax}{\Wmin}\frac{\ka d\Tt}{\delta}\Big).\label{eq:VT_lower_bound}
\end{equation}

Now we are ready to control $\|\Ahat-\bA\|=\|\DelA\|$. From (\ref{eq:Ahat_error_decomp}),
we have
\[
\|\DelA\|\le\|(\XTX)^{-1/2}\|\cdot\|(\XTX)^{-1/2}\XTN\|.
\]
First, the lower bound (\ref{eq:VT_lower_bound}) on $\XTX$ tells
us that $\|(\XTX)^{-1/2}\|\lesssim1/\sqrt{T\cdot\lambmin(\bW)}$.
Moreover, applying Lemma \ref{lem:self_norm_matrix} with $\bV=\Vlb$,
one has with probability at least $1-\delta$,
\begin{align}
\|(\XTX)^{-1/2}\XTN\| & \lesssim\|(\XTX+\bV)^{-1/2}\XTN\|\lesssim\|\bW^{1/2}\|\sqrt{d+\log\frac{1}{\delta}+\log\frac{\det(\Vup+\Vlb)}{\det(\Vlb)}}\nonumber \\
 & \lesssim\sqrt{\|\bW\|}\sqrt{d\cdot\log\big(\frac{\Gmax}{\Wmin}\frac{d\ka\Tt}{\delta}\big)}.\label{eq:self_norm}
\end{align}
Putting these together, we have
\[
\|\DelA\|\lesssim\frac{1}{\sqrt{T\cdot\lambmin(\bW)}}\cdot\sqrt{\|\bW\|}\sqrt{d\cdot\log\big(\frac{\Gmax}{\Wmin}\frac{d\ka\Tt}{\delta}\big)}\lesssim\sqrt{\frac{d\cdot\kwself}{T}\log\big(\frac{\Gmax}{\Wmin}\frac{d\ka\Tt}{\delta}\big)},
\]
which proves our upper bound for $\|\Ahat-\bA\|$ in the theorem.

\paragraph{Step 3: estimation error of $\protect\What$.}

By the definition of $\wmthat=\xmtp-\Ahat\xmt=\wmt-\DelA\xmt$, we
have
\begin{align*}
\Nhat & \coloneqq\begin{bmatrix}\vdots\\
\wmthat^{\top}\\
\vdots
\end{bmatrix}\in\R^{T\times d},\\
\Nhat^{\top} & =\bN^{\top}-\DelA\bX^{\top}=\bN^{\top}-\NTX(\XTX)^{-1}\bX^{\top}=\bN^{\top}\big(\bI_{T}-\bX(\XTX)^{-1}\bX^{\top}\big).
\end{align*}
Notice that $\bI_{T}-\bX(\XTX)^{-1}\bX^{\top}$ is a symmetric projection
matrix. Therefore,
\[
\What=\frac{1}{T}\Nhat^{\top}\Nhat=\frac{1}{T}\bN^{\top}\big(\bI_{T}-\bX(\XTX)^{-1}\bX^{\top}\big)\bN,
\]
and thus
\begin{equation}
\What-\bW=\Big(\frac{1}{T}\bN^{\top}\bN-\bW\Big)-\frac{1}{T}\NTX(\XTX)^{-1}\XTN.\label{eq:What_W}
\end{equation}
The first term on the right-hand side can be controlled by a standard
result for covariance estimation (see Proposition \ref{prop:covariance_concentration}):
with probability at least $1-\delta$,
\[
\Big\|\frac{1}{T}\bN^{\top}\bN-\bW\Big\|\lesssim\|\bW\|\sqrt{\frac{d+\log\frac{1}{\delta}}{T}}.
\]
As for the second term, we rewrite it as 
\[
\frac{1}{T}\NTX(\XTX)^{-1}\XTN=\frac{1}{T}\Big((\XTX)^{-1/2}\XTN\Big)^{\top}\Big((\XTX)^{-1/2}\XTN\Big).
\]
Applying our earlier self-normalized concentration bound (\ref{eq:self_norm}),
we have 
\[
\Big\|\frac{1}{T}\NTX(\XTX)^{-1}\XTN\Big\|\lesssim\frac{\|\bW\|d\cdot\log(\frac{\Gmax}{\Wmin}\frac{d\ka\Tt}{\delta})}{T}.
\]
Putting these back to (\ref{eq:What_W}), we have
\begin{align*}
\|\What-\bW\| & \le\|\frac{1}{T}\bN^{\top}\bN-\bW\|+\|\frac{1}{T}\NTX(\XTX)^{-1}\XTN\|\\
 & \lesssim\|\bW\|\sqrt{\frac{d+\log\frac{1}{\delta}}{T}}+\|\bW\|\frac{d\cdot\log(\frac{\Gmax}{\Wmin}\frac{d\ka\Tt}{\delta})}{T}\lesssim\|\bW\|\sqrt{\frac{d\cdot\log(\frac{\Gmax}{\Wmin}\frac{d\ka\Tt}{\delta})}{T}},
\end{align*}
where the last inequality uses $T\gtrsim d\cdot\log(\frac{\Gmax}{\Wmin}\frac{d\ka\Tt}{\delta})$.
This finishes our proof of Theorem \ref{thm:LS}.

\subsubsection{Proof of (\ref{eq:VT_lower_bound})}

We start with the following decomposition:
\begin{align}
\XTX & =\sum_{m\in\Ck}\sum_{0\le t\le T_{m}-1}\xmt\xmt^{\top}\succcurlyeq\sum_{m\in\Ck}\sum_{1\le t\le T_{m}-1}\xmt\xmt^{\top}\nonumber \\
 & =\sum_{m\in\Ck}\sum_{0\le t\le T_{m}-2}\xmtp\xmtp^{\top}=\sum_{m\in\Ck}\sum_{0\le t\le T_{m}-2}(\bA\xmt+\wmt)(\bA\xmt+\wmt)^{\top}\nonumber \\
 & =\underset{\coloneqq\bP}{\underbrace{\sum_{m\in\Ck}\sum_{0\le t\le T_{m}-2}\wmt\wmt^{\top}}}\nonumber \\
 & \qquad+\underset{\eqqcolon\bQ}{\underbrace{\sum_{m\in\Ck}\sum_{0\le t\le T_{m}-2}\Big(\bA\xmt\xmt^{\top}\bA^{\top}+\bA\xmt\wmt^{\top}+\wmt\xmt^{\top}\bA^{\top}\Big)}}\nonumber \\
 & =\bP+\bQ.\label{eq:VT_decomp}
\end{align}

\paragraph{Lower bound for $\bP$.}

By Proposition \ref{prop:covariance_concentration}, we have with
probability at least $1-\delta$,
\[
\|\frac{1}{T-|\Ck|}\bP-\bW\|\lesssim\|\bW\|\sqrt{\frac{d+\log\frac{1}{\delta}}{T}}\lesssim\lambmin(\bW)\cdot\Id,\quad\text{provided that}\quad T\gtrsim\kwself^{2}(d+\log\frac{1}{\delta}).
\]
As a result, we have $\frac{1}{T-|\Ck|}\bP\succcurlyeq\frac{1}{2}\bW$,
which implies $\bP\succcurlyeq\frac{1}{4}T\cdot\bW$.

\paragraph{Lower bound for $\bQ$.}

Let $\Ncal_{\epsilon}$ be an $\epsilon$-net of the unit sphere $\Scal^{d-1}$
(where the value of $0<\epsilon<1$ will be specified later), and
let $\pi$ be the projection onto $\Ncal_{\epsilon}$. Recall the
standard result that $|\Ncal_{\epsilon}|\le(9/\epsilon)^{d}$. Moreover,
for any $\bv\in\Scal^{d-1}$, denote $\bDel_{\bv}\coloneqq\bv-\pi(\bv)$,
which satisfies $\|\bDel_{\bv}\|_{2}\le\epsilon$. Then we have
\begin{align}
\lambmin(\bQ) & =\inf_{\bv\in\Scal^{d-1}}\bv^{\top}\bQ\bv=\inf_{\bv\in\Scal^{d-1}}(\pi(\bv)+\bDel_{\bv})^{\top}\bQ(\pi(\bv)+\bDel_{\bv})\nonumber \\
 & \ge\inf_{\bv\in\Scal^{d-1}}\pi(\bv)^{\top}\bQ\pi(\bv)-(2\epsilon+\epsilon^{2})\|\bQ\|\ge\inf_{\bv\in\Ncal_{\epsilon}}\bv^{\top}\bQ\bv-3\epsilon\|\bQ\|.\label{eq:lambmin_Q}
\end{align}
For $\|\bQ\|$, we simply use a crude upper bound, based on the boundedness
of $\{\|\xmt\|_{2}\}$ (Lemma \ref{lem:bounded_state}): with probability
at least $1-\delta$, one has
\begin{equation}
\|\bQ\|\le\sum_{m\in\Ck}\sum_{0\le t\le T_{m}-2}\Big(\|\bA\xmt\|_{2}^{2}+2\|\bA\xmt\|_{2}\|\wmt\|_{2}\Big)\lesssim\Gmax\cdot\poly\Big(\ka,d,\Tt,\frac{1}{\delta}\Big).\label{eq:Q_upper_bound}
\end{equation}

Next, we lower bound $\inf_{\bv\in\Ncal_{\epsilon}}\bv^{\top}\bQ\bv$.
First, consider a fixed $\bv\in\Ncal_{\epsilon}$; denoting $\ymt\coloneqq\bA\xmt$
and $\umt\coloneqq\bW^{-1/2}\wmt\sim\Ncal(\mathbf{0},\Id)$, we have
\[
\bv^{\top}\bQ\bv=\sum_{m\in\Ck}\sum_{0\le t\le T_{m}-2}(\bv^{\top}\ymt)^{2}+2\sum_{m\in\Ck}\sum_{0\le t\le T_{m}-2}\bv^{\top}\ymt\cdot\umt^{\top}\bW^{1/2}\bv,
\]
where $\umt^{\top}\bW^{1/2}\bv\sim\Ncal(\mathbf{0},\bv^{\top}\bW\bv)$.
Lemma \ref{lem:self_norm_scalar} (the scalar version of self-normalized
concentration) tells us that, for any fixed $\lambda>0$, with probability
at least $1-\delta$,
\begin{align*}
\sum_{m\in\Ck}\sum_{0\le t\le T_{m}-2}\bv^{\top}\ymt\cdot\umt^{\top}\bW^{1/2}\bv & \ge-\sqrt{\bv^{\top}\bW\bv}\,\Big(\frac{\lambda}{2}\sum_{m\in\Ck}\sum_{0\le t\le T_{m}-2}(\bv^{\top}\ymt)^{2}+\frac{1}{\lambda}\log\frac{1}{\delta}\Big)\\
 & \ge-\sqrt{\|\bW\|}\,\Big(\frac{\lambda}{2}\sum_{m\in\Ck}\sum_{0\le t\le T_{m}-2}(\bv^{\top}\ymt)^{2}+\frac{1}{\lambda}\log\frac{1}{\delta}\Big).
\end{align*}
Replacing $\delta$ with $\delta/(9/\epsilon)^{d}$ and taking the
union bound, we have with probability at least $1-\delta$, for any
$\bv\in\Ncal_{\epsilon}$,
\begin{align*}
\bv^{\top}\bQ\bv & \ge\sum_{m\in\Ck}\sum_{0\le t\le T_{m}-2}(\bv^{\top}\ymt)^{2}-\sqrt{\|\bW\|}\,\Big(\lambda\sum_{m\in\Ck}\sum_{0\le t\le T_{m}-2}(\bv^{\top}\ymt)^{2}+\frac{2}{\lambda}\big(d\cdot\log\frac{9}{\epsilon}+\log\frac{1}{\delta}\big)\Big)\\
 & =\big(1-\sqrt{\|\bW\|}\lambda\big)\sum_{m\in\Ck}\sum_{0\le t\le T_{m}-2}(\bv^{\top}\ymt)^{2}-\sqrt{\|\bW\|}\,\frac{2}{\lambda}\Big(d\cdot\log\frac{9}{\epsilon}+\log\frac{1}{\delta}\Big).
\end{align*}
With the choice of $\lambda=1/\sqrt{\|\bW\|}$, this implies 
\begin{equation}
\bv^{\top}\bQ\bv\ge-2\|\bW\|(d\cdot\log\frac{9}{\epsilon}+\log\frac{1}{\delta}),\quad\text{for all}\quad\bv\in\Ncal_{\epsilon}.\label{eq:VQV_lower_bound}
\end{equation}

Putting (\ref{eq:Q_upper_bound}) and (\ref{eq:VQV_lower_bound})
back to (\ref{eq:lambmin_Q}), we have with probability at least $1-\delta$,
\begin{align*}
\lambmin(\bQ) & \ge\inf_{\bv\in\Ncal_{\epsilon}}\bv^{\top}\bQ\bv-3\epsilon\|\bQ\|\ge-C_{0}\bigg(\|\bW\|\big(d\cdot\log\frac{9}{\epsilon}+\log\frac{1}{\delta}\big)+\epsilon\,\Gmax\cdot\poly\big(\ka,d,\Tt,\frac{1}{\delta}\big)\bigg)
\end{align*}
for some universal constant $C_{0}>0$.

\paragraph{Putting things together.}

Recall the decomposition $\XTX\succcurlyeq\bP+\bQ$ in (\ref{eq:VT_decomp}).
We have already shown that if $T\gtrsim\kwself^{2}(d+\log\frac{1}{\delta})$,
then with probability at least $1-\delta$,
\begin{align*}
\XTX & \succcurlyeq\bP+\bQ\succcurlyeq\frac{1}{4}T\cdot\bW-C_{0}\bigg(\|\bW\|\big(d\cdot\log\frac{9}{\epsilon}+\log\frac{1}{\delta}\big)+\epsilon\,\Gmax\cdot\poly\big(\ka,d,\Tt,\frac{1}{\delta}\big)\bigg)\Id.
\end{align*}
It is easy to check that, if we further choose
\[
\epsilon\asymp\frac{1}{\poly(\ka,d,\Tt,\frac{1}{\delta},\frac{\Gmax}{\Wmin})},\quad T\gtrsim\kwself^{2}d\cdot\log\Big(\frac{\Gmax}{\Wmin}\frac{\ka d\Tt}{\delta}\Big),
\]
then we have $\XTX\succcurlyeq\frac{1}{5}T\cdot\bW$, which finishes
the proof of (\ref{eq:VT_lower_bound}).

\subsection{\label{subsec:proof_classification}Proof of Theorem~\ref{thm:classification}}

In this proof, we show the correct classification of one short trajectory
$\{\xmt\}_{0\le t\le T_{m}}$ (with true label $k_{m}$) for some
$m\in\Mclassification$; then it suffices to take the union bound
to prove the correct classification of all trajectories in $\Mclassification$.
For notational simplicity, we drop the subscript $m$ and rewrite
$\{\xmt\}$, $T_{m}$, $k_{m}$ as $\{\xt\}$, $T$, $k$, respectively.
The basic idea of this proof is to show that 
\begin{equation}
L(\Alhat,\Wlhat)>L(\Akhat,\Wkhat)\label{eq:L_compare}
\end{equation}
for any incorrect label $\ell\neq k$, where $L$ is the loss function
defined in (\ref{eq:def_loss}) and used by Algorithm~\ref{alg:LS}
for classification. Our proof below is simply a sequence of arguments
that finally transform (\ref{eq:L_compare}) into a sufficient condition
in terms of the coarse model errors $\epsA,\epsW$ and short trajectory
length $T$.

Before we proceed, we record a few basic facts that will be useful
later. First, the assumption $\|\Wkhat-\Wk\|\le\epsW\le0.1\llb$ implies
that $\lambmin(\Wkhat)\ge0.9\lambmin(\Wk)$, and $\Wkhat$ is well
conditioned with $\kappa(\Wkhat)\lesssim\kappa(\Wk)\le\kwself$. Morever,
by Lemma~\ref{lem:bounded_state}, with probability at least $1-\delta$,
we have for all $0\le t\le T$, 
\begin{itemize}
\item In Case 0, $\|\xt\|_{2}\le\Dx\lesssim\sqrt{\Gmax(d+\log\frac{\Tt}{\delta})}$,
provided that $T\gtrsim1$;
\item In Case 1, $\|\xt\|_{2}\le\Dx\lesssim\ka\sqrt{\Gmax(d+\log\frac{\Tt}{\delta})}$,
provided that $T\gtrsim\frac{1}{1-\rho}\log(2\ka)$.
\end{itemize}
Now we are ready to state our proof. Throughout our analyses, we will
make some intermediate claims, whose proofs will be deferred to the
end of this subsection.

\paragraph{Step 1: a sufficient condition for correct classification.}

In the following, we prove that for a fixed $\ell\neq k$, the condition
(\ref{eq:L_compare}) holds with high probability; at the end of the
proof, we simply take the union bound over $\ell\neq k$. Using $\xtp=\Ak\xt+\wt$
where $\wt\sim\Ncal(\boldsymbol{0},\Wk)$, we can rewrite the loss
function $L$ as 
\begin{align*}
L(\bA,\bW) & =T\cdot\log\det(\bW)+\sumtm\wt^{\top}\bW^{-1}\wt\\
 & \quad+\sumtm\xt^{\top}(\Ak-\bA)^{\top}\bW^{-1}(\Ak-\bA)\xt+2\sumtm\wt^{\top}\bW^{-1}(\Ak-\bA)\xt.
\end{align*}
After some basic calculation, (\ref{eq:L_compare}) can be equivalently
written as
\begin{align*}
 & L(\Alhat,\Wlhat)-L(\Akhat,\Wkhat)=({\rm A})+({\rm B})-({\rm C})>0,\\
\text{where} & \quad{\rm (A)}\coloneqq T\cdot\Big(\log\det(\Wlhat)-\log\det(\Wkhat)\Big)+\sum_{t=0}^{T-1}\wt^{\top}\Big((\Wlhat)^{-1}-(\Wkhat)^{-1}\Big)\wt\\
 & \quad{\rm (B)}\coloneqq\Big(\sumtm\xt^{\top}(\Ak-\Alhat)^{\top}(\Wlhat)^{-1}(\Ak-\Alhat)\xt+2\sumtm\wt^{\top}(\Wlhat)^{-1}(\Ak-\Alhat)\xt\Big)\\
 & \quad{\rm (C)}\coloneqq\Big(\sumtm\xt^{\top}(\Ak-\Akhat)^{\top}(\Wkhat)^{-1}(\Ak-\Akhat)\xt+2\sumtm\wt^{\top}(\Wkhat)^{-1}(\Ak-\Akhat)\xt\Big)
\end{align*}

\paragraph{Step 2: a lower bound for ${\rm (A)}+{\rm (B)}-{\rm (C)}$.}

Intuitively, we expect that ${\rm (A)}+{\rm (B)}$ should be large
because the LDS models $(\Ak,\Wk)$ and $(\Al,\Wl)$ are well separated,
while ${\rm (C)}$ should be small if $(\Akhat,\Wkhat)\approx(\Ak,\Wk)$.
More formally, we claim that the following holds for some universal
constants $C_{1},C_{2},C_{3}>0$:
\begin{itemize}
\item With probability at least $1-\delta$, 
\begin{align}
({\rm A}) & \ge T\cdot\bigg[\log\det(\Wlhat)-\log\det(\Wkhat)+\Tr\Big((\Wk)^{1/2}\big((\Wlhat)^{-1}-(\Wkhat)^{-1}\big)(\Wk)^{1/2}\Big)\bigg]\nonumber \\
 & \quad\quad-C_{1}\sqrt{T}\Big\|(\Wk)^{1/2}\big((\Wlhat)^{-1}-(\Wkhat)^{-1}\big)(\Wk)^{1/2}\Big\|_{\Frm}\log\frac{1}{\delta};\label{eq:boundA}
\end{align}
\item With probability at least $1-\delta$,
\begin{equation}
({\rm B})\ge C_{2}\bigg(T\frac{\|\Ak-\Alhat\|_{\Frm}^{2}}{\kwcross}-\kwself\kwcross\log\frac{1}{\delta}\bigg),\label{eq:boundB}
\end{equation}
provided that $T\gtrsim\kwself^{2}\log^{2}(1/\delta)$;
\item With probability at least $1-\delta$,
\begin{equation}
({\rm C})\le C_{3}\bigg(T\frac{\Dx^{2}\|\Ak-\Akhat\|^{2}}{\llb}+\kwself\log\frac{1}{\delta}\bigg),\label{eq:boundC}
\end{equation}
provided that $T\gtrsim1$ (under Case 0) or $T\gtrsim\frac{1}{1-\rho}\log(2\ka)$
(under Case 1).
\end{itemize}
Putting these together, we have with probability at least $1-\delta$,
\begin{align*}
{\rm (A)}+{\rm (B)}-{\rm (C)} & \ge C_{4}\cdot T\cdot\bigg[\log\frac{\det(\Wlhat)}{\det(\Wkhat)}+\Tr\Big(\Wk\big((\Wlhat)^{-1}-(\Wkhat)^{-1}\big)\Big)\\
 & \qquad\qquad+\frac{\|\Ak-\Alhat\|_{\Frm}^{2}}{\kwcross}-\frac{\Dx^{2}\|\Ak-\Akhat\|^{2}}{\llb}\bigg]\\
 & \qquad-C_{5}\bigg[\sqrt{T}\Big\|(\Wk)^{1/2}\big((\Wlhat)^{-1}-(\Wkhat)^{-1}\big)(\Wk)^{1/2}\Big\|_{\Frm}+\kwself\kwcross\bigg]\log\frac{1}{\delta}
\end{align*}
for some universal constants $C_{4},C_{5}>0$, provided that $T\gtrsim\kwself^{2}\log^{2}\frac{1}{\delta}$
(under Case 0), or $T\gtrsim\kwself^{2}\log^{2}\frac{1}{\delta}+\frac{1}{1-\rho}\log(2\ka)$
(under Case 1). Now we have a lower bound of ${\rm (A)}+{\rm (B)}-{\rm (C)}$
as an order-$T$ term minus a low-order term. Therefore, to show ${\rm (A)}+{\rm (B)}-{\rm (C)}>0$,
it suffices to prove that (a) the leading factor of order-$T$ term
is positive and large, and (b) the low-order term is negligible compared
with the order-$T$ term. More specifically, under the assumption
that the coarse models $\{\Ajhat,\Wjhat\}$ satisfy $\|\Ajhat-\Aj\|\le\epsA$,
$\|\Wjhat-\Wj\|\le\epsW\le0.1\llb$ for all $1\le j\le K$, we make
the following claims:
\begin{itemize}
\item (Order-$T$ term is large.) Define the leading factor $\Dklhat$ of
the order-$T$ term and a related parameter $\Dkl$ as follows:
\begin{align}
\Dklhat & \coloneqq\log\frac{\det(\Wlhat)}{\det(\Wkhat)}+\Tr\Big(\Wk\big((\Wlhat)^{-1}-(\Wkhat)^{-1}\big)\Big)+\frac{\|\Ak-\Alhat\|_{\Frm}^{2}}{\kwcross}-\frac{\Dx^{2}\|\Ak-\Akhat\|^{2}}{\llb},\nonumber \\
\Dkl & \coloneqq\frac{1}{\kwcross}\bigg(\frac{\|\Wk-\Wl\|_{\Frm}^{2}}{\lup^{2}}+\|\Ak-\Al\|_{\Frm}^{2}\bigg)\gtrsim\frac{\DAW^{2}}{\kwcross},\label{eq:Dkl_def}
\end{align}
where the last inequality follows from Assumption \ref{assu:models}.
They are related in the sense that
\begin{align}
\Dkl & \lesssim\tilde{D}_{k,\ell}\coloneqq\log\frac{\det(\Wl)}{\det(\Wk)}+\Tr\Big(\Wk\big((\Wl)^{-1}-(\Wk)^{-1}\big)\Big)+\frac{\|\Ak-\Al\|_{\Frm}^{2}}{\kwcross},\label{eq:Dkl_bound}
\end{align}
where $\tilde{D}_{k,\ell}$ is defined in the same way as $\Dklhat$,
except that the coarse models are replaced with the accurate ones;
moreover, we have
\begin{equation}
\Dkl\lesssim\Dklhat,\quad\text{provided that}\quad\epsA\lesssim\sqrt{\frac{\llb\Dkl}{\Dx^{2}}},\quad\epsW\lesssim\Wmin\sqrt{\frac{\Dkl}{d}}.\label{eq:epsW_epsA}
\end{equation}
\item (Low-order term is negligible.) With~(\ref{eq:epsW_epsA}) in place,
we have
\begin{align}
 & {\rm (A)}+{\rm (B)}-{\rm (C)}\nonumber \\
 & \quad\ge C_{6}T\cdot\Dkl-C_{5}\bigg[\sqrt{T}\Big\|(\Wk)^{1/2}\big((\Wlhat)^{-1}-(\Wkhat)^{-1}\big)(\Wk)^{1/2}\Big\|_{\Frm}+\kwself\kwcross\bigg]\log\frac{1}{\delta}.\label{eq:ABC_tmp}
\end{align}
We claim that
\begin{equation}
\text{if}\quad T\gtrsim\bigg(\frac{\kwcross^{5}}{\Dkl}+1\bigg)\log^{2}\frac{1}{\delta},\quad\text{then}\quad{\rm (A)}+{\rm (B)}-{\rm (C)}\ge C_{7}T\cdot\Dkl>0.\label{eq:claim_low_order}
\end{equation}
\end{itemize}

\paragraph{Step 3: putting things together.}

So far, we have proved that for a short trajectory generated by $(\Ak,\Wk)$
and for a fixed $\ell\neq k$, it holds with probability at least
$1-\delta$ that $L(\Alhat,\Wlhat)>L(\Akhat,\Wkhat)$, provided that
\begin{align*}
\epsA\lesssim\sqrt{\frac{\llb\Dkl}{\Dx^{2}}},\quad\epsW\lesssim\llb\cdot\min\Big\{1,\sqrt{\frac{\Dkl}{d}}\Big\}, & \quad T\gtrsim\begin{cases}
\Big(\kwself^{2}+\frac{\kwcross^{5}}{\Dkl}\Big)\log^{2}\frac{1}{\delta} & \text{for Case 0},\\
\Big(\kwself^{2}+\frac{\kwcross^{5}}{\Dkl}\Big)\log^{2}\frac{1}{\delta}+\frac{1}{1-\rho}\log(2\ka) & \text{for Case 1}.
\end{cases}
\end{align*}
Plugging in the relation $\Dkl\gtrsim\DAW^{2}/\kwcross$ and $\Dx\lesssim\sqrt{\Gmax(d+\log\frac{\Tt}{\delta})}$
(for Case 0) or $\Dx\lesssim\ka\sqrt{\Gmax(d+\log\frac{\Tt}{\delta})}$
(for Case 1), the above conditions become
\begin{align*}
\text{For Case 0:}\quad & \epsA\lesssim\sqrt{\frac{\llb\DAW^{2}}{\Gmax\kwcross(d+\log\frac{\Tt}{\delta})}},\quad\epsW\lesssim\llb\cdot\min\Big\{1,\frac{\DAW}{\sqrt{\kwcross d}}\Big\},\\
 & \qquad T\gtrsim\Big(\kwself^{2}+\frac{\kwcross^{6}}{\DAW^{2}}\Big)\log^{2}\frac{1}{\delta};\\
\text{For Case 1:}\quad & \epsA\lesssim\sqrt{\frac{\llb\DAW^{2}}{\Gmax\kwcross\ka^{2}(d+\log\frac{\Tt}{\delta})}},\quad\epsW\lesssim\llb\cdot\min\Big\{1,\frac{\DAW}{\sqrt{\kwcross d}}\Big\},\\
 & \qquad T\gtrsim\Big(\kwself^{2}+\frac{\kwcross^{6}}{\DAW^{2}}\Big)\log^{2}\frac{1}{\delta}+\frac{1}{1-\rho}\log(2\ka).
\end{align*}
Finally, taking the union bound over all $\ell\neq k$ as well as
over all trajectories in $\Mclassification$ finishes the proof of
Theorem~\ref{thm:classification}.

\subsubsection{Proof of (\ref{eq:boundA}).}

Since $\wt\iid\Ncal(\boldsymbol{0},\Wk)$, we have
\[
\sum_{t=0}^{T-1}\wt^{\top}\big((\Wlhat)^{-1}-(\Wkhat)^{-1}\big)\wt=\bz^{\top}\bM\bz,
\]
where $\bz\sim\Ncal(\boldsymbol{0},\bI_{Td})$, and $\bM\in\R^{Td\times Td}$
is a block-diagonal matrix with $\bQ\coloneqq(\Wk)^{1/2}((\Wlhat)^{-1}-(\Wkhat)^{-1})(\Wk)^{1/2}\in\R^{d\times d}$
as its diagonal blocks. Therefore, by the Hanson-Wright inequality
\cite[Theorem 6.2.1]{vershynin2018high}, we have
\[
\Pr\bigg(\Big|\bz^{\top}\bM\bz-\E[\bz^{\top}\bM\bz]\Big|\ge u\bigg)\le2\exp\bigg(-c\min\Big\{\frac{u^{2}}{\|\bM\|_{\Frm}^{2}},\frac{u}{\|\bM\|}\Big\}\bigg),
\]
where $\|\bM\|_{\Frm}^{2}=T\|\bQ\|_{\Frm}^{2},\|\bM\|=\|\bQ\|$, and
$\E[\bz^{\top}\bM\bz]=\Tr(\bM)=T\cdot\Tr(\bQ)$. Choosing $u\gtrsim\sqrt{T}\|\bQ\|_{\Frm}\log\frac{1}{\delta}$,
we have with probability at least $1-\delta$, $|\bz^{\top}\bM\bz-T\cdot\Tr(\bQ)|\le C_{1}\sqrt{T}\|\bQ\|_{\Frm}\log\frac{1}{\delta}$,
which immediately leads to our lower bound~(\ref{eq:boundA}) for
${\rm (A)}$.

\subsubsection{Proof of (\ref{eq:boundB}).}

Denote $\ut=(\Wk)^{-1/2}\wt\sim\Ncal(\boldsymbol{0},\Id)$ and $\yt=(\Wk)^{1/2}(\Wlhat)^{-1}(\Ak-\Alhat)\xt$.
Then we have
\[
{\rm (B)}=\sumtm\xt^{\top}(\Ak-\Alhat)^{\top}(\Wlhat)^{-1}(\Ak-\Alhat)\xt+2\sumtm\ut^{\top}\yt.
\]
By Lemma~\ref{lem:self_norm_scalar}, we have with probability at
least $1-\delta$, $\sumtm\ut^{\top}\yt\ge-(\frac{\lambda}{2}\sumtm\|\yt\|_{2}^{2}+\frac{1}{\lambda}\log\frac{2}{\delta})$
for any fixed $\lambda>0$. This implies that
\begin{align*}
{\rm (B)} & \ge\sumtm\xt^{\top}(\Ak-\Alhat)^{\top}(\Wlhat)^{-1}(\Ak-\Alhat)\xt-\lambda\sumtm\yt^{\top}\yt-\frac{2}{\lambda}\log\frac{2}{\delta}\\
 & =\sumtm\xt^{\top}(\Ak-\Alhat)^{\top}\Big((\Wlhat)^{-1}-\lambda\cdot(\Wlhat)^{-1}\Wk(\Wlhat)^{-1}\Big)(\Ak-\Alhat)\xt-\frac{2}{\lambda}\log\frac{2}{\delta}.
\end{align*}
Choosing $\lambda=0.05/(\kwself\kwcross)$, we have $\lambda\cdot(\Wlhat)^{-1}\Wk(\Wlhat)^{-1}\preccurlyeq0.1(\Wlhat)^{-1}$,
and thus 
\begin{align}
{\rm (B)} & \ge0.9\sumtm\xt^{\top}(\Ak-\Alhat)^{\top}(\Wlhat)^{-1}(\Ak-\Alhat)\xt-40\kwself\kwcross\log\frac{2}{\delta}\nonumber \\
 & \ge0.9\lambmin\big((\Wlhat)^{-1}\big)\sumtm\xt^{\top}(\Ak-\Alhat)^{\top}(\Ak-\Alhat)\xt-40\kwself\kwcross\log\frac{2}{\delta}.\label{eq:B_tmp}
\end{align}

Now it remains to lower bound $\sumtm\xt^{\top}\bDel\xt$, where $\bDel\coloneqq\DelA^{\top}\DelA$
and $\DelA\coloneqq\Ak-\Alhat$. Since $\bDel\succcurlyeq\mathbf{0}$,
we have
\begin{align}
\sumtm\xt^{\top}\bDel\xt & \ge\sum_{t=0}^{T-2}\xtp^{\top}\bDel\xtp=\sum_{t=0}^{T-2}(\Ak\xt+\wt)^{\top}\bDel(\Ak\xt+\wt)\nonumber \\
 & =\underset{{\rm (i)}}{\underbrace{\sum_{t=0}^{T-2}\wt^{\top}\bDel\wt}}+\underset{{\rm (ii)}}{\underbrace{\sum_{t=0}^{T-2}\xt^{\top}\Ak^{\top}\bDel\Ak\xt+2\sum_{t=0}^{T-2}\wt^{\top}\bDel\Ak\xt}}.\label{eq:xDx}
\end{align}
We can lower bound (i) by the Hanson-Wright inequality, similar to
our previous proof of~(\ref{eq:boundA}); the result is that, with
probability at least $1-\delta$, one has 
\[
{\rm (i)}\ge T\cdot\Tr\Big((\Wk)^{1/2}\bDel(\Wk)^{1/2}\Big)-C_{0}\sqrt{T}\big\|(\Wk)^{1/2}\bDel(\Wk)^{1/2}\big\|_{\Frm}\log\frac{1}{\delta}.
\]
To lower bound (ii), we apply Lemma~\ref{lem:self_norm_scalar},
which shows that with probability at least $1-\delta$, 
\[
\bigg|\sum_{t=0}^{T-2}\wt^{\top}\bDel\Ak\xt\bigg|\le\frac{\lambda}{2}\sum_{t=0}^{T-2}\xt^{\top}(\Ak)^{\top}\bDel\Wk\bDel\Ak\xt+\frac{1}{\lambda}\log\frac{2}{\delta},
\]
for any fixed $\lambda>0$, hence 
\[
{\rm (ii)}\ge\sum_{t=0}^{T-2}\xt^{\top}\Ak{}^{\top}(\bDel-\lambda\cdot\bDel\Wk\bDel)\Ak\xt-\frac{2}{\lambda}\log\frac{2}{\delta}.
\]
Recall $\bDel=\DelA^{\top}\DelA$, and thus $\bDel-\lambda\cdot\bDel\Wk\bDel=\DelA^{\top}(\Id-\lambda\cdot\DelA\Wk\DelA^{\top})\DelA\succcurlyeq\boldsymbol{0}$
if we choose $\lambda=1/(\|\Wk\|\|\DelA\|^{2})=1/(\|\Wk\|\|\bDel\|)$;
this implies that
\[
{\rm (ii)}\ge-\frac{2}{\lambda}\log\frac{2}{\delta}=-2\|\Wk\|\|\bDel\|\log\frac{2}{\delta}.
\]
Putting these back to~(\ref{eq:xDx}), we have
\begin{align*}
\sumtm\xt^{\top}\bDel\xt & \ge{\rm (i)}+{\rm (ii)}\\
 & \ge T\cdot\Tr\Big((\Wk)^{1/2}\bDel(\Wk)^{1/2}\Big)-C_{0}\sqrt{T}\big\|(\Wk)^{1/2}\bDel(\Wk)^{1/2}\big\|_{\Frm}\log\frac{1}{\delta}-2\|\Wk\|\|\bDel\|\log\frac{2}{\delta}\\
 & \ge T\cdot\lambmin(\Wk)\Tr(\DelA^{\top}\DelA)-C_{0}\sqrt{T}\|\Wk\|\|\DelA^{\top}\DelA\|_{\Frm}\log\frac{1}{\delta}-2\|\Wk\|\|\DelA^{\top}\DelA\|\log\frac{2}{\delta}\\
 & \ge T\cdot\lambmin(\Wk)\|\Ak-\Alhat\|_{\Frm}^{2}\cdot\bigg(1-\frac{C_{0}}{\sqrt{T}}\cond(\Wk)\log\frac{1}{\delta}-\frac{2}{T}\cond(\Wk)\log\frac{2}{\delta}\bigg)\\
 & \ge0.9T\cdot\lambmin(\Wk)\|\Ak-\Alhat\|_{\Frm}^{2},
\end{align*}
where the last inequality holds if $T\gtrsim\kwself^{2}\log^{2}\frac{1}{\delta}$.

Going back to~(\ref{eq:B_tmp}), we have
\begin{align*}
{\rm (B)} & \ge0.9\lambmin((\Wlhat)^{-1})\sumtm\xt^{\top}\bDel\xt-40\kwself\kwcross\log\frac{2}{\delta}\\
 & \ge0.81T\cdot\lambmin((\Wlhat)^{-1})\lambmin(\Wk)\|\Ak-\Alhat\|_{\Frm}^{2}-40\kwself\kwcross\log\frac{2}{\delta}\\
 & \ge0.7T\frac{\|\Ak-\Alhat\|_{\Frm}^{2}}{\kwcross}-40\kwself\kwcross\log\frac{2}{\delta},
\end{align*}
which finishes the proof of (\ref{eq:boundB}).

\subsubsection{Proof of (\ref{eq:boundC}).}

Denote $\bDel=\Ak-\Akhat$, $\ut=(\Wk)^{-1/2}\wt\sim\Ncal(\boldsymbol{0},\Id)$
and $\yt=(\Wk)^{1/2}(\Wkhat)^{-1}\bDel\xt$. Then one has
\begin{align*}
{\rm (C)} & =\sumtm\xt^{\top}\bDel^{\top}(\Wkhat)^{-1}\bDel\xt+2\sumtm\wt^{\top}(\Wkhat)^{-1}\bDel\xt=\sumtm\xt^{\top}\bDel^{\top}(\Wkhat)^{-1}\bDel\xt+2\sumt\ut^{\top}\yt.
\end{align*}
By Lemma~\ref{lem:self_norm_scalar}, we have with probability at
least $1-\delta$, $\sumtm\ut^{\top}\yt\le\frac{\lambda}{2}\sumtm\|\yt\|_{2}^{2}+\frac{1}{\lambda}\log\frac{2}{\delta}$
for any fixed $\lambda>0$. This implies that
\begin{align*}
{\rm (C)} & \le\sumtm\xt^{\top}\bDel^{\top}(\Wkhat)^{-1}\bDel\xt+\lambda\sumtm\yt^{\top}\yt+\frac{2}{\lambda}\log\frac{2}{\delta}\\
 & =\sumtm\xt^{\top}\bDel^{\top}(\Wkhat)^{-1}\bDel\xt+\lambda\sumtm\xt^{\top}\bDel^{\top}(\Wkhat)^{-1}\Wk(\Wkhat)^{-1}\bDel\xt+\frac{2}{\lambda}\log\frac{2}{\delta}\\
 & =\sumtm\xt^{\top}\bDel^{\top}\Big((\Wkhat)^{-1}+\lambda\cdot(\Wkhat)^{-1}\Wk(\Wkhat)^{-1}\Big)\bDel\xt+\frac{2}{\lambda}\log\frac{2}{\delta}\\
 & \le1.5\bigg(\frac{1}{\lambmin(\Wk)}+\frac{\lambda\cdot\|\Wk\|}{\lambmin(\Wk)^{2}}\bigg)\sumtm\xt^{\top}\bDel^{\top}\bDel\xt+\frac{2}{\lambda}\log\frac{2}{\delta}.
\end{align*}
 Choosing $\lambda\asymp1/\kwself$ and recalling $\|\xt\|_{2}\le\Dx$,
we have ${\rm (C)}\lesssim\frac{1}{\llb}T\cdot\Dx^{2}\|\bDel\|^{2}+\kwself\log\frac{1}{\delta}$,
which finishes the proof of (\ref{eq:boundC}).

\subsubsection{Proof of (\ref{eq:Dkl_bound}).}

Denote $\bDel\coloneqq\Wk-\Wl$. Then the right-hand side of (\ref{eq:Dkl_bound})
becomes
\begin{align*}
\tilde{D}_{k,\ell} & =\log\det\Wl-\log\det\Wk+\Tr\Big(\Wk(\Wl)^{-1}-\Id\Big)+\frac{\|\Ak-\Al\|_{\Frm}^{2}}{\kwcross}\\
 & =\log\det\Wl-\log\det(\Wl+\bDel)+\Tr\Big((\Wl+\bDel)(\Wl)^{-1}-\Id\Big)+\frac{\|\Ak-\Al\|_{\Frm}^{2}}{\kwcross}\\
 & =\log\det\Wl-\log\det\Big((\Wl)^{1/2}\big(\Id+(\Wl)^{-1/2}\bDel(\Wl)^{-1/2}\big)(\Wl)^{1/2}\Big)\\
 & \qquad+\Tr\Big((\Wl)^{-1/2}\bDel(\Wl)^{-1/2}\Big)+\frac{\|\Ak-\Al\|_{\Frm}^{2}}{\kwcross}\\
 & =\Tr(\bX)-\log\det(\Id+\bX)+\frac{\|\Ak-\Al\|_{\Frm}^{2}}{\kwcross},
\end{align*}
where we define $\bX\coloneqq(\Wl)^{-1/2}\bDel(\Wl)^{-1/2}$. Notice
that $\bX$ is symmetric and satisfies
\begin{align*}
\bX+\Id & =(\Wl)^{-1/2}\Wk(\Wl)^{-1/2}\succ\boldsymbol{0},\\
\|\bX\| & \le\big\|(\Wl)^{-1/2}\big\|^{2}\big\|\Wk-\Wl\big\|\le\frac{2\lup}{\llb}=2\kwcross,\\
\|\bX\|_{\Frm}^{2} & =\big\|(\Wl)^{-1/2}\bDel(\Wl)^{-1/2}\big\|_{\Frm}^{2}\ge\frac{\|\bDel\|_{\Frm}^{2}}{\lup^{2}}.
\end{align*}
Therefore, by Lemma~\ref{lem:Tr_logdet}, we have
\[
\Tr(\bX)-\log\det(\Id+\bX)\ge\frac{\|\bX\|_{\Frm}^{2}}{6\kwcross}\ge\frac{\|\Wk-\Wl\|_{\Frm}^{2}}{6\kwcross\lup^{2}},
\]
and thus
\begin{align*}
\tilde{D}_{k,\ell} & =\Tr(\bX)-\log\det(\Id+\bX)+\frac{\|\Ak-\Al\|_{\Frm}^{2}}{\kwcross}\ge\frac{\|\Wk-\Wl\|_{\Frm}^{2}}{6\kwcross\lup^{2}}+\frac{\|\Ak-\Al\|_{\Frm}^{2}}{\kwcross}\asymp\Dkl,
\end{align*}
which finishes the proof of (\ref{eq:Dkl_bound}).

\subsubsection{Proof of (\ref{eq:epsW_epsA}).}

Recall the definition
\[
\Dklhat=\log\frac{\det(\Wlhat)}{\det(\Wkhat)}+\Tr\Big(\Wk\big((\Wlhat)^{-1}-(\Wkhat)^{-1}\big)\Big)+\frac{\|\Ak-\Alhat\|_{\Frm}^{2}}{\kwcross}-\frac{\Dx^{2}\|\Ak-\Akhat\|^{2}}{\llb}.
\]
First, we have
\begin{align*}
 & \log\frac{\det(\Wlhat)}{\det(\Wkhat)}+\Tr\Big(\Wk\big((\Wlhat)^{-1}-(\Wkhat)^{-1}\big)\Big)\\
 & \qquad=\underset{{\rm (i)}}{\underbrace{\bigg[\log\frac{\det(\Wlhat)}{\det(\Wk)}+\Tr\Big(\Wk\big((\Wlhat)^{-1}-(\Wk)^{-1}\big)\Big)\bigg]}}\\
 & \qquad-\underset{{\rm (ii)}}{\underbrace{\bigg[\log\frac{\det(\Wkhat)}{\det(\Wk)}+\Tr\Big(\Wk\big((\Wkhat)^{-1}-(\Wk)^{-1}\big)\Big)\bigg]}}.
\end{align*}
We can lower bound (i) by the same idea of our earlier proof for (\ref{eq:Dkl_bound}),
except that we replace $\Wl$ in that proof with $\Wlhat$; this gives
us
\[
{\rm (i)}\gtrsim\frac{\|\Wk-\Wlhat\|_{\Frm}^{2}}{\kwcross\Wmax^{2}}.
\]
As for (ii), applying Lemma \ref{lem:Tr_logdet} with $\bX=(\Wk)^{-1/2}(\Wkhat-\Wk)(\Wk)^{-1/2}$,
one has
\[
{\rm (ii)}=\Tr(\bX)-\log\det(\bX+\Id)\le\|\bX\|_{\Frm}^{2}\le\frac{\epsW^{2}d}{\Wmin^{2}}.
\]
Putting things together, we have
\begin{align}
\Dklhat & ={\rm (i)}-{\rm (ii)}+\frac{\|\Ak-\Alhat\|_{\Frm}^{2}}{\kwcross}-\frac{\Dx^{2}\|\Ak-\Akhat\|^{2}}{\llb}\nonumber \\
 & \ge\underset{{\rm (iii)}}{\underbrace{C_{1}\frac{1}{\kwcross}\bigg(\frac{\|\Wk-\Wlhat\|_{\Frm}^{2}}{\Wmax^{2}}+\|\Ak-\Alhat\|_{\Frm}^{2}\bigg)}}-\underset{{\rm (iv)}}{\underbrace{C_{2}\bigg(\frac{\epsW^{2}d}{\Wmin^{2}}+\frac{\Dx^{2}\epsA^{2}}{\llb}\bigg)}}\label{eq:Dklhat_tmp}
\end{align}
If $\epsW,\epsA$ satisfy~(\ref{eq:epsW_epsA}), then ${\rm (iv)}\le c_{0}\Dkl$
for some sufficiently small constant $c_{0}>0$. As for ${\rm (iii)}$,
according to the definition of $\Dkl$ in (\ref{eq:Dkl_def}), there
are two possible cases:
\begin{itemize}
\item If $\frac{1}{\kwcross}\frac{\|\Wk-\Wl\|_{\Frm}^{2}}{\Wmax^{2}}\ge\frac{\Dkl}{2}$,
then it is easy to check that (\ref{eq:epsW_epsA}) implies that
\[
\frac{\epsW\sqrt{d}}{\Wmax}\le\frac{1}{4}\frac{\|\Wk-\Wl\|_{\Frm}}{\Wmax},
\]
and hence
\begin{align*}
{\rm (iii)} & \gtrsim\frac{1}{\kwcross}\frac{\|\Wk-\Wlhat\|_{\Frm}^{2}}{\Wmax^{2}}\ge\frac{1}{\kwcross}\bigg(\frac{\|\Wk-\Wl\|_{\Frm}}{\Wmax}-\frac{\epsW\sqrt{d}}{\Wmax}\bigg)^{2}\\
 & \gtrsim\frac{1}{\kwcross}\frac{\|\Wk-\Wl\|_{\Frm}^{2}}{\Wmax^{2}}\gtrsim\Dkl.
\end{align*}
\item On the other hand, if $\frac{1}{\kwcross}\|\Ak-\Al\|_{\Frm}^{2}\ge\frac{\Dkl}{2}$,
then one can check that (\ref{eq:epsW_epsA}) implies that
\[
\epsA\sqrt{d}\le\frac{1}{4}\|\Ak-\Al\|_{\Frm},
\]
and hence
\[
{\rm (iii)}\gtrsim\frac{\|\Ak-\Alhat\|_{\Frm}^{2}}{\kwcross}\ge\frac{(\|\Ak-\Al\|_{\Frm}-\epsA\sqrt{d})^{2}}{\kwcross}\gtrsim\frac{\|\Ak-\Al\|_{\Frm}^{2}}{\kwcross}\gtrsim\Dkl.
\]
\end{itemize}
In sum, it is always guaranteed that ${\rm (iii)}\gtrsim\Dkl$. Going
back to~(\ref{eq:Dklhat_tmp}), we claim that $\Dklhat\ge{\rm (iii)}-{\rm (iv)}\gtrsim\Dkl$
as long as $\epsW$ and $\epsA$ satisfy~(\ref{eq:epsW_epsA}), which
finishes the proof of~(\ref{eq:epsW_epsA}).

\subsubsection{Proof of (\ref{eq:claim_low_order}).}

Recall the lower bound~(\ref{eq:ABC_tmp}) for ${\rm (A)+(B)-(C)}$.
We want to show that the low-order term is dominated by the order-$T$
term, namely $T\cdot\Dkl$. First, if $T\gtrsim\frac{\kwself\kwcross}{\Dkl}\log\frac{1}{\delta}$,
then $\kwself\kwcross\log\frac{1}{\delta}\lesssim T\cdot\Dkl$. Next,
we have

\begin{align*}
 & \Big\|(\Wk)^{1/2}\big((\Wlhat)^{-1}-(\Wkhat)^{-1}\big)(\Wk)^{1/2}\Big\|_{\Frm}\\
 & \qquad\le\big\|\Wk\big\|\cdot\big\|(\Wlhat)^{-1}-(\Wkhat)^{-1}\big\|_{\Frm}=\big\|\Wk\big\|\cdot\big\|(\Wlhat)^{-1}(\Wkhat-\Wlhat)(\Wkhat)^{-1}\big\|_{\Frm}\\
 & \qquad\le\big\|\Wk\big\|\cdot\big\|(\Wlhat)^{-1}\big\|\cdot\big\|(\Wkhat)^{-1}\big\|\cdot\Big(\|\Wk-\Wl\|_{\Frm}+2\sqrt{d}\epsW\Big)\\
 & \qquad\le\frac{2\lup}{\llb^{2}}\left(\|\Wk-\Wl\|_{\Frm}+2\sqrt{d}\epsW\right)=\frac{2\kwcross}{\llb}\left(\|\Wk-\Wl\|_{\Frm}+2\sqrt{d}\epsW\right).
\end{align*}
Notice that the definition (\ref{eq:Dkl_def}) of $\Dkl$ implies
that $\|\Wk-\Wl\|_{\Frm}\le\sqrt{\kwcross\lup^{2}\Dkl}$, and thus
\[
\Big\|(\Wk)^{1/2}\big((\Wlhat)^{-1}-(\Wkhat)^{-1}\big)(\Wk)^{1/2}\Big\|_{\Frm}\lesssim\frac{\kwcross}{\llb}\Big(\sqrt{\kwcross\lup^{2}\Dkl}+\sqrt{d}\epsW\Big).
\]
Now it is easy to checked that
\begin{align*}
\text{if}\quad & T\gtrsim\bigg(\frac{\kwcross^{5}}{\Dkl}+\Big(\frac{\kwcross\sqrt{d}\epsW}{\llb\Dkl}\Big)^{2}\bigg)\log^{2}\frac{1}{\delta},\\
\text{then}\quad & \sqrt{T}\Big\|(\Wk)^{1/2}\big((\Wlhat)^{-1}-(\Wkhat)^{-1}\big)(\Wk)^{1/2}\Big\|_{\Frm}\log\frac{1}{\delta}\lesssim T\cdot\Dkl.
\end{align*}
Due to our assumption (\ref{eq:epsW_epsA}) on $\epsW$, we have $(\frac{\kwcross\sqrt{d}\epsW}{\llb\Dkl})^{2}\lesssim\frac{\kwcross^{2}}{\Dkl}\le\frac{\kwcross^{5}}{\Dkl}$,
and thus it suffices to have $T\gtrsim(\frac{\kwcross^{5}}{\Dkl}+1)\log^{2}\frac{1}{\delta}$,
which finishes the proof of (\ref{eq:claim_low_order}).

\section{Miscellaneous results\label{sec:misc_lemmas}}
\begin{prop}
\label{prop:covariance_concentration} Consider $\ba_{t},\bb_{t}\iid\Ncal(\boldsymbol{0},\Id),1\le t\le N$,
where $N\gtrsim d+\log(1/\delta)$. Then with probability at least
$1-\delta$, 
\[
\bigg\|\frac{1}{N}\sum_{t=1}^{N}\at\at^{\top}-\Id\bigg\|\lesssim\sqrt{\frac{d+\log\frac{1}{\delta}}{N}},\quad\bigg\|\frac{1}{N}\sum_{t=1}^{N}\at\bb_{t}^{\top}\bigg\|\lesssim\sqrt{\frac{d+\log\frac{1}{\delta}}{N}}.
\]
\end{prop}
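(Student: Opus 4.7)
}

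The plan is to handle both bounds via the standard $\epsilon$-net argument, reducing the spectral-norm control to a concentration statement for fixed test vectors, which is then resolved by Bernstein's inequality for subexponential random variables. Let $\Ncal_{\epsilon}$ be an $\epsilon$-net of the unit sphere $\mathcal{S}^{d-1}$ with $|\Ncal_{\epsilon}|\le(1+2/\epsilon)^{d}\le(9/\epsilon)^{d}$; choosing $\epsilon=1/4$ (say), one has the standard discretization bounds $\|\bM\|\le 2\sup_{\bv\in\Ncal_{1/4}}|\bv^{\top}\bM\bv|$ when $\bM$ is symmetric, and $\|\bM\|\le 2\sup_{\bu,\bv\in\Ncal_{1/4}}|\bu^{\top}\bM\bv|$ for the general case. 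This converts each of the two bounds into a supremum over a finite set of cardinality at most $9^{d}$ (resp.~$9^{2d}$), which can be handled by a union bound.

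For the first claim, fix $\bv\in\Ncal_{1/4}$ and note that $\bv^{\top}\ba_{t}\sim\Ncal(0,1)$ i.i.d., so $(\bv^{\top}\ba_{t})^{2}-1$ is a centered subexponential random variable with $\|(\bv^{\top}\ba_{t})^{2}-1\|_{\psi_{1}}\lesssim 1$. Bernstein's inequality \cite[Corollary 2.8.3]{vershynin2018high} then gives
\[
\Pr\bigg(\Big|\frac{1}{N}\sum_{t=1}^{N}\big((\bv^{\top}\ba_{t})^{2}-1\big)\Big|\ge u\bigg)\le 2\exp\Big(-c_{0}N\min\{u^{2},u\}\Big).
\]
Taking a union bound over $\bv\in\Ncal_{1/4}$ and choosing $u\asymp\sqrt{(d+\log(1/\delta))/N}$ (which is $\le 1$ under the assumption $N\gtrsim d+\log(1/\delta)$, so we are in the subgaussian regime of Bernstein), the desired bound on $\|\tfrac{1}{N}\sum\ba_{t}\ba_{t}^{\top}-\Id\|$ follows.

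For the second claim, fix $\bu,\bv\in\Ncal_{1/4}$ and observe that $(\bu^{\top}\ba_{t})(\bb_{t}^{\top}\bv)$ is a product of two independent $\Ncal(0,1)$ variables, hence centered with $\|(\bu^{\top}\ba_{t})(\bb_{t}^{\top}\bv)\|_{\psi_{1}}\lesssim\|\bu^{\top}\ba_{t}\|_{\psi_{2}}\|\bb_{t}^{\top}\bv\|_{\psi_{2}}\lesssim 1$. Bernstein's inequality again yields a subgaussian-type tail in $u\lesssim 1$, and a union bound over $(\bu,\bv)\in\Ncal_{1/4}\times\Ncal_{1/4}$ (which contributes $\log|\Ncal_{1/4}\times\Ncal_{1/4}|\lesssim d$ to the exponent) with $u\asymp\sqrt{(d+\log(1/\delta))/N}$ closes the argument.

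There is no real obstacle here; the only thing to watch is to ensure that the chosen $u$ stays below~$1$ so that the quadratic branch of Bernstein applies, which is guaranteed precisely by the hypothesis $N\gtrsim d+\log(1/\delta)$. Both bounds then follow by the same template, and the universal constants can be absorbed into the $\lesssim$ notation.
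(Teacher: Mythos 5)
Your proposal is correct and is precisely the ``standard covering argument'' that the paper invokes (and skips) for this proposition: an $\epsilon$-net discretization of the spectral norm, Bernstein's inequality for the subexponential variables $(\bv^{\top}\ba_{t})^{2}-1$ and $(\bu^{\top}\ba_{t})(\bb_{t}^{\top}\bv)$, and a union bound over the net, with the hypothesis $N\gtrsim d+\log(1/\delta)$ ensuring the subgaussian branch of Bernstein applies. Nothing to add.
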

The proof follows from a standard covering argument (cf.~\cite{vershynin2018high}),
which we skip for brevity.
\begin{lem}
[Self-normalized concentration, scalar version] \label{lem:self_norm_scalar}
Suppose that random vectors $\{\ut,\yt\}_{1\le t\le T}$ and filtrations
$\{\Ft\}_{0\le t\le T-1}$ satisfy $\Ft=\sigma(\bu_{i},1\le i\le t)$,
$\yt\in\Ftm$, and $\ut|\Ftm\sim\Ncal(\boldsymbol{0},\Id)$. Then
for any fixed $\lambda>0$, with probability at least $1-\delta$,
we have
\[
\bigg|\sumt\ut^{\top}\yt\bigg|<\frac{\lambda}{2}\sumt\big\|\yt\big\|_{2}^{2}+\frac{1}{\lambda}\log\frac{2}{\delta}.
\]
\end{lem}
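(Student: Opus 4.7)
The plan is to prove this via the classical exponential martingale (Chernoff) method for self-normalized sums, which is essentially textbook but worth spelling out.

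First, I would fix $\lambda>0$ and define the process
\[
M_t \;\coloneqq\; \exp\!\Big(\lambda\sum_{s=1}^{t}\bu_s^{\top}\by_s \;-\; \tfrac{\lambda^{2}}{2}\sum_{s=1}^{t}\|\by_s\|_2^{2}\Big), \qquad 0 \le t \le T,
\]
with the convention $M_0 = 1$. The key calculation is to observe that, conditionally on $\Ft_{t-1}$, the vector $\by_t$ is deterministic and $\bu_t \sim \Ncal(\boldsymbol{0},\Id)$, so $\bu_t^{\top}\by_t \,|\, \Ft_{t-1} \sim \Ncal(0, \|\by_t\|_2^{2})$. Applying the Gaussian moment generating function then yields $\E[\exp(\lambda\bu_t^{\top}\by_t)\,|\,\Ft_{t-1}] = \exp(\lambda^{2}\|\by_t\|_2^{2}/2)$, which gives $\E[M_t \,|\, \Ft_{t-1}] = M_{t-1}$. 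Hence $\{M_t\}$ is a nonnegative martingale with $\E[M_T] = 1$.

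Next, I would apply Markov's inequality to the one-sided tail: for any $u > 0$,
\[
\Pr\big(M_T \ge u\big) \;\le\; \tfrac{1}{u}.
\]
Choosing $u = 2/\delta$ and taking logarithms, this rearranges to
\[
\Pr\!\Big(\,\sum_{t=1}^{T}\bu_t^{\top}\by_t \;\ge\; \tfrac{\lambda}{2}\sum_{t=1}^{T}\|\by_t\|_2^{2} + \tfrac{1}{\lambda}\log\tfrac{2}{\delta}\Big) \;\le\; \tfrac{\delta}{2}.
\]
To obtain the two-sided bound, I would repeat the same argument with the process $M_t'$ defined using $-\bu_t$ in place of $\bu_t$ (the distributional assumption $\bu_t\,|\,\Ft_{t-1}\sim\Ncal(\boldsymbol{0},\Id)$ is preserved under negation, so $M_t'$ is also a martingale with mean one), giving the symmetric lower tail bound with probability $\delta/2$. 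A union bound then yields the claimed two-sided inequality with probability at least $1-\delta$.

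There is no real obstacle here: the argument is entirely self-contained, and the only subtlety is the measurability check that $\by_t \in \Ft_{t-1}$ so the Gaussian conditional MGF computation is valid, which is exactly what the hypothesis guarantees. Note that $\lambda$ must be chosen \emph{before} observing the data (since it is fixed in the statement), so no sophisticated peeling or method-of-mixtures argument — which would produce an additional $\log\log$ factor — is required.
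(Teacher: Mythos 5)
Your proposal is correct and follows essentially the same route as the paper's proof: both arguments establish that $\E\big[\exp\big(\lambda\sum_t \ut^{\top}\yt-\tfrac{\lambda^2}{2}\sum_t\|\yt\|_2^2\big)\big]\le 1$ by conditioning on $\Fcal_{t-1}$ and using the Gaussian conditional moment generating function, then apply Markov's inequality and a union bound over the two tails (your negation of $\bu_t$ is equivalent to the paper's sign flip of $\lambda$). Framing the telescoping expectation computation as a nonnegative mean-one martingale is a cosmetic difference only.
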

\begin{proof}
Using basic properties of the Gaussian distribution, we have for all
fixed $\lambda\in\R$,
\begin{align*}
 & \E\bigg[\exp\Big(\sumt\big(\lambda\cdot\ut^{\top}\yt-\frac{1}{2}\lambda^{2}\|\yt\|_{2}^{2}\big)\Big)\bigg]\\
 & \qquad=\E\bigg[\exp\Big(\sum_{t=1}^{T-1}\big(\lambda\cdot\ut^{\top}\yt-\frac{1}{2}\lambda^{2}\|\yt\|_{2}^{2}\big)\Big)\cdot\underset{\le1}{\underbrace{\E\Big[\exp\big(\lambda\cdot\bu_{T}^{\top}\by_{T}-\frac{1}{2}\lambda^{2}\|\by_{T}\|_{2}^{2}\big)|\Fcal_{T-1}\Big]}}\bigg]\\
 & \qquad\le\E\bigg[\exp\Big(\sum_{t=1}^{T-1}\big(\lambda\cdot\ut^{\top}\yt-\frac{1}{2}\lambda^{2}\|\yt\|_{2}^{2}\big)\Big)\bigg].
\end{align*}
Continuing this expansion leads to the result $\E[\exp(\sumt(\lambda\cdot\ut^{\top}\yt-\frac{1}{2}\lambda^{2}\|\yt\|_{2}^{2}))]\le1$.
Now, letting $z=\log(2/\delta)$ and using Markov's inequality, we
have
\begin{align*}
\Pr\Big(\sumt\big(\lambda\cdot\ut^{\top}\yt-\frac{1}{2}\lambda^{2}\|\yt\|_{2}^{2}\big)\ge z\Big) & =\Pr\bigg(\exp\Big(\sumt\big(\lambda\cdot\ut^{\top}\yt-\frac{1}{2}\lambda^{2}\|\yt\|_{2}^{2}\big)\Big)\ge\exp(z)\bigg)\\
 & \le\exp(-z)\cdot\E\bigg[\exp\Big(\sumt\big(\lambda\cdot\ut^{\top}\yt-\frac{1}{2}\lambda^{2}\|\yt\|_{2}^{2}\big)\Big)\bigg]\le\exp(-z)=\frac{\delta}{2}.
\end{align*}
In other words, with probability at least $1-\delta/2$, we have $\sumt(\lambda\cdot\ut^{\top}\yt-\frac{1}{2}\lambda^{2}\|\yt\|_{2}^{2})<z=\log(2/\delta)$,
which implies $\sumt\ut^{\top}\yt<\frac{\lambda}{2}\sumt\|\yt\|_{2}^{2}+\frac{1}{\lambda}\log\frac{2}{\delta}$
if $\lambda>0$. By a similar argument (but with $\lambda$ replaced
by $-\lambda$), we have with probability at least $1-\delta/2$,
$\sumt(-\lambda\cdot\ut^{\top}\yt-\frac{1}{2}\lambda^{2}\|\yt\|_{2}^{2})<\log(2/\delta)$,
which implies $\sumt\ut^{\top}\yt>-(\frac{\lambda}{2}\sumt\|\yt\|_{2}^{2}+\frac{1}{\lambda}\log\frac{2}{\delta})$
if $\lambda>0$. Finally, taking the union bound finishes the proof
of the lemma.
\end{proof}
\begin{lem}
\label{lem:Tr_logdet}Consider a symmetric matrix $\bX\in\R^{d\times d}$
that satisfies $\Id+\bX\succ\boldsymbol{0}$. If $\|\bX\|\le B$ for
some $B\ge2$, then we have
\[
\Tr(\bX)-\log\det(\Id+\bX)\ge\frac{\|\bX\|_{\Frm}^{2}}{3B}.
\]
On the other hand, if $\bX\succcurlyeq-\frac{1}{2}\Id$, then we have
\[
\Tr(\bX)-\log\det(\Id+\bX)\le\|\bX\|_{\Frm}^{2}.
\]
\end{lem}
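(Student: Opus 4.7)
The natural approach is to diagonalize $\bX$ and reduce the matrix inequality to a scalar one. Since $\bX$ is symmetric, write $\bX = \bU\,\mathrm{diag}(\lambda_1,\dots,\lambda_d)\,\bU^\top$ with real eigenvalues $\lambda_i$; the condition $\Id + \bX \succ \boldsymbol{0}$ forces $\lambda_i > -1$, while $\|\bX\|\le B$ (respectively, $\bX\succcurlyeq -\tfrac12\Id$) translates to $|\lambda_i|\le B$ (respectively, $\lambda_i \ge -1/2$). Because
\[
\Tr(\bX) - \log\det(\Id+\bX) \;=\; \sum_{i=1}^{d}\big(\lambda_i - \log(1+\lambda_i)\big) \qquad \text{and}\qquad \|\bX\|_{\Frm}^2 \;=\; \sum_{i=1}^{d} \lambda_i^2,
\]
it suffices to prove the scalar inequalities $f(\lambda)\le \lambda^2$ for $\lambda\ge -1/2$ and $f(\lambda)\ge \lambda^2/(3B)$ for $\lambda\in(-1,B]$ with $B\ge 2$, where $f(\lambda)\coloneqq \lambda - \log(1+\lambda)$; summing over $i$ then yields the matrix bounds.

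For the upper bound, I would study $g(\lambda)\coloneqq \lambda^2 - f(\lambda)$. A direct computation gives $g(0)=0$ and
\[
g'(\lambda) \;=\; 2\lambda - \frac{\lambda}{1+\lambda} \;=\; \frac{\lambda(2\lambda+1)}{1+\lambda}.
\]
For $\lambda\ge -1/2$, both $2\lambda+1\ge 0$ and $1+\lambda > 0$, so $g'(\lambda)$ has the sign of $\lambda$. Hence $\lambda=0$ is the global minimizer of $g$ on $[-1/2,\infty)$, and $g(\lambda)\ge g(0)=0$, i.e.\ $f(\lambda)\le \lambda^2$.

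For the lower bound, an analogous approach works with $h(\lambda)\coloneqq f(\lambda) - \lambda^2/(3B)$. Computing,
\[
h'(\lambda) \;=\; \frac{\lambda}{1+\lambda} - \frac{2\lambda}{3B} \;=\; \frac{\lambda\,\big(3B - 2 - 2\lambda\big)}{3B\,(1+\lambda)}.
\]
The point here is that, for $\lambda\le B$ and $B\ge 2$, the factor $3B-2-2\lambda \ge 3B-2-2B = B-2 \ge 0$, while $1+\lambda>0$. Thus $h'(\lambda)$ again has the sign of $\lambda$, so $h$ attains its minimum at $\lambda=0$ with $h(0)=0$, giving $f(\lambda)\ge \lambda^2/(3B)$ on the entire range $(-1,B]$. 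Summing both scalar inequalities over the eigenvalues of $\bX$ yields the two matrix bounds.

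There is no real obstacle in this argument — it is purely a one-variable calculus exercise after diagonalization. The only subtlety worth flagging is that the constant $B\ge 2$ is used precisely to guarantee the monotonicity of $h'$ across the full interval $(-1,B]$; a smaller bound on $B$ would force an extra interior critical point and weaken the cubic-in-$B$ savings to something worse. Similarly, $\lambda\ge -1/2$ in the upper bound is needed to make the factor $2\lambda+1$ nonnegative — this is exactly the hypothesis $\bX\succcurlyeq -\tfrac12 \Id$.
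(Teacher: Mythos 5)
Your proof is correct and follows essentially the same route as the paper's: diagonalize $\bX$, reduce to the scalar inequalities $\lambda-\log(1+\lambda)\ge\lambda^{2}/(3B)$ on $(-1,B]$ and $\lambda-\log(1+\lambda)\le\lambda^{2}$ on $[-1/2,\infty)$, and sum over eigenvalues. The paper leaves those scalar facts as "elementary calculus," whereas you supply the derivative computations explicitly; your sign analysis of $g'$ and $h'$ is accurate, including the role of $B\ge2$ in keeping $3B-2-2\lambda\ge0$ on the whole range.
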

\begin{proof}
Denote $\{\lambda_{i}\}_{1\le i\le d}$ as the eigenvalues of $\bX$,
which satisfies $\lambda_{i}>-1$ for all $1\le i\le d$. Then 
\[
\Tr(\bX)-\log\det(\Id+\bX)=\sumi\lambda_{i}-\log\prod_{i=1}^{d}(1+\lambda_{i})=\sumi\Big(\lambda_{i}-\log(1+\lambda_{i})\Big).
\]
It can be checked (via elementary calculus) that, for all $-1<\lambda\le B$
where $B\ge2$, it holds that $\lambda-\log(1+\lambda)\ge\lambda^{2}/(3B)$.
Therefore,
\[
\Tr(\bX)-\log\det(\Id+\bX)\ge\sumi\frac{\lambda_{i}^{2}}{3B}=\frac{\|\bX\|_{\Frm}^{2}}{3B},
\]
which completes the proof of our first claim. Similarly, it can be
checked that, for all $\lambda\ge-1/2$, one has $\lambda-\log(1+\lambda)\le\lambda^{2}$,
which implies that
\[
\Tr(\bX)-\log\det(\Id+\bX)\le\sumi\lambda_{i}^{2}=\|\bX\|_{\Frm}^{2};
\]
this completes the proof of our second claim.
\end{proof}
\begin{fact}
\label{fact:parameters}In the setting of Section \ref{subsec:models_assumptions},
it holds that $\Gmax\le\lup\ka^{2}/(1-\rho)$ and $\DGY\ge\DAW\cdot\Wmax\Wmin/(4\ka^{2}\Gmax)$.
\end{fact}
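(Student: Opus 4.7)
The plan is to handle the two bounds separately. For $\Gmax \le \lup\ka^2/(1-\rho)$, I would just plug the exponential stability bound $\|(\Ak)^t\| \le \ka\rho^t$ into the series representation~\eqref{eq:def_Gamma}: for each $k$,
\[
\|\Gammak\| \;\le\; \sum_{t=0}^{\infty}\|(\Ak)^t\|^2\,\|\Wk\| \;\le\; \lup\,\ka^2\sum_{t=0}^{\infty}\rho^{2t} \;=\; \frac{\lup\,\ka^2}{1-\rho^2} \;\le\; \frac{\lup\,\ka^2}{1-\rho},
\]
so this value of $\Gmax$ is a valid upper bound on $\max_k\|\Gammak\|$.

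For the second inequality, the idea is to undo the map $(\bA,\bW)\mapsto(\bGamma,\bY)$: starting from~\eqref{eq:GY_to_AW}, for any $k\neq\ell$, write
\[
\Ak-\Al \;=\; (\Yk-\Yl)(\Gammak)^{-1} \;-\; \Al(\Gammak-\Gammal)(\Gammak)^{-1},
\]
\[
\Wk-\Wl \;=\; (\Gammak-\Gammal) \;-\; (\Yk-\Yl)(\Ak)^{\top} \;-\; \Yl(\Ak-\Al)^{\top},
\]
using the identity $\Ak\Gammak=\Yk$ in the second expansion. Then I would apply the submultiplicativity $\|\bA\bB\|_\Frm\le\|\bA\|\|\bB\|_\Frm$ together with the elementary bounds $\|(\Gammak)^{-1}\|\le 1/\Wmin$ (because $\Gammak\succeq\Wk\succeq\Wmin\Id$), $\|\Ak\|\le\ka$ (taking $t=1$ in the stability assumption), and $\|\Yl\|\le\ka\Gmax$. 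Writing $a\coloneqq\|\Gammak-\Gammal\|_\Frm$ and $b\coloneqq\|\Yk-\Yl\|_\Frm$, this yields
\[
\|\Ak-\Al\|_\Frm \;\le\; \frac{b+\ka a}{\Wmin}, \qquad \|\Wk-\Wl\|_\Frm \;\le\; \frac{2\ka^2\Gmax}{\Wmin}(a+b),
\]
where in the bound on $\|\Wk-\Wl\|_\Frm$ I would substitute the bound on $\|\Ak-\Al\|_\Frm$ and use $\Wmin\le\Gmax$, $\ka\ge 1$ to collect constants.

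Finally, I would square and combine. Using $(b+\ka a)^2\le(1+\ka^2)(a^2+b^2)\le 2\ka^2(a^2+b^2)$ and $(a+b)^2\le 2(a^2+b^2)$, together with $\Wmax\le\Gmax$ to move $\|\Ak-\Al\|_\Frm^2$ under a common denominator, one obtains
\[
\|\Ak-\Al\|_\Frm^2 + \frac{\|\Wk-\Wl\|_\Frm^2}{\Wmax^2} \;\le\; \frac{10\,\ka^4\,\Gmax^2}{\Wmin^2\,\Wmax^2}\,(a^2+b^2).
\]
Specializing this inequality to a pair $(k,\ell)$ that attains the infimum defining $\DGY$ in~\eqref{eq:DGY} and taking square roots gives $\DAW\le(\sqrt{10}\,\ka^2\Gmax/(\Wmin\Wmax))\,\DGY$, which implies the claimed $\DGY\ge \DAW\,\Wmax\Wmin/(4\ka^2\Gmax)$ since $\sqrt{10}<4$. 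The main obstacle is nothing deep, just keeping the constants in front of $a^2+b^2$ small enough that the final prefactor fits under the stated $4$; everything else is bookkeeping.
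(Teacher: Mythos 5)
Your proposal is correct and follows essentially the same route as the paper's own proof: the geometric-series bound via $\|(\Ak)^t\|\le\ka\rho^t$ for $\Gmax$, and for the separation bound the identical decompositions of $\Ak-\Al$ and $\Wk-\Wl$ through the inverse map \eqref{eq:GY_to_AW}, followed by squaring, collecting constants with $\Wmin\le\Wmax\le\Gmax$ and $\ka\ge1$, and concluding by contraposition. The only difference is bookkeeping (your prefactor $10$ versus the paper's $11$, both comfortably below $16$), so no further changes are needed.
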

\begin{proof}
First, consider $\bGamma=\sum_{i=0}^{\infty}\bA^{i}\bW(\bA^{i})^{\top}$,
where $\|\bW\|\le\Wmax$ and $\|\bA^{i}\|\le\ka\rho^{i}$. Then we
have $\|\bGamma\|\le\sum_{i=0}^{\infty}\|\bA^{i}\|^{2}\|\bW\|\le\Wmax\ka^{2}\sum_{i=0}^{\infty}\rho^{2i}\le\Wmax\ka^{2}/(1-\rho)$,
which proves our upper bound for $\Gmax$.

Next, let us turn to $\DGY$. Our proof below can be viewed as a quantitative
version of the earlier proof for Fact~\ref{fact:separation}. We
will show that, if the autocovariance matrices between the $k$-th
and the $\ell$-th models are close (in Frobenius norm), then the
models themselves should also be close; our lower bound for $\DGY$
in terms of $\DAW$ then follows from contraposition. 

Consider two LDS models $(\Ak,\Wk)\neq(\Al,\Wl)$ and their autocovariance
matrics $(\Gammak,\Yk),(\Gammal,\Yl)$. Recall from (\ref{eq:GY_to_AW})
that $\Ak=\Yk\Gammak^{-1}$ and $\Wk=\Gammak-\Ak\Gammak\Ak^{\top}=\Gammak-\Ak\Yk^{\top}$;
$\Al$ and $\Wl$ can be expressed similarly. 
\begin{itemize}
\item First, regarding $\Ak-\Al$, one has
\[
\Ak-\Al=\Yk\Gammak^{-1}-\Yl\Gammal^{-1}=(\Yk-\Yl)\Gammak^{-1}+\Yl(\Gammak^{-1}-\Gammal^{-1}),
\]
where 
\[
\Yl(\Gammak^{-1}-\Gammal^{-1})=\Yl\Gammal^{-1}(\Gammal-\Gammak)\Gammak^{-1}=\Al(\Gammal-\Gammak)\Gammak^{-1}.
\]
Therefore, we have
\begin{align}
\|\Ak-\Al\|_{\Frm} & \le\|\Yk-\Yl\|_{\Frm}\|\Gammak^{-1}\|+\|\Al\|\|\Gammal-\Gammak\|_{\Frm}\|\Gammak^{-1}\|\nonumber \\
 & \le\frac{1}{\Wmin}\|\Yk-\Yl\|_{\Frm}+\frac{\ka}{\Wmin}\|\Gammal-\Gammak\|_{\Frm},\label{eq:DelA_Fro}
\end{align}
where the last line follows from $\Gammak\succcurlyeq\Wk\succcurlyeq\Wmin\Id$
and $\|\Al\|\le\ka\rho\le\ka$.
\item Next, we turn to the analysis of $\Wk-\Wl$, which satisfies
\begin{align*}
\Wk-\Wl & =(\Gammak-\Gammal)-(\Ak\Yk^{\top}-\Al\Yl^{\top}),\\
\|\Wk-\Wl\|_{\Frm} & \le\|\Gammak-\Gammal\|_{\Frm}+\|\Ak\Yk^{\top}-\Al\Yl^{\top}\|_{\Frm}.
\end{align*}
Notice that
\begin{align*}
\|\Ak\Yk^{\top}-\Al\Yl^{\top}\|_{\Frm} & =\|\Ak(\Yk-\Yl)^{\top}+(\Ak-\Al)\Yl^{\top}\|_{\Frm}\\
 & \le\|\Ak(\Yk-\Yl)^{\top}\|_{\Frm}+\|(\Ak-\Al)\Yl^{\top}\|_{\Frm}\\
 & \le\ka\|\Yk-\Yl\|_{\Frm}+\ka\Gmax\|\Ak-\Al\|_{\Frm},
\end{align*}
where the last line is due to $\|\Yl\|=\|\Al\Gammal\|\le\|\Al\|\cdot\|\Gammal\|\le\ka\Gmax$.
Therefore, we have
\begin{equation}
\|\Wk-\Wl\|_{\Frm}\le\|\Gammak-\Gammal\|_{\Frm}+\ka\|\Yk-\Yl\|_{\Frm}+\ka\Gmax\|\Ak-\Al\|_{\Frm}.\label{eq:DelW_Fro}
\end{equation}
\end{itemize}
For notational simplify, denote $\DelA\coloneqq\Ak-\Al$, $\DelW\coloneqq\Wk-\Wl$,
$\DelG\coloneqq\Gammak-\Gammal$, $\DelY\coloneqq\Yk-\Yl$. From (\ref{eq:DelW_Fro}),
one has
\begin{align*}
\|\DelA\|_{\Frm}^{2}+\frac{\|\DelW\|_{\Frm}^{2}}{\Wmax^{2}} & \le\|\DelA\|_{\Frm}^{2}+\frac{3}{\Wmax^{2}}\Big(\|\DelG\|_{\Frm}^{2}+\ka^{2}\|\DelY\|_{\Frm}^{2}+\ka^{2}\Gmax^{2}\|\DelA\|_{\Frm}^{2}\Big)\\
 & \le\frac{3}{\Wmax^{2}}\|\DelG\|_{\Frm}^{2}+\frac{3\ka^{2}}{\Wmax^{2}}\|\DelY\|_{\Frm}^{2}+\Big(1+\frac{3\ka^{2}\Gmax^{2}}{\Wmax^{2}}\Big)\|\DelA\|_{\Frm}^{2}\\
 & \le\frac{3}{\Wmax^{2}}\|\DelG\|_{\Frm}^{2}+\frac{3\ka^{2}}{\Wmax^{2}}\|\DelY\|_{\Frm}^{2}+\frac{4\ka^{2}\Gmax^{2}}{\Wmax^{2}}\|\DelA\|_{\Frm}^{2},
\end{align*}
where the last line follows from $\ka^{2}\Gmax^{2}/\Wmax^{2}\ge1$.
Morever, (\ref{eq:DelA_Fro}) tells us that $\|\DelA\|_{\Frm}^{2}\le\frac{2}{\Wmin^{2}}\|\DelY\|_{\Frm}^{2}+\frac{2\ka^{2}}{\Wmin^{2}}\|\DelG\|_{\Frm}^{2}$.
Putting together, we have
\begin{align*}
\|\DelA\|_{\Frm}^{2}+\frac{\|\DelW\|_{\Frm}^{2}}{\Wmax^{2}} & \le\frac{3}{\Wmax^{2}}\|\DelG\|_{\Frm}^{2}+\frac{3\ka^{2}}{\Wmax^{2}}\|\DelY\|_{\Frm}^{2}+\frac{4\ka^{2}\Gmax^{2}}{\Wmax^{2}}\|\DelA\|_{\Frm}^{2}\\
 & \le\frac{3}{\Wmax^{2}}\|\DelG\|_{\Frm}^{2}+\frac{3\ka^{2}}{\Wmax^{2}}\|\DelY\|_{\Frm}^{2}+\frac{4\ka^{2}\Gmax^{2}}{\Wmax^{2}}\Big(\frac{2}{\Wmin^{2}}\|\DelY\|_{\Frm}^{2}+\frac{2\ka^{2}}{\Wmin^{2}}\|\DelG\|_{\Frm}^{2}\Big)\\
 & =\Big(\frac{3}{\Wmax^{2}}+\frac{4\ka^{2}\Gmax^{2}}{\Wmax^{2}}\frac{2\ka^{2}}{\Wmin^{2}}\Big)\|\DelG\|_{\Frm}^{2}+\Big(\frac{3\ka^{2}}{\Wmax^{2}}+\frac{4\ka^{2}\Gmax^{2}}{\Wmax^{2}}\frac{2}{\Wmin^{2}}\Big)\|\DelY\|_{\Frm}^{2}\\
 & \le\frac{11\ka^{4}\Gmax^{2}}{\Wmax^{2}\Wmin^{2}}\Big(\|\DelG\|_{\Frm}^{2}+\|\DelY\|_{\Frm}^{2}\Big).
\end{align*}
In sum, we have just shown that, if $\|\DelG\|_{\Frm}^{2}+\|\DelY\|_{\Frm}^{2}<\DGY^{2}$,
then $\|\DelA\|_{\Frm}^{2}+\frac{\|\DelW\|_{\Frm}^{2}}{\Wmax^{2}}<\frac{11\ka^{4}\Gmax^{2}}{\Wmax^{2}\Wmin^{2}}\DGY^{2}$.
Equivalently (by contraposition), if $\|\DelA\|_{\Frm}^{2}+\frac{\|\DelW\|_{\Frm}^{2}}{\Wmax^{2}}\ge\DAW^{2}$,
then $\|\DelG\|_{\Frm}^{2}+\|\DelY\|_{\Frm}^{2}\ge\frac{\Wmax^{2}\Wmin^{2}}{11\ka^{4}\Gmax^{2}}\DAW^{2}$.
This proves our lower bound for $\DGY$ in terms of $\DAW$. 
\end{proof}
\begin{example}
\label{exa:switching}It has been known that in our Case 1 (i.e.~a
single continuous trajectory), the quick switching of multiple LDS
models may lead to exponentially large states, even if each individual
model is stable \cite{liberzon2003switching}. We give a quick example
for completeness. Consider 
\[
\Ak=0.99\begin{bmatrix}0 & 2\\
\frac{1}{2} & 0
\end{bmatrix},\quad\Al=0.99\begin{bmatrix}0 & 3\\
\frac{1}{3} & 0
\end{bmatrix},
\]
both satisfying the stability condition (\ref{eq:stability}) with
$\rho=0.99<1$ and hence $\tmix\asymp1/(1-\rho)=100$. Suppose that
each short trajectory has only a length of 2, and the $m$-th (resp.~$(m+1)$-th)
trajectory has label $k_{m}=\ell$ (resp.~$k_{m+1}=k$). Then $\bx_{m+2,0}$
is equal to $\Ak\Al\bx_{m,0}$ plus a mean-zero noise term, where
\[
\Ak\Al=0.99^{2}\begin{bmatrix}0 & 2\\
\frac{1}{2} & 0
\end{bmatrix}\begin{bmatrix}0 & 3\\
\frac{1}{3} & 0
\end{bmatrix}=0.99^{2}\begin{bmatrix}\frac{2}{3} & 0\\
0 & \frac{3}{2}
\end{bmatrix}
\]
has spectral radius $0.99^{2}\cdot3/2>1$; this will cause the exponential
explosion of the states.
\end{example}

\section{Extensions of Algorithm~\ref{alg:overall}\label{sec:extensions}}

\paragraph{Different trajectory lengths.}

Recall that in Section \ref{sec:algorithms}, we assume that all short
trajectories within each subset of data $\Mo$ have the same length
$T_{m}=\Tso$. If this is not the case, we can easily modify our algorithms
in the following ways:
\begin{itemize}
\item For subspace estimation, the easiest way to handle different $T_{m}$'s
is to simply \emph{truncate} the trajectories in $\Msubspace$ so
that they have the same length $\Tssubspace=\min_{m\in\Msubspace}T_{m}$,
and then apply Algorithm~\ref{alg:subspace} without modification.
However, this might waste many samples when some trajectories of $\Msubspace$
are much longer than others; one way to resolve this is to \emph{manually
divide} the longer trajectories into shorter segments of comparable
lengths, before doing truncation. A more refined method is to modify
Algorithm~\ref{alg:subspace} itself, by \emph{re-defining the index
sets} $\Omegaone,\Omegatwo$ separately for each trajectory; moreover,
in the definition of $\Hihat$ and $\Gihat$, one might consider assigning
larger weights to longer trajectories, instead of using the uniform
weight $1/|\Msubspace|$.
\item For clustering (or pairwise testing) of $\Mclustering$, we can handle
various $T_{m}$'s similarly, by either truncating each pair of trajectories
to the same length, or modifying Algorithm~\ref{alg:clustering} itself
(via re-defining the index sets $\{\Omega_{g,1},\Omega_{g,2}\}_{1\le g\le G}$
separately for each trajectory).
\item Our methods for model estimation and classification, namely Algorithms~\ref{alg:LS} and~\ref{alg:classification},
are already adaptive to different $T_{m}$'s in $\Mclustering$ and
$\Mclassification$, and hence need no modification.
\end{itemize}

\paragraph{Unknown parameters.}

Next, we show how to handle the case when certain parameters are unknown
to the algorithms: 
\begin{itemize}
\item In Algorithm~\ref{alg:subspace}, we set the dimension of the output
subspaces $\{\Vi,\Ui\}$ to be $K$ (the number of models). If $K$
is unknown, we might instead examine the eigenvalues of $\Hihat+\Hihat^{\top}$
and $\Gihat+\Gihat^{\top}$, and pick the subspace dimension that
covers most of the energy in the eigenvalues.
\item In Algorithm~\ref{alg:clustering}, we need to know the separation
parameter $\DGY$ (in order to choose the testing threshold $\tau$
appropriately) and the number of models $K$ (for clustering). If
either $\DGY$ or $K$ is unknown, we might instead try different
values of threshold $\tau$, and pick the one that (after permutation)
makes the similarity matrix $\bS$ block-diagonal with as few blocks
as possible.
\end{itemize}

\section{Additional experiments} 
\label{sec:additional_exp}

\subsection{Synthetic experiments: clustering and classification}

First, we take a closer look at the performance of our clustering method (Algorithm~\ref{alg:clustering}) through synthetic experiments.
We set the parameters $d=40, K=2, \rho=0.5, \delta=0.12$. 
The LDS models are generated by $\Ak = (\rho \pm \delta) \bR$ and $\Wk = \Id$, where $\bR$ is a random orthogonal matrix. 
We let $|\Mclustering| = 5 d$, and vary $\Tsclustering \in [10, 60]$.
We run our clustering method on the dataset $\Mclustering$, either with or without the assistance of subspace estimation (Algorithm~\ref{alg:subspace}) and dimensionality reduction.
For the former case, we use the same dataset $\Mclustering$ for subspace estimation, without sample splitting, which is closer to practice;
for the latter, we simply replace the subspaces $\{\Vi,\Ui\}$ with $\Id$.
The numerical results are illustrated in Figure~\ref{fig:synthetic} (left), confirming that (1) in both cases, the clustering error decreases as $\Tsclustering$ increases, and (2) subspace estimation and dimensionality reduction significantly improve the clustering accuracy.

Next, we examine the performance of our classification method (Algorithm~\ref{alg:classification}) in the same setting as above. 
We first obtain a coarse model estimation by running Stage~1 of Algorithm~\ref{alg:overall} on the dataset $\Mclustering$, with $|\Mclustering|=10 d$ and $\Tsclustering=30$. 
Then, we run classification on the dataset $\Mclassification$, with varying $\Tsclassification \in [4, 50]$.
The numerical results are included in Figure~\ref{fig:synthetic} (right), showing that the classification error rapidly decreases to zero as $\Tsclassification$ grows.

\begin{figure}[ht]
\vskip 0.2in
\begin{center}
\includegraphics[width=0.4\columnwidth]{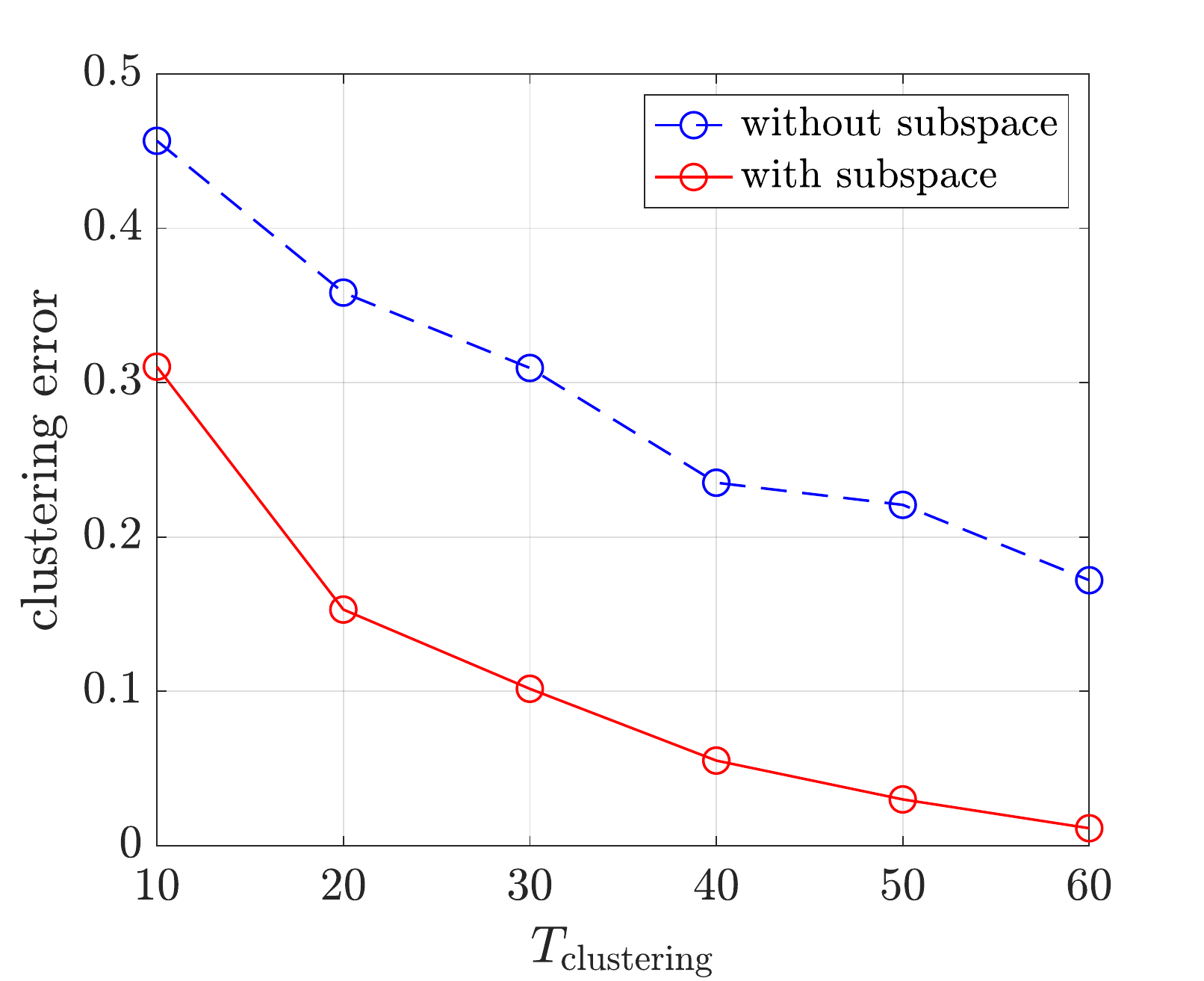}%
\includegraphics[width=0.4\columnwidth]{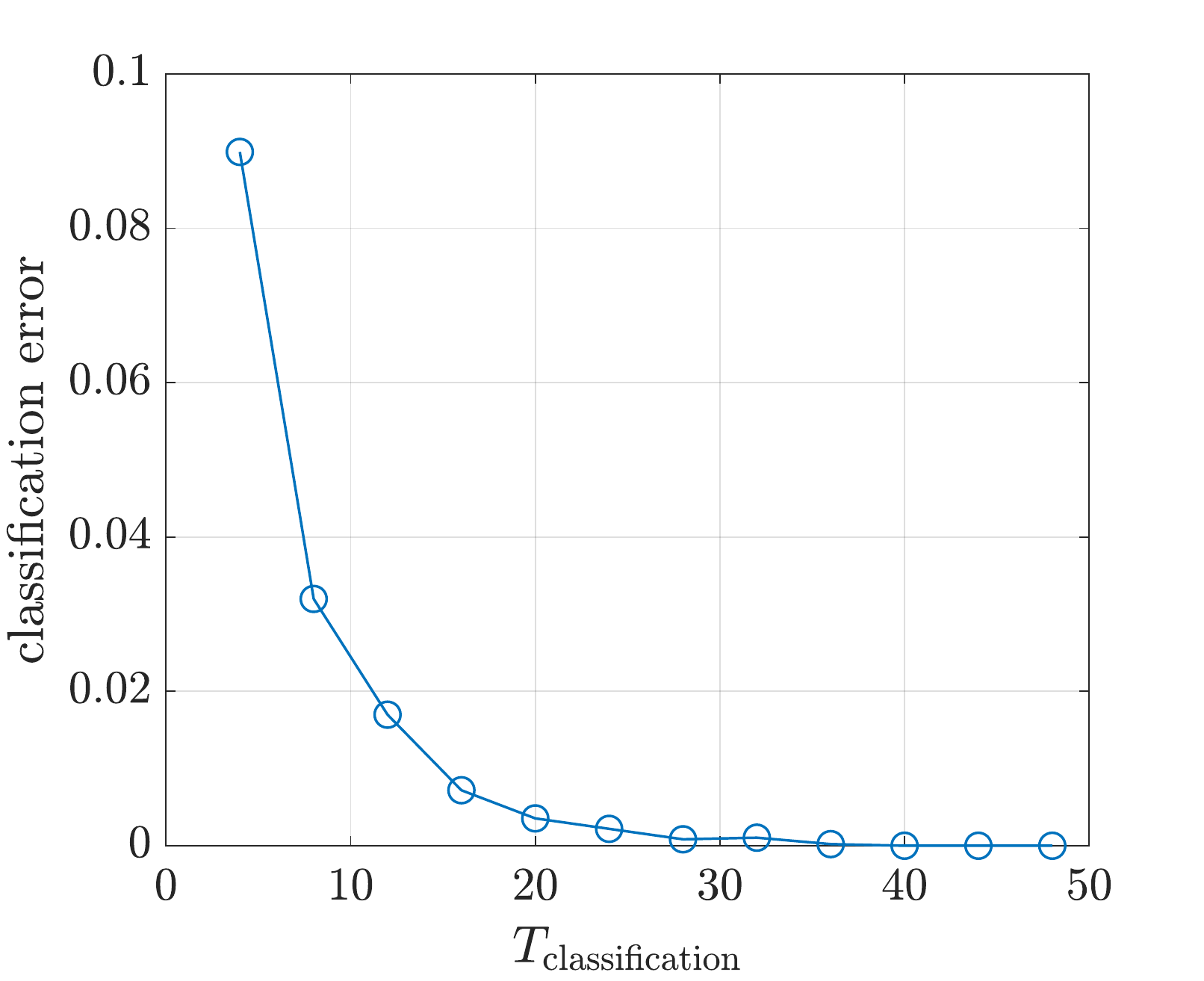}
\caption{\textbf{Left:} mis-clustering rate versus $\Tsclustering$. \textbf{Right:} mis-classification rate versus $\Tsclassification$.}
\label{fig:synthetic}
\end{center}
\vskip -0.2in
\end{figure}

\subsection{Real-data experiments: MotionSense}

To show the practical relevance of the proposed algorithms, we work with the MotionSense dataset~\cite{Malekzadeh:2019:MSD:3302505.3310068}.
This datasets consists of multivariate time series of dimension $d=12$, collected (at a rate of 50Hz) by accelerometer and gyroscope sensors on a mobile phone while a person performs various activities, such as ``jogging'', ``walking'', ``sitting'', and so on.
In our experiments, we break the data into 8-second short trajectories, and treat the human activities as latent variables.
Figure~\ref{fig:motionsense} (left) illustrates what the data looks like. 
Notice that the time series do not satisfy the mixing property assumed in our theory, but are rather periodic instead.

As a preliminary attempt to apply our algorithms in the real world, we show that the proposed clustering method (which is one of the most crucial step in our overall approach), without any modification, works reasonably well even for this dataset.
To be concrete, we apply Algorithm~\ref{alg:clustering} (without dimensionality reduction, i.e.~$\{\Vi,\Ui\}$ are set to $\Id$) to a mixture of 12 ``jogging'' and 12 ``walking'' trajectories.
Figure~\ref{fig:motionsense} (right) shows the resulted \emph{distance matrix}, which is defined in the same way as Line 11 of Algorithm~\ref{alg:clustering}, but without thresholding. 
Its clear block structure confirms that, with an appropriate choice of threshold $\tau$,  Algorithm~\ref{alg:clustering} will return an accurate/exact clustering of the mixed trajectories. 
These results are strong indication that the proposed algorithms in this work might generalize to much broader settings than what our current theory suggests, and we hope that this will inspire further extensions and applications of the proposed methods.

\begin{figure}[ht]
\vskip 0.2in
\begin{center}
\includegraphics[width=0.6\columnwidth]{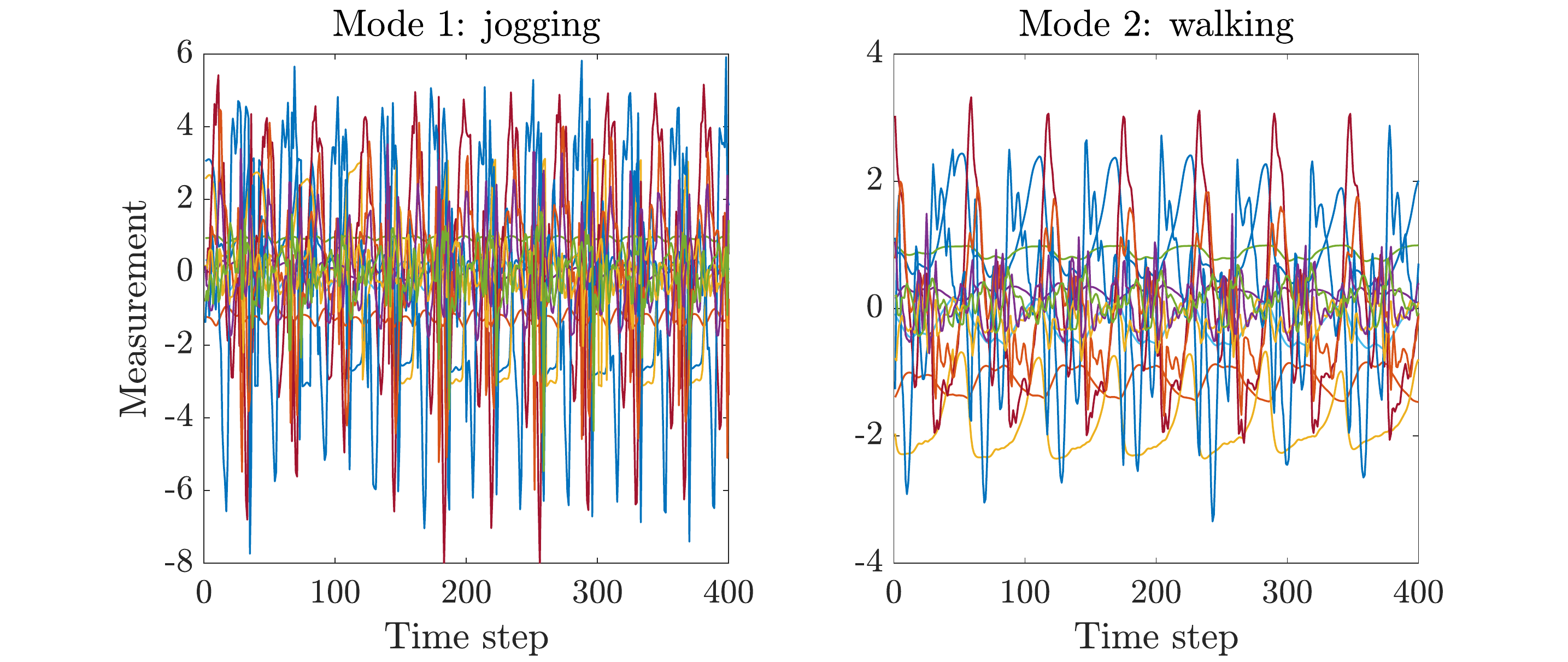}%
\includegraphics[width=0.35\columnwidth]{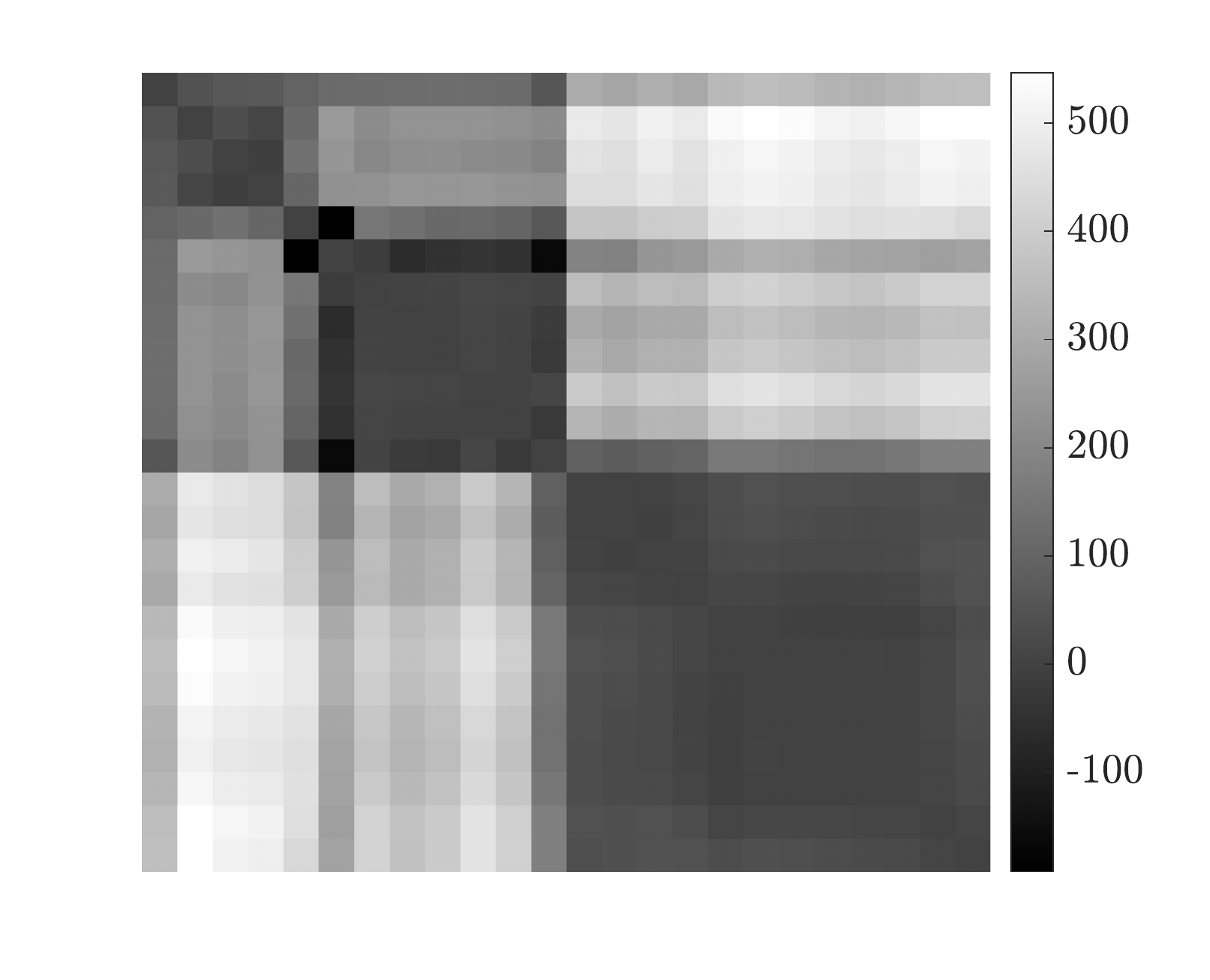}
\caption{\textbf{Left:} examples of ``jogging'' and ``walking'' trajectories from the MotionSense dataset. \textbf{Right:} the distance matrix constructed by Algorithm~\ref{alg:clustering} for 12 ``jogging'' and 12 ``walking'' trajectories.}
\label{fig:motionsense}
\end{center}
\vskip -0.2in
\end{figure}

\bibliographystyle{alphaabbr}
\bibliography{refs}

\newcommand{\etalchar}[1]{$^{#1}$}
\begin{thebibliography}{DMM{\etalchar{+}}20}

\bibitem[ABK{\etalchar{+}}21]{ansari2021deep}
A.~F. Ansari, K.~Benidis, R.~Kurle, A.~C. Turkmen, H.~Soh, A.~J. Smola,
  B.~Wang, and T.~Januschowski.
\newblock Deep explicit duration switching models for time series.
\newblock In {\em Advances in Neural Information Processing Systems},
  volume~34, 2021.

\bibitem[Alb91]{albert1991two}
P.~S. Albert.
\newblock A two-state markov mixture model for a time series of epileptic
  seizure counts.
\newblock {\em Biometrics}, pages 1371--1381, 1991.

\bibitem[ASW15]{aghabozorgi2015time}
S.~Aghabozorgi, A.~S. Shirkhorshidi, and T.~Y. Wah.
\newblock Time-series clustering -- a decade review.
\newblock {\em Information Systems}, 53:16--38, 2015.

\bibitem[Bax00]{baxter2000model}
J.~Baxter.
\newblock A model of inductive bias learning.
\newblock {\em Journal of Artificial Intelligence Research}, 12:149--198, 2000.

\bibitem[BL13]{brunskill2013multitask}
E.~Brunskill and L.~Li.
\newblock Sample complexity of multi-task reinforcement learning.
\newblock In {\em Proceedings of the Twenty-Ninth Conference on Uncertainty in
  Artificial Intelligence}, pages 122--131, Arlington, Virginia, USA, 2013.
  AUAI Press.

\bibitem[BLL{\etalchar{+}}09]{brunskill2009provably}
E.~Brunskill, B.~R. Leffler, L.~Li, M.~L. Littman, and N.~Roy.
\newblock Provably efficient learning with typed parametric models.
\newblock {\em Journal of Machine Learning Research}, 10(68):1955--1988, 2009.

\bibitem[BS19]{buchholz2019computation}
P.~Buchholz and D.~Scheftelowitsch.
\newblock Computation of weighted sums of rewards for concurrent mdps.
\newblock {\em Mathematical Methods of Operations Research}, 89(1):1--42, 2019.

\bibitem[BTBC16]{bulteel2016clustering}
K.~Bulteel, F.~Tuerlinckx, A.~Brose, and E.~Ceulemans.
\newblock Clustering vector autoregressive models: Capturing qualitative
  differences in within-person dynamics.
\newblock {\em Frontiers in Psychology}, 7:1540, 2016.

\bibitem[CCF{\etalchar{+}}21]{chen2021spectral}
Y.~Chen, Y.~Chi, J.~Fan, C.~Ma, et~al.
\newblock Spectral methods for data science: A statistical perspective.
\newblock {\em Foundations and Trends{\textregistered} in Machine Learning},
  14(5):566--806, 2021.

\bibitem[CHK{\etalchar{+}}18]{cohen2018online}
A.~Cohen, A.~Hasidim, T.~Koren, N.~Lazic, Y.~Mansour, and K.~Talwar.
\newblock Online linear quadratic control.
\newblock In {\em Proceedings of the International Conference on Machine
  Learning}, pages 1029--1038. PMLR, 2018.

\bibitem[CKMY21]{chen2021kalman}
S.~Chen, F.~Koehler, A.~Moitra, and M.~Yau.
\newblock Kalman filtering with adversarial corruptions.
\newblock {\em arXiv preprint arXiv:2111.06395}, 2021.

\bibitem[CLS20]{chen2020learning}
S.~Chen, J.~Li, and Z.~Song.
\newblock Learning mixtures of linear regressions in subexponential time via
  fourier moments.
\newblock In {\em Proceedings of the 52nd Annual ACM SIGACT Symposium on Theory
  of Computing}, pages 587--600, 2020.

\bibitem[CMPC21]{chen2021learning}
Y.~Chen, C.~Ma, H.~V. Poor, and Y.~Chen.
\newblock Learning mixtures of low-rank models.
\newblock {\em IEEE Transactions on Information Theory}, 67(7):4613--4636,
  2021.

\bibitem[CYC17]{chen2014convex}
Y.~Chen, X.~Yi, and C.~Caramanis.
\newblock Convex and nonconvex formulations for mixed regression with two
  components: Minimax optimal rates.
\newblock {\em IEEE Transactions on Information Theory}, 64(3):1738--1766,
  2017.

\bibitem[DDGM15]{d2015time}
P.~D'Urso, L.~De~Giovanni, and R.~Massari.
\newblock Time series clustering by a robust autoregressive metric with
  application to air pollution.
\newblock {\em Chemometrics and Intelligent Laboratory Systems}, 141:107--124,
  2015.

\bibitem[DEF{\etalchar{+}}21]{diamandis2021wasserstein}
T.~Diamandis, Y.~Eldar, A.~Fallah, F.~Farnia, and A.~Ozdaglar.
\newblock A {W}asserstein minimax framework for mixed linear regression.
\newblock In M.~Meila and T.~Zhang, editors, {\em Proceedings of the
  International Conference on Machine Learning}, volume 139 of {\em Proceedings
  of Machine Learning Research}, pages 2697--2706. PMLR, 18--24 Jul 2021.

\bibitem[DHK{\etalchar{+}}21]{du2020few}
S.~S. Du, W.~Hu, S.~M. Kakade, J.~D. Lee, and Q.~Lei.
\newblock Few-shot learning via learning the representation, provably.
\newblock In {\em International Conference on Learning Representations}, 2021.

\bibitem[DK70]{davis1970rotation}
C.~Davis and W.~M. Kahan.
\newblock The rotation of eigenvectors by a perturbation. iii.
\newblock {\em SIAM Journal on Numerical Analysis}, 7(1):1--46, 1970.

\bibitem[DK20]{diakonikolas2020small}
I.~Diakonikolas and D.~M. Kane.
\newblock Small covers for near-zero sets of polynomials and learning latent
  variable models.
\newblock In {\em Proceedings of the 2020 IEEE 61st Annual Symposium on
  Foundations of Computer Science (FOCS)}, pages 184--195. IEEE, 2020.

\bibitem[DMM{\etalchar{+}}20]{dean2020sample}
S.~Dean, H.~Mania, N.~Matni, B.~Recht, and S.~Tu.
\newblock On the sample complexity of the linear quadratic regulator.
\newblock {\em Foundations of Computational Mathematics}, 20(4):633--679, 2020.

\bibitem[DMZ20]{du2020q}
K.~Du, Q.~Meng, and F.~Zhang.
\newblock A q-learning algorithm for discrete-time linear-quadratic control
  with random parameters of unknown distribution: convergence and
  stabilization.
\newblock {\em arXiv preprint arXiv:2011.04970}, 2020.

\bibitem[FAL17]{finn2017model}
C.~Finn, P.~Abbeel, and S.~Levine.
\newblock Model-agnostic meta-learning for fast adaptation of deep networks.
\newblock In {\em Proceedings of the International Conference on Machine
  Learning - Volume 70}, pages 1126--1135. JMLR.org, 2017.

\bibitem[FGKM18]{fazel2018global}
M.~Fazel, R.~Ge, S.~Kakade, and M.~Mesbahi.
\newblock Global convergence of policy gradient methods for the linear
  quadratic regulator.
\newblock In {\em Proceedings of the International Conference on Machine
  Learning}, pages 1467--1476. PMLR, 2018.

\bibitem[FSR20]{foster2020learning}
D.~Foster, T.~Sarkar, and A.~Rakhlin.
\newblock Learning nonlinear dynamical systems from a single trajectory.
\newblock In {\em Learning for Dynamics and Control}, pages 851--861. PMLR,
  2020.

\bibitem[FTM18]{faradonbeh2018finite}
M.~K.~S. Faradonbeh, A.~Tewari, and G.~Michailidis.
\newblock Finite time identification in unstable linear systems.
\newblock {\em Automatica}, 96:342--353, 2018.

\bibitem[FTM20]{faradonbeh2020adaptive}
M.~K.~S. Faradonbeh, A.~Tewari, and G.~Michailidis.
\newblock On adaptive linear--quadratic regulators.
\newblock {\em Automatica}, 117:108982, 2020.

\bibitem[GH00]{ghahramani2000variational}
Z.~Ghahramani and G.~E. Hinton.
\newblock Variational learning for switching state-space models.
\newblock {\em Neural computation}, 12(4):831--864, 2000.

\bibitem[HDCM15]{hallak2015contextual}
A.~Hallak, D.~Di~Castro, and S.~Mannor.
\newblock Contextual markov decision processes.
\newblock {\em arXiv preprint arXiv:1502.02259}, 2015.

\bibitem[HFL{\etalchar{+}}21]{huang2021adarl}
B.~Huang, F.~Feng, C.~Lu, S.~Magliacane, and K.~Zhang.
\newblock Adarl: What, where, and how to adapt in transfer reinforcement
  learning.
\newblock {\em arXiv preprint arXiv:2107.02729}, 2021.

\bibitem[HKZ{\etalchar{+}}20]{hong2020revisited}
J.~Hong, B.~Kveton, M.~Zaheer, Y.~Chow, A.~Ahmed, and C.~Boutilier.
\newblock Latent bandits revisited.
\newblock In {\em Advances in Neural Information Processing Systems},
  volume~33, pages 13423--13433. Curran Associates, Inc., 2020.

\bibitem[HSFP20]{harrison2020continuous}
J.~Harrison, A.~Sharma, C.~Finn, and M.~Pavone.
\newblock Continuous meta-learning without tasks.
\newblock In {\em Advances in Neural Information Processing Systems},
  volume~33, 2020.

\bibitem[HSV21]{huang2021coresets}
L.~Huang, K.~Sudhir, and N.~Vishnoi.
\newblock Coresets for time series clustering.
\newblock In {\em Advances in Neural Information Processing Systems},
  volume~34, 2021.

\bibitem[HVBL17]{hallac2017toeplitz}
D.~Hallac, S.~Vare, S.~Boyd, and J.~Leskovec.
\newblock Toeplitz inverse covariance-based clustering of multivariate time
  series data.
\newblock In {\em Proceedings of the 23rd ACM SIGKDD International Conference
  on Knowledge Discovery and Data Mining}, pages 215--223, 2017.

\bibitem[JP19]{jedra2019sample}
Y.~Jedra and A.~Proutiere.
\newblock Sample complexity lower bounds for linear system identification.
\newblock In {\em Proceedings of the 2019 IEEE 58th Conference on Decision and
  Control (CDC)}, pages 2676--2681. IEEE, 2019.

\bibitem[KDG96]{khalil1996robust}
I.~S. Khalil, J.~Doyle, and K.~Glover.
\newblock {\em Robust and {O}ptimal {C}ontrol}.
\newblock Prentice Hall, New Jersey, 1996.

\bibitem[KECM21]{kwon2021rl}
J.~Kwon, Y.~Efroni, C.~Caramanis, and S.~Mannor.
\newblock Rl for latent {MDP}s: Regret guarantees and a lower bound.
\newblock {\em arXiv preprint arXiv:2102.04939}, 2021.

\bibitem[KHC21]{kwon2020minimax}
J.~Kwon, N.~Ho, and C.~Caramanis.
\newblock On the minimax optimality of the em algorithm for learning
  two-component mixed linear regression.
\newblock In {\em Proceedings of the International Conference on Artificial
  Intelligence and Statistics}, pages 1405--1413. PMLR, 2021.

\bibitem[KKL{\etalchar{+}}20]{kakade2020information}
S.~Kakade, A.~Krishnamurthy, K.~Lowrey, M.~Ohnishi, and W.~Sun.
\newblock Information theoretic regret bounds for online nonlinear control.
\newblock In {\em Advances in Neural Information Processing Systems},
  volume~33, pages 15312--15325. Curran Associates, Inc., 2020.

\bibitem[KMS16]{kalliovirta2016gaussian}
L.~Kalliovirta, M.~Meitz, and P.~Saikkonen.
\newblock Gaussian mixture vector autoregression.
\newblock {\em Journal of {E}conometrics}, 192(2):485--498, 2016.

\bibitem[KSH00]{kailath2000linear}
T.~Kailath, A.~H. Sayed, and B.~Hassibi.
\newblock {\em Linear {E}stimation}.
\newblock Prentice Hall, 2000.

\bibitem[KSKO20]{kong2020robust}
W.~Kong, R.~Somani, S.~Kakade, and S.~Oh.
\newblock Robust meta-learning for mixed linear regression with small batches.
\newblock In {\em Advances in Neural Information Processing Systems},
  volume~33, pages 4683--4696. Curran Associates, Inc., 2020.

\bibitem[KSS{\etalchar{+}}20]{kong2020meta}
W.~Kong, R.~Somani, Z.~Song, S.~Kakade, and S.~Oh.
\newblock Meta-learning for mixed linear regression.
\newblock In {\em Proceedings of the International Conference on Machine
  Learning}, pages 5394--5404. PMLR, 2020.

\bibitem[LAHA20]{Lale2020log}
S.~Lale, K.~Azizzadenesheli, B.~Hassibi, and A.~Anandkumar.
\newblock Logarithmic regret bound in partially observable linear dynamical
  systems.
\newblock In {\em Advances in Neural Information Processing Systems},
  volume~33, pages 20876--20888. Curran Associates, Inc., 2020.

\bibitem[LG10]{lazaric2010bayesian}
A.~Lazaric and M.~Ghavamzadeh.
\newblock Bayesian multi-task reinforcement learning.
\newblock In {\em Proceedings of the International Conference on Machine
  Learning}, pages 599--606. Omnipress, 2010.

\bibitem[LGB16]{liu2016pac}
Y.~Liu, Z.~Guo, and E.~Brunskill.
\newblock Pac continuous state online multitask reinforcement learning with
  identification.
\newblock In {\em Proceedings of the 2016 International Conference on
  Autonomous Agents \& Multiagent Systems}, pages 438--446, 2016.

\bibitem[Lia05]{liao2005clustering}
T.~W. Liao.
\newblock Clustering of time series data -- a survey.
\newblock {\em Pattern {R}ecognition}, 38(11):1857--1874, 2005.

\bibitem[Lib03]{liberzon2003switching}
D.~Liberzon.
\newblock {\em Switching in {S}ystems and {C}ontrol}.
\newblock Springer Science \& Business Media, 2003.

\bibitem[LJM{\etalchar{+}}17]{linderman2017bayesian}
S.~Linderman, M.~Johnson, A.~Miller, R.~Adams, D.~Blei, and L.~Paninski.
\newblock Bayesian learning and inference in recurrent switching linear
  dynamical systems.
\newblock In {\em Artificial Intelligence and Statistics}, pages 914--922.
  PMLR, 2017.

\bibitem[Lju98]{ljung1998system}
L.~Ljung.
\newblock System identification.
\newblock In {\em Signal analysis and prediction}, pages 163--173. Springer,
  1998.

\bibitem[LL18]{li2018learning}
Y.~Li and Y.~Liang.
\newblock Learning mixtures of linear regressions with nearly optimal
  complexity.
\newblock In {\em Proceedings of the Conference On Learning Theory}, pages
  1125--1144, 2018.

\bibitem[LM19]{lugosi2019mean}
G.~Lugosi and S.~Mendelson.
\newblock Mean estimation and regression under heavy-tailed distributions: A
  survey.
\newblock {\em Foundations of Computational Mathematics}, 19(5):1145--1190,
  2019.

\bibitem[Mah00]{maharaj2000cluster}
E.~A. Maharaj.
\newblock Cluster of time series.
\newblock {\em Journal of Classification}, 17(2):297--314, 2000.

\bibitem[MCCH19]{Malekzadeh:2019:MSD:3302505.3310068}
M.~Malekzadeh, R.~G. Clegg, A.~Cavallaro, and H.~Haddadi.
\newblock Mobile sensor data anonymization.
\newblock In {\em Proceedings of the International Conference on Internet of
  Things Design and Implementation}, IoTDI '19, pages 49--58. ACM, 2019.

\bibitem[MFS{\etalchar{+}}20]{mhammedi2020learning}
Z.~Mhammedi, D.~J. Foster, M.~Simchowitz, D.~Misra, W.~Sun, A.~Krishnamurthy,
  A.~Rakhlin, and J.~Langford.
\newblock Learning the linear quadratic regulator from nonlinear observations.
\newblock {\em arXiv preprint arXiv:2010.03799}, 2020.

\bibitem[MFTM21]{modi2021joint}
A.~Modi, M.~K.~S. Faradonbeh, A.~Tewari, and G.~Michailidis.
\newblock Joint learning of linear time-invariant dynamical systems.
\newblock {\em arXiv preprint arXiv:2112.10955}, 2021.

\bibitem[MGSH21]{minasyan2021online}
E.~Minasyan, P.~Gradu, M.~Simchowitz, and E.~Hazan.
\newblock Online control of unknown time-varying dynamical systems.
\newblock In {\em Advances in Neural Information Processing Systems},
  volume~34, 2021.

\bibitem[MJR20]{mania2020active}
H.~Mania, M.~I. Jordan, and B.~Recht.
\newblock Active learning for nonlinear system identification with guarantees.
\newblock {\em arXiv preprint arXiv:2006.10277}, 2020.

\bibitem[MM14]{maillard2014latent}
O.-A. Maillard and S.~Mannor.
\newblock Latent bandits.
\newblock In {\em Proceedings of the International Conference on Machine
  Learning}, pages 136--144. PMLR, 2014.

\bibitem[MPB{\etalchar{+}}19]{malik2019derivative}
D.~Malik, A.~Pananjady, K.~Bhatia, K.~Khamaru, P.~Bartlett, and M.~Wainwright.
\newblock Derivative-free methods for policy optimization: Guarantees for
  linear quadratic systems.
\newblock In {\em Proceedings of the International Conference on Artificial
  Intelligence and Statistics}, pages 2916--2925. PMLR, 2019.

\bibitem[MPRP16]{maurer2016benefit}
A.~Maurer, M.~Pontil, and B.~Romera-Paredes.
\newblock The benefit of multitask representation learning.
\newblock {\em The Journal of Machine Learning Research}, 17(1):2853--2884,
  2016.

\bibitem[MT94]{mcculloch1994statistical}
R.~E. McCulloch and R.~S. Tsay.
\newblock Statistical analysis of economic time series via markov switching
  models.
\newblock {\em Journal of {T}ime {S}eries {A}nalysis}, 15(5):523--539, 1994.

\bibitem[MTR19]{mania2019certainty}
H.~Mania, S.~Tu, and B.~Recht.
\newblock Certainty equivalence is efficient for linear quadratic control.
\newblock In {\em Proceedings of the 33rd International Conference on Neural
  Information Processing Systems}, pages 10154--10164, 2019.

\bibitem[MYP{\etalchar{+}}09]{mezer2009cluster}
A.~Mezer, Y.~Yovel, O.~Pasternak, T.~Gorfine, and Y.~Assaf.
\newblock Cluster analysis of resting-state fmri time series.
\newblock {\em Neuroimage}, 45(4):1117--1125, 2009.

\bibitem[OO19]{oymak2019non}
S.~Oymak and N.~Ozay.
\newblock Non-asymptotic identification of {LTI} systems from a single
  trajectory.
\newblock In {\em Proceedings of the 2019 American {C}ontrol {C}onference
  (ACC)}, pages 5655--5661. IEEE, 2019.

\bibitem[PM20]{mazumdar2020recovery}
S.~Pal and A.~Mazumdar.
\newblock Recovery of sparse signals from a mixture of linear samples.
\newblock In {\em Proceedings of the International Conference on Machine
  Learning}, pages 7466--7475. PMLR, 2020.

\bibitem[PSR{\etalchar{+}}21]{pathak2021cluster}
R.~Pathak, R.~Sen, N.~Rao, N.~B. Erichson, M.~I. Jordan, and I.~S. Dhillon.
\newblock Cluster-and-conquer: A framework for time-series forecasting.
\newblock {\em arXiv preprint arXiv:2110.14011}, 2021.

\bibitem[PY09]{pan2009survey}
S.~J. Pan and Q.~Yang.
\newblock A survey on transfer learning.
\newblock {\em IEEE Transactions on Knowledge and Data Engineering},
  22(10):1345--1359, 2009.

\bibitem[QR78]{quandt1978estimating}
R.~E. Quandt and J.~B. Ramsey.
\newblock Estimating mixtures of normal distributions and switching
  regressions.
\newblock {\em Journal of the American Statistical Association},
  73(364):730--738, 1978.

\bibitem[QSL{\etalchar{+}}21]{qu2021stable}
G.~Qu, Y.~Shi, S.~Lale, A.~Anandkumar, and A.~Wierman.
\newblock Stable online control of linear time-varying systems.
\newblock In {\em Learning for Dynamics and Control}, pages 742--753. PMLR,
  2021.

\bibitem[SBR19]{simchowitz2019learning}
M.~Simchowitz, R.~Boczar, and B.~Recht.
\newblock Learning linear dynamical systems with semi-parametric least squares.
\newblock In {\em Proceedings of the Conference on Learning Theory}, pages
  2714--2802. PMLR, 2019.

\bibitem[SF20]{simchowitz2020naive}
M.~Simchowitz and D.~Foster.
\newblock Naive exploration is optimal for online lqr.
\newblock In {\em Proceedings of the International Conference on Machine
  Learning}, pages 8937--8948. PMLR, 2020.

\bibitem[SIW{\etalchar{+}}20]{sedghi2020machine}
L.~Sedghi, Z.~Ijaz, K.~Witheephanich, D.~Pesch, et~al.
\newblock Machine learning in event-triggered control: Recent advances and open
  issues.
\newblock {\em arXiv preprint arXiv:2009.12783}, 2020.

\bibitem[SKD21]{steimle2021multi}
L.~N. Steimle, D.~L. Kaufman, and B.~T. Denton.
\newblock Multi-model markov decision processes.
\newblock {\em IISE Transactions}, pages 1--16, 2021.

\bibitem[SL15]{shi2015survey}
P.~Shi and F.~Li.
\newblock A survey on markovian jump systems: modeling and design.
\newblock {\em International Journal of Control, Automation and Systems},
  13(1):1--16, 2015.

\bibitem[SMT{\etalchar{+}}18]{simchowitz2018learning}
M.~Simchowitz, H.~Mania, S.~Tu, M.~I. Jordan, and B.~Recht.
\newblock Learning without mixing: Towards a sharp analysis of linear system
  identification.
\newblock In {\em Proceedings of the Conference On Learning Theory}, pages
  439--473. PMLR, 2018.

\bibitem[SOF20]{sun20regularization}
Y.~Sun, S.~Oymak, and M.~Fazel.
\newblock Finite sample system identification: Optimal rates and the role of
  regularization.
\newblock In {\em Proceedings of the 2nd Conference on Learning for Dynamics
  and Control}, volume 120 of {\em Proceedings of Machine Learning Research},
  pages 16--25. PMLR, 10--11 Jun 2020.

\bibitem[SR19]{sarkar2019near}
T.~Sarkar and A.~Rakhlin.
\newblock Near optimal finite time identification of arbitrary linear dynamical
  systems.
\newblock In {\em Proceedings of the International Conference on Machine
  Learning}, pages 5610--5618. PMLR, 2019.

\bibitem[SRD19]{sarkar2019nonparametric}
T.~Sarkar, A.~Rakhlin, and M.~Dahleh.
\newblock Nonparametric system identification of stochastic switched linear
  systems.
\newblock In {\em Proceedings of the 2019 IEEE 58th Conference on Decision and
  Control (CDC)}, pages 3623--3628. IEEE, 2019.

\bibitem[SRD21]{Sarkar2021LTI}
T.~Sarkar, A.~Rakhlin, and M.~A. Dahleh.
\newblock Finite time {LTI} system identification.
\newblock {\em Journal of Machine Learning Research}, 22(26):1--61, 2021.

\bibitem[SST20]{schluter2020event}
H.~Schluter, F.~Solowjow, and S.~Trimpe.
\newblock Event-triggered learning for linear quadratic control.
\newblock {\em IEEE Transactions on Automatic Control}, 2020.

\bibitem[SSZ17]{snell2017prototypical}
J.~Snell, K.~Swersky, and R.~Zemel.
\newblock Prototypical networks for few-shot learning.
\newblock In {\em Advances in Neural Information Processing Systems}, pages
  4077--4087, 2017.

\bibitem[Sun06]{sun2006switched}
Z.~Sun.
\newblock {\em Switched {L}inear {S}ystems: {C}ontrol and {D}esign}.
\newblock Springer Science \& Business Media, 2006.

\bibitem[SZP21]{sodhani21a}
S.~Sodhani, A.~Zhang, and J.~Pineau.
\newblock Multi-task reinforcement learning with context-based representations.
\newblock In M.~Meila and T.~Zhang, editors, {\em Proceedings of the
  International Conference on Machine Learning}, volume 139 of {\em Proceedings
  of Machine Learning Research}, pages 9767--9779. PMLR, 18--24 Jul 2021.

\bibitem[TJJ20]{TripuraneniJJ20}
N.~Tripuraneni, M.~I. Jordan, and C.~Jin.
\newblock On the theory of transfer learning: The importance of task diversity.
\newblock In {\em Advances in Neural Information Processing Systems},
  volume~33, 2020.

\bibitem[TMP20]{tsiamis2020sample}
A.~Tsiamis, N.~Matni, and G.~Pappas.
\newblock Sample complexity of kalman filtering for unknown systems.
\newblock In {\em Learning for Dynamics and Control}, pages 435--444. PMLR,
  2020.

\bibitem[TPR20]{tirinzoni2020sequential}
A.~Tirinzoni, R.~Poiani, and M.~Restelli.
\newblock Sequential transfer in reinforcement learning with a generative
  model.
\newblock In {\em Proceedings of the International Conference on Machine
  Learning}, pages 9481--9492. PMLR, 2020.

\bibitem[TS09]{taylor2009transfer}
M.~E. Taylor and P.~Stone.
\newblock Transfer learning for reinforcement learning domains: A survey.
\newblock {\em Journal of Machine Learning Research}, 10(7), 2009.

\bibitem[TSRE20]{takano2020clustering}
K.~Takano, M.~Stefanovic, T.~Rosenkranz, and T.~Ehring.
\newblock Clustering individuals on limited features of a vector autoregressive
  model.
\newblock {\em Multivariate Behavioral Research}, pages 1--19, 2020.

\bibitem[Ver18]{vershynin2018high}
R.~Vershynin.
\newblock {\em High-dimensional {P}robability: An {I}ntroduction with
  {A}pplications in {D}ata {S}cience}, volume~47.
\newblock Cambridge university press, 2018.

\bibitem[WFRT07]{wilson2007multi}
A.~Wilson, A.~Fern, S.~Ray, and P.~Tadepalli.
\newblock Multi-task reinforcement learning: a hierarchical {B}ayesian
  approach.
\newblock In {\em Proceedings of the International Conference on Machine
  Learning}, pages 1015--1022, 2007.

\bibitem[WL00]{wong2000mixture}
C.~S. Wong and W.~K. Li.
\newblock On a mixture autoregressive model.
\newblock {\em Journal of the Royal Statistical Society: Series B (Statistical
  Methodology)}, 62(1):95--115, 2000.

\bibitem[YCS14]{yi2014alternating}
X.~Yi, C.~Caramanis, and S.~Sanghavi.
\newblock Alternating minimization for mixed linear regression.
\newblock In {\em Proceedings of the International Conference on Machine
  Learning}, pages 613--621, 2014.

\bibitem[YPCR18]{yin2018learning}
D.~Yin, R.~Pedarsani, Y.~Chen, and K.~Ramchandran.
\newblock Learning mixtures of sparse linear regressions using sparse graph
  codes.
\newblock {\em IEEE Transactions on Information Theory}, 65(3):1430--1451,
  2018.

\bibitem[ZFKL21]{zheng2021sample}
Y.~Zheng, L.~Furieri, M.~Kamgarpour, and N.~Li.
\newblock Sample complexity of linear quadratic gaussian (lqg) control for
  output feedback systems.
\newblock In {\em Learning for Dynamics and Control}, pages 559--570. PMLR,
  2021.

\bibitem[ZKZ19]{zhao2019brief}
P.~Zhao, Y.~Kang, and Y.-B. Zhao.
\newblock A brief tutorial and survey on markovian jump systems: Stability and
  control.
\newblock {\em IEEE Systems, Man, and Cybernetics Magazine}, 5(2):37--C3, 2019.

\end{thebibliography}

\end{document}